%%%%%%%%%%%%%%%%%%%%%%%%%%%%%%%%%%%%%%%%%%%%%%%%%%%%%%%%%%%%%%%%%%%%%%%%%%
% PhD thesis
%%%%%%%%%%%%%%%%%%%%%%%%%%%%%%%%%%%%%%%%%%%%%%%%%%%%%%%%%%%%%%%%%%%%%%%%%%

%\documentclass[twoside,10pt]{book}
%\documentclass[version=3.21,paper=a5,twoside,DIV=16,12pt,headlines=1.05,headsepline]{scrbook}
\documentclass[a5paper,twoside,12pt]{book}

\usepackage{floatrow}
\usepackage{diss}
\usepackage[german,american]{babel}
\usepackage{amsmath,amsfonts}
\usepackage{amsthm}
\usepackage{bm}
\usepackage{amssymb}
\usepackage{xcolor}
\usepackage{color,colortbl}
\usepackage{graphicx}
\usepackage[utf8]{inputenc}
\usepackage[T1]{fontenc}
\usepackage{subcaption}

\usepackage[pdftitle={Thesis Zhongnan Qu}, pdfauthor={Zhongnan Qu}, colorlinks={true}, linkcolor={purple}, citecolor={blue}, urlcolor={orange}]{hyperref}

\usepackage{threeparttable}
\usepackage{multirow}
\usepackage{xspace}
\usepackage{booktabs}
\usepackage{appendix}
\usepackage[capitalise, noabbrev]{cleveref}
\usepackage{arydshln}
\usepackage{algorithmicx}
\usepackage[ruled,vlined,linesnumbered,resetcount,algochapter]{algorithm2e}
\usepackage{lipsum}
\usepackage{CJKutf8}

%use Palatino Font
\usepackage{pxfonts}

%quick markup for outline
\usepackage[ampersand]{easylist}
\usepackage{comment}

%debuging
%\usepackage{showframe}

%global settings
\sloppy
\setcounter{tocdepth}{1}
\hyphenpenalty=5000
\tolerance=2000
%\setlength{\subfigcapskip}{-2pt}
%\setlength{\subfigbottomskip}{10pt}

%\linespread{1.06}

\vfuzz2pt % Don't report over-full v-boxes if over-edge is small
\hfuzz2pt % Don't report over-full h-boxes if over-edge is small

% remove this, if we do not want to split equation blocks to multiple pages/columns
\allowdisplaybreaks

\begin{document}
\pagestyle{empty}
\pagenumbering{Alph}

\selectlanguage{american}
% !TEX root = 00_thesis.tex
\newcommand{\currdir}{}

\newcommand{\fakeparagraph}[1]{\vspace{2mm}\noindent\textbf{\boldmath#1.}}
\newcommand{\faketitle}[1]{\vspace{2mm}\noindent\textbf{\boldmath#1}}
\newcommand{\phase}[1]{\mbox{\ensuremath{\langle#1\rangle}}\xspace}

\newcommand{\etal}{et~al.\@\xspace}
\newcommand{\eg}{e.g.,\xspace}
\newcommand{\ie}{i.e.,\xspace}
\newcommand{\etc}{etc.\@\xspace}
\newcommand{\wrt}{w.r.t.\@\xspace}
\newcommand{\cf}{cf.\@\xspace}

\newtheorem{theorem}{Theorem}
\newtheorem{lemma}{Lemma}
\newtheorem{definition}{Definition}
\newtheorem{example}{Example}

\newcommand\figref[1]{\cref{#1}}
\newcommand\tabref[1]{\cref{#1}}
\newcommand\secref[1]{\cref{#1}}
\newcommand\chref[1]{\cref{#1}}
\newcommand\lstref[1]{\cref{#1}}
\newcommand\equref[1]{Eq.~(\ref{#1})}
\newcommand\algoref[1]{Algorithm~\ref{#1}}
\newcommand{\thmref}[1]{Theorem~\ref{#1}}
\newcommand{\defref}[1]{Definition~\ref{#1}}

\newcommand{\alq}{ALQ\xspace}
\newcommand{\dress}{DRESS\xspace}
\newcommand{\pMeta}{p-Meta\xspace}
\newcommand{\dpu}{DPU\xspace}
% correct bad hyphenation here
% \hyphenation{multi-core}

\newcommand{\TODO}[1]{\noindent\textit{ \color{red}\textbf{TODO:}~#1} }
\newcommand{\DISCUSS}[1]{\noindent\fbox{\parbox{\linewidth}{ \textit{ \color{blue}\textbf{Discuss:}~#1} }} }

% chapter-based numbering
\numberwithin{figure}{chapter}
\numberwithin{table}{chapter}
\numberwithin{theorem}{chapter}
\numberwithin{lemma}{chapter}
\numberwithin{example}{chapter}
\numberwithin{definition}{chapter}

% !TEX root = 00_thesis.tex
\newcommand{\dissnumstring}{28528}
\newcommand{\titlestring}{Enabling Deep Learning on Edge Devices}
\newcommand{\titlestringNOBR}{Enabling Deep Learning on Edge Devices}

\newcommand{\degreestring}{Doctor of Sciences of ETH Zurich}
\newcommand{\degreestringabr}{(Dr. sc. ETH Zurich)}
\newcommand{\authorstring}{ZHONGNAN QU}
\newcommand{\acatitlestring}{M.Sc. TU Munich}
\newcommand{\dateofbirthstring}{05.05.1992}
\newcommand{\citizenstringA}{China}
\newcommand{\citizenstringB}{Henan}
\newcommand{\examinerstring}{Prof. Dr.  Lothar Thiele}
\newcommand{\coexaminerstring}{Prof. Dr. Olga Saukh}
\newcommand{\datestring}{2022}

\begin{titlepage}
{%\sffamily%\setlength{\baselineskip}{10mm}

\begin{centerline}
{\large\noindent Diss.\ ETH No.\ \dissnumstring}
\end{centerline}
\vfill

\begin{center}
\LARGE\bfseries
\titlestringNOBR
\end{center}
%\vspace{1mm}
\vfill

\begin{center}
\large A thesis submitted to attain the degree of
\end{center}
\vspace{0.55\fill}

\begin{center}
\large \degreestring \linebreak \degreestringabr
\end{center}
\vspace{0.55\fill}

\begin{center}
\large presented by \linebreak \authorstring \linebreak \acatitlestring
\linebreak \linebreak born on \dateofbirthstring \linebreak citizen of
\linebreak \citizenstringA
\linebreak \citizenstringB
\end{center}
\vspace{3\fill}

\begin{center}
\large accepted on the recommendation of \linebreak
\examinerstring, examiner \linebreak
\coexaminerstring, co-examiner
\end{center}
\vspace{0.55\fill}

\begin{center}
\large{\datestring}
\end{center}
}
\end{titlepage}

\cleardoublepage
% !TEX root = 00_thesis.tex

\thispagestyle{empty}
\noindent\includegraphics[scale=.6]{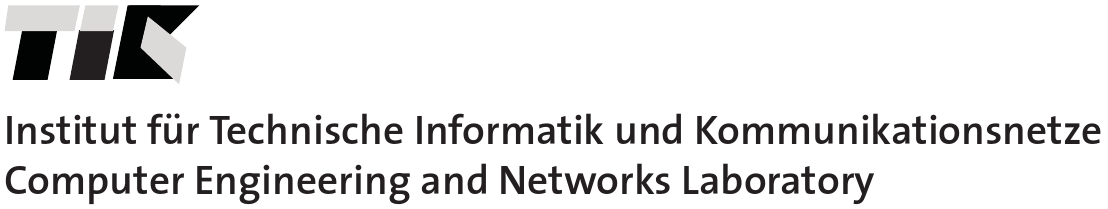}
\medskip
\hrule
\begin{flushright}
\vspace{0.5cm}
TIK-SCHRIFTENREIHE NR. 202 \\ \vspace{1cm} 
\large Zhongnan Qu \\ \vspace{1cm}
\end{flushright}
\begin{flushright}
\Large\bfseries
\titlestringNOBR
\end{flushright}
\vspace{\fill}
\hrule
\bigskip
\includegraphics[scale=0.7]{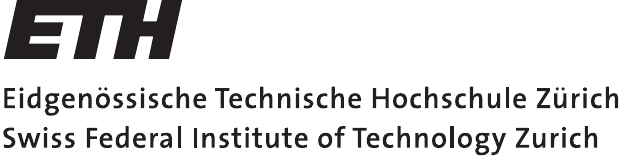}

\clearpage
\thispagestyle{empty}

\noindent
A dissertation submitted to\\
ETH Zurich\\
for the degree of Doctor of Sciences\\

\noindent DISS.\ ETH NO.\ \dissnumstring\\

\noindent \examinerstring, examiner\\
\coexaminerstring, co-examiner

\noindent Examination date: July 26, 2022\\
\vfill
\noindent
%ISBN XXX-XXXXXXXXXX\\
%DOI \href{http://dx.doi.org/XX.XXXX/ethz-a-XXXXXXXXX}{XX.XXXX/ethz-a-XXXXXXXXX}

\cleardoublepage
\chapter*{}
\vfill
{\footnotesize
\begin{flushright}
\emph{}\\
\emph{To my family.\\}
\begin{CJK*}{UTF8}{gbsn}
\emph{致我的家人。}
\end{CJK*}
\end{flushright}
}
\vfill

\cleardoublepage
\frontmatter
\pagestyle{headings}
% !TEX root = 00_thesis.tex
\chapter[Abstract]{Abstract}

Deep neural networks (DNNs) have succeeded in many different perception tasks, \eg computer vision, natural language processing, reinforcement learning, etc. 
The high-performed DNNs heavily rely on intensive resource consumption.
For example, training a DNN requires high dynamic memory, a large-scale dataset, and a large number of computations (a long training time); even inference with a DNN also demands a large amount of static storage, computations (a long inference time), and energy.  
Therefore, state-of-the-art DNNs are often deployed on a cloud server with a large number of super-computers, a high-bandwidth communication bus, a shared storage infrastructure, and a high power supplement. 

Recently, some new emerging intelligent applications, \eg AR/VR, mobile assistants, Internet of Things, require us to deploy DNNs on resource-constrained edge devices. 
Compare to a cloud server, edge devices often have a rather small amount of resources. 
To deploy DNNs on edge devices, we need to reduce the size of DNNs, \ie we target a better trade-off between the resource consumption and the model accuracy.

In this thesis, we study four edge intelligent scenarios and develop different methodologies to enable deep learning in each scenario. 
Since current DNNs are often over-parameterized, our goal is to find and reduce the redundancy of the DNNs in each scenario. 
We summarize the four studied scenarios as follows,

\begin{itemize}
    \item \fakeparagraph{Inference on Edge Devices} 
    Firstly, we enable efficient inference of DNNs given the fixed resource constraints on edge devices.
    Compared to cloud inference, inference on edge devices avoids transmitting the data to the cloud server, which can achieve a more stable, fast, and energy-efficient inference.
    Regarding the main resource constraints from storing a large number of weights and computation during inference, we proposed an Adaptive Loss-aware  Quantization (\alq) for multi-bit networks. 
    \alq reduces the redundancy on the quantization bitwidth.
    The direct optimization objective (\ie the loss) and the learned adaptive bitwidth assignment allow \alq to acquire extremely low-bit networks with an average bitwidth below 1-bit while yielding a higher accuracy than state-of-the-art binary networks.    
    
    \item \fakeparagraph{Adaptation on Edge Devices}
    Secondly, we enable efficient adaptation of DNNs when the resource constraints on the target edge devices dynamically change during runtime, \eg the allowed execution time and the allocatable RAM.
    To maximize the model accuracy during on-device inference, we develop a new synthesis approach, Dynamic REal-time Sparse Subnets (\dress) that can sample and execute sub-networks with different resource demands from a backbone network.  
    \dress reduces the redundancy among multiple sub-networks by weight sharing and architecture sharing, resulting in storage efficiency and re-configuration efficiency, respectively.
    The generated sub-networks have different sparsity, and thus can be fetched to infer under varying resource constraints by utilizing sparse tensor computations. 

    \item \fakeparagraph{Learning on Edge Devices}
    Thirdly, we enable efficient learning of DNNs when facing unseen environments or users on edge devices.
    On-device learning requires both data- and memory-efficiency. 
    We thus propose a new meta learning method \pMeta to enable memory-efficient learning with only a few samples of unseen tasks.
    \pMeta reduces the updating redundancy by identifying and updating structurewise adaptation-critical weights only, which saves the necessary memory consumption for the updated weights.
    
    \item \fakeparagraph{Edge-Server System}
    Finally, we enable efficient inference and efficient updating on edge-server systems. 
    In an edge-server system, several resource-constrained edge devices are connected to a resource-sufficient server with a constrained communication bus. 
    Due to the limited relevant training data beforehand, pretrained DNNs may be significantly improved after the initial deployment. 
    On such an edge-server system, on-device inference is preferred over cloud inference, since it can achieve a fast and stable inference with less energy consumption. 
    Yet retraining on the cloud server is preferred over on-device retraining (or federated learning) due to the limited memory and computing power on edge devices. 
    We proposed a novel pipeline Deep Partial Updating (\dpu) to iteratively update the deployed inference model.
    Particularly, when newly collected data samples from edge devices or from other sources are available at the server, the server smartly selects only a subset of critical weights to update and send to each edge device.
    This weightwise partial updating reduces the redundant updating by reusing the pretrained weights, which achieves a similar accuracy as full updating yet with a significantly lower communication cost.

\end{itemize}

\cleardoublepage
\begin{otherlanguage*}{german}
% !TEX root = 00_thesis.tex
\chapter[Zusammenfassung]{Zusammenfassung}

Deep Neural Networks (DNNs) haben sich bei vielen verschiedenen Wahrnehmungsaufgaben bewährt, z. B. Computer Vision, Verarbeitung natürlicher Sprache, Verstärkungslernen usw.
Die leistungsstarken DNNs sind stark auf einen intensiven Ressourcenverbrauch angewiesen.
Beispielsweise erfordert das Training eines DNN einen hohen dynamischen Speicher, einen großen Datensatz und eine große Anzahl von Berechnungen (eine lange Trainingszeit); Selbst die Inferenz mit einem DNN erfordert auch eine große Menge an statischem Speicher, Berechnungen (eine lange Inferenzzeit) und Energie.
Daher werden moderne DNNs häufig auf einem Cloud-Server mit einer großen Anzahl von Supercomputern, einem Kommunikationsbus mit hoher Bandbreite, einer gemeinsam genutzten Speicherinfrastruktur und einem Hochleistungszusatz eingesetzt.

In letzter Zeit erfordern einige neu entstehende intelligente Anwendungen, z. B. AR/VR, mobile Assistenten, Internet of Things, den Einsatz von DNNs auf ressourcenbeschränkten Edge-Geräten.
Im Vergleich zu einem Cloud-Server verfügen Edge-Geräte oft über eine eher geringe Menge an Ressourcen.
Um DNNs auf Edge-Geräten einzusetzen, müssen wir die Größe von DNNs reduzieren, d. h. wir streben einen besseren Kompromiss zwischen dem Ressourcenverbrauch und der Modellgenauigkeit an.

In dieser Doktorarbeit untersuchen wir vier intelligente Edge-Szenarien und entwickeln verschiedene Methoden, um Deep Learning in jedem Szenario zu ermöglichen.
Da aktuelle DNNs oft überparametrisiert sind, ist unser Ziel, die Redundanz der DNNs in jedem Szenario zu finden und zu reduzieren.
Wir fassen die vier untersuchten Szenarien wie folgt zusammen,

\begin{itemize}
    \item \fakeparagraph{Inferenz auf Edge-Geräten} 
    Erstens ermöglichen wir eine effiziente Inferenz von DNNs angesichts der festen Ressourcenbeschränkungen auf Edge-Geräten.
    Im Vergleich zur Cloud-Inferenz wird bei der Inferenz auf Edge-Geräten die Übertragung der Daten an den Cloud-Server vermieden, wodurch eine stabilere, schnellere und energieeffizientere Inferenz erreicht werden kann.
    In Bezug auf die wichtigsten Ressourcenbeschränkungen, die sich aus der Speicherung einer großen Anzahl von Gewichten und Berechnungen während der Inferenz ergeben, haben wir eine Adaptive Loss-aware Quantization (\alq) für Multibit-Netzwerke vorgeschlagen. 
    \alq reduziert die Redundanz in der Quantisierungsbitbreite.
    Das direkte Optimierungsziel (d. h. der Verlust) und die erlernte adaptive Bitbreitenzuweisung ermöglichen es \alq, Netze mit extrem niedrigen Bits mit einer durchschnittlichen Bitbreite unter 1-Bit zu erfassen und gleichzeitig eine höhere Genauigkeit als moderne binäre Netze zu erzielen. 

    \item \fakeparagraph{Anpassung auf Edge-Geräten}
    Zweitens ermöglichen wir eine effiziente Anpassung von DNNs, wenn sich die Ressourcenbeschränkungen auf den Zielgeräten während der Laufzeit dynamisch ändern, z. B. die erlaubte Ausführungszeit und der zuweisbare RAM.
    Um die Modellgenauigkeit während der Inferenz auf dem Gerät zu maximieren, entwickeln wir einen neuen Syntheseansatz, Dynamic REal-time Sparse Subnets (\dress), der Subnetze mit unterschiedlichen Ressourcenanforderungen von einem Backbone-Netz abtasten und ausführen kann.
    \dress reduziert die Redundanz in mehreren Subnetzen durch gemeinsame Nutzung von Gewicht und Architektur, was zu Speichereffizienz bzw. Rekonfigurationseffizienz führt.
    Die erzeugten Subnetze weisen unterschiedliche Sparsamkeit auf und können daher abgerufen werden, um unter variierenden Ressourcenbeschränkungen durch Verwendung von spärliche Tensorberechnungen zu folgern.

    \item \fakeparagraph{Lernen auf Edge-Geräten}
    Drittens ermöglichen wir ein effizientes Lernen von DNNs, wenn Sie mit unsichtbaren Umgebungen oder Benutzern auf Edge-Geräten konfrontiert sind.
    Lernen auf dem Edge-Gerät erfordert sowohl Dateneffizienz als auch Speichereffizienz.
    Wir schlagen daher eine neue Meta-Lernmethode \pMeta vor, die speichereffizientes Lernen mit nur wenigen Datenbeispielen von unbekannten Aufgaben ermöglicht.
    \pMeta reduziert die Aktualisierungsredundanz, indem es nur strukturweise anpassungskritischen Gewichte identifiziert und aktualisiert, wodurch der notwendige Speicherverbrauch für die aktualisierten Gewichte eingespart wird.

    \item \fakeparagraph{Edge-Server-System}
    Schließlich ermöglichen wir effiziente Inferenz und effiziente Aktualisierung auf Edge-Server-Systemen.
    In einem Edge-Server-System sind mehrere ressourcenbeschränkte Edge-Geräte mit einem ressourcenstarken Server mit einem eingeschränkten Kommunikationsbus verbunden.
    Aufgrund der begrenzten Anzahl relevanter Trainingsdaten im Voraus können vortrainierte DNNs nach dem anfänglichen Einsatz erheblich verbessert werden.
    In einem solchen Edge-Server-System wird die Inferenz auf dem Gerät der Inferenz in der Cloud vorgezogen, da sie eine schnelle und stabile Inferenz mit weniger Energieverbrauch erreichen kann.
    Aufgrund des begrenzten Speichers und der begrenzten Rechenleistung auf Edge-Geräten wird jedoch die Re-Training in der Cloud gegenüber der Re-Training auf dem Gerät (oder föderiertem Lernen) bevorzugt.
    Wir haben eine neuartige Pipeline, Deep Partial Updating (\dpu) vorgeschlagen, um das eingesetzte Inferenzmodell iterativ zu aktualisieren.
    Insbesondere, wenn neu gesammelte Datenbeispielen von Edge-Geräten oder aus anderen Quellen auf dem Cloud-Server verfügbar sind, wählt der Server intelligenterweise nur eine Teilmenge kritischer Gewichte aus, um sie zu aktualisieren und an jedes Edge-Gerät zu senden.
    Diese gewichtsmäßige Teilaktualisierung reduziert die redundante Aktualisierung durch Wiederverwendung der vortrainierten Gewichtungen, wodurch eine ähnliche Genauigkeit wie bei der vollständigen Aktualisierung erreicht wird, jedoch mit deutlich geringeren Kommunikationskosten.

\end{itemize}
\end{otherlanguage*}

\cleardoublepage
% !TEX root = 00_thesis.tex
\chapter[Acknowledgements]{\vspace{-2cm}Acknowledgements}

I was born in Nanyang China. Nanyang is a typical third-tier city in China, with a large number of citizens yet a rather low growth. It is not easy for children who are similar to me to finally receive a doctoral degree from a top-tier university. The entire study career was full of occasionality and uncertainty. There were many critical steps where you only had a small chance and the only thing you can do was to try your best and submitted to the will of god. But for the ones who are struggling like me in the past and happen to read my thesis (only a few ;-)), I would encourage you with a sentence: \textit{I strive to run, just to catch up with those who have been high hopes of their own} (originally from the show Total Soccer).

I am very lucky that at every crossroad, I could always wait till the best option that I believe was often far beyond my deserving. Therefore, I try my best to remember and appreciate all the people who recognized me, guided me, criticized me, supported me, and accompanied me in my 24-year study. I also appreciate the younger me who had the courage to choose the path that is made of difficulties but also leads me to explore the unknown beauties.

My interest in engineering was first inspired by my physics teacher in middle school. In high school, I started to systematically learn physics, and this unforgettable period with my friends built the foundation of my mathematical logic. In my first year of bachelor study at Tongji University, I had a serious ankle fracture in a football game. My peers provided me with countless help in both study and life. During my master study at TU Munich, I received kind supervision when I wrote my master theses in \textit{Computer Vision Group} and \textit{Robotics \& Artiﬁcial Intelligence Group}. These pleasant research experiences finally motivated and also helped me to proceed with my academic career at ETH Zurich as a doctoral student. 

The PhD study at ETH is my most memorable and enriched period. The four-year study taught me to view a new problem from an unprecedented width and depth, which I believe is even more beneficial to my future life than the knowledge gained from research. I sincerely thank Prof. Lothar Thiele for offering me this great opportunity, and for your supervision and guidance. Thanks for revising my work from the early morning until the late night. I can not imagine a better advisor than you. I wish you all the best in your retirement life. I appreciate my ETH colleagues in \textit{Computer Engineering Group}, \eg Prof. Rehan Ahmed, Prof. Jan Beutel, Andreas Biri, Dr. Yun Cheng, Reto Da Forno, Dr. Stefan Drašković, Tonio Gsell, Dr. Xiaoxi He, Dr. Romain Jacob, Prof. Cong Liu, Dr. Balz Maag, Dr. Matthias Meyer, Dr. Philipp Miedl, Dr. Lukas Sigrist, Naomi Stricker, Dr. Roman Trüb, etc. for the interesting discussion and happy daily working hours. Many thanks to Susann Arreghini and Beat Futterknecht for getting me familiar with Swiss life. I also appreciate all other collaborators for your patient discussion and constructive suggestions, \eg Hu Cao, Prof. Guang Chen, Xin Dong, Junfeng Guo, Lennart Heim, Dr. Shu Liu, Zhao Meng, Prof. Yongxin Tong, Prof. Ye Wang, etc.  Particular gratitude goes to Prof. Zimu Zhou. You played a major role in leading me into academic research when I was at the beginning of my PhD study. I hope our enormous audio calls on weekends were not too annoying for you. I also appreciate the team members, particularly my supervisor Dr. Syed Shakib Sarwar and Dr. Barbara De Salvo  as well as my peers Dominika Przewłocka-Rus and Dr. Peter Liu at \textit{Facebook Reality Labs} for providing me with a remote yet productive internship during the COVID pandemic.
In addition, I want to thank Prof. Olga Saukh for being the examiner of my doctoral defense. 

I also would like to thank my family and my friends in my personal life. I appreciate my mom and my dad for their guidance and support since my childhood, which shaped my body and my soul. 
Last but not least, I can not complete this journey alone without my wife, who tolerated, encouraged, and accompanied me through countless days and nights, and countless ups and downs.

\cleardoublepage
\phantomsection\pdfbookmark[0]{Contents}{Contents}
{\hypersetup{linkcolor=black} \tableofcontents}
\cleardoublepage
\phantomsection\addcontentsline{toc}{chapter}{List of Figures}
{\hypersetup{linkcolor=black} \listoffigures}
\cleardoublepage
\phantomsection\addcontentsline{toc}{chapter}{List of Tables}
{\hypersetup{linkcolor=black} \listoftables}
\cleardoublepage

\mainmatter
\pagenumbering{arabic}
\setcounter{page}{1}
\pagestyle{headings}

% !TEX root = 00_thesis.tex
\chapter{Introduction}
\label{ch1:introduction}

Deep learning is a new disruptive technology that extremely drives the development of artificial intelligence. 
Deep neural networks (DNNs) are widely used in deep learning, which can make predictions according to the given inputs.
A DNN consists of a large number of cascaded layers, where each layer often comprises (\textit{i}) trainable weights that can perform matrix multiplication on the layer's input to output extracted features, (\textit{ii}) a non-linear function that can bring non-linear behaviors.
DNNs can often achieve superior performance than prior computational models or even human beings in many areas, \eg computer vision, natural language processing, mathematics, biochemistry, etc.

In image classification, AlexNet \cite{bib:NIPS12:Krizhevsky} automatically learns the features by training a deep convolutional neural network with GPUs, and the competition results on ImageNet Large Scale Visual Recognition Challenge (ILSVRC) show that AlexNet surpasses the prior classifiers that are built based on hand-crafted features \eg random forest and support vector machine, by a large margin (over 10\% accuracy gain).
AlphaGo Zero \cite{bib:Nature16:Silver} reinforce-learns a deep policy model to predict the movement on the Go board via playing games against itself, and the learned model can even defeat a human world champion of Go games. 
BERT \cite{bib:ACL19:Devlin} pretrains deep bidirectional representations from unlabeled text and then fine-tunes the pretrained model, which exhibits a better performance in language understanding on SQuAD test than humans. 
Recently, graph convolutional neural networks \cite{bib:Nature19:Eraslan} have also been applied to many biological and chemical problems \eg predicting protein function, predicting binarized gene expression, etc. 
As a result, DNNs not only can conduct some intelligent tasks that previously must rely on cumbersome human efforts in our daily life, but also may bring new scientific inspirations that are less explored in the long-term human history.   

\section{High Resource Demands of DNNs}
\label{ch1-sec:recource}

The high performance of state-of-the-art DNNs benefits from the intensive resource consumption during both the \textit{training} phase and the \textit{inference} phase. 

During the training phase, current DNNs are often optimized on high-performance cloud servers with a large-scale dataset over a long time, which may take (\textit{i}) many human labor resources to prepare the dataset or the training implementation, (\textit{ii}) a large amount of time and money cost, (\textit{iii}) a remarkable CO2 emission \cite{bib:ICLR20:Cai}.
For example, the widely-used DNN ResNet50 \cite{bib:CVPR16:He} needs to be trained with ImageNet dataset which contains $1.2$ million well-labeled internet images collected from 1000 balanced fine classes;
the GPT-3 model published by OpenAI \cite{bib:arXiv20:Brown} takes $3.14\times 10^{23}$ floating-point multiply-accumulate operations (FLOPs) for a single training run, which is equivalent to 355 GPU-years and 4.6M US dollars, according to the theoretical $2.8\times 10^{13}$ FLOPs of high-performed Nvidia V100 GPU and the lowest 3-year reserved cloud pricing we could find \cite{bib:GPT-3}.

Even during the inference phase, the pretrained DNNs still demand a rather significant amount of computing resources from \eg memory, computation, latency, and energy.
For example, the Faster-RCNN model \cite{bib:NIPS15:Ren} requires several hundreds of GFLOPs for a single inference, thus can only achieve around 5 frames per second for object detection on a state-of-the-art GPU; 
the current language models \cite{bib:arXiv20:Brown} contain billions of parameters which often require several GPUs with GB-level memory on a cloud server with high-bandwidth communication bus for the parallelism during inference. 
Note that the state-of-the-art GPU often has a minimal operation power requirement of around 500W \cite{bib:GPU_power}. 
All the examples mentioned above indicate the inherent resource-intensive characteristics of DNNs. 

\begin{figure}[tbp!]
    \centering
    \includegraphics[width=0.99\textwidth]{./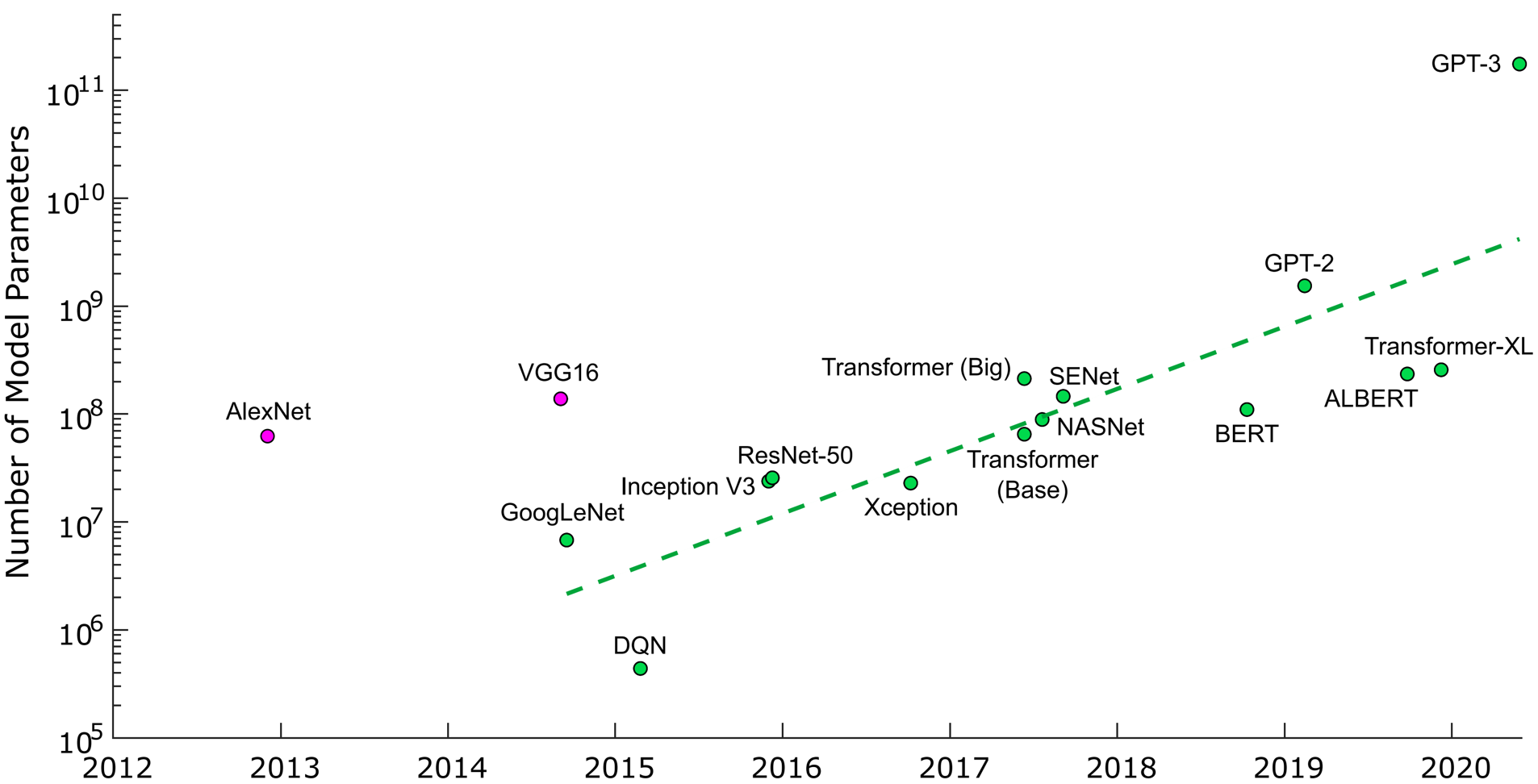}
    \caption[The number of parameters in different DNNs is exponentially increased along the years.]{The number of parameters in different DNNs is exponentially increased along the years. Note that the two outlying nodes (pink) are AlexNet and VGG16, now considered over-parameterized. The figure is originally from \cite{bib:Report21:Bernstein}.}
    \label{ch1-fig:dnn}
\end{figure}

However, the resource demands of DNNs still keep growing. 
As noted in \cite{bib:Report21:Bernstein}, although the state-of-the-art DNNs continuously improve the accuracy level, the number of parameters (as well as the number of FLOPs) in these DNNs also increases along the years, even with an \textit{exponential} increasing rate, as shown in \figref{ch1-fig:dnn}. 
On the other hand, the research and development of hardware often require a long cycle and a high investment.    
As a result, the growth rate of the model size is far larger than the growth rate of the computing power of the state-of-the-art high-performance computers, \eg GPUs. 
For example, the number of parameters has increased more than 2000 times from AlexNet in 2012 \cite{bib:NIPS12:Krizhevsky} to GPT-3 in 2020 \cite{bib:arXiv20:Brown}, whereas at the same time, the memory of Nvidia GPU has only increased 22 times from Geforce GTX 660 to Geforce RTX 3090, and the computing power (FLOPs/second) has increased around 17 times \cite{bib:wiki_Nvidia}.

\section{Cloud Intelligence}
\label{ch1-sec:cloud_intelligence}

As mentioned above, there exists a large gap between the computing power of available hardware and the resource demands of DNNs. 
The common solution to such a conflict is to gather multiple high-performance computers and build a cluster-based server in the cloud, also known as cloud computing \cite{bib:wiki_cloud}.
A cloud server is a group of two or more computers that can share the computing resource, communicate with others and distribute the workload of the same task according to the predefined scheduling system \cite{bib:cluster}. 
Some commercial cloud servers include Amazon Web Services (AWS), Google Cloud, Microsoft Azure, etc. 
These cloud servers may contain high-performance computers of CPUs, GPUs, TPUs, the communication bus with a high bandwidth, the on-demand shared storage infrastructures, and the high power supplement.

Particularly, a DNN can be deployed on a cloud server to perform some resource-intensive intelligent applications \eg gradient-based training, machine translation, question answering systems, etc. 
The high resource demands from these applications can be delegated to multiple computers, and if necessary the results from these computers are aggregated afterwards. 
Cloud intelligence has become a prevailing solution for many intelligent services, which require a large amount of resources (\eg memory, computation) whereas a single computer is often not unable to meet these requirements.

\section{Edge Intelligence}
\label{ch1-sec:edge_intelligence}

\begin{figure}[tbp!]
    \centering
    \includegraphics[width=0.99\textwidth]{./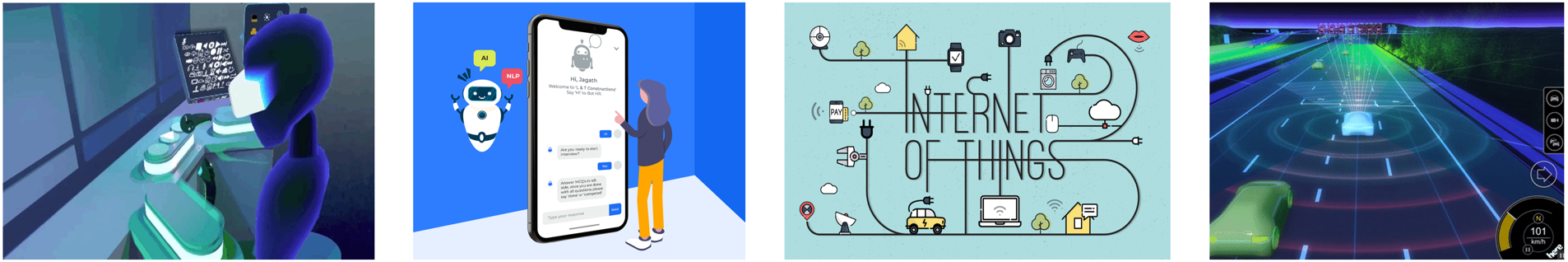}
    \caption[Example edge intelligence applications.]{Example edge intelligence applications. From Left to Right: Augmented/Virtual Reality, Mobile Assistants, Internet of Things, Autonomous Driving. The images are from Google.}
    \label{ch1-fig:edge_intelligent}
\end{figure}

In addition to cloud intelligence, some new emerging edge intelligent applications further require us to deploy DNNs on \textit{edge devices}. 
The term edge refers to an entry point \cite{bib:wiki_edge}.
Accordingly, the collected data (at the entry point) are processed by DNNs locally, \ie on devices. 
Edge devices have a large variety, including mobile phones, wearable devices, sensor nodes, etc.
Some example edge intelligent applications (see in \figref{ch1-fig:edge_intelligent}) include but are not limited to, 
\begin{itemize}
    \item 
    \fakeparagraph{Augmented/Virtual Reality} Augmented/Virtual reality (AR/VR) can visualize the digital information as the real world via wearable devices, \eg glasses \cite{bib:wiki_ar,bib:wiki_vr}. 
    To bridge the gap between the physical world and the virtual environment, many AR/VR tasks, \eg hand detection, eye tracking, digital humans, require deep learning methods to provide high-quality interaction.
   
    \item 
    \fakeparagraph{Mobile Assistants} Mobile assistants are software agents that can perform tasks or services on mobile platforms for an individual based on commands or questions \cite{bib:wiki_assistant}. 
    Individual users can input voice, images, or text to mobile assistants. 
    Given the inputs from users, DNNs are utilized to recognize, understand, and communicate with users.

    \item
    \fakeparagraph{Internet of Things} Internet of Things (IoT) describes physical objects with sensors, processing ability, software, and other technologies that connect with other devices over communication networks \cite{bib:wiki_iot}.
    IoT applications use DNNs for automatic sensing and reasoning, \eg detecting intruders in a ``smart home'' monitor system. 
    
    \item
    \fakeparagraph{Autonomous Driving} Autonomous cars can sense their surroundings and move safely with little or no human inputs \cite{bib:wiki_autonomous}. 
    Thanks to the rapid development of deep learning, many DNNs in computer vision tasks, \eg object detection, 3D localization, semantic segmentation, have been widely adopted to interpret sensory information and identify appropriate navigation paths.
\end{itemize}

In comparison to cloud intelligent applications, edge intelligent applications have the following advantages, (\textit{i}) it does not encounter privacy issues and can be used on sensitive/confidential data, as the data are processed locally; (\textit{ii}) it reduces the reliance on the cloud server, and can achieve a stable inference even with congested/interrupted communication channels; (\textit{iii}) it can realize a real-time inference if the communication bandwidth is limited; (\textit{iv}) it can save energy by avoiding to transfer data to the cloud server which often costs significant amounts of energy than sensing and computation \cite{bib:Book19:Warden,bib:arXiv18:Guo,bib:arXiv19:Lee}.

\begin{figure}[tbp!]
    \centering
    \includegraphics[width=0.99\textwidth]{./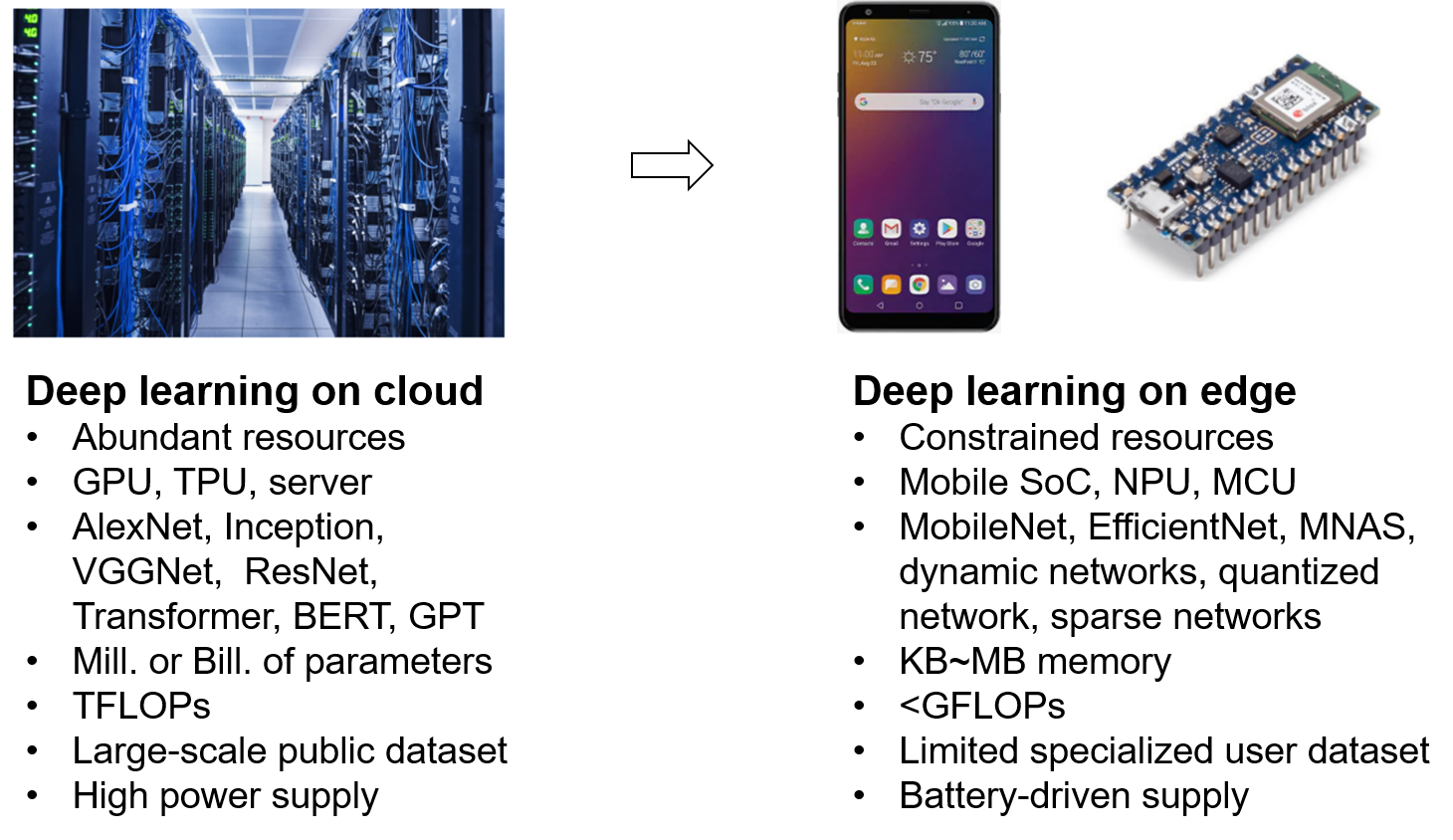}
    \caption[Comparison between deep learning on the cloud server and deep learning on edge devices.]{Comparison between deep learning on cloud and deep learning on edge. The figure is originally from \cite{bib:arXiv21:Soro}.}
    \label{ch1-fig:cloud_edge}
\end{figure}

Unfortunately, deploying DNNs on edge devices is not trivial, as current DNNs contradict the \textit{resource-constrained} nature of edge devices.
Unlike plenty of high-performance computers (\eg GPUs and TPUs) in the cloud server, the processors on edge devices are commonly mobile SoCs, NPUs, or even MCUs, which have a rather small amount of resources and limited scalability. 
We compare the difference between deep learning on the cloud server and deep learning on edge devices in \figref{ch1-fig:cloud_edge}.
The edge devices often use battery-driven energy and have only several $KB$ to $MB$ allocatable RAM. 
Their parallel computing capabilities are also relatively low due to the small number of computing cores. 
In addition, the number of user data collected on edge devices is also limited in comparison to the large-scale datasets used in cloud training.
To deploy DNNs on these edge devices, the complexity of DNNs needs to be trimmed down to fit the limited resource budget.

\section{Thesis Outline}
\label{ch1-sec:outline}

In this thesis, we will study how to \textit{enable deep learning on edge devices in different scenarios}. 
Deploying DNNs on edge devices always targets a trade-off between \textit{the resource demands} and \textit{the model accuracy}. 
Since DNNs often consume a large amount of resources, we hypothesize that there exists redundancy in the DNNs. 
Our goal is to identify and reduce the redundancy according to the main resource constraints in different scenarios. 
This thesis is partitioned into four separate scenarios.
In each scenario, we will (\textit{i}) analyze its main resource constraints, (\textit{ii}) review the drawbacks in the currently available solutions, (\textit{iii}) propose our solution to reduce the redundancy in the DNNs; (\textit{iv}) verify the effectiveness of our solution experimentally or theoretically.
The four studied scenarios are summarized as follows.

\subsection{Inference on Edge Devices (\chref{ch2:inference})}
\label{ch1-sec:inference}

\fakeparagraph{Scenario} 
We first enable an efficient inference on edge devices. 
Inference on edge devices does not rely on the connection to the cloud server, thus it is especially preferred if the communication is highly constrained, or a stable and fast inference is required. 
The main resource constraints of inference on edge devices are the limited static storage and the limited computational ability, as DNNs often contain a large number of parameters to be stored and require a large number of FLOPs for inference.
In this scenario, according to the given resource constraints on edge devices, we train a compressed DNN on a cloud server with a large-scale dataset collected beforehand.
The well-trained compressed network is then deployed on the edge devices and is able to conduct inference with limited resources.

\fakeparagraph{Related Work}
To reduce the storage cost and the computation cost, plenty of works propose to (\textit{i}) design efficient network architectures manually \cite{bib:arXiv17:Howard,bib:CVPR18:Sandler} or automatically using neural architecture search methods \cite{bib:ICLR20:Cai,bib:arXiv19:Yu,bib:ECCV20:Yu}; (\textit{ii}) quantize weights into lower bitwidth to use cheaper operations and reduce the storage consumption \cite{bib:NIPS15:Courbariaux,bib:ECCV16:Rastegari,bib:ECCV18:Zhang}; (\textit{iii}) structured \cite{bib:ICCV19:Liu,bib:ECCV20:Li}/unstructured \cite{bib:ICLR16:Han,bib:ICLR20:Renda,bib:ICML21:Evci} pruning unimportant weights as zeros to reduce the number of operations and the number of nonzero weights. 
We focus on quantizing a pretrained DNN into multi-bit form among others for the following reasons, (\textit{i}) it utilizes the cheaper operations of bitwise \texttt{xnor} and \texttt{popcount} to replace expensive FLOPs; (\textit{ii}) it achieves a high compression ratio without introducing irregular computations; (\textit{iii}) it explores the lower bound of quantized networks.
The state-of-the-art multi-bit networks \cite{bib:arXiv14:Gong,bib:CVPR17:Guo,bib:AAAI18:Hu,bib:NIPS17:Lin,bib:ICLR18:Xu,bib:ECCV18:Zhang} first assign an empirical global bitwidth across layers and then are optimized by minimizing the reconstruction error to the full precision weights, which often results in a subpar performance. 

\fakeparagraph{Our Solution}
To resolve the above drawbacks, we propose an adaptive loss-aware trained quantizer for multi-bit quantization, that (\textit{i}) allocates an adaptive bitwidth to different weights w.r.t. the loss, (\textit{ii}) optimizes the multi-bit quantizer by directly minimizing the loss.
We aim at reducing the \textit{redundant quantization bitwidth} of the weights that are less critical to the loss, to achieve a better trade-off between the model accuracy and the resource demands.  

\subsection{Adaptation on Edge Devices (\chref{ch3:adaptation})}
\label{ch1-sec:adaptation}

\fakeparagraph{Scenario} 
The compressed DNNs trained with the methods in \chref{ch2:inference} can achieve an efficient inference, if the available resources on edge devices are fixed and provided before training on the cloud server.
However, the resource constraints on the target edge devices may dynamically change during runtime \eg the allowed execution time, the allocatable RAM, and the battery energy.
To maximize the model accuracy during on-device inference, the deployed DNN should maintain a dynamic capacity, such that the DNN can be adapted and executed under varying resource constraints. 
In order to quantify the varying resource constraints mentioned earlier, we choose two proxies, (\textit{i}) the storage of weights, which affects the amount of memory fetching and static memory consumption, and (\textit{ii}) the number of operations for inference, which is relevant to the computing energy and the inference latency. 

\fakeparagraph{Related Work}
The most straightforward solution could be for example deploying multiple individual compressed DNNs with different resource demands on edge devices, yet it consumes several times more storage than a single DNN. 
Some prior works \cite{bib:ICLR18:Huang,bib:arXiv17:Hu,bib:ICLR19:Yu,bib:ICCV19:Yu,bib:ICLR20:Cai,bib:RTAS20:Lee} proposed to optimize a backbone network (a.k.a. supernet), such that different candidate sub-networks can be sampled from the backbone network while reaching a similar accuracy level as training them individually. However, these works often sample sub-networks along hand-crafted structured dimensions, \eg kernel size, width, depth, thus the generated sub-networks have different network architectures. This not only results in a sub-optimal performance but also leads to extra re-configuration overhead for storing multiple compiled network architectures.

\fakeparagraph{Our Solution}
We overcome the above disadvantages through sampling sub-networks in a row-based unstructured manner, and propose a novel compressed sparse row (CSR) format to efficiently execute different sub-networks on edge devices. 
Our solution reduces \textit{the architecture redundancy} by reusing a single compiled network architecture among multiple sparse sub-networks, achieving re-configuration efficiency. 
In addition, we also reduce \textit{the weight redundancy} by imposing nonzero weight sharing among sub-networks, achieving storage efficiency.

\subsection{Learning on Edge Devices (\chref{ch4:learning})}
\label{ch1-sec:learning}

\fakeparagraph{Scenario} 
In \chref{ch2:inference} and \chref{ch3:adaptation}, we train a compressed DNN on a cloud server with a large number of available data samples, such that this pretrained DNN can be deployed on edge devices to conduct inference under \textit{fixed} and \textit{varying} resource constraints, respectively.
However, the pretrained DNN may not achieve satisfactory performance when the inference environments on edge devices have a large variance in comparison to the prior environments used to collect data samples for cloud training. 
In other words, when facing unseen environments or users on edge devices, it is crucial to adapt the pretrained DNN to deliver consistent performance and customized services. 
New data samples collected by edge devices are often private and have a large diversity across users/devices. 
Hence, on-device learning is preferred over uploading the data to cloud server.
Compared to the number of data samples used in cloud training, the number of collected data on each edge device is significantly smaller (a.k.a. few-shot) due to the limited labor resources. 
Furthermore, training a DNN, \ie optimizing its weights, requires storing all the intermediate values of each layer, which often consumes several orders of magnitude more peak memory than inference. 
Thus, in this scenario, we target memory-efficient and data-efficient on-device learning.

\fakeparagraph{Related Work}
Meta learning is a prevailing solution to few-shot learning \cite{bib:arXiv20:Hospedales}, where the meta-trained model can learn an unseen task from a few training samples, \ie data-efficient learning.
However, most meta learning algorithms \cite{bib:ICLR19:Antreas, bib:ICML17:Finn, bib:NIPS21:Oswald} optimize the backbone network for better generalization yet ignore the workload if the meta-trained backbone is deployed on low-resource edge platforms for few-shot learning. 
Existing memory-efficient training schemes include for example, low-precision training \cite{bib:ICLR20:Cambier, bib:NIPS18:Wang}, trading memory with computation \cite{bib:arXiv16:Chen, bib:NIPS16:Gruslys}. 
However, they are mainly designed for high-throughput cloud training on large-scale datasets, which are not suitable for on-device learning with only a few data samples. 

\fakeparagraph{Our Solution}
We ground our work (\ie memory-efficient few-shot learning) on gradient-based meta learning methods for their wide applicability in various tasks.
To avoid the high dynamic memory cost in few-shot learning, we focus on reducing \textit{the updating redundancy}. 
In other words, we think not all weights in the learner are equally critical for adaptation. 
Thus, we propose to meta-train a selection mechanism, which can identify and update adaptation-critical weights only during few-shot learning.
This way, only the relevant subset of the intermediate values needs to be stored, leading to memory efficiency.

\subsection{Edge-Server-System (\chref{ch5:edgeserver})}
\label{ch1-sec:edgeserver}

\fakeparagraph{Scenario} 
In \chref{ch2:inference}, \chref{ch3:adaptation} and \chref{ch4:learning}, we explored enabling deep learning on a single edge platform in three different scenarios. 
In addition to a single edge device, edge-server system is another commonly used infrastructure for edge intelligent applications. 
In edge-server system, several edge devices are connected to a remote server, and some information is allowed to be communicated between edge devices and the server. 
In \chref{ch5:edgeserver}, we design a new pipeline to enable efficient inference and efficient updating for edge-server system.
On such an edge-server system, on-device inference is preferred over cloud inference, since it can achieve a fast and stable inference with less energy consumption. 
Due to a possible lack of relevant training data at the initial deployment, pretrained DNNs may either fail to perform satisfactorily or be significantly improved after the initial deployment. 
However, the resources on edge devices are often limited \eg memory, computing power, and energy; the wireless communication is also constrained, \eg limited bandwidth. 
An efficient updating/learning that satisfies the resource constraints mentioned above is needed.

\fakeparagraph{Related Work}
Communication-efficient federated learning \cite{bib:ICLR18:Lin,bib:arXiv19:Kairouz,bib:arXiv20:Li} studies how to compress multiple gradients (to be communicated to the server) calculated on different sets of non-\textit{i.i.d.} local data, such that the aggregation of these (compressed) gradients could result in a similar convergence performance as centralized training on all data.
However, federated learning (as well as other on-device retraining methods) has the following main shortages, (\textit{i}) it conducts resource-intensive gradient calculation on edge devices; (\textit{ii}) the collected data are continuously accumulated on memory-constrained edge devices; (\textit{iii}) it needs to label a large number of samples on edge devices.

\fakeparagraph{Our Solution}
We propose a two-stage iterative process for a continuous improvement of the deployed model's accuracy, (\textit{i}) at each round, edge devices collect new data samples and send them to the server, and (\textit{ii}) the server retrains the network using all collected data, and then sends the updates to each edge device. 
An essential challenge herein is that the transmissions in the server-to-edge stage are highly constrained by the limited communication resource (\eg bandwidth, energy) in comparison to the edge-to-server stage for the following reasons. 
(\textit{i}) A batch of samples that can lead to reasonable updates is relatively smaller in size than the DNN model, especially for the low-resource data type used on edge devices; (\textit{ii}) the server may also receive data from other sources, \eg through data augmentation or new data collection campaigns.
We reduce the communication cost in the server-to-edge stage by distinguishing \textit{the redundant updated weights} given newly collected samples.
In our proposed solution, the server only selects and updates a small subset of critical weights that have a large contribution to the loss reduction during the retraining.

In the rest of this thesis, we first present our four scenarios of enabling deep learning on edge devices, \ie inference on edge devices in \chref{ch2:inference}, adaptation on edge devices in \chref{ch3:adaptation}, learning on edge devices in \chref{ch4:learning}, edge-server-system in \chref{ch5:edgeserver}, respectively; finally conclude and discuss the future work in \chref{ch6:conclusion}.
% !TEX root = 00_thesis.tex
\chapter[Inference on Edge Devices]{Inference on Edge Devices}
\label{ch2:inference}

We attempt to enable an efficient inference of DNNs on resource-constrained edge devices in this chapter.
Particularly, we focus on quantizing a pretrained DNN to fit the given resource constraints on edge devices while with the minimal accuracy drop.

\fakeparagraph{Main Resource Constraints}
State-of-the-art DNNs often contain a large number of floating-point weights and require a significant amount of floating-point multiply-accumulate operations, which are essential for conducting accurate inference. 
However, edge devices have neither powerful computational ability nor enormous storage.
Thus, for inference on edge devices, we consider that the main resource constraints are the \textit{limited static storage} and \textit{the limited computing power}. 

\fakeparagraph{Principles}
Unlike prior quantized networks that (\textit{i}) often assign an empirical global bitwidth across layers, (\textit{ii}) train the quantizer by minimizing the reconstruction error to the full precision weights, 
we propose an adaptive loss-aware trained quantizer for multi-bit quantization, that (\textit{i}) allocates an adaptive bitwidth to different weights w.r.t. the loss, (\textit{ii}) optimizes the multi-bit quantizer by minimizing the loss. 
The adaptive bitwidth assignment and the direct optimization objective allow our methods to find and remove more redundant bitwidth than prior works, thus achieving both storage efficiency and computation efficiency. 

The contents of this chapter are established mainly based on the paper ``Adaptive Loss-aware Quantization for Multi-bit Networks'' that is published on IEEE/CVF Conference on Computer Vision and Pattern Recognition (CVPR), 2020 \cite{bib:CVPR20:Qu}.

\section{Introduction}
\label{ch2-sec:introduction}

To take advantage of the various pretrained models for efficient inference on resource-constrained edge devices, it is common to compress the pretrained models via pruning \cite{bib:ICLR16:Han}, quantization \cite{bib:arXiv14:Gong,bib:CVPR17:Guo,bib:NIPS17:Lin,bib:ICLR18:Xu,bib:ECCV18:Zhang}, among others.
We focus on \emph{quantization}, especially quantizing both the full precision weights and activations of a deep neural network into binary encodes and the corresponding scaling factors \cite{bib:NIPS15:Courbariaux,bib:ECCV16:Rastegari}, which are also interpreted as binary basis vectors and floating-point coordinates in a geometry viewpoint \cite{bib:CVPR17:Guo}.
Neural networks quantized with binary encodes replace expensive floating-point operations by bitwise operations, which are supported even by microprocessors and often result in small memory footprints \cite{bib:ICLR18:Mishra}.
Since the space spanned by only one-bit binary basis and one coordinate is too sparse to optimize, many researchers suggest a multi-bit network (MBN) \cite{bib:arXiv14:Gong,bib:CVPR17:Guo,bib:AAAI18:Hu,bib:NIPS17:Lin,bib:ICLR18:Xu,bib:ECCV18:Zhang}, which allows to obtain a small size without notable accuracy loss and still leverages bitwise operations.
An MBN is usually obtained via quantization-aware training.
Recent studies~\cite{bib:ICLR18:Pedersoli} leverage bit-packing and bitwise computations for efficient deploying binary networks on a wide range of general devices, which also provides more flexibility to design multi-bit/binary networks. 

\fakeparagraph{Challenges}
Most MBN quantization schemes~\cite{bib:arXiv14:Gong,bib:CVPR17:Guo,bib:AAAI18:Hu,bib:NIPS17:Lin,bib:ICLR18:Xu,bib:ECCV18:Zhang} predetermine a global bitwidth, and learn a quantizer to transform the full precision parameters into binary bases and coordinates such that the quantized models do not incur a significant accuracy loss.
However, these approaches have the following drawbacks:
\begin{itemize}
    \item 
    A \textit{global bitwidth} may be sub-optimal.
    Recent studies on fixed-point quantization \cite{bib:ICLR18:Khoram,bib:ICML16:Lin} show that the optimal bitwidth varies across layers.
    \item 
    Previous efforts \cite{bib:NIPS17:Lin,bib:ICLR18:Xu,bib:ECCV18:Zhang} retain inference accuracy by minimizing \textit{the weight reconstruction error} rather than the loss function.
    Such an indirect optimization objective may lead to a notable loss in accuracy.
    Furthermore, they rely on approximated gradients, \eg straight-through estimators (STE) to propagate gradients through quantization functions during training.
    \item
    Many quantization schemes~\cite{bib:ECCV16:Rastegari,bib:ECCV18:Zhang} keep \textit{the first and last layer in full precision empirically}, because quantizing these layers to low bitwidth tends to dramatically decrease the inference accuracy \cite{bib:ECCV18:Wan,bib:ICLR18:Mishra2}. 
    However, these two full precision layers can be a significant storage overhead compared to other low-bit layers (see \secref{ch2-sec:experiment_imagenet}).
    Also, floating-point operations in both layers can take up the majority of computation in quantized networks~\cite{bib:ICLR19:Louizos}.
\end{itemize}

We overcome the above challenges and drawbacks via a novel \textbf{A}daptive \textbf{L}oss-aware \textbf{Q}uantization scheme (\alq).
Instead of using a uniform bitwidth, \alq assigns an adaptive different bitwidth to each group of weights.
More importantly, \alq directly minimizes the loss function w.r.t. the quantized weights, by iteratively learning a quantizer that (\textit{i}) smoothly reduces the number of binary bases (also the quantization bitwidth) and (\textit{ii}) alternatively optimizes the remaining binary bases and the corresponding coordinates.

\section{Related Work}
\label{ch2-sec:related}

\alq follows the trend to quantize the DNNs using discrete bases with lower bitwidth to reduce expensive floating-point operations as well as the static storage consumption.
Commonly used bases include fixed-point~\cite{bib:arXiv16:Zhou}, power of two \cite{bib:JMLR17:Hubara,bib:ICLR17:Zhou}, and $\{-1,0,+1\}$ \cite{bib:NIPS15:Courbariaux,bib:ECCV16:Rastegari}.
We focus on quantization with binary bases \ie $\{-1,+1\}$ among others for the following considerations. 
(\textit{i}) 
If both weights and activations are quantized with the same binary basis, it is possible to evaluate 32 floating-point multiply-accumulate operations (FLOPs) with only 3 instructions on a 32-bit microprocessor, \ie bitwise $\texttt{xnor}$, $\texttt{popcount}$, and accumulation.
This will significantly speed up the \texttt{conv} operations \cite{bib:JMLR17:Hubara,bib:ICLR18:Pedersoli}.
(\textit{ii})
Multi-bit quantization can be considered as the non-uniform counter-part of fixed-point (integer) quantization. 
A network quantized to fixed-point requires specialized integer arithmetic units and/or specialized integer storage units with various bitwidth for efficient computing~\cite{bib:MICRO17:Albericio,bib:ICLR18:Khoram}, whereas a network quantized with multiple binary bases adopts the same operations mentioned before as binary networks.
Therefore, multi-bit networks may also achieve a higher hardware efficiency than fixed-point network in adaptive bitwidth quantization. 
Popular networks quantized with binary bases include \textit{Binary Networks} and \textit{Multi-bit Networks}.

\subsection{Quantization for Binary Networks}
BNN \cite{bib:NIPS15:Courbariaux} is the first network with both binarized weights and activations.
It dramatically reduces the memory and computation but often with notable accuracy loss.
To resume the accuracy degradation from binarization, XNOR-Net \cite{bib:ECCV16:Rastegari} introduces a layerwise full precision scaling factor into BNN.
However, XNOR-Net leaves the first and last layers unquantized, which consumes more memory.
SYQ \cite{bib:CVPR18:Faraone} studies the efficiency of different structures during binarization/ternarization. 
LAB \cite{bib:ICLR17:Hou} is the first loss-aware quantization scheme which optimizes the weights by directly minimizing the loss function. 

\alq is inspired by recent loss-aware binary networks such as LAB \cite{bib:ICLR17:Hou}.
Loss-aware quantization has also been extended to fixed-point networks in \cite{bib:ICLR18:Hou}.
However, existing loss-aware quantization schemes proposed for binary and ternary networks \cite{bib:ICLR17:Hou,bib:ICLR18:Hou,bib:CVPR18:Zhou} are inapplicable for MBNs.
This is because multiple binary bases dramatically extend the optimization space with the same bitwidth (\ie an optimal set of binary bases rather than a single basis), which may be intractable. 
Some proposals \cite{bib:ICLR17:Hou,bib:ICLR18:Hou,bib:CVPR18:Zhou} still require full-precision weights and gradient approximation (backward STE and forward loss-aware projection), introducing undesirable errors when minimizing the loss.
In contrast, \alq is free from gradient approximation.

\subsection{Quantization for Multi-bit Networks}
MBNs denote networks that use multiple binary bases to trade-off storage and accuracy.
Gong \etal propose a residual quantization process, which greedily searches the next binary basis by minimizing the residual reconstruction error~\cite{bib:arXiv14:Gong}.
Guo \etal improve the greedy search with a least square refinement~\cite{bib:CVPR17:Guo}.
Xu \etal~\cite{bib:ICLR18:Xu} separate this search into two alternating steps, fixing coordinates then exhausted searching for optimal bases, and fixing the bases then refining the coordinates using the method in \cite{bib:CVPR17:Guo}.
LQ-Net~\cite{bib:ECCV18:Zhang} extends the scheme of~\cite{bib:ICLR18:Xu} with a moving average updating, which jointly quantizes weights and activations.
However, similar to XNOR-Net \cite{bib:ECCV16:Rastegari}, LQ-Net~\cite{bib:ECCV18:Zhang} does not quantize the first and last layers.
ABC-Net~\cite{bib:NIPS17:Lin} leverages the statistical information of all weights to construct the binary bases as a whole for all layers. 

All the state-of-the-art MBN quantization schemes minimize the weight reconstruction error rather than the loss function of the network. 
They also rely on the gradient approximation such as STE when back propagating the quantization function.
In addition, they all predetermine a uniform bitwidth for all parameters. 
The indirect objective, the approximated gradient, and the global bitwidth lead to a sub-optimal quantization.
\alq is the first scheme to explicitly optimize the loss function and incrementally train an adaptive bitwidth while without gradient approximation.

\section{Preliminaries and Notations}
\label{ch2-sec:notations}

We aim at multi-bit quantization with an adaptive bitwidth on a DNN consisting of $L$ convolutional (\texttt{conv}) layers or fully connected (\texttt{fc}) layers. 
To simplify the notation, we start the discussion with a single layer and extend to the entire network with $L$ layers in the implementation section \secref{ch2-sec:implementation}. 

For a \texttt{conv}/\texttt{fc} layer, its weights dominate the resource consumption of storage and computation than other parameters, \eg bias, batch normalization. 
We thus judiciously focus on quantizing the weight tensor of the \texttt{conv}/\texttt{fc} layer $l$.
To allow an adaptive bitwidth, we structure the weight tensor of the layer $l$ in \textit{disjoint groups}.
The weights in a single group will be quantized into the same bitwidth, whereas different group may have an adaptive different bitwidth.
Specifically, for the \textit{vectorized} weight tensor $\bm{w}_l\in\mathbb{R}^{N}$ of layer $l$, we divide $\bm{w}_l$ into $G$ disjoint groups.
For simplicity, we omit the subscript $l$ in the following discussion. 
Each group of weights is denoted by $\bm{w}_{g}$, where $\bm{w}_{g}\in\mathbb{R}^{n}$ and $N = n \times G$.
In other words, the overall $N$ weights in layer $l$ are evenly partitioned into $G$ groups, see more details in \secref{ch2-sec:experiment_group}.  
Then the multi-bit quantized weights $\hat{\bm{w}}_{g}$ of group $g$ are formulated as,
\begin{equation}
    \bm{\hat{w}}_g = \sum_{i=1}^{I_g}\alpha_i\bm{\beta}_i=\bm{B}_g\bm{\alpha}_g
    \label{ch2-eq:multi_bit}
\end{equation}
where $\bm{\beta}_i\in\{-1,+1\}^{n\times 1}$ and $\alpha_i\in\mathbb{R}_+$ are the $i$-th binary basis and the corresponding coordinate; $I_g$ represents the quantization bitwidth, \ie the number of binary bases, of group $g$.
$\bm{B}_g\in\{-1,+1\}^{n\times I_g}$ and $\bm{\alpha}_g\in\mathbb{R}_+^{I_g\times1}$ are the matrix forms of the binary bases and the coordinates.
We further denote $\bm{\alpha}=\bm{\alpha}_{1:G}$ as vectorized coordinates $\bm{\alpha}_g$ of all weight groups, and $\bm{B}=\bm{B}_{1:G}$ as concatenated binary bases $\bm{B}_g$ of all weight groups.
A layer $l$ quantized as above yields an average bitwidth
\begin{equation}
    I = \frac{1}{G}\sum_{g = 1}^G I_g
    \label{ch2-eq:avg_bit}
\end{equation}

\section{Adaptive Loss-Aware Quantization}
\label{ch2-sec:method}

\begin{figure}[tbp!]
    \centering
    \includegraphics[width=0.99\textwidth]{./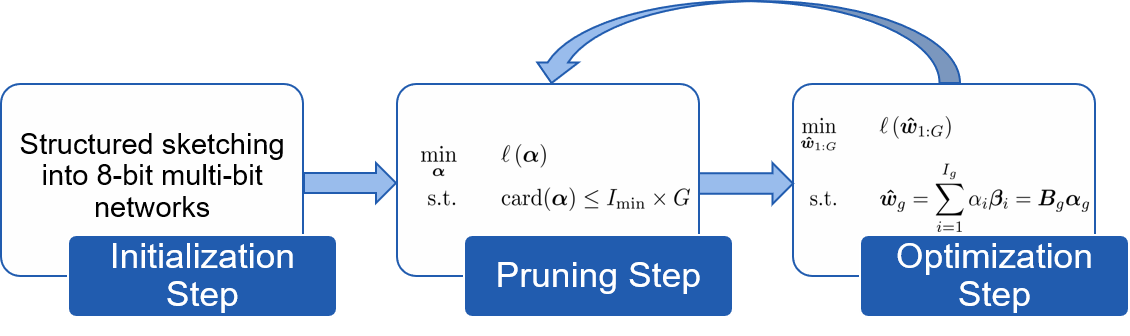}
    \caption[The overall approach of \alq.]{The figure depicts the overall approach of \alq. In Initialization Step, the pretrained full precision weights are separated into disjoint groups and then are quantized into an 8-bit multi-bit form. In Pruning Step, we search an adaptive different bitwidth for each group of weights by removing the unimportant $\alpha$'s w.r.t. the loss. Based on the searched bitwidth assignment, we further conduct an Optimization step to train the remaining binary bases $\bm{B}_g$ and coordinates $\bm{\alpha}_g$. Both Pruning Step and Optimization Step are conducted iteratively.}
    \label{ch2-fig:approach}
\end{figure}

\subsection{Weight Quantization Overview}
\label{ch2-sec:overview}

\fakeparagraph{Problem Formulation}
\alq quantizes weights by directly minimizing the loss function rather than the reconstruction error.
For layer $l$, the process can be formulated as the following optimization problem.
\begin{eqnarray}
  \min_{\bm{\hat{w}}_{1:G}} & & \ell\left(\bm{\hat{w}}_{1:G}\right) \label{ch2-eq:objective} \\
  \text{s.t.}           & & \bm{\hat{w}}_g = \sum_{i=1}^{I_g}\alpha_i\bm{\beta}_i = \bm{B}_g\bm{\alpha}_g \quad \forall g\in 1,...,G\label{ch2-eq:weights} \\
                        & & \mathrm{card}(\bm{\alpha}) = I\times G \leq I_\mathrm{min}\times G \label{ch2-eq:sum_Ig}
\end{eqnarray}
where $\ell$ is the loss; $\mathrm{card}(.)$ denotes the cardinality of the set, \ie the total number of elements in $\bm{\alpha}$; $I_\mathrm{min}$ is the desirable average bitwidth, which is determined by the storage constraints on edge devices.
Since the group size $n$ is the same in one layer, $\mathrm{card}(\bm{\alpha})$ is proportional to the storage consumption.

\fakeparagraph{Solution Pipeline}
The constrained domain of \equref{ch2-eq:weights} and \equref{ch2-eq:sum_Ig} are both discrete and non-convex. 
Directly conducting an exhaustive searching is NP-hard and infeasible on current DNNs. 
Therefore, we propose to narrow down the search space and disentangle the constraints into two sub-problems. 
Particularly, our \alq solves the optimization problem in \equref{ch2-eq:objective}-\equref{ch2-eq:sum_Ig} by three steps.  
The overall approach is shown in \figref{ch2-fig:approach}. 
The pseudocode of the entire pipeline is illustrated in \algoref{ch2-alg:pipeline} in \secref{ch2-sec:implementation_pipeline}.

\begin{itemize}
    \item 
    \underline{Initialization Step:} \textbf{Structured Sketching} (\secref{ch2-sec:implementation_initialization}).
    In this step, we adapt the network sketching in~\cite{bib:CVPR17:Guo}, and propose a structured sketching algorithm. 
    It first partitions the pretrained full precision weights $\bm{w}$ into $G$ groups; then quantizes each $\bm{w}_{g}$ into its 8-bit multi-bit form $\bm{\hat{w}}_{g}$ by greedily searching the optimal binary basis vector $\bm{\beta}_i$ and the optimal scaling factor $\alpha_i$. 
    This step not only provides a good initial point for the following steps, but also restricts each group to a maximal 8-bit to reduce the search space. 
   
    \item 
    \underline{Pruning Step:} \textbf{Pruning in $\bm{\alpha}$ Domain} (\secref{ch2-sec:pruning} and \secref{ch2-sec:implementation_pruning}).
    This step starts from the initialized 8-bit network obtained in Initialization Step, and then progressively reduces the average bitwidth $I$ by pruning the least important (w.r.t. the loss) coordinates in $\bm{\alpha}$ domain.
    Note that removing an element $\alpha_i$ will also lead to the removal of the binary basis $\bm{\beta}_i$, which in effect results in a smaller bitwidth $I_g$ for group $g$.
    This way, no sparse tensor is introduced. 
    Note that sparse tensors could lead to a detrimental irregular computation.
    Since the importance of each weight group differs, the resulting $I_g$ varies across groups, and thus contributes to an adaptive bitwidth $I_g$ for each group.
    In this step, we only set some elements of $\bm{\alpha}$ to zero (also remove them from $\bm{\alpha}$ leading to a reduced $I_g$) without changing the others.
    The sub-problem for Pruning Step is:
    \begin{eqnarray}
        \min_{\bm{\alpha}}  & & \ell\left(\bm{\alpha}\right) \label{ch2-eq:pruning_objective} \\
        \text{s.t.}         & & \mathrm{card}(\bm{\alpha}) \leq I_\mathrm{min}\times G \label{ch2-eq:pruning_constraint}
    \end{eqnarray}
    
    \item
    \underline{Optimization Step:} \textbf{Optimizing Binary Bases $\bm{B}_g$ and Coordinates $\bm{\alpha}_g$} (\secref{ch2-sec:optimization} and \secref{ch2-sec:implementaion_optimization}).
    In this step, we retrain the remaining binary bases and coordinates to recover the accuracy degradation induced by the bitwidth reduction.
    Similar to~\cite{bib:ICLR18:Xu}, we take an alternative approach for better accuracy recovery.
    Specifically, we first search for a new set of binary bases w.r.t. the loss given fixed coordinates.
    Then we optimize the coordinates by fixing the binary bases.
    The sub-problem for Optimization Step is:
    \begin{eqnarray}
        \min_{\bm{\hat{w}}_{1:G}}   & & \ell\left(\bm{\hat{w}}_{1:G}\right) \label{ch2-eq:optimization_objective} \\
        \text{s.t.}                 & & \bm{\hat{w}}_g = \sum_{i=1}^{I_g}\alpha_i\bm{\beta}_i=\bm{B}_g\bm{\alpha}_g \quad \forall g \in 1,...,G \label{ch2-eq:optimization_constraint}
    \end{eqnarray}
    
\end{itemize}

\noindent
For a higher accuracy, state-of-the-art unstructured pruning methods \cite{bib:ICLR16:Han,bib:ICLR19:Frankle} often conduct pruning and sparse fine-tuning iteratively rather than the one-shot manner. 
Similarly, we also conduct our Pruning Step and our Optimization Step \textit{iteratively} until the average bitwidth reaches the desired bitwidth. 
Namely, the original problem of \equref{ch2-eq:objective}-\equref{ch2-eq:sum_Ig} is decoupled into two sub-problems of \equref{ch2-eq:pruning_objective}-\equref{ch2-eq:pruning_constraint} and \equref{ch2-eq:optimization_objective}-\equref{ch2-eq:optimization_constraint}, and the two sub-problems are solved iteratively.

\fakeparagraph{Optimizer Framework}
We consider both Pruning Step and Optimization Step above as an optimization problem with \textit{domain constraints}, and solve them using the same optimization framework: subgradient methods with projection update \cite{bib:JMLR11:Duchi}.

The optimization problem in \equref{ch2-eq:optimization_objective}-\equref{ch2-eq:optimization_constraint} imposes domain constraints on $\bm{B}_g$ because they can only be discrete binary bases.
The optimization problem in \equref{ch2-eq:pruning_objective}-\equref{ch2-eq:pruning_constraint} can be considered as with a trivial domain constraint: the output $\bm{\alpha}$ should be a subset (subvector) of the input $\bm{\alpha}$.
Furthermore, the feasible sets for both $\bm{B}_g$ and $\bm{\alpha}$ are bounded.

Subgradient methods with projection update are effective to solve problems in the form of $\min_{\bm{\theta}}(\ell(\bm{\theta}))$ s.t. $\bm{\theta}\in\Theta$ \cite{bib:JMLR11:Duchi}.
We apply AMSGrad~\cite{bib:ICLR18:Reddi}, an adaptive stochastic subgradient method with projection update, as the common optimizer framework in Pruning Step and Optimization Step.
At training iteration $s$, AMSGrad generates the next update as,
\begin{equation}
    \begin{split}
    \bm{\theta}^{s+1} & = \Pi_{\Theta,\sqrt{\bm{\hat{V}}^s}}(\bm{\theta}^s-a^s\bm{m}^s/\sqrt{\bm{\hat{v}}^s}) \\
                 & = \underset{{\bm{\theta}\in\Theta}}{\mathrm{argmin}}~\|(\sqrt{\bm{\hat{V}}^s})^{1/2}(\bm{\theta}-(\bm{\theta}^s-\frac{a^s\bm{m}^s}{\sqrt{\bm{\hat{v}}^s}}))\|
    \end{split}
    \label{ch2-eq:amsgrad_theta}
\end{equation}
where $\Pi$ is a projection operator; $\Theta$ is the feasible domain of $\bm{\theta}$; $a^s$ is the learning rate; $\bm{m}^s$ is the (unbiased) first momentum; $\bm{\hat{v}}^s$ is the (unbiased) maximum second momentum; and $\bm{\hat{V}}^s$ is the diagonal matrix of $\bm{\hat{v}}^s$.

In our context, \equref{ch2-eq:amsgrad_theta} can be written as,
\begin{equation}
    \bm{\hat{w}}_g^{s+1} = \underset{\bm{\hat{w}}_g\in\mathbb{W}}{\mathrm{argmin}} f^s(\bm{\hat{w}}_g)
    \label{ch2-eq:amsgrad_w1}
\end{equation}
\begin{equation}
    f^s=(a^s\bm{m}^s)^{\mathrm{T}}(\bm{\hat{w}}_g-\bm{\hat{w}}_g^s)+\frac{1}{2}(\bm{\hat{w}}_g-\bm{\hat{w}}_g^s)^{\mathrm{T}}\sqrt{\bm{\hat{V}}^s}(\bm{\hat{w}}_g-\bm{\hat{w}}_g^s)
    \label{ch2-eq:amsgrad_w2}
\end{equation}
where $\mathbb{W}$ is the feasible domain of $\bm{\hat{w}}_g$.

Pruning Step and Optimization Step have different feasible domains of $\mathbb{W}$ according to their objective (see details in~\secref{ch2-sec:pruning} and~\secref{ch2-sec:optimization}).
\equref{ch2-eq:amsgrad_w2} approximates the loss increment incurred by $\bm{\hat{w}}_g$ around the current point $\bm{\hat{w}}_g^s$ as a quadratic model function under domain constraints \cite{bib:NIPS15:Dauphin,bib:JMLR11:Duchi,bib:ICLR18:Reddi}.
For simplicity, we replace $a^s\bm{m}^s$ with $\bm{g}^s$ and replace $\sqrt{\bm{\hat{V}}^s}$ with $\bm{H}^s$.
$\bm{g}^s$ and $\bm{H}^s$ are updated by the loss gradient of $\bm{\hat{w}}_g^s$. 
Thus, the required input of each AMSGrad step is $\partial\ell^s/\partial {\bm{\hat{w}}_g}^s$.
It can be directly obtained during the backward propagation, since $\bm{\hat{w}}_g^s$ is used as an intermediate value during the forward propagation.

\subsection{Pruning in $\bm{\alpha}$ Domain}
\label{ch2-sec:pruning}

As introduced in \secref{ch2-sec:overview}, we reduce the average bitwidth $I$ by pruning the elements in $\bm{\alpha}$ w.r.t. the resulting loss.
If one element $\alpha_i$ in $\bm{\alpha}$ is pruned, the corresponding dimension $\bm{\beta}_i$ is also removed from $\bm{B}$.
Now we explain how to instantiate the optimizer in \equref{ch2-eq:amsgrad_w1} to solve \equref{ch2-eq:pruning_objective}-\equref{ch2-eq:pruning_constraint} of Pruning Step.

As discussed above, pruning in $\bm{\alpha}$ domain is regarded as an optimization problem solved in multiple training iterations.  
Thus, the cardinality of the chosen subset (\ie the average bitwidth) is uniformly reduced over training iterations. 
For example, assume there are $T$ training iterations in total, the initial average bitwidth is $I^0$ and the desired average bitwidth after $T$ iterations $I^{T}$ is $I_\mathrm{min}$.
Then at each iteration $t$, ($M_p = (I^{0}-I_\mathrm{min})\times G/T$) of $\alpha_i^t$'s are pruned. 
This way, the cardinality after $T$ iterations will be smaller than $I_\mathrm{min}\times G$. 

When pruning in the $\bm{\alpha}$ domain, $\bm{B}$ is considered as invariant. 
Hence \equref{ch2-eq:amsgrad_w1} and \equref{ch2-eq:amsgrad_w2} become,
\begin{equation}
    \bm{\alpha}^{t+1} = \underset{\bm{\alpha}\in\mathbb{A}}{\mathrm{argmin}}~f_{\bm{\alpha}}^t(\bm{\alpha})
    \label{ch2-eq:amsgrad_alpha1}
\end{equation}

\begin{equation}
    f_{\bm{\alpha}}^t=(\bm{g}_{\bm{\alpha}}^t)^{\mathrm{T}}(\bm{\alpha}-\bm{\alpha}^t)+\frac{1}{2}(\bm{\alpha}-\bm{\alpha}^t)^{\mathrm{T}}\bm{H_\alpha}^t(\bm{\alpha}-\bm{\alpha}^t)
    \label{ch2-eq:amsgrad_alpha2}
\end{equation}
where $\bm{g}_{\bm{\alpha}}^t$ and $\bm{H_\alpha}^t$ are similar to the ones in \equref{ch2-eq:amsgrad_w2} but are in the $\bm{\alpha}$ domain.
If $\alpha_i^t$ is pruned, the $i$-th element in $\bm{\alpha}$ is set to $0$ in the above~\equref{ch2-eq:amsgrad_alpha1} and~\equref{ch2-eq:amsgrad_alpha2}. 
Thus, the constrained domain $\mathbb{A}$ is taken as all possible vectors with $M_p$ zero elements in $\bm{\alpha}^t$. 

AMSGrad uses a diagonal matrix of $\bm{H_\alpha}^t$ in the quadratic model function, which decouples each element in $\bm{\alpha}^t$.
This means the loss increment caused by several $\alpha_i^t$ equals the sum of the increments caused by them individually, which are calculated as,
\begin{equation}
    f_{\bm{\alpha},i}^t = -g_{\bm{\alpha},i}^t~\alpha_i^t+\frac{1}{2}~H_{\bm{\alpha},{ii}}^t~({\alpha_i^t})^2
    \label{ch2-eq:taylor_pruning}
\end{equation}
All items of $f_{\bm{\alpha},i}^t$ are sorted in ascending.
Then the first $M_p$ items ($\alpha_i^t$) in the sorted list are removed from $\bm{\alpha}^t$, and results in a smaller cardinality $I^{t}\times G$. 
The input of the AMSGrad step in $\bm{\alpha}$ domain is the loss gradient of $\bm{\alpha}_g^t$, which can be computed with the chain rule,
\begin{equation}
    \frac{\partial\ell^t}{\partial\bm{\alpha}_g^t}={\bm{B}_g^t}^{\mathrm{T}} \frac{\partial\ell^t}{\partial {\bm{\hat{w}}_g}^t}
    \label{ch2-eq:gradient_pruning}
\end{equation}
\begin{equation}
    \bm{\hat{w}}_g^t=\bm{B}_g^t \bm{\alpha}_g^t
\end{equation}

Our pipeline allows to reduce the bitwidth smoothly, since the average bitwidth can be floating-point.
In \alq, since different layers have a similar group size (see in \secref{ch2-sec:experiment_group}), the loss increment caused by pruning is sorted among all layers, such that only a global pruning number needs to be determined.
More details are explained in \secref{ch2-sec:implementation_pipeline}.
This Pruning Step not only provides a loss-aware adaptive bitwidth, but also seeks a better initialization for the successive Optimization Step, since low-bit quantized weights may be relatively far from their original full precision values.

\subsection{Optimizing Binary Bases and Coordinates}
\label{ch2-sec:optimization}

After pruning, the loss degradation needs to be recovered. 
Following~\equref{ch2-eq:amsgrad_w1}, the objective in Optimization Step is
\begin{equation}
    \bm{\hat{w}}_g^{s+1} = \underset{\bm{\hat{w}}_g\in\mathbb{W}}{\mathrm{argmin}}~f^s(\bm{\hat{w}}_g)
\end{equation}
The constrained domain $\mathbb{W}$ is decided by, both binary bases and full precision coordinates.
Hence directly searching for the optimal $\bm{\hat{w}}_g$ is NP-hard.
Instead, we optimize $\bm{B}_g$ and $\bm{\alpha}_g$ in an alternative manner, as prior multi-bit quantization works \cite{bib:ICLR18:Xu,bib:ECCV18:Zhang} that minimize the reconstruction error.

\fakeparagraph{Optimizing $\bm{B}_g$}
We directly search for the optimal bases with AMSGrad.
In each training iteration $q$, we fix $\bm{\alpha}_g^q$, and update $\bm{B}_g^q$.
We find the optimal increment for each group of weights, such that it converts to a new set of binary bases, $\bm{B}_g^{q+1}$.
This Optimization Step searches a new space spanned by $\bm{B}_g^{q+1}$ based on the loss reduction, which prevents the pruned space to be always a subspace of the previous one.

According to~\equref{ch2-eq:amsgrad_w1} and~\equref{ch2-eq:amsgrad_w2}, the optimal $\bm{B}_g$ w.r.t. the loss is updated by,
\begin{equation}
    \bm{B}_g^{q+1} = \underset{\bm{B}_g\in\{-1,+1\}^{n\times I_g}}{\mathrm{argmin}}~f^q(\bm{B}_g)
    \label{ch2-eq:amsgrad_B1}
\end{equation}
\begin{equation}
    f^q=(\bm{g}^q)^{\mathrm{T}}(\bm{B}_g\bm{\alpha}_g^{q}-\bm{\hat{w}}_g^q)+\frac{1}{2}(\bm{B}_g\bm{\alpha}_g^{q}-\bm{\hat{w}}_g^q)^{\mathrm{T}}\bm{H}^q (\bm{B}_g\bm{\alpha}_g^{q}-\bm{\hat{w}}_g^q)
    \label{ch2-eq:amsgrad_B2}
\end{equation}
where $\bm{\hat{w}}_g^q = \bm{B}_g^{q}\bm{\alpha}_g^{q}$.

Recall that $\bm{B}_g^q\in\{-1,+1\}^{n\times I_g}$. 
Since $\bm{H}^q$ is diagonal in AMSGrad, each row vector in $\bm{B}_g^{q+1}$ can be independently determined.
For example, the $j$-th row is computed as,
\begin{equation}
    \bm{B}_{g,j}^{q+1} = \underset{\bm{B}_{g,j}}{\mathrm{argmin}}~\|\bm{B}_{g,j}\bm{\alpha}_{g}^q-(\hat{w}_{g,j}^q-g^q_j/H_{jj}^q)\|,\quad j \in 1,...,n
    \label{ch2-eq:row}
\end{equation}
Since in general $n>>I_g$, to reduce the computation complexity, we firstly compute all $2^{I_g}$ possible values of
\begin{equation}
    \bm{b}^{\mathrm{T}}\bm{\alpha}_{g}^q~,~~~ \bm{b}^{\mathrm{T}}\in\{-1,+1\}^{1\times I_g}
    \label{ch2-eq:comb}
\end{equation}
Then each row vector $\bm{B}_{g,j}^{q+1}$ can be directly substituted with the optimal $\bm{b}^{\mathrm{T}}$ through an exhaustive searching in $2^{I_g}$ values.

\fakeparagraph{Optimizing $\bm{\alpha}_g$}
The above obtained set of binary bases $\bm{B}_g$ spans a new $I_g$-dim linear space, which is a subspace of original $n$-dim full space.
The current $\bm{\alpha}_g$ is unlikely to be the optimal point in this $I_g$-dim space, so now we optimize $\bm{\alpha}_g$.
Since $\bm{\alpha}_g$ is in full precision, \ie $\bm{\alpha}_g\in\mathbb{R}^{I_g\times1}$, there is no domain constraint and thus no need for projection updating.
% Optimizing full precision $\bm{w}_g$ takes incremental steps in original $n$-dim full space (spanned by orthonormal bases). 
% Similarly, optimizing $\bm{\alpha}_g$ searches steps in a $I_g$-dim subspace (spanned by $\bm{B}_g$). 
Similar to optimizing full precision $\bm{w}_g$, conventional training strategies can be directly used to optimize $\bm{\alpha}_g$.

Similar to~\equref{ch2-eq:amsgrad_alpha1} and~\equref{ch2-eq:amsgrad_alpha2}, we use AMSGrad optimizer in $\bm{\alpha}$ domain without projection updating, for each group in the $p$-th training iteration as,
\begin{equation}
    \bm{\alpha}_g^{p+1} = \bm{\alpha}_g^p-a_{\bm{\alpha}}^p\bm{m}_{\bm{\alpha}}^p/\sqrt{\bm{\hat{v}_\alpha}^p}
    \label{ch2-eq:optimizing_alpha}
\end{equation}

We also add an L2-norm regularization on $\bm{\alpha}_g$ to enforce unimportant coordinates to zero. 
If there is a negative value in $\bm{\alpha}_{g}$, the corresponding basis is set to its negative complement, to keep $\bm{\alpha}_{g}$ semi-positive definite. Optimizing $\bm{B}_g$  and $\bm{\alpha}_g$ does not influence the number of binary bases $I_g$.

\fakeparagraph{Optimization Speedup}
Since $\bm{\alpha}_g$ is full precision, updating $\bm{\alpha}_g^q$ is much cheaper than exhaustively search $\bm{B}_g^{q+1}$. 
Even if the main purpose of the first step in Optimization Step is optimizing bases, we also add an updating process for $\bm{\alpha}_g^q$ in each training iteration $q$.

We fix $\bm{B}_{g}^{q+1}$, and update $\bm{\alpha}_{g}^{q}$.
The overall increment of quantized weights from both updating processes is,
\begin{equation}
    \bm{\hat{w}}^{q+1}_g - \bm{\hat{w}}^q_{g} = \bm{B}_{g}^{q+1}\bm{\alpha}_{g}^{q+1}-\bm{B}_{g}^{q}\bm{\alpha}_{g}^{q}
    \label{ch2-eq:W}
\end{equation}
Substituting~\equref{ch2-eq:W} into~\equref{ch2-eq:amsgrad_w1} and~\equref{ch2-eq:amsgrad_w2}, we have,
\begin{equation}
    \bm{\alpha}_{g}^{q+1}=-((\bm{B}_{g}^{q+1})^{\mathrm{T}} \bm{H}^q \bm{B}_{g}^{q+1})^{-1}\times((\bm{B}_{g}^{q+1})^{\mathrm{T}}(\bm{g}^q-\bm{H}^q\bm{B}^q_{g}\bm{\alpha}_{g}^{q}))
    \label{ch2-eq:alpha}
\end{equation}
To ensure the inverse in~\equref{ch2-eq:alpha} exists, we add a small diagonal matrix $\lambda \mathbf{I}$ to \equref{ch2-eq:alpha},
\begin{equation}
    \bm{\alpha}_{g}^{q+1}=-((\bm{B}_{g}^{q+1})^{\mathrm{T}} \bm{H}^q \bm{B}_{g}^{q+1}+\lambda \mathbf{I})^{-1}\times((\bm{B}_{g}^{q+1})^{\mathrm{T}}(\bm{g}^q-\bm{H}^q\bm{B}^q_{g}\bm{\alpha}_{g}^{q}))
    \label{ch2-eq:alpha_lambda}
\end{equation}
where $\lambda=10^{-6}$.

\subsection{Implementation}
\label{ch2-sec:implementation}

In this section, we discuss the detailed implementation of \alq. 
We elaborate the pseudocodes of three steps and analyze their complexity. 
Note that the discussion in this section is extended to the entire networks with $L$ layers, thus we reintroduce the layer index $l$ for clarity reasons.

\subsubsection{Implementation of Initialization Step}
\label{ch2-sec:implementation_initialization}

We adapt the network sketching in~\cite{bib:CVPR17:Guo}, and propose a structured sketching algorithm for Initialization Step, see \algoref{ch2-alg:sketching}\footnote{Circled operation in \algoref{ch2-alg:sketching} means elementwise operations.}. 
This algorithm partitions the pretrained full precision weights $\bm{w}_l$ of the $l$-th layer into $G_l$ groups. 
We study the different structures of grouping in \secref{ch2-sec:experiment_group}. 
The vectorized weights $\bm{w}_{l,g}$ of each group are quantized with $I_{l,g}$ linear independent binary bases (\ie column vectors in $\bm{B}_{l,g}$) and corresponding coordinates $\bm{\alpha}_{l,g}$ to minimize the reconstruction error. 
This algorithm initializes the matrix of binary bases $\bm{B}_{l,g}$, the vector of floating-point coordinates $\bm{\alpha}_{l,g}$, and the scalar of integer bitwidth $I_{l,g}$ in each group across layers.
The initial reconstruction error is upper bounded by a threshold $\sigma$. 
In addition, a maximum bitwidth of each group is defined as $I_\mathrm{max}$.
Both of these two parameters determine the initial bitwidth $I_{l,g}$.
We discuss the choice of group size $n$, and the maximum bitwidth $I_\mathrm{max}$ in \secref{ch2-sec:experiment_initialization}.

\begin{algorithm}[!htbp]
    \caption{Structured sketching of weights}\label{ch2-alg:sketching}
    \KwIn{$\bm{w}_{1:L}$, $G_{1:L}$, $I_\mathrm{max}$, $\sigma$}
    \KwOut{$\{\{\bm{\alpha}_{l,g},\bm{B}_{l,g}, I_{l,g}\}_{g=1}^{G_l}\}_{l=1}^{L}$}
    \For {$l\leftarrow 1$ \KwTo $L$} {
        \For {$g \leftarrow 1$ \KwTo $G_l$} {
            Fetch and vectorize $\bm{w}_{l,g}$ from $\bm{w}_l$\;
            Initialize $\bm{\epsilon} = \bm{w}_{l,g}$, $i=0$\;
            $\bm{B}_{l,g} = [~]$\;
            \While{$\|\bm{\epsilon}\oslash\bm{w}_{l,g}\|_2^2>\sigma$ \texttt{\textup{and}} $i<I_\mathrm{max}$} {
                $i = i+1$\;
                $\bm{\beta}_{i} = \mathrm{sign}(\bm{\epsilon})$\;
                $\bm{B}_{l,g} = [\bm{B}_{l,g}, \bm{\beta}_{i}]$\;
                \tcc{Find the optimal point spanned by $\bm{B}_{l,g}$}
                $\bm{\alpha}_{l,g} = (\bm{B}_{l,g}^\mathrm{T}\bm{B}_{l,g})^{-1}\bm{B}_{l,g}^\mathrm{T}\bm{w}_{l,g}$\;
                \tcc{Update the residual reconstruction error}
                $\bm{\epsilon} = \bm{w}_{l,g}-\bm{B}_{l,g}\bm{\alpha}_{l,g}$\;
            }
            $I_{l,g}=i$\;
        }
    }
\end{algorithm}

\begin{theorem}
    The column vectors in $\bm{B}_{l,g}$ are linear independent.
\end{theorem}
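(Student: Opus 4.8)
The plan is to argue by induction on the loop counter $i$ in \algoref{ch2-alg:sketching}, establishing that after each append operation the current columns $\bm{\beta}_1,\dots,\bm{\beta}_i$ of $\bm{B}_{l,g}$ are linearly independent. The engine of the argument is the observation that the coordinate update $\bm{\alpha}_{l,g}=(\bm{B}_{l,g}^{\mathrm T}\bm{B}_{l,g})^{-1}\bm{B}_{l,g}^{\mathrm T}\bm{w}_{l,g}$ is exactly the least-squares solution, so that $\bm{B}_{l,g}\bm{\alpha}_{l,g}$ is the orthogonal projection of $\bm{w}_{l,g}$ onto the column space of $\bm{B}_{l,g}$. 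Consequently the residual $\bm{\epsilon}=\bm{w}_{l,g}-\bm{B}_{l,g}\bm{\alpha}_{l,g}$ satisfies the normal equations $\bm{B}_{l,g}^{\mathrm T}\bm{\epsilon}=\bm{0}$, i.e.\ $\bm{\epsilon}$ is orthogonal to every column currently in $\bm{B}_{l,g}$. This orthogonality is the fact I would exploit to force each newly chosen basis vector out of the span of the old ones.

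For the base case $i=1$ the single column $\bm{\beta}_1=\mathrm{sign}(\bm{w}_{l,g})$ is a nonzero vector and is therefore trivially independent. For the inductive step I would assume $\bm{\beta}_1,\dots,\bm{\beta}_i$ are linearly independent; then $\bm{B}_{l,g}^{\mathrm T}\bm{B}_{l,g}$ is invertible, the projection above is well defined, and $\bm{\epsilon}\perp\mathrm{col}(\bm{B}_{l,g})$. If the \texttt{while} loop performs another iteration, its guard forces $\bm{\epsilon}\neq\bm{0}$, and the appended column is $\bm{\beta}_{i+1}=\mathrm{sign}(\bm{\epsilon})$. The decisive computation is $\bm{\beta}_{i+1}^{\mathrm T}\bm{\epsilon}=\mathrm{sign}(\bm{\epsilon})^{\mathrm T}\bm{\epsilon}=\|\bm{\epsilon}\|_1>0$. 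Were $\bm{\beta}_{i+1}\in\mathrm{col}(\bm{B}_{l,g})$, orthogonality would give $\bm{\beta}_{i+1}^{\mathrm T}\bm{\epsilon}=0$, a contradiction; hence $\bm{\beta}_{i+1}\notin\mathrm{col}(\bm{B}_{l,g})$ and $\bm{\beta}_1,\dots,\bm{\beta}_{i+1}$ remain linearly independent, closing the induction.

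The one subtlety I would handle with care is the apparent circularity between invertibility and independence: the coordinate update I rely on to certify orthogonality itself presupposes that $\bm{B}_{l,g}^{\mathrm T}\bm{B}_{l,g}$ is invertible. I resolve this by folding both facts into the same induction, so that the independence of $\bm{\beta}_1,\dots,\bm{\beta}_i$ proved at stage $i$ is precisely what licenses forming the projection used at stage $i+1$. The only hypothesis worth flagging is $\bm{w}_{l,g}\neq\bm{0}$, without which the relative-error guard $\|\bm{\epsilon}\oslash\bm{w}_{l,g}\|_2^2>\sigma$ is ill-posed; under this mild assumption $\mathrm{sign}(\bm{\epsilon})$ is well defined and nonzero throughout. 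I expect the strict positivity $\mathrm{sign}(\bm{\epsilon})^{\mathrm T}\bm{\epsilon}=\|\bm{\epsilon}\|_1>0$ to be the crux of the proof: it is exactly what prevents the greedily chosen sign direction from collapsing into the already-covered span, and it is where the specific choice $\bm{\beta}_{i+1}=\mathrm{sign}(\bm{\epsilon})$ (rather than an arbitrary vector in $\{-1,+1\}^n$) earns its keep.
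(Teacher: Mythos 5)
Your proof is correct and follows essentially the same argument as the paper's: the least-squares update makes the residual $\bm{\epsilon}$ orthogonal to $\mathrm{span}(\bm{B}_{l,g})$, and $\mathrm{sign}(\bm{\epsilon})^{\mathrm{T}}\bm{\epsilon}=\|\bm{\epsilon}\|_1>0$ forces each new column out of that span. Your explicit induction merely formalizes what the paper states iteratively, with the added care of resolving the independence-versus-invertibility circularity that the paper leaves implicit.
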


\begin{proof}
    The instruction $\bm{\alpha}_{l,g} = (\bm{B}_{l,g}^\mathrm{T}\bm{B}_{l,g})^{-1}\bm{B}_{l,g}^\mathrm{T}\bm{w}_{l,g}$ ensures $\bm{\alpha}_{l,g}$ is the optimal point in $\mathrm{span}(\bm{B}_{l,g})$ regarding the least square reconstruction error $\bm{\epsilon}$. 
    Thus, $\bm{\epsilon}$ is orthogonal to $\mathrm{span}(\bm{B}_{l,g})$. 
    The new basis is computed from the next iteration by $\bm{\beta}_{i}= \mathrm{sign}(\bm{\epsilon})$. 
    Since $\mathrm{sign}(\bm{\epsilon})\cdot\bm{\epsilon}>0, \forall\bm{\epsilon}\ne\bm{0}$, we have $\bm{\beta}_{i}\notin \mathrm{span}(\bm{B}_{l,g})$. 
    Thus, the iteratively generated column vectors in $\bm{B}_{l,g}$ are linear independent.
    This also means the square matrix of $\bm{B}_{l,g}^\mathrm{T}\bm{B}_{l,g}$ is invertible.
\end{proof}

\subsubsection{Implementation of Pruning Step}
\label{ch2-sec:implementation_pruning}

As discussed in \secref{ch2-sec:pruning}, $\alpha_i$'s are pruned iteratively in mini-batches. 
During each Pruning Step, for example, $30\%$ of $\alpha_i$'s are iteratively pruned in one epoch. 
Due to the high complexity of sorting all $f_{\bm{\alpha},i}$, sorting is firstly executed in each layer, and the top-$k\%$ $f_{\bm{\alpha}_l,i}$ of the $l$-th layer are selected to resort again for pruning.
Recall that $l$ stands for the layer index.
$k$ is generally small, \eg $1$ or $0.5$, which ensures that the pruned $\alpha_i$'s in one iteration do not always come from a single layer.
There are $n_l$ weights in each group, and $G_l$ groups in the $l$-th layer.
The sorting complexity mainly depends on the sorting in the most critical layer that has the largest $\mathrm{card}(\bm{\alpha}_l)$.

The Pruning Step is elaborated in \algoref{ch2-alg:pruning}. 
Here, assume that there are altogether $T$ pruning (training) iterations in each execution of Pruning Step; the total number of $\alpha_i$'s across all layers is $M_0$ before pruning, \ie
\begin{equation}
    M_0 = \underset{l}{\sum}{\underset{g}{\sum}{\mathrm{card}(\bm{\alpha}_{l,g})}}
    \label{ch2-eq:m0}
\end{equation}
and the desired total number of $\alpha_i$'s after pruning is $M_T$.

\begin{algorithm}[tbp!]
    \caption{Pruning in $\alpha$ domain}\label{ch2-alg:pruning}
    \KwIn{$T$, $M_T$, $k$, $\{\{\bm{\alpha}_{l,g},\bm{B}_{l,g}, I_{l,g}\}_{g=1}^{G_l}\}_{l=1}^L$, training dataset}
    \KwOut{$\{\{\bm{\alpha}_{l,g},\bm{B}_{l,g}, I_{l,g}\}_{g=1}^{G_l}\}_{l=1}^L$}
    Compute $M_0$ with \equref{ch2-eq:m0}\;
    Compute the pruning number per iteration $M_p = \mathrm{round}(\frac{M_0-M_T}{T})$\;
    \For {$t \leftarrow 1$ \KwTo $T$} {
        \For {$l\leftarrow 1$ \KwTo $L$} {
            Update $\bm{\hat{w}}_{l,g}^t = \bm{B}_{l,g}^t\bm{\alpha}_{l,g}^t$\;
            Forward propagate\;
        }
        Compute the loss $\ell^t$\;
        \For {$l\leftarrow L$ \KwTo $1$} {
            Backward propagate gradient $\partial\ell^t/\partial\bm{\hat{w}}_{l,g}^t$\;
            Compute $\partial\ell^t/\partial\bm{\alpha}_{l,g}^t$ with \equref{ch2-eq:gradient_pruning}\;
            Update momentums of AMSGrad in $\bm{\alpha}$ domain\; 
            \For {$\alpha_{l,i}^t$ \textup{in} $\bm{\alpha}_l^t$} {
                Compute $f_{\bm{\alpha}_l,i}^t$ with \equref{ch2-eq:taylor_pruning}\;
            }
            Sort and select Top-$k\%$ $f_{\bm{\alpha}_l,i}^t$ in ascending order\;
        }
        Resort the selected $\{f_{\bm{\alpha}_l,i}^t\}_{l=1}^{L}$ in ascending order\;
        Remove Top-$M_p$ $\alpha_{l,i}^t$ and their binary bases\;
        Update $\{\{\bm{\alpha}_{l,g}^{t+1},\bm{B}_{l,g}^{t+1}, I_{l,g}^{t+1}\}_{g=1}^{G_l}\}_{l=1}^L$\;
    }
\end{algorithm}

\subsubsection{Implementation of Optimization Step}
\label{ch2-sec:implementaion_optimization}

Optimization Step is also executed in batch training.
Since $\bm{\alpha}_g$ is floating-point value, the complexity of optimizing $\bm{\alpha}_g$ is the same as the conventional optimization (see \algoref{ch2-alg:coordinates}).
Assume that there are altogether $P$ training iterations. 
It is worth noting that both the bitwidth $I_{l,g}$ and the binary bases $\bm{B}_{l,g}$ do not change in this step; only the coordinates $\bm{\alpha}_{l,g}$ are updated over $P$ iterations. 

\begin{algorithm}[tbp!]
\caption{Optimizing $\bm{\alpha}_g$}\label{ch2-alg:coordinates}
\KwIn{$P$, $\{\{\bm{\alpha}_{l,g},\bm{B}_{l,g}, I_{l,g}\}_{g=1}^{G_l}\}_{l=1}^L$, training dataset}
\KwOut{$\{\{\bm{\alpha}_{l,g},\bm{B}_{l,g}, I_{l,g}\}_{g=1}^{G_l}\}_{l=1}^L$}
\For {$p \leftarrow 1$ \KwTo $P$} {
    \For {$l\leftarrow 1$ \KwTo $L$} {
        Update $\bm{\hat{w}}_{l,g}^p = \bm{B}_{l,g}\bm{\alpha}_{l,g}^p$\;
        Forward propagate\;
    }
    Compute the loss $\ell^p$\;
    \For {$l\leftarrow L$ \KwTo $1$} {
        Backward propagate gradient $\partial\ell^p/\partial\bm{\hat{w}}_{l,g}^p$\;
        Compute $\partial\ell^p/\partial\bm{\alpha}_{l,g}^p$ with \equref{ch2-eq:gradient_pruning}\;
        Update momentums of AMSGrad in $\bm{\alpha}$ domain\;
        \For {$g \leftarrow 1$ \KwTo $G_l$} {
            Update $\bm{\alpha}_{l,g}^{p+1}$ with \equref{ch2-eq:optimizing_alpha}\;
        }
    }
}
\end{algorithm}

Optimizing $\bm{B}_g$ with speedup is presented in \algoref{ch2-alg:bases}. 
Assume that there are altogether $Q$ training iterations. 
It is worth noting that the bitwidth $I_{l,g}$ does not change in this step; only the binary bases $\bm{B}_{l,g}$ and the coordinates $\bm{\alpha}_{l,g}$ are updated over $Q$ iterations.  

\begin{algorithm}[tbp!]
    \caption{Optimizing $\bm{B}_g$ with speedup}\label{ch2-alg:bases}
    \KwIn{$Q$, $\{\{\bm{\alpha}_{l,g},\bm{B}_{l,g}, I_{l,g}\}_{g=1}^{G_l}\}_{l=1}^L$, training dataset}
    \KwOut{$\{\{\bm{\alpha}_{l,g},\bm{B}_{l,g}, I_{l,g}\}_{g=1}^{G_l}\}_{l=1}^L$}
    \For {$q \leftarrow 1$ \KwTo $Q$} {
        \For {$l\leftarrow 1$ \KwTo $L$} {
            Update $\bm{\hat{w}}_{l,g}^q = \bm{B}_{l,g}^q\bm{\alpha}_{l,g}^q$\;
            Forward propagate\;
        }
        Compute the loss $\ell^q$ \;
        \For {$l\leftarrow L$ \KwTo $1$} {
            Backward propagate gradient $\partial\ell^q/\partial\bm{\hat{w}}_{l,g}^q$\;
            Update momentums of AMSGrad\; 
            \For {$g \leftarrow 1$ \KwTo $G_l$} {
                Compute all values of \equref{ch2-eq:comb}\;
                \For {$j \leftarrow 1$ \KwTo $n_l$} {
                    Update $\bm{B}_{l,g,j}^{q+1}$ with \equref{ch2-eq:row}\;
                }
                Update $\bm{\alpha}_{l,g}^{q+1}$ with \equref{ch2-eq:alpha_lambda}\;
            }
        }
    }
\end{algorithm}

The extra complexity related to the original AMSGrad mainly comes from two parts, \equref{ch2-eq:row} and \equref{ch2-eq:alpha_lambda}.
\equref{ch2-eq:row} is also the most resource-hungry step of the whole pipeline, since it requires an exhaustive search.
For each group, \equref{ch2-eq:row} takes both time and storage complexities of $O(n\cdot2^{I_g})$, and in general $n>>I_g\geq1$.
Since $\bm{H}^q$ is a diagonal matrix, most of the matrix-matrix multiplications in \equref{ch2-eq:alpha_lambda} is avoided through matrix-vector multiplication and matrix-diagonalmatrix multiplication.
Thus, the time complexity trims down to $O(nI_g+nI_g^2+I_g^3+nI_g+n+n+nI_g+I_g^2) \doteq O(n(I_g^2+3I_g+2))$.
% In our settings, optimizing $\bm{B}_g$ with speedup usually takes around twice as long as optimizing $\bm{\alpha}_g$ (\ie the original AMSGrad step).

\subsubsection{Implementation of the Pipeline}
\label{ch2-sec:implementation_pipeline}

The entire pipeline of \alq is demonstrated in \algoref{ch2-alg:pipeline}.
For Initialization Step, the pretrained full precision weights $\bm{w}_{1:L}$ are required. 
Then, we need to specify the structure used in each layer, \ie the structure of grouping $G_{1:L}$.
In addition, a maximum bitwidth $I_\mathrm{max}$ and a threshold $\sigma$ for the residual reconstruction error also need to be determined (see more details in \secref{ch2-sec:implementation_initialization}).
After initialization, we might need to retrain the model with several epochs of \algoref{ch2-alg:bases} to recover the accuracy degradation caused by the initialization. 

Then, we need to determine the number of outer iterations $R$, \ie how many times the Pruning Step is executed.
A pruning schedule $M^{1:R}$ is also required. 
$M^r$ determines the total number of remaining $\alpha_i$'s (across all layers) after the $r$-th Pruning Step, which is also taken as the input $M_T$ in \algoref{ch2-alg:pruning}. 
For example, we can build this schedule by pruning $30\%$ of $\alpha_i$'s during each execution of Pruning Step, as,
\begin{equation}
    M^{r+1} = M^{r}\times(1-0.3)
    \label{ch2-eq:mr}
\end{equation}
with $r\in 0,1,2,...,R-1$. $M^0$ represents the total number of $\alpha_i$'s (across all layers) after initialization.

For Pruning Step, other individual inputs include the total number of iterations $T$, and the selected percentages $k$ for sorting (see \algoref{ch2-alg:pruning}).
For Optimization Step, the individual inputs includes the total number of iterations $Q$ in optimizing $\bm{B}_g$ (see \algoref{ch2-alg:bases}), and the total number of iterations $P$ in optimizing $\bm{\alpha}_g$ (see \algoref{ch2-alg:coordinates}).

\begin{algorithm}[htbp!]
    \caption{Adaptive Loss-aware Quantization for multi-bit networks} \label{ch2-alg:pipeline}
    \KwIn{Pretrained full precision weights $\bm{w}_{1:L},$ structures $G_{1:L}$, $I_\mathrm{max}$, $\sigma$, $T$, pruning schedule $M^{1:R}$, $k$, $P$, $Q$, $R$, training dataset}
    \KwOut{$\{\{\bm{\alpha}_{l,g},\bm{B}_{l,g}, I_{l,g}\}_{g=1}^{G_l}\}_{l=1}^L$}
    \tcc{Initialization Step: }
    Initialize $\{\{\bm{\alpha}_{l,g},\bm{B}_{l,g}, I_{l,g}\}_{g=1}^{G_l}\}_{l=1}^L$ with \algoref{ch2-alg:sketching}\;  
    \For {$r \leftarrow 1$ \KwTo $R$} {
        \tcc{Pruning Step: }
        Assign $M^r$ to the input $M_T$ of \algoref{ch2-alg:pruning}\;
        Prune in $\bm{\alpha}$ domain with \algoref{ch2-alg:pruning}\;
        \tcc{Optimization Step: }
        Optimize binary bases with \algoref{ch2-alg:bases}\;
        Optimize coordinates with \algoref{ch2-alg:coordinates}\;
    }
\end{algorithm}

\section{Activation Quantization}
\label{ch2-sec:activation}

To leverage bitwise operations for speedup, the inputs of each layer (\ie the activation output of the last layer) also need to be quantized into the multi-bit form.
We quantize activations with the same binary basis (\ie $\{-1,+1\}$) as the aforementioned weight quantization.

Our activation quantization follows the idea proposed in \cite{bib:arXiv18:Choi}, \ie a parameterized clipping for fixed-point activation quantization, but it is adapted to the multi-bit form.
Specially, we replace ReLU with a step activation function.
The vectorized activation $\bm{x}$ of the $l$-th layer is quantized as,
\begin{equation}
    \bm{x}\doteq\bm{\hat{x}}=x_{\mathrm{ref}}+\bm{D}\bm{\gamma}=\bm{D}'\bm{\gamma}'
    \label{ch2-eq:act}
\end{equation}
where $\bm{D}\in\{-1,+1\}^{N_x\times I_x}$, and $\bm{\gamma}\in\mathbb{R}_+^{I_x\times1}$.
$\bm{\gamma}'$ is a column vector formed by $[x_{\mathrm{ref}},\bm{\gamma}^\mathrm{T}]^{\mathrm{T}}$; $\bm{D}'$ is a matrix formed by $[\bm{1}^{{N_x\times 1}}, \bm{D}]$.
$N_x$ is the dimension of $\bm{x}$, and $I_x$ is the quantization bitwidth for activations.
$x_{\mathrm{ref}}$ is the introduced layerwise (positive floating-point) reference to fit the output range of ReLU.
During inference, $x_{\mathrm{ref}}$ is convoluted with the weights of the next layer and added to the bias.
Hence the introduction of $x_{\mathrm{ref}}$ does not lead to extra computations. 
The output of the last layer is not quantized, as it does not involve computations anymore.
For other settings, we mainly follow the ones used in \cite{bib:ECCV18:Zhang}. 
$\bm{\gamma}$ and $x_{\mathrm{ref}}$ are updated during the forward propagation with a running average to minimize the squared reconstruction error as,
\begin{equation}
    \bm{\gamma}'_{\text{new}} = (\bm{D'}^{\mathrm{T}}\bm{D}')^{-1}\bm{D'}^{\mathrm{T}}\bm{x}
\end{equation}
\begin{equation}
    \bm{\gamma}' = 0.9\bm{\gamma}'+(1-0.9)\bm{\gamma}'_{\text{new}}
\end{equation}

The (quantized) weights are also further fine-tuned with our optimizer to resume the accuracy drop.
Here, we only set a global bitwidth for all layers in activation quantization.

\section{Experiments}
\label{ch2-sec:experiment}

In this section, we implement \alq with Pytorch~\cite{bib:NIPSWorkshop17:Paszke}, and evaluate its performance on MNIST~\cite{bib:MNIST}, CIFAR10~\cite{bib:CIFAR}, and ImageNet~\cite{bib:ILSVRC15} using LeNet5~\cite{bib:PIEEE98:LeCun}, VGGNet~\cite{bib:ICLR17:Hou,bib:ECCV16:Rastegari}, and ResNet18/34~\cite{bib:CVPR16:He}, respectively.
The Top-1 test accuracy is reported, when the validation dataset has the highest accuracy during training. 
We first conduct the experiments on Initialization Step (\secref{ch2-sec:experiment_initialization}), Pruning Step (\secref{ch2-sec:experiment_adaptive}) and Optimization Step (\secref{ch2-sec:experiment_convergence}) individually to study their impacts. 
Then, we benchmark \alq on different datasets and compare \alq with different state-of-the-art network compression methods.

\subsection{Benchmarking Details}
\label{ch2-sec:experiment_benchmark}

\fakeparagraph{LeNet5 on MNIST}
The MNIST dataset~\cite{bib:MNIST} consists of $28\times28$ gray scale images from 10 digit classes. 
We use 50000 samples in the training set for training, the rest 10000 for validation, and the 10000 samples in the test set for testing. 
We use a mini-batch with size of 128. 
We use the default hyperparameters proposed in~\cite{bib:torchLeNet5} to train LeNet5 for 100 epochs as the baseline of full precision version.
The network architecture is presented as, 20C5 - MP2 - 50C5 - MP2 - 500FC - 10SVM.

\fakeparagraph{VGGNet on CIFAR10}
The CIFAR-10 dataset~\cite{bib:CIFAR} consists of 60000 $32\times32$ color images in 10 object classes. 
We use 45000 samples in the training set for training, the rest 5000 for validation, and the 10000 samples in the test set for testing. 
We use a mini-batch with size of 128. 
We use the default Adam optimizer provided by Pytorch to train full precision parameters for 200 epochs as the baseline of the full precision version. 
The initial learning rate is $0.01$, and it decays with 0.2 every $30$ epochs.
The network architecture is presented as, 2$\times$128C3 - MP2 - 2$\times$256C3 - MP2 - 2$\times$512C3 - MP2 - 2$\times$1024FC - 10SVM.

\fakeparagraph{ResNet18/34 on ImageNet}
The ImageNet dataset~\cite{bib:ILSVRC15} consists of $1.28$ million high-resolution images for classifying in 1000 object classes. 
The validation set contains 50k images, which are used to report the accuracy level.
We use mini-batch with size of 256. The used ResNet18/34 is from~\cite{bib:CVPR16:He}.
We use the ResNet18/34 provided by Pytorch as the baseline of full precision version. 
The network architecture is the same as "resnet18/resnet34" in~\cite{bib:torchResNet}.

\subsection{Experiments on Initialization}
\label{ch2-sec:experiment_initialization}

As mentioned in \secref{ch2-sec:implementation_initialization}, we propose a structured sketching for Initialization Step.
Some important parameters in \algoref{ch2-alg:sketching} are discussed as below. 

\subsubsection{Group Size $n$} 
\label{ch2-sec:experiment_group}

Researchers propose different structures \eg layerwise, channelwise, to partition weights, and then quantize the weights in one structured group with the same bitwidth. 
To explore the redundancy among weights, we conduct experiments on the different structures of grouping.
Certainly, the weights in one layer can be arbitrarily selected to gather a group.
However, due to the extra indexing cost, the weights are often sliced along the tensor dimensions and uniformly grouped. 

According to~\cite{bib:CVPR17:Guo}, the squared reconstruction error of a single group decays with~\equref{ch2-eq:error_decay}, where $\lambda\ge0$.
\begin{equation}
    \|\bm{\epsilon}\|_2^2 \le \|\bm{w}_{g}\|_2^2 (1-\frac{1}{n-\lambda})^{I_g}
    \label{ch2-eq:error_decay}
\end{equation}
If full precision values are stored in floating-point, \ie $32$-bit, the storage compression ratio in one layer can be written as,
\begin{equation}
    r_s = \frac{N\times32}{I\times N+I\times32\times \frac{N}{n}}
    \label{ch2-eq:r_s}
\end{equation}
where $N$ is the total number of weights in one layer; $n$ is the number of weights in each group, \ie $n = N/G$; $I$ is the average bitwidth, $I = \frac{1}{G}\sum_{g = 1}^G I_g$.

We analyse the trade-off between the reconstruction error and the storage compression ratio of different group size $n$.
We choose the pretrained AlexNet~\cite{bib:NIPS12:Krizhevsky} and VGGNet~\cite{bib:ICLR15:Simonyan}, and plot the curves of the average (per weight) reconstruction error related to the storage compression ratio of each layer under different sliced structures.
We also randomly shuffle the weights in each layer, then partition them into groups with different sizes.
We select one example plot which comes from the last \texttt{conv} layer ($256\times256\times3\times3$) of AlexNet~\cite{bib:NIPS12:Krizhevsky} (see~\figref{ch2-fig:conv_alexnet}).
The pretrained full precision weights are provided by Pytorch~\cite{bib:NIPSWorkshop17:Paszke}. 
\begin{figure}[htbp!]
    \centering
    \includegraphics[width=0.75\textwidth]{./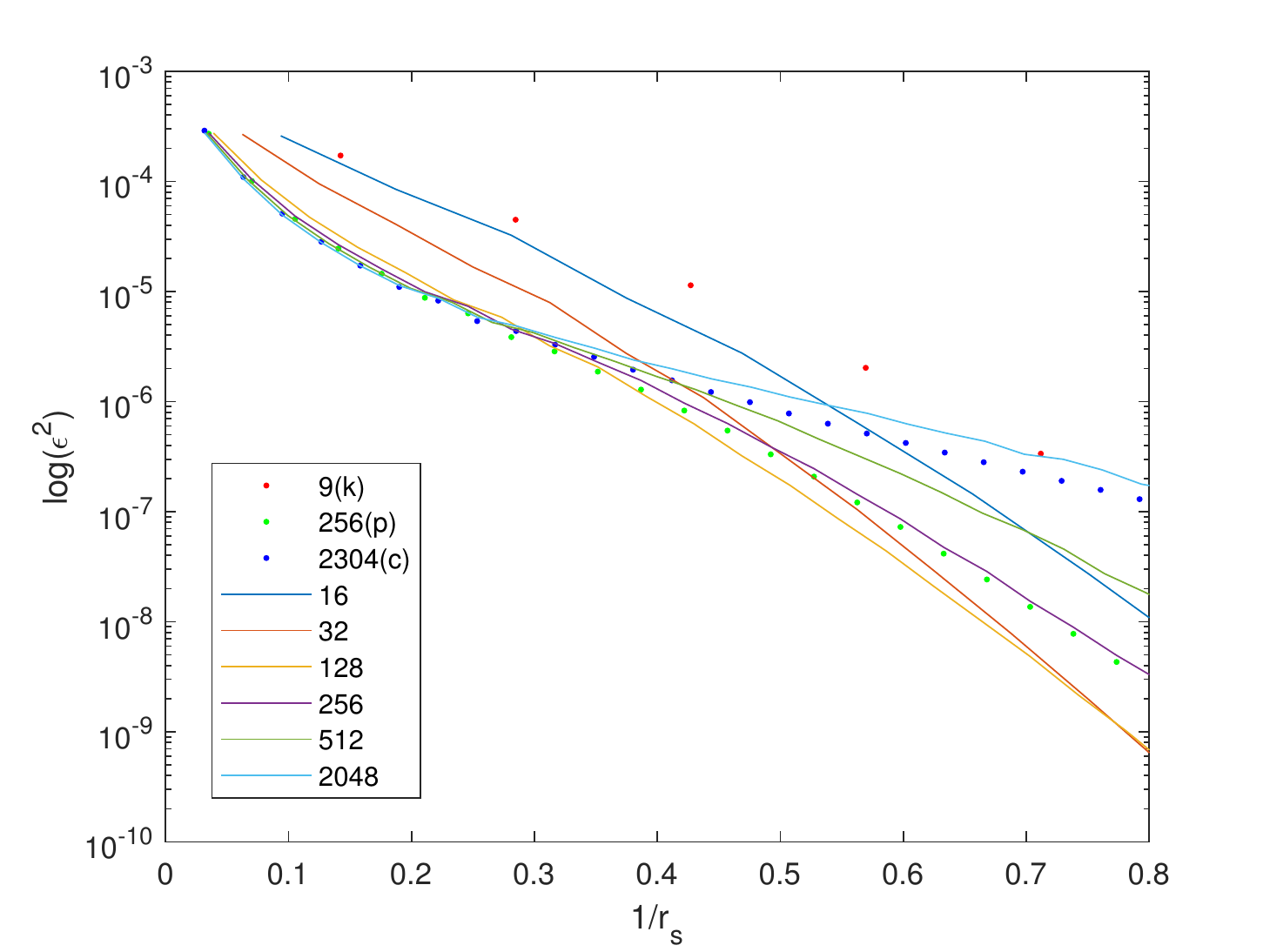}
    \caption[The curves of the average reconstruction error in different group size] {The curves about the logarithmic L2-norm of the average reconstruction error $\mathrm{log}(\|\bm{\epsilon}\|_2^2)$ related to the reciprocal of the storage compression ratio $1/r_s$. The pretrained full precision weights are from the last \texttt{conv} layer of AlexNet. The legend demonstrates the corresponding group sizes. `k' stands for kernelwise; `p' stands for pointwise; `c' stands for channelwise. }
    \label{ch2-fig:conv_alexnet}
\end{figure}

We found that there is not a significant difference between random groups and sliced groups along tensor dimensions.
Only the group size influences the trade-off.
We think the reason is that one layer always contains thousands of groups, such that the points presented by these groups are roughly scattered in the $n$-dim space.
Furthermore, regarding the deployment on a 32-bit general microprocessor, the group size should be larger than 32 for efficient computation.
In short, a group size from $32$ to $512$ achieves relatively good trade-off between the weight reconstruction error and the storage compression ratio. 
Accordingly, for \texttt{conv} layers, grouping in channelwise ($\bm{w}_{c,:,:,:}$), kernelwise ($\bm{w}_{c,d,:,:}$), and pointwise ($\bm{w}_{c,:,h,w}$) appears to be appropriate.
Channelwise $\bm{w}_{c,:}$ and subchannelwise $\bm{w}_{c,d:d+n}$ grouping are suited for \texttt{fc} layers.
For example, if each channel is sliced into 2 groups with the same size, we denote it as subchannelwise(2). 
In addition, the most frequently used structures in this chapter are pointwise (\texttt{conv} layers) and (sub)channelwise (\texttt{fc} layers), which align with the bit-packing approach in~\cite{bib:ICLR18:Pedersoli}, and could result in a more efficient deployment.
Since many network architectures choose an integer multiple of 32 as the number of output channels in each layer, pointwise and (sub)channelwise are also efficient for the current storage format in 32-bit microprocessors.

\subsubsection{Maximum Bitwidth $I_\mathrm{max}$}
\label{ch2-sec:experiment_max_bit}

The initial $I_g$ is decided by a predefined initial reconstruction precision or a maximum bitwidth.
We notice that the accuracy degradation caused by the initialization can be fully recovered after several optimization epochs of \algoref{ch2-alg:bases}, if the maximum bitwidth is $8$.
For example, ResNet18 on ImageNet after such an initialization can be retrained to a Top-1/5 accuracy of $70.3\%$/$89.4\%$, even higher than its full precision counterpart ($69.8\%$/$89.1\%$). 
For smaller networks, \eg VGGNet on CIFAR10, a maximum bitwidth of $6$ is already sufficient.

\subsection{Convergence Analysis of Optimization Step}
\label{ch2-sec:experiment_convergence}

In this section, we conduct the ablation studies on our Optimization Step in \secref{ch2-sec:optimization}.
We show the advantages of our optimizer in terms of convergence.
We mainly studied the convergence performance of \algoref{ch2-alg:bases} (\ie optimizing $\bm{B}_g$ with speedup) for two reasons, (\textit{i}) it involves the domain constraints of binarization and takes the majority of computation complexity; (\textit{ii}) it conducts a similar alternative process as prior works~\cite{bib:ICLR18:Xu,bib:ECCV18:Zhang}. 
Recall that our optimizer in \algoref{ch2-alg:bases} (\textit{i}) has no gradient approximation and (\textit{ii}) directly minimizes the loss. 
We developed the following two baselines for comparison.

\begin{itemize}
    \item 
    \textit{STE with rec. error:}
    This baseline quantizes the maintained full precision weights by minimizing the reconstruction error (rather than the loss) during forward and approximates gradients via STE during backward.
    This approach is adopted in some of the best-performing quantization schemes such as LQ-Net \cite{bib:ECCV18:Zhang}.
    \item 
    \textit{STE with loss-aware:}
    This baseline approximates gradients via STE but performs a loss-aware projection updating (adapted from our \alq).
    It can be considered as a multi-bit extension of prior loss-aware quantizers for binary and ternary networks \cite{bib:ICLR17:Hou,bib:ICLR18:Hou}.
    See \secref{ch2-sec:lossaware_ste} below for more details.
\end{itemize}

\subsubsection{The Optimizer of ``STE with Loss-Aware''}
\label{ch2-sec:lossaware_ste}

In this section, we provide the details of the proposed \textit{STE with loss-aware} optimizer.
The training scheme of \textit{STE with loss-aware} is similar to \algoref{ch2-alg:bases}, except that it maintains the full precision weights $\bm{w}_g$.
See the pseudocode of \textit{STE with loss-aware} in \algoref{ch2-alg:ste}.

\begin{algorithm}[t!]
\caption{STE with loss-aware}\label{ch2-alg:ste}
\KwIn{$Q$, $\{\{\bm{\alpha}_{l,g},\bm{B}_{l,g}, I_{l,g}\}_{g=1}^{G_l}\}_{l=1}^L$, training dataset}
\KwOut{$\{\{\bm{\alpha}_{l,g},\bm{B}_{l,g}, I_{l,g}\}_{g=1}^{G_l}\}_{l=1}^L$}
\For {$q \leftarrow 1$ \KwTo $Q$} {
    \For {$l\leftarrow 1$ \KwTo $L$} {
        Update $\bm{\hat{w}}_{l,g}^q = \bm{B}_{l,g}^q\bm{\alpha}_{l,g}^q$\;
        Forward propagate\;
    }
  Compute the loss $\ell^q$ \;
  \For {$l\leftarrow L$ \KwTo $1$} {
    Backward propagate gradient $\partial\ell^q/\partial\bm{\hat{w}}_{l,g}^q$\;
    Directly approximate $\partial\ell^q/\partial\bm{w}_{l,g}^q$ with $\partial\ell^q/\partial {\bm{\hat{w}}_{l,g}}^q$\;
    Update momentums of AMSGrad\; 
    \For {$g \leftarrow 1$ \KwTo $G_l$} {
      Update $\bm{w}_{l,g}^{q+1}$ with \equref{ch2-eq:ste_W}\;
      Compute all values of \equref{ch2-eq:comb}\;
      \For {$j \leftarrow 1$ \KwTo $n_l$} {
        Update $\bm{B}_{l,g,j}^{q+1}$ with \equref{ch2-eq:ste_row}\;
      }
      Update $\bm{\alpha}_{l,g}^{q+1}$ with \equref{ch2-eq:ste_alpha}\;
    }
  }
}
\end{algorithm}

For the layer $l$, the quantized weights $\bm{\hat{w}}_g$ is used during forward propagation.
During backward propagation, the loss gradients to the full precision weights $\partial\ell/\partial\bm{w}_{g}$ are directly approximated with $\partial\ell/\partial {\bm{\hat{w}}_{g}}$, \ie via STE in the $q$-th training iteration as, 
\begin{equation}
    \frac{\partial\ell^q}{\partial\bm{w}_g^q}=\frac{\partial\ell^q}{\partial {\bm{\hat{w}}_g}^q}
\end{equation}
Then the first and second momentums in AMSGrad are updated with $\partial\ell^q/\partial\bm{w}_{g}^q$.
Accordingly, the loss increment around $\bm{w}_g^q$ is modeled as,
\begin{equation}
    f_{\text{ste}}^q=(\bm{g}^q)^{\mathrm{T}}(\bm{w}_g-\bm{w}_g^q)+\frac{1}{2} (\bm{w}_g-\bm{w}_g^q)^{\mathrm{T}} \bm{H}^q (\bm{w}_g-\bm{w}_g^q)
    \label{ch2-eq:ste_B}
\end{equation}
Since $\bm{w}_g$ is full precision, $\bm{w}_g^{q+1}$ can be directly obtained through the above AMSGrad step without projection updating,
\begin{equation}
    \bm{w}_g^{q+1} = \bm{w}_g^q-({\bm{H}^q})^{-1}\bm{g}^q = \bm{w}_g^q-a^q\bm{m}^q/\sqrt{\bm{\hat{v}}^q}
    \label{ch2-eq:ste_W}
\end{equation}
Similarly, the loss increment caused by $\bm{B}_g$ (see \equref{ch2-eq:amsgrad_B1} and \equref{ch2-eq:amsgrad_B2}) is formulated as,
\begin{equation}
    f_{\text{ste},\bm{B}}^q=(\bm{g}^q)^{\mathrm{T}}(\bm{B}_g\bm{\alpha}_g^{q}-\bm{w}_g^q)+\frac{1}{2}(\bm{B}_g\bm{\alpha}_g^{q}-\bm{w}_g^q)^{\mathrm{T}}\bm{H}^q (\bm{B}_g\bm{\alpha}_g^{q}-\bm{w}_g^q)
    \label{ch2-eq:ste_B2}
\end{equation}
Thus, the $j$-th row in $\bm{B}_g^{q+1}$ is updated by,  
\begin{equation}
    \bm{B}_{g,j}^{q+1} = \underset{\bm{B}_{g,j}}{\mathrm{argmin}}~\|\bm{B}_{g,j}\bm{\alpha}_{g}^q-(w_{g,j}^q-g^q_j/H_{jj}^q)\|
    \label{ch2-eq:ste_row}
\end{equation}
In addition, the speedup of \equref{ch2-eq:alpha_lambda} is changed accordingly as,
\begin{equation}
    \bm{\alpha}_{g}^{q+1}=-((\bm{B}_{g}^{q+1})^{\mathrm{T}}\bm{H}^q\bm{B}_{g}^{q+1}+\lambda \mathbf{I})^{-1}\times((\bm{B}_{g}^{q+1})^{\mathrm{T}}(\bm{g}^q-\bm{H}^q\bm{w}^q_{g}))
    \label{ch2-eq:ste_alpha}
\end{equation}
So far, the quantized weights are updated in a loss-aware manner as,
\begin{equation}
    \bm{\hat{w}}_{g}^{q+1} = \bm{B}_{g}^{q+1}\bm{\alpha}_{g}^{q+1}
\end{equation}

\subsubsection{Ablation Results}
\label{ch2-sec:experiment_convergence_results}

\fakeparagraph{Settings}
To show the convergence performance of our Optimization Step, we compare \algoref{ch2-alg:bases} with the above two baselines \textit{STE with rec. error} and \textit{STE with loss-aware} mentioned above. 
The three optimizers are used to train the networks quantized with a uniform bitwidth.
We use AMSGrad\footnote{AMSGrad can also optimize full precision parameters.} as the optimization framework for all optimizers and adopt a learning rate of 0.001.

\begin{figure}[tbp!]
    \centering
    \includegraphics[width=0.95\textwidth]{./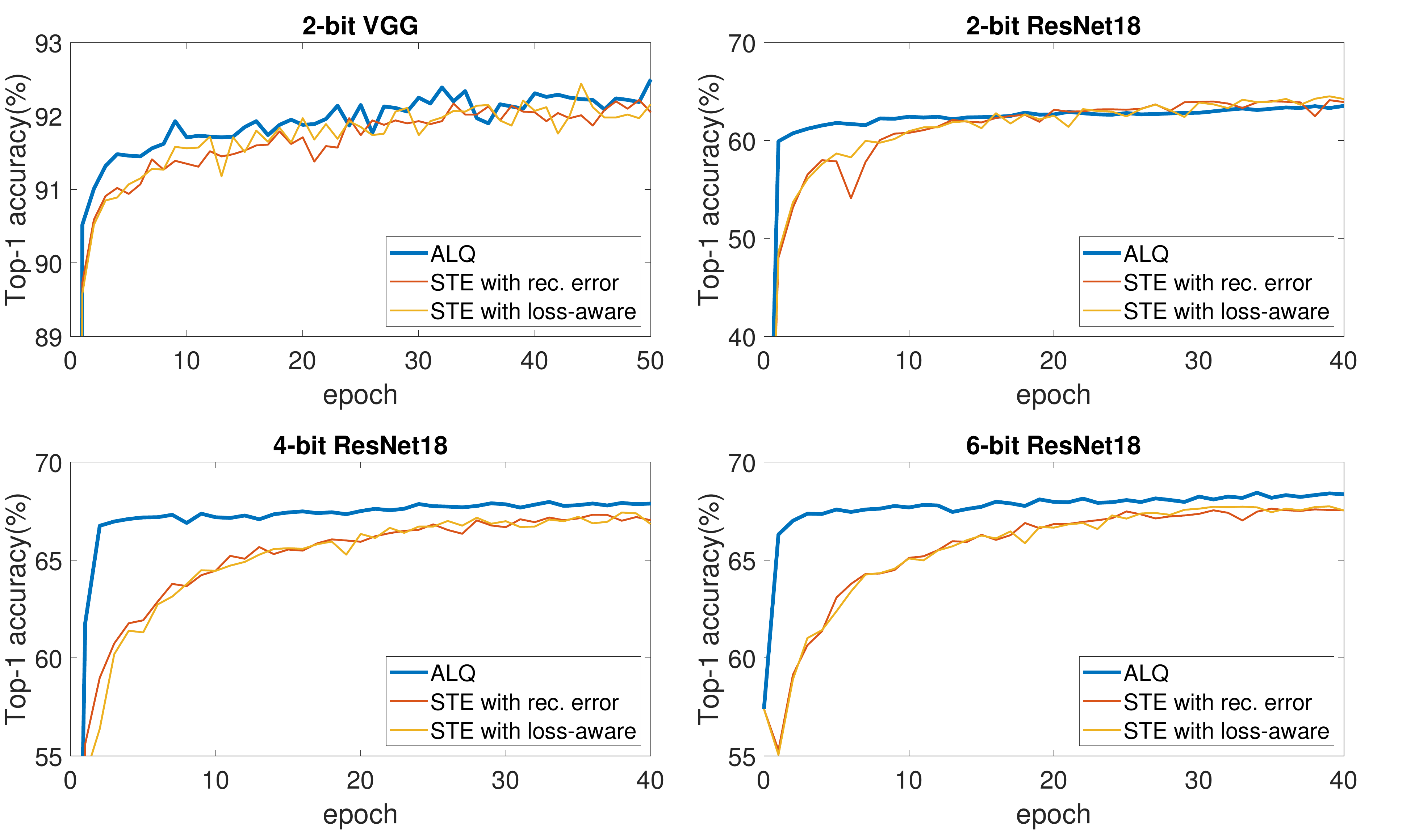}
    \caption[Validation accuracy trained with \alq and other STE-based baselines.]{Validation accuracy trained with \alq and other STE-based baselines along the training epochs.}
    \label{ch2-fig:convergence}
\end{figure}

\fakeparagraph{Results}
\figref{ch2-fig:convergence} shows the Top-1 validation accuracy of different optimizers, with increasing epochs on uniform bitwidth MBNs.
\alq exhibits not only a more stable and faster convergence, but also a higher accuracy.
The exception is 2-bit ResNet18. 
\alq converges faster, but the validation accuracy trained with STE gradually exceeds \alq after about 20 epochs.
For training a large network with $\leq2$ bitwidth, the positive effect brought from the high precision trace may compensate certain negative effects caused by gradient approximation.
In this case, keeping full precision parameters will help calibrate some aggressive steps of quantization, resulting in a slow oscillating convergence to a better local optimum.
This also encourages us to add several epochs of STE based optimization (\eg \textit{STE with loss-aware}) after low bitwidth quantization to further regain the accuracy.

\begin{table}[tbp!]
    \centering
    \caption[Comparison between uniform bitwidth and adaptive bitwidth in \alq.]{Comparison between uniform bitwidth and adaptive bitwidth in \alq.}
    \label{ch2-tab:adapt}
    \small
    \begin{tabular}{ccc}
        \toprule
        Method                              & $I_W$                 & Top-1              \\ \hline
        Baseline VGGNet (uniform)           & 1                     & 91.8\%             \\
        \textbf{\alq VGGNet}                & \textbf{0.66}         & \textbf{92.0}\%    \\
        Baseline ResNet18 (uniform)         & 2                     & 66.2\%             \\
        \textbf{\alq ResNet18}              & \textbf{2.00}         & \textbf{68.9}\%    \\
        \bottomrule
    \end{tabular}
\end{table}

\subsection{Ablation Studies on Adaptive Bitwidth}
\label{ch2-sec:experiment_adaptive}

\fakeparagraph{Settings}
This experiment demonstrates the performance of incrementally trained adaptive bitwidth in \alq, \ie our Pruning Step in \secref{ch2-sec:pruning}.
Uniform bitwidth quantization (an equal bitwidth allocation across all groups in all layers) is taken as the baseline.
The baseline is trained with the same number of epochs as the sum of all epochs during the bitwidth reduction.
Both \alq and the baseline are trained with the same learning rate decay schedule.

\begin{figure}[tbp!]
    \centering
    \includegraphics[width=0.8\textwidth]{./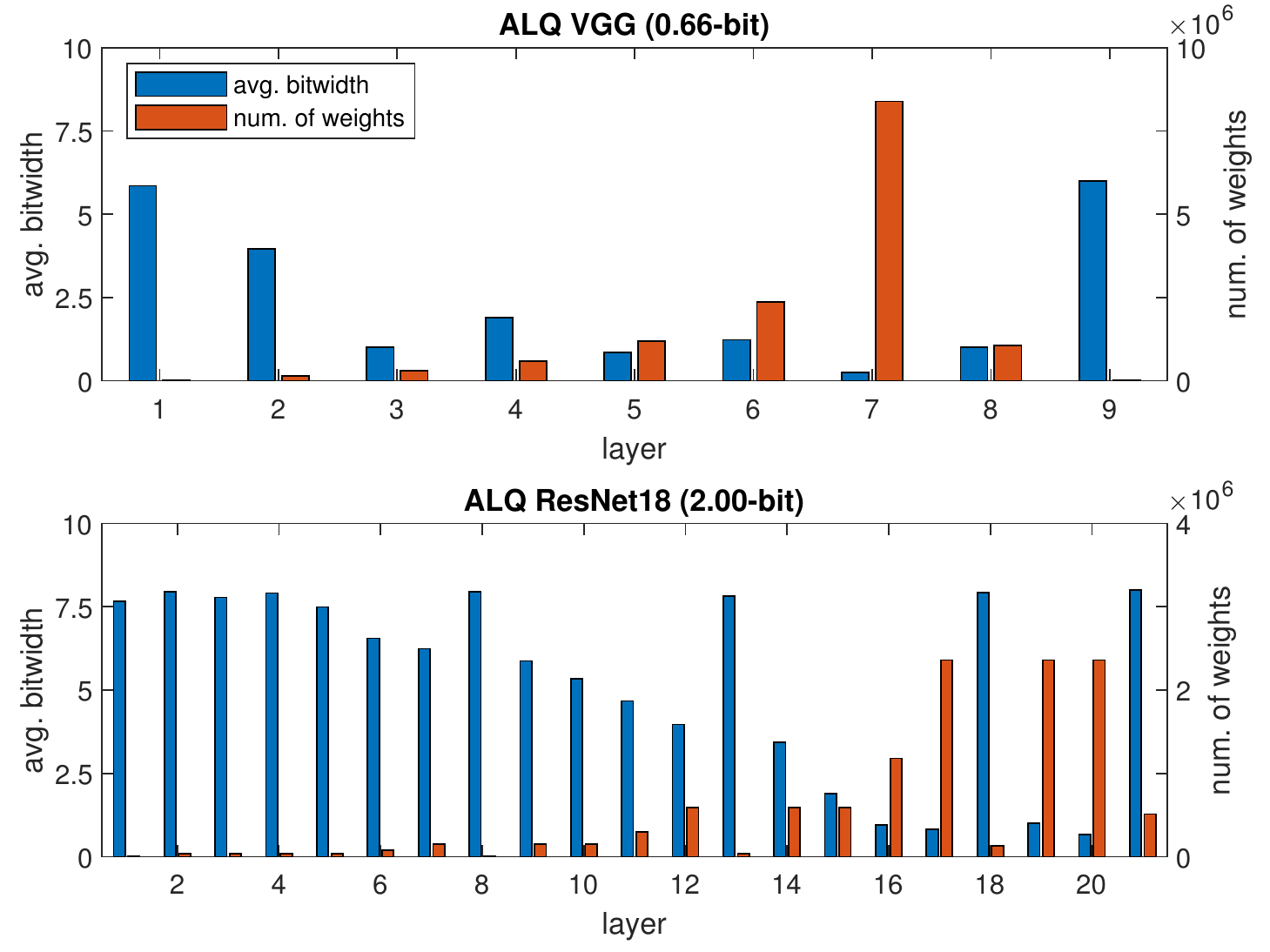}
    \caption[Distribution of the average bitwidth and the number of weights across layers.]{Distribution of the average bitwidth and the number of weights across layers.}
    \label{ch2-fig:adapt}
\end{figure}

\fakeparagraph{Results}
\tabref{ch2-tab:adapt} shows that there is a large Top-1 accuracy gap between an adaptive bitwidth trained with \alq and a uniform bitwidth.
In addition to the overall average bitwidth, we also plot the distribution of the average bitwidth and the number of weights across layers (both models in \tabref{ch2-tab:adapt}) in \figref{ch2-fig:adapt}.
Generally, the first several layers and the last layer are more sensitive to the loss, thus require a higher bitwidth. 
%The layers close to the last layer can be assigned with an extremely low bitwidth without a notable accuracy degradation. 
The shortcut layers in ResNet architecture (\eg the $8$-th, $13^{\text{rd}}$, $18$-th layers in ResNet18) also need a higher bitwidth.
We think this is due to the fact that the shortcut pass helps the information forward/backward propagate through the blocks. 
Since the average of adaptive bitwidth can have a decimal part, \alq can achieve a compression ratio with a much higher resolution than a uniform bitwidth, which not only controls a more precise trade-off between storage and accuracy, but also benefits our incremental bitwidth reduction scheme.

It is worth noting that both the Optimization Step and the Pruning Step in \alq follow the same metric, \ie the loss increment modeled by a quadratic function, allowing them to work in synergy. 
We replace the step of optimizing $\bm{B}_g$ in \alq with an STE step (with the reconstruction forward, see in \secref{ch2-sec:experiment_convergence}), and keep other steps unchanged in the pipeline. 
When the VGGNet model is reduced to an average bitwidth of $0.66$-bit, the simple combination of an STE step with our Pruning Step can only reach $90.7\%$ Top-1 accuracy, which is significantly worse than \alq's $92.0\%$. 

\subsection{Comparison with State-of-the-Art Methods}
\label{ch2-sec:experiment_comparison}

\subsubsection{Unstructured Pruning on MNIST}
\label{ch2-sec:experiment_mnist}

\fakeparagraph{Settings}
Since \alq can be considered a structured pruning scheme (\ie pruning in $\bm{\alpha}$ domain), we first compare \alq with two widely used unstructured pruning schemes: Deep Compression (DC) \cite{bib:ICLR16:Han} and ADMM-Pruning (ADMM) \cite{bib:ECCV18:Zhang2}, \ie pruning in the original $\bm{w}$ domain.
For a fair comparison, we implement a modified LeNet5 model as in \cite{bib:ICLR16:Han,bib:ECCV18:Zhang2} on MNIST dataset~\cite{bib:MNIST} and compare the Top-1 prediction accuracy and the compression ratio.

The structures of each layer chosen for \alq are kernelwise, kernelwise, subchannelwise(2), channelwise, respectively. 
After each pruning, the network is retrained to recover the accuracy degradation with 20 epochs of optimizing $\bm{B}_g$ and 10 epochs of optimizing $\bm{\alpha}_g$. 
The pruning ratio is 80\%, and 4 times of Pruning Step are executed after initialization in the reported experiment in \tabref{ch2-tab:lenet5}. 
After the last Pruning Step, we conduct 50 epochs of Optimizing Step to further increase the final accuracy (also applied in the following experiments of VGGNet and ResNet18/34).

\alq can fast converge in the training. 
However, we observed that even after the convergence, the accuracy still continues increasing slowly along the training, which is similar to the behavior of STE-based optimizer. 
During the Optimization Step after each Pruning Step, as long as the training loss is almost converged with a few epochs, we can further proceed the next Pruning Step. 
We found that the final accuracy level is approximately the same whether we add plenty of epochs each time to slowly recover the accuracy to the original level or not.
Thus, we choose a fixed modest number of retraining epochs after each Pruning Step to save the overall training time.
In fact, this benefits from the feature of \alq, which leverages the true gradient w.r.t. the loss to result in a fast and stable convergence.
The final added 50 training epochs aim to further slowly regain the final accuracy level, where we use a gradually decayed learning rate, \eg $10^{-4}$ decays with 0.98 in each epoch.

Note that the storage consumption only counts the weights, since the weights take the most majority of the storage (even after quantization) in comparison to others, \eg bias, activation quantizer, batch normalization, \etc
The storage consumption of weights in \alq includes the look-up-table for the resulting $I_g$ in each group.

\begin{table}[tbp!]
    \centering
    \caption[Comparison with unstructured pruning methods (LeNet5 on MNIST)]{Comparison with state-of-the-art unstructured pruning methods (LeNet5 on MNIST). ``FP'' denotes the full precision baseline. ``CR'' denotes the compression ratio related to full precision. }
    \label{ch2-tab:lenet5}
    \small
    \begin{tabular}{ccc}
        \toprule
        Method                                  & Weights~(CR)                                      & Top-1                 \\ \hline
        FP                                      & 1720KB~(1$\times$ )                               & 99.19\%               \\
        DC~\cite{bib:ICLR16:Han}                & 44.0KB~(39$\times$)                               & \textbf{99.26\%}      \\
        ADMM~\cite{bib:ECCV18:Zhang2}           & 24.2KB~(71$\times$)                               & 99.20\%               \\
        \textbf{\alq}                           & \textbf{22.7KB}~(\textbf{76}$\bm{\times}$)        & 99.12\%      \\ 
        \bottomrule
    \end{tabular}
\end{table}

\fakeparagraph{Results}
\alq shows the highest compression ratio (\textbf{76}$\bm{\times}$) while keeping acceptable Top-1 accuracy compared to the two other pruning methods (see \tabref{ch2-tab:lenet5}). 
FP stands for full precision, and the weights in the original full precision LeNet5 consume $1720$KB \cite{bib:ICLR16:Han}.
CR denotes the compression ratio of static weight storage.

Note that both DC \cite{bib:ICLR16:Han} and ADMM \cite{bib:ECCV18:Zhang2} rely on sparse tensors, which need special libraries or hardwares for efficient execution \cite{bib:ICLR17:Li}.
Their operands (the shared quantized values) are still floating-point.
Hence they hardly utilize bitwise operations for speedup.
In contrast, \alq achieves a higher compression ratio without sparse tensors, which is more suited for general off-the-shelf platforms.

The average bitwidth of \alq is below $1.0$-bit ($1.0$-bit corresponds to a compression ratio slightly below $32$), indicating some groups are fully removed. 
In fact, this process leads to a new network architecture containing less output channels of each layer, and thus the corresponding input channels of the next layers can be safely removed.
The original configuration $20-50-500-10$ is now $18-45-231-10$.

\subsubsection{Binary Networks on CIFAR10}
\label{ch2-sec:experiment_cifar10}
\fakeparagraph{Settings}
In this experiment, we compare the performance of \alq with state-of-the-art binary networks \cite{bib:NIPS15:Courbariaux,bib:ECCV16:Rastegari,bib:ICLR17:Hou}.
A binary network is an MBN with the lowest bitwidth, \ie single-bit. 
Thus, the storage consumption of a binary network can be regarded as the lower bound of a (uniform) quantized network. 
We implement a small version of VGGNet from~\cite{bib:ICLR15:Simonyan} on CIFAR10 dataset~\cite{bib:CIFAR}, as in many state-of-the-art binary networks~\cite{bib:NIPS15:Courbariaux,bib:ICLR17:Hou,bib:ECCV16:Rastegari}.

The structures of each layer chosen for \alq are channelwise, pointwise, pointwise, pointwise, pointwise, pointwise, subchannelwise(16), subchannelwise(2), subchannelwise(2) respectively.
After each pruning, the network is retrained to recover the accuracy degradation with 20 epochs of optimizing $\bm{B}_g$ and 10 epochs of optimizing $\bm{\alpha}_g$. 
The pruning ratio is 40\%, and 5 or 6 times of Pruning Step are executed after initialization in the reported experiment (\tabref{ch2-tab:cifar10}). 

\begin{table}[tbp!]
    \centering
    \caption[Comparison with binary networks (VGGNet on CIFAR10)]{Comparison with state-of-the-art binary networks (VGGNet on CIFAR10). ``FP'' denotes the full precision baseline. ``CR'' denotes the compression ratio related to full precision. $I_w$ denotes the average bitwidth of weights. } 
    \label{ch2-tab:cifar10}
    \small
    \begin{tabular}{cccc}
        \toprule
        Method                            & $I_W$             & Weights~(CR)                          & Top-1             \\  \hline
        FP                                & 32                & 56.09MB~(1$\times$)                   & 92.8\%            \\
        BC~\cite{bib:NIPS15:Courbariaux}  & 1                 & 1.75MB~(32$\times$)                   & 90.1\%            \\
        BWN~\cite{bib:ECCV16:Rastegari}*  & 1                 & 1.82MB~(31$\times$)                   & 90.1\%            \\
        LAB~\cite{bib:ICLR17:Hou}         & 1                 & 1.77MB~(32$\times$)                   & 89.5\%            \\
        AQ~\cite{bib:ICLR18:Khoram}       & 0.27              & 1.60MB~(35$\times$)                   & 90.9\%            \\
        \textbf{\alq}                     & \textbf{0.66}     & \textbf{1.29MB}~(\textbf{43$\times$}) & \textbf{92.0\%}   \\
        \textbf{\alq}                     & \textbf{0.40}     & \textbf{0.82MB}~(\textbf{68$\times$}) & \textbf{90.9\%}   \\ 
        \bottomrule
    \end{tabular}
    \begin{tablenotes}
    \item
    *: both first and last layers are unquantized.
    \end{tablenotes}
\end{table}

\fakeparagraph{Results}
\tabref{ch2-tab:cifar10} shows the performance comparison to popular binary networks.
$I_W$ stands for the quantization bitwidth for weights.
Since \alq has an adaptive quantization bitwidth, the reported bitwidth of \alq is an average bitwidth of all weights.
% For the statistic information, we plot multiple training loss curves in \secref{ch2-sec:supplementary}.

\alq allows to compress the network to under $1$-bit, which remarkably reduces the storage and computation. 
\alq achieves the smallest weight storage and the highest accuracy compared to all weights binarization methods BC~\cite{bib:NIPS15:Courbariaux}, BWN~\cite{bib:ECCV16:Rastegari}, LAB~\cite{bib:ICLR17:Hou}.
Similar to results on LeNet5, \alq generates a new network architecture with fewer output channels per layer, which further reduces our models in \tabref{ch2-tab:cifar10} to $1.01$MB ($0.66$-bit) or even $0.62$MB ($0.40$-bit).
The computation and the run-time memory can also decrease. 

Furthermore, we also compare with AQ~\cite{bib:ICLR18:Khoram}, the state-of-the-art adaptive fixed-point quantizer.
It assigns a different bitwidth for each parameter based on its sensitivity, and also realizes a pruning for 0-bit parameters. 
Our \alq not only consumes less storage, but also acquires a higher accuracy than AQ~\cite{bib:ICLR18:Khoram}.
Besides, the non-standard quantization bitwidth in AQ cannot efficiently run on general hardware due to the irregularity~\cite{bib:ICLR18:Khoram}, which is not the case for \alq.

In order to demonstrate the affects from different steps in \alq, we plot the training loss curve of quantizing VGGNet on CIFAR10 with \alq. 
Different steps in \alq are marked with different colors, see~\figref{ch2-fig:loss}. 
The results show that (\textit{i}) Initialization Step does not bring any performance drop; (\textit{ii}) Optimization Step can fast converge in a few epochs and may recover the performance drop from Pruning Step as long as the average bitwidth is not extremely low.

\begin{figure}[tbp!]
    \centering
    \includegraphics[width=0.5\textwidth]{./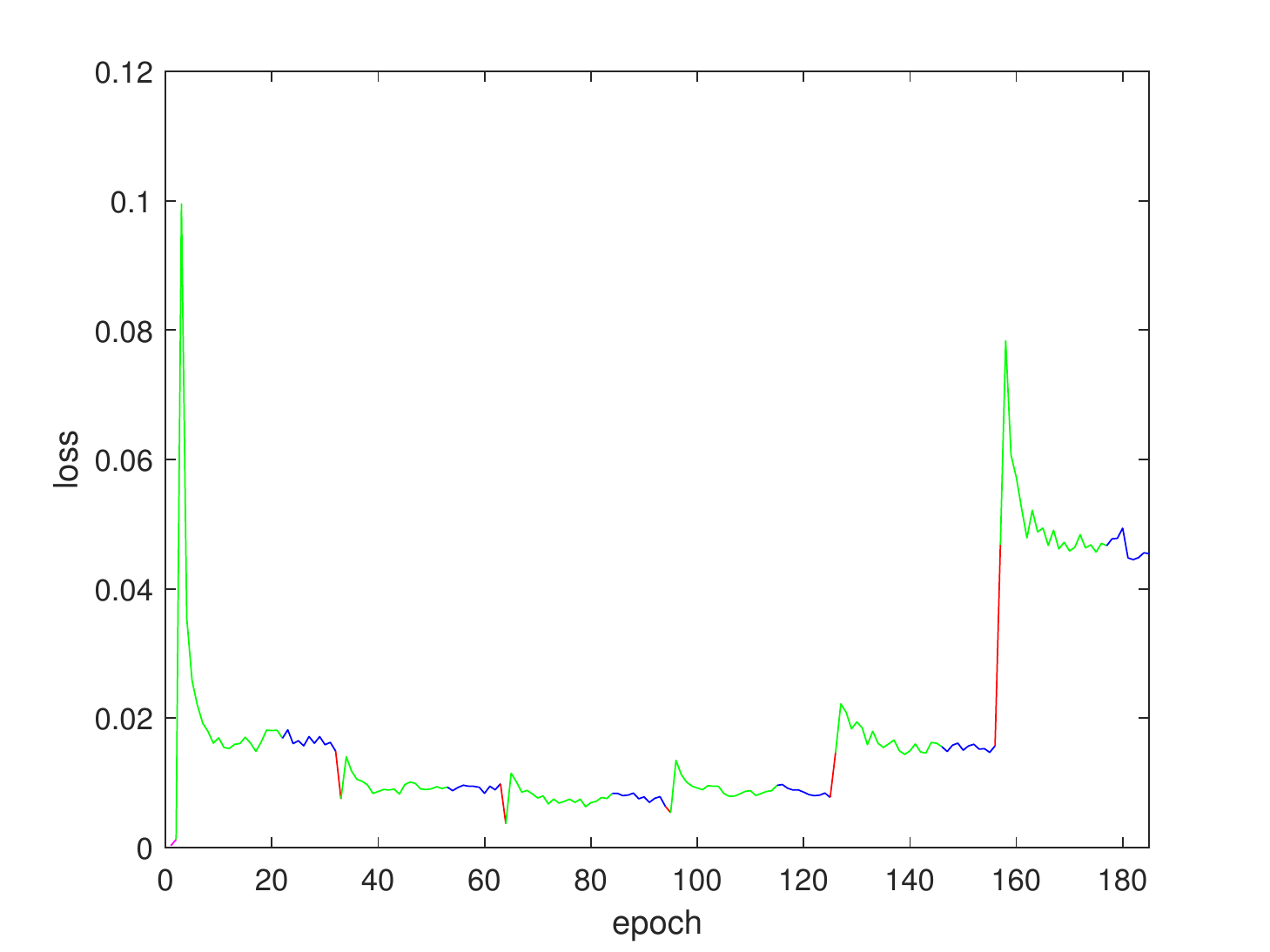}
    \caption[The training loss curves of different steps in \alq.]{The training loss curves of different steps in \alq (VGGNet on CIFAR10). 'Magenta' stands for Initialization Step; 'Green' stands for optimizing $\bm{B}_g$ with speedup; 'Blue' stands for optimizing $\bm{\alpha}_g$; 'Red' stands for Pruning Step. Please see this figure in color.}
    \label{ch2-fig:loss}
\end{figure}

\subsubsection{MBNs on ImageNet}
\label{ch2-sec:experiment_imagenet}

\fakeparagraph{Settings}
We quantize both the weights and the activations of ResNet18/34~\cite{bib:CVPR16:He} with a low bitwidth ($\leq2$-bit) on ImageNet dataset~\cite{bib:ILSVRC15}, and compare our results with state-of-the-art multi-bit networks. 
The results for the full precision version are provided by Pytorch~\cite{bib:NIPSWorkshop17:Paszke}.
We choose ResNet18, as it is a popular model on ImageNet used in the previous quantization schemes. 
ResNet34 is a deeper network used more in recent quantization papers.  

The structures of each layer chosen for \alq are all pointwise except for the first layer (kernelwise) and the last layer (subchannelwise(2)).
After each pruning, the network is retrained to recover the accuracy degradation with 10 epochs of optimizing $\bm{B}_g$ and 5 epochs of optimizing $\bm{\alpha}_g$. 
The pruning ratio is 15\%. %and 5 or 9 times of Pruning Step are executed after initialization in the reported experiments in \tabref{ch2-tab:ResNet}. 

\begin{table}[tbp!]
    \centering
    \caption[Comparison with quantized networks (ResNet18/34 on ImageNet).] {Comparison with state-of-the-art quantized networks (ResNet18/34 on ImageNet). ``FP'' denotes the full precision baseline. ``CR'' denotes the compression ratio related to full precision. $I_w$ denotes the average bitwidth of weights. $I_A$ denotes the bitwidth of activations. }
    \label{ch2-tab:ResNet}
    \footnotesize
    \begin{tabular}{cccc}
        \toprule
        Method                                      & $I_W$/$I_A$                 & Weights                 & Top-1                     \\ \hline
        \multicolumn{4}{c}{ResNet18}                                                                                                    \\ \hdashline
        FP~\cite{bib:NIPSWorkshop17:Paszke}         & 32/32                       & 46.72MB                 & 69.8\%                    \\
        TWN~\cite{bib:NIPS16:Li}                    & 2/32                        & 2.97MB                  & 61.8\%                    \\
        LR~\cite{bib:ICLR18:Shayer}                 & 2/32                        & 4.84MB                  & 63.5\%                    \\
        LQ~\cite{bib:ECCV18:Zhang}*                 & 2/32                        & 4.91MB                  & 68.0\%                    \\
        QIL~\cite{bib:CVPR19:Jung}*                 & 2/32                        & 4.88MB                  & 68.1\%                    \\
        INQ~\cite{bib:ICLR17:Zhou}                  & 3/32                        & 4.38MB                  & 68.1\%                    \\
        ABC~\cite{bib:NIPS17:Lin}                   & 5/32                        & 7.41MB                  & 68.3\%                    \\
        \textbf{\alq}                               & \textbf{2.00/32}            & \textbf{3.44MB}         & \textbf{68.9\%}           \\
        \textbf{\alq}$^\mathrm{e}$                  & \textbf{2.00/32}            & \textbf{3.44MB}         & \textbf{70.0\%}           \\
        BWN~\cite{bib:ECCV16:Rastegari}*            & 1/32                        & 3.50MB                  & 60.8\%                    \\
        LR~\cite{bib:ICLR18:Shayer}*                & 1/32                        & 3.48MB                  & 59.9\%                    \\
        DSQ~\cite{bib:ICCV19:Gong}*                 & 1/32                        & 3.48MB                  & 63.7\%                    \\
        \textbf{\alq}                               & \textbf{1.01/32}            & \textbf{1.77MB}         & \textbf{65.6\%}           \\
        \textbf{\alq}$^\mathrm{e}$                  & \textbf{1.01/32}            & \textbf{1.77MB}         & \textbf{67.7\%}           \\
        LQ~\cite{bib:ECCV18:Zhang}*                 & 2/2                         & 4.91MB                  & 64.9\%                    \\
        PACT~\cite{bib:arXiv18:Choi}*               & 2/2                         & 4.88MB                  & 64.4\%                    \\
        QIL~\cite{bib:CVPR19:Jung}*                 & 2/2                         & 4.88MB                  & 65.7\%                    \\
        DSQ~\cite{bib:ICCV19:Gong}*                 & 2/2                         & 4.88MB                  & 65.2\%                    \\
        GroupNet~\cite{bib:CVPR19:Zhuang}*          & 4/1                         & 7.67MB                  & 66.3\%                    \\
        RQ~\cite{bib:ICLR19:Louizos}                & 4/4                         & 5.93MB                  & 62.5\%                    \\
        ABC~\cite{bib:NIPS17:Lin}                   & 5/5                         & 7.41MB                  & 65.0\%                    \\
        \textbf{\alq}                               & \textbf{2.00/2}             & \textbf{3.44MB}         & \textbf{66.4\%}           \\
        SYQ~\cite{bib:CVPR18:Faraone}*              & 1/8                         & 3.48MB                  & 62.9\%                    \\
        LQ~\cite{bib:ECCV18:Zhang}*                 & 1/2                         & 3.50MB                  & 62.6\%                    \\
        PACT~\cite{bib:arXiv18:Choi}*               & 1/2                         & 3.48MB                  & 62.9\%                    \\
        \textbf{\alq}                               & \textbf{1.01/2}             & \textbf{1.77MB}         & \textbf{63.2\%}           \\ \hline
        \multicolumn{4}{c}{ResNet34}                                                                                                    \\ \hdashline
        FP~\cite{bib:NIPSWorkshop17:Paszke}         & 32/32                       & 87.12MB                 & 73.3\%                    \\
        \textbf{\alq}$^\mathrm{e}$                  & \textbf{2.00/32}            & \textbf{6.37MB}         & \textbf{73.6\%}           \\
        \textbf{\alq}$^\mathrm{e}$                  & \textbf{1.00/32}            & \textbf{3.29MB}         & \textbf{72.5\%}           \\
        LQ~\cite{bib:ECCV18:Zhang}*                 & 2/2                         & 7.47MB                  & 69.8\%                    \\
        QIL~\cite{bib:CVPR19:Jung}*                 & 2/2                         & 7.40MB                  & 70.6\%                    \\
        DSQ~\cite{bib:ICCV19:Gong}*                 & 2/2                         & 7.40MB                  & 70.0\%                    \\
        GroupNet~\cite{bib:CVPR19:Zhuang}*          & 5/1                         & 12.71MB                 & 70.5\%                    \\
        ABC~\cite{bib:NIPS17:Lin}                   & 5/5                         & 13.80MB                 & 68.4\%                    \\
        \textbf{\alq}                               & \textbf{2.00/2}             & \textbf{6.37MB}         & \textbf{71.0\%}           \\ 
        TBN~\cite{bib:ECCV18:Wan}*                  & 1/2                         & 4.78MB                  & 58.2\%                    \\
        LQ~\cite{bib:ECCV18:Zhang}*                 & 1/2                         & 4.78MB                  & 66.6\%                    \\
        \textbf{\alq}                               & \textbf{1.00/2}             & \textbf{3.29MB}         & \textbf{67.4\%}           \\ 
        \bottomrule
    \end{tabular}
    \begin{tablenotes}
        \item
        *: both first and last layers are unquantized.
        \item
        $^\mathrm{e}$: adding extra epochs of \textit{STE with loss-aware} in the end.
    \end{tablenotes}
\end{table}

\fakeparagraph{Results}
\tabref{ch2-tab:ResNet} shows that \alq obtains the highest accuracy with the smallest network size on ResNet18/34, in comparison with other weight and weight+activation quantization approaches.
$I_W$ and $I_A$ are the quantization bitwidth for weights and activations respectively.

Several schemes (marked with *) are not able to quantize the first and last layers, since quantizing both layers as other layers will cause a huge accuracy degradation \cite{bib:ECCV18:Wan,bib:ICLR18:Mishra2}. 
It is worth noting that the first and last layers with floating-point values occupy $2.09$MB storage in ResNet18/34, which is still a significant storage consumption on such a low-bit network. 
We can simply observe this enormous difference between TWN~\cite{bib:NIPS16:Li} and LQ-Net~\cite{bib:ECCV18:Zhang} in~\tabref{ch2-tab:ResNet} for example. 
The evolved floating-point computations in both layers can hardly be accelerated with bitwise operations either. 

For reported \alq models in~\tabref{ch2-tab:ResNet}, as several layers have already been pruned to an average bitwidth below $1.0$-bit (\eg see in \figref{ch2-fig:adapt}), we add extra 50 epochs of our \textit{STE with loss-aware} in the end as discussed in \secref{ch2-sec:experiment_convergence}.
The learning rate is $10^{-4}$, and gradually decays with $0.98$ per epoch. 
The final accuracy is further boosted by around $1\%\sim2\%$, see the results marked with $^\mathrm{e}$. 
With such an extremely low bitwidth, maintained full precision weights help to calibrate some aggressive steps of quantization, which slowly converges to a local optimum with a higher accuracy for a large network.
Recall that maintaining full precision parameters means STE is required to approximate the gradients, since the true-gradients only relate to the quantized parameters used in the forward propagation.
However, for the quantization bitwidth higher than two ($>2.0$-bit), the quantizer can take smooth steps, and the gradient approximation due to STE damages the training inevitably.
Thus in this case, the true-gradient optimizer, \ie \algoref{ch2-alg:bases}, can converge to a better local optimum, faster and more stable.

\alq can quantize ResNet18/34 with 2.00-bit (across all layers) \textit{without any accuracy loss}.
To the best of our knowledge, this is the first time that the 2-bit weight-quantized ResNet18/34 can achieve the accuracy level of its full precision version, even if some prior schemes keep the first and last layers unquantized.
These results further demonstrate the high-performance of the pipeline in \alq.

\section{Summary}
\label{ch2-sec:summary}

In this chapter, we propose \alq, an adaptive loss-aware trained quantizer for multi-bit networks. 
\alq enables efficient inference on edge devices. 
\alq tries to reduce the redundancy on the quantization bitwidth to achieve both storage efficiency and computation efficiency.
Unlike prior quantized networks that (\textit{i}) often assign an empirical global bitwidth across layers, (\textit{ii}) train the quantizer by minimizing the reconstruction error to the full precision weights, 
\alq (\textit{i}) allocates an adaptive bitwidth to different weights w.r.t. the loss, (\textit{ii}) optimizes the multi-bit quantizer by minimizing the loss as well. 
The adaptive bitwidth assignment and the direct optimization objective allow \alq to find and remove more redundant bitwidth, thus achieving a better trade-off between the resource constraints and the model accuracy.
The main contributions are summarized as follows, 
\begin{itemize}
    \item 
    \alq introduces a multi-bit network with adaptive quantization bitwidth across different groups of weights. 
    Such an adaptive multi-bit network not only achieves a high compression ratio on static weight storage by only assigning a high bitwidth to loss-critical weights, 
    but also replaces the expensive floating-point operations with a single set of cheaper operations from \texttt{xnor}, \texttt{popcount} and accumulations.
    
    \item
    \alq trains the multi-bit quantized weights by directly minimizing the loss function. 
    This loss-aware quantization results in a faster convergence rate as well as a higher final accuracy than state-of-the-art STE-based quantization training that minimizes the reconstruction error.
    
    \item
    Via entirely pruned groups (\ie 0-bit weights in some groups), \alq enables extremely low-bit networks with an average bitwidth below 1-bit yet with \textit{dense tensor form}.
    It breaks the traditional lower bound of the quantized network, \ie binary network, thus providing more visions and possibilities for the network compression. 
    Experiments on CIFAR10 show that \alq can compress VGGNet to an average bitwidth of $0.4$-bit, while yielding a higher accuracy than other binary networks~\cite{bib:ECCV16:Rastegari,bib:NIPS15:Courbariaux}.
    
    \item
    \alq is the first loss-aware quantization scheme for multi-bit networks and eliminates the need for approximating gradients and retaining full precision weights.
    \alq is also able to quantize the first and last layers without incurring a notable accuracy loss.
    
\end{itemize}

This chapter studied how to compress the network for efficient inference given the fixed on-device resource constraints. 
In the next chapter, we will further study how to adapt the network on edge devices when the resource constraint is varied along the lifetime. 
Although we may deploy multiple \alq-quantized multi-bit networks with different average bitwidth to execute under different resource budgets, this naive solution can only result in a subpar performance, as it requires several times more storage consumption in comparison to a single (multi-bit) network. 
However, the solution proposed in the next chapter can meet the varying resource constraints without incurring extra storage overhead. 

% !TEX root = 00_thesis.tex

\newcommand\blfootnote[1]{%
  \begingroup
  \renewcommand\thefootnote{}\footnote{#1}%
  \addtocounter{footnote}{-1}%
  \endgroup
}

\chapter[Adaptation on Edge Devices]{Adaptation on Edge Devices}
\label{ch3:adaptation}

In \chref{ch2:inference}, we explored how DNNs can be compressed while respecting resource constraints. 
However, the resource constraints on the target edge devices may change dynamically during runtime. 
To maximize model accuracy during on-device inference, in this chapter we deploy a DNN that can adapt to the different resource constraints on the edge device.

\fakeparagraph{Main Resource Constraints}
The different resource constraints during on-device inference may be due to for example the available battery power or the allowed inference time.
Similar to \chref{ch2:inference}, we mainly adopt two widely used proxies to quantify the (varying) resource consumption, (\textit{i}) \textit{the storage of weights}, which affects the amount of memory fetching and static memory consumption, and (\textit{ii}) \textit{the number of operations for inference}, which is relevant to the computing energy and the inference latency. 

\fakeparagraph{Principles}
Faced with the varying resource constraints on edge devices, existing synthesis methods require either deploying multiple individual networks with different resource demands or sampling sub-networks along structured dimensions, which leads to poor performance. 
However, we propose to sample sub-networks from the backbone network through row-based unstructured sparsity, and propose a novel compressed sparse row (CSR) format for efficient sparse inference. 
Our synthesis methods reduce redundancy among multiple sub-networks through weight sharing and architecture sharing, resulting in storage efficiency and re-configuration efficiency.

The contents of this chapter are established mainly based on the paper ``DRESS: Dynamic REal-time Sparse Subnets'' that is published on Efficient Deep Learning for Computer Vision CVPRWorkshop (ECV), 2022 \cite{bib:CVPRWorkshop22:Qu}.\footnote{This work was done when Zhongnan Qu was a research intern at Meta, and it was collaborated with the colleagues at Meta Reality Labs Research. }

\section{Introduction}
\label{ch3-sec:introduction}

Extensive synthesis works \cite{bib:ICLR16:Han,bib:ECCV16:Rastegari,bib:ACMTrans21:Sunny,bib:ACMTrans21:Oh} have proposed to first compress a pretrained model according to the given resource constraints, and then compile the compressed model to deploy on target edge devices. 
However, the time constraints of many practical embedded systems may dynamically change at run-time.
For example, when detecting hand positions on a workbench in real-time, the allowed inference time varies during the entire manipulation. 
In comparison to general movement, engineers will slow down the hand movement if performing some critical tasks, \eg grasping objects, which gives DNNs a longer execution time when requiring higher perceptive precision.
Some similar scenarios also include autonomous vehicles' reaction time on city roads and highways due to different operating speeds. 
On the other hand, the available resources on the target edge device may also vary along the lifetime, \eg the battery energy, the allocatable RAM.
All considerations mentioned above indicate that the deployed inference model should maintain a dynamic capacity, such that the model can be adapted and executed under different resource constraints.

\fakeparagraph{Challenges}
Making DNNs adaptable on resource-constrained devices is even more challenging.
Existing synthesis methods either fail to compile DNNs that can adapt to varying resource constraints, or result in subpar performance. 
Traditional compression techniques, \eg pruning, quantization, only result in a static inference model. 
Although the compressed model is mapped onto target devices, it can not meet various resource requirements. 
As an alternative, we may compile for example multiple networks with different sparsity levels, which however need several times more storage consumption in comparison to a single sparse network.
Recent works \cite{bib:ICLR19:Yu,bib:ICLR20:Cai} show that sub-networks from a pretrained backbone network can reach a decent performance compared to the sub-networks trained individually from scratch.
Nevertheless, they only sample sub-network architectures along hand-crafted structured dimensions, \eg width, kernel size, which leads to sub-optimal results.
Switching among multiple compiled architectures on edge devices may also cause extra re-configuration overhead.

In this chapter, we propose a novel synthesis technique, \textbf{D}ynamic \textbf{RE}al-time \textbf{S}parse \textbf{S}ubnets (DRESS).
\dress samples sub-networks from the backbone network through row-based unstructured sparsity, while ensuring that nonzero weights of the higher sparsity networks are reused by the lower sparsity networks.
This way, the overall memory consumption is bounded by the network with the lowest sparsity and does not depend on the number of networks, resulting in \textit{memory efficiency}; all sparse sub-networks leverage the same architecture as the backbone network, leading to \textit{re-configuration efficiency}.
The sub-network with a higher sparsity (\ie fewer nonzero weights) needs a smaller amount of on-device memory fetching and fewer FLOPs, thus shall be adopted to inference under more severe resource constraints, \eg lower energy budget, limited inference time.
Specifically, we (\textit{i}) sample weights w.r.t. their magnitudes in a row-based unstructured manner; (\textit{ii}) train all sampled sparse sub-networks with weighted loss in parallel; 
(\textit{iii}) further fine-tune batch normalization for each sub-network individually.

\section{Related Work}
\label{ch3-sec:related}

\subsection{Network Compression \& Deployment}
\label{ch3-sec:related_compression}

Network compression focuses on trimming down the DNN model size with negligible performance degradation. 
Commonly used compression techniques can be divided into three categories, (\textit{i}) designing efficient network architectures manually \cite{bib:arXiv17:Howard,bib:CVPR18:Sandler} or automatically using neural architecture search \cite{bib:ICLR20:Cai,bib:arXiv19:Yu,bib:ECCV20:Yu,bib:ACMTrans21:Mendis}; (\textit{ii}) quantizing weight values into lower bitwidth to use cheaper operations and reduce the storage consumption \cite{bib:ECCV16:Rastegari,bib:ACMTrans19:Yu,bib:ACMTrans21:Sunny}; (\textit{iii}) structured \cite{bib:ECCV20:Li,bib:IEEETrans20:Li,bib:IEEETrans20:Wu}/unstructured \cite{bib:ICLR16:Han,bib:ICLR20:Renda,bib:ICML21:Evci,bib:NIPS21:Peste,bib:ACMTrans21:Oh,bib:IEEETrans20:Ahmad} pruning unimportant weights as zeros to reduce the number of operations and the number of nonzero weights.
The compressed model is further optimized by some compilation libraries in order to speed up inference on target edge platforms, \eg CMSIS-NN for Arm Cortex-M CPUs \cite{bib:CMSIS-NN}, XNNPACK for Arm64 and ArmV7 CPUs \cite{bib:XNNPACK}, Vela for Ethos-U NPU \cite{bib:Vela}.
Note that the compiled model often only supports a static computation graph due to the limited resources on edge devices \cite{bib:CMSIS-NN,bib:XNNPACK,bib:Vela}.
In this chapter, we focus on unstructured pruning among others, since (\textit{i}) it often yields a high compression ratio \cite{bib:ICLR20:Renda}; (\textit{ii}) the networks with different unstructured sparsity may share the same network architecture, \ie the same compiled computation graph.
Furthermore, some recent libraries \eg XNNPACK include fast kernels for sparse matrix-dense matrix multiplication, which enables sparse DNN acceleration on edge platforms \cite{bib:CVPR20:Elsen,bib:IEEETrans20:Li2}.

\subsection{Dynamic Networks}
\label{ch3-sec:related_dynamic}

Dynamic networks aim at a better trade-off between inference accuracy and average inference efficiency, by adapting network structures or network parameters according to the inputs during inference \cite{bib:PAMI21:Han}. 
Among them, some works propose allocating less computation on those canonical data samples, through skipping layers \cite{bib:ICLR18:Huang}, pruning unimportant channels \cite{bib:CVPR21:Li,bib:IEEETrans20:Wu}, selecting a subset of salience pixels \cite{bib:CVPR20:Verelst}.
Although these sample-wise dynamic networks may achieve a smaller inference cost averaged over different samples, they cannot adapt the model to fit different resource budgets. 
In addition, to achieve data-dependent adaptiveness, they often bring additional computation burden, \eg hard attention, gater, etc. \cite{bib:PAMI21:Han}

\subsection{Anytime Networks (Sub-networks)}
\label{ch3-sec:related_anytime}

Anytime networks refer to the network whose sub-networks can be executed separately with less resource consumption while achieving a satisfactory performance. 
\dress falls into the same scope of anytime networks.
MSDNet \cite{bib:ICLR18:Huang} densely connects multiple convolutional layers in both depth direction and scale direction, such that the computation can be saved by early-exiting from a certain layer.
\cite{bib:arXiv17:Hu} introduces an adaptive weighted loss to optimize the network with various depths.
Slimmable networks \cite{bib:ICLR19:Yu,bib:ICCV19:Yu} propose to train a single model which supports multiple width multipliers (\ie number of channels) in each layer.
\cite{bib:ICLR20:Cai} suggests to search network architectures with different kernel size, depth, and width, in a single pretrained once-for-all network.
Subflow \cite{bib:RTAS20:Lee} executes only a sub-graph of the full DNN by activating partial neurons given the varying time constraints.
State-of-the-art anytime networks always sample sub-networks from the backbone network along hand-crafted structured dimensions, \eg depth, width, kernel size, neuron.
As zero weights have no effects on the calculation, anytime networks actually perform structured pruning on the backbone network, which could yield a subpar performance in comparison to unstructured sampling.
In addition, resulted sub-networks often have different network architectures, \eg different kernel sizes.
When adopting these sub-networks on edge devices, the re-configuration of the computation graph may bring extra overhead.
On the other hand, SP-Net \cite{bib:arXiv20:Guerra} suggests adjusting the quantization bitwidth on demand, which however requires specialized integer arithmetic units for efficient computing.

\subsection{Weight Sharing}
\label{ch3-sec:related_sharing}

Sub-networks rely on weight reusing (sharing). 
Except for sub-networks, weight sharing among different networks is also widely used in other settings. 
Multi-task learning \cite{bib:NIPS18:Sener} reuses partial weights of networks performing diverse tasks to reduce memory consumption. However, these methods are inapplicable in our scenarios, which target a single task with varying resource constraints.  
Neural architecture search (NAS) applied in \cite{bib:ICLR20:Cai,bib:arXiv19:Yu,bib:ECCV20:Yu,bib:ICCV21:Chu} maintains a single set of shared weights (also known as \textit{supernet}) when searching different architectures to reduce the training effort. 
Note that NAS is orthogonal to our method since the searched optimal architecture can be used as our backbone network.

\section{Dynamic Real-time Sparse Subnets}
\label{ch3-sec:method}

\subsection{Problem Definition}
\label{ch3-sec:problem}

We aim at sampling multiple subnets from a backbone network.
The backbone network is a conventional DNN consisting of $L$ convolutional (\texttt{conv}) layers or fully connected (\texttt{fc}) layers.
These subnets have different resource demands and thus can be adapted to different resource availabilities. 
Since the subnets have the same architecture as the backbone network, they can share a single compiled architecture to achieve re-configuration efficiency;
the nonzero weights of the subnet with a higher sparsity are reused by the subnet with a lower sparsity to achieve memory efficiency.
This way, we only need to store a table for the lowest sparsity network, including its nonzero weights sorted w.r.t. importance and corresponding indices.
Accordingly, the other networks can be build from the top important weights through a pre-defined sparsity together with the compiled architecture.
Assume that we sample $K$ sparse subnets, then the preliminary problem is defined as,
\begin{eqnarray}
    \min_{\bm{w},\bm{m}_k}      & \ell(\bm{w}\odot\bm{m}_k)             & \forall k \in 1,...,K       \label{ch3-eq:loss} \\
    \text{s.t.}                 & \|\bm{m}_k\|_0 = (1-s_k) \cdot I      & \forall k \in 1,...,K       \label{ch3-eq:constraints1} \\
                                & \bm{m}_i \odot \bm{m}_j = \bm{m}_j    & \forall 1 \le i<j \le K   \label{ch3-eq:constraints2} 
\end{eqnarray}
where $\bm{w}$ stands for the weights of the (dense) backbone network; $\bm{m}_k$ stands for the binary mask of the $k$-th subnet; $s_k$ stands for the pre-defined sparsity level. 
$\ell(.)$ denotes the loss function, $\|.\|_0$ denotes the L0-norm, $\odot$ denotes the element-wise multiplication. 
Note that $\bm{w}\in\mathbb{R}^I$, $\bm{m}_k\in\{0,1\}^I$, where $I$ is the total number of weights. 
Clearly, we have $0<s_1<s_2<...<s_K<1$, \ie the first subnet bounds the overall static storage consumption. 
$\bm{w}_k$ is denoted as nonzero weights of the $k$-th sparse subnet, \ie $\bm{w}_k=\bm{w}\odot\bm{m}_k$.

\begin{algorithm}[!ht]
\caption{Dynamic REal-time Sparse Subnets}\label{ch3-alg:DRESS}
\KwIn{Initial random weights $\bm{w}$, training dataset $\mathcal{D}_\text{tr}$, validation dataset $\mathcal{D}_\text{val}$, overall sparsity $\{s_k\}_{k=1}^K$, normalized loss weights $\{\pi_k\}_{k=1}^K$} 
\KwOut{Optimized weights $\bm{w}$, binary masks $\{\bm{m}_k\}_{k=1}^K$}
\tcc{Dense pre-training}
Train dense network $\bm{w}$ with traditional optimizer\;
\tcc{DRESS training}
Allocate layer-wise sparsity $\{s_{k,l}\}_{l=1}^L$ for each $s_k$\;
Initiate $\bm{w}^0=\bm{w}$\;
\For {$q \leftarrow 1$ \KwTo $Q$} { 
    \tcp{The $q$-th training iteration}
    Fetch mini-batch from $\mathcal{D}_\text{tr}$\;
    Initialize backbone-net gradient $\bm{g}(\bm{w}^{q-1})=\bm{0}$\;
    \For {$k \leftarrow 1$ \KwTo $K$} {
        Sample a subnet with sparsity $\{s_{k,l}\}_{l=1}^L$ and get its mask $\bm{m}_k$\;
        Get sparse subnet $\bm{w}^{q-1}_k=\bm{w}^{q-1}\odot\bm{m}_k$\;
        Back-propagate subnet gradient $\bm{g}(\bm{w}^{q-1}_k)=\pi_k\cdot\frac{\partial\ell(\bm{w}^{q-1}_k)}{\partial\bm{w}^{q-1}_k}$\;
        Accumulate backbone-net gradient $\bm{g}(\bm{w}^{q-1})=\bm{g}(\bm{w}^{q-1})+\bm{g}(\bm{w}^{q-1}_k)\odot\bm{m}_k$\;
    }
    Compute optimization step $\Delta\bm{w}^{q}$ with $\bm{g}(\bm{w}^{q-1})$\;
    Update $\bm{w}^{q} = \bm{w}^{q-1} + \Delta\bm{w}^{q}$\;
    \uIf{Higher average epoch accuracy on $\mathcal{D}_\text{val}$}
    {Save $\bm{w}=\bm{w}^q$ and $\{\bm{m}_k\}_{k=1}^K$\;}
    \Else{Re-allocate layer-wise sparsity $\{s_{k,l}\}_{l=1}^L$ for each $s_k$\;}
}
\tcc{Post-training on batch normalization (BN)}
\For {$k \leftarrow 1$ \KwTo $K$} {
    Load $\bm{w}$ and $\bm{m}_k$\;
    Fine-tune BN layers of subnet $\bm{w}\odot\bm{m}_k$\;
}
\end{algorithm}

In the following sections, we detail how to solve \equref{ch3-eq:loss}-\equref{ch3-eq:constraints2} in our \dress synthesis approach. 
\dress consists of three training stages as discussed below. 
The overall pipeline is shown in \algoref{ch3-alg:DRESS}.

\begin{itemize}
    \item 
    \underline{Dense Pre-Training:} The backbone network is trained from scratch with a traditional optimizer to provide a good initialization for the following sparse training. 
    \item 
    \underline{\dress Training:} The multiple sparse subnets are sampled from the backbone network (\secref{ch3-sec:sample}, \secref{ch3-sec:store}) and are jointly trained in parallel with weighted loss (\secref{ch3-sec:optimize}). 
    \item 
    \underline{Post-Training on Batch Normalization:} Batch normalization (BN) layers are further optimized individually for each subnet to better reveal the statistical information (\secref{ch3-sec:boost}).  
\end{itemize}

\subsection{How to Sample Sparse Subnets}
\label{ch3-sec:sample}

\begin{figure}[!t]
	\centering
	\includegraphics[width=0.99\textwidth]{./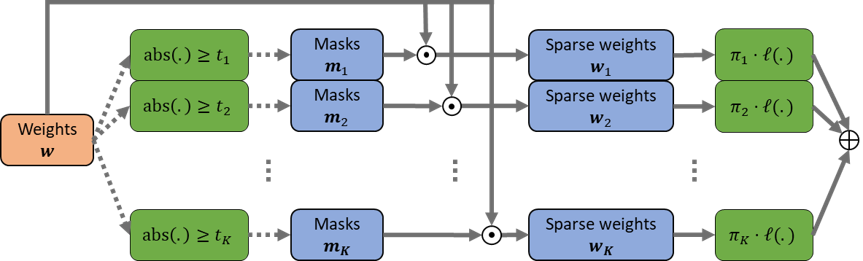}
	\caption[The computation graph of \dress.]{The computation graph used in parallel training multiple subnets. 
	The orange block stand for the leaf variable to be optimized; the blue block stand for the intermediate variable; the green block stand for the computation unit.}
	\label{ch3-fig:approach}
\end{figure}

\begin{figure}[!t]
	\centering
	\includegraphics[width=0.8\textwidth]{./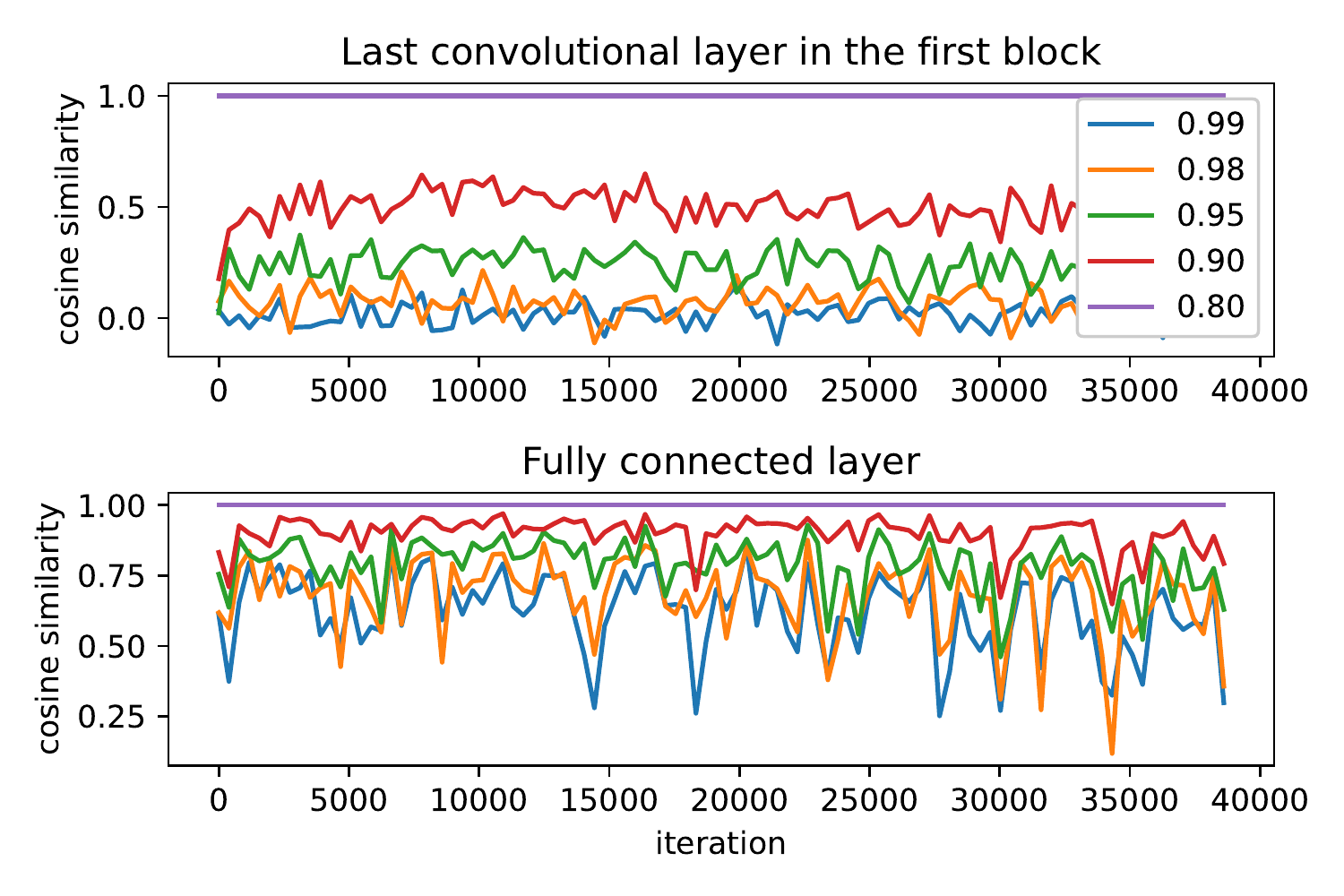}
	\caption[The cosine similarity between the loss gradients of different subnets.]{The cosine similarity between the loss gradients of 5 subnets (with sparsity 0.8,0.9,0.95,0.98,0.99) and that of the lowest sparsity subnet (with sparsity 0.8) along the training iterations.
    We show two typical layers in ResNet20, the last \texttt{conv} layer of the first block and the \texttt{fc} layer}
	\label{ch3-fig:cosine}
\end{figure}

Unlike traditional anytime networks that sample subnets along structured dimensions, \dress samples subnets weight-wise which extremely enlarges the sampling space.
Recall that we introduce $K$ binary masks $\bm{m}_{1:K}$ to indicate if the weight is selected in each subnet. 
The naive approach could be iterative sampling $K$ subnets where each iteration exhaustively searches the best-performed subnet inside the current subnet. 
Yet this naive approach can be either conducted rarely or infeasible due to the high complexity.

To reduce the complexity, we propose to greedily sample the subnet based on the importance of weights.
Following the prior pruning works \cite{bib:ICLR16:Han,bib:ICLR19:Frankle,bib:ICLR20:Renda}, the importance is measured by the weight magnitudes. 
Given an overall sparsity level $s_k$, the $(1-s_k)\cdot I$ weights with the largest magnitudes will be sampled and are used to build the subnet.
However, it is still infeasible to conduct such a global sorting across all layers in each training iteration.
Instead, the weights are only sorted and sampled inside each layer according to a layer-wise sparsity $s_{k,l}$, where $l$ denotes the layer index.
The global sorting, also the (re-)allocation of layer-wise sparsity, is conducted only if the average accuracy of subnets does not improve anymore, see in \algoref{ch3-alg:DRESS}.
During (re-)allocation, the weights of all \texttt{conv} and \texttt{fc} layers with the largest magnitudes will be selected in sequence until reaching the overall sparsity $s_k$, and the layer-wise sparsity can be then calculated accordingly. 

Note that the (dense) backbone network is maintained and continuously updated when training sampled subnets. 
In comparison to traditional pruning, where only the nonzero weights at fixed locations are fine-tuned, our sparse subnets are re-sampled from the backbone network in each training iteration.
This flexible mechanism is crucial to acquire multiple high-performed subnets, see the ablation results in \secref{ch3-sec:ablation_sampling}. 

\subsection{How to Optimize Subnets}
\label{ch3-sec:optimize}

With sampled binary masks, we can now build and train the subnets. 
Our concept for optimizing subnets is based on the key insight: \textit{in comparison to iterative training of subnets in progressively decreased/increased sparsity, parallel training allows multiple subnets to be sampled and optimized jointly thus yields higher performance.}

Experimental results in \secref{ch3-sec:ablation_iterative} show that parallel training multiple subnets always yields a higher prediction accuracy than iterative training. 
As a possible explanation, the optimizer may be stuck into a bad local optimum around the previous subnet during iterative training, whereas parallel training searches multiple subnets jointly. 
We thus adopt parallel training in \dress as in \algoref{ch3-alg:DRESS}.

In parallel training, \equref{ch3-eq:loss} can be re-written as,  
\begin{equation}
    \min_{\bm{w},\bm{m}_k}~~~\sum_{k=1}^K \pi_k \cdot \ell(\bm{w}\odot\bm{m}_k) 
    \label{ch3-eq:loss_parallel}
\end{equation}
where $\pi_k$ is the normalized scale ($\sum_{k=1}^K \pi_k = 1$) used to weight $K$ loss items, which will be discussed later. 
In fact, this process determines a threshold $t_k$ for the $k$-th sparsity level, the mask value $m_{k,i}=1$ if $\text{abs(}w_i\text{)}\ge t_k$, otherwise 0, $\forall i\in1,...,I$.
$t_k$ is set to the value such that $(1-s_k)$ of weights have a larger absolute value than $t_k$.
Clearly, we have $t_1<t_2<...<t_K$ due to the constraints of \equref{ch3-eq:constraints1}-\equref{ch3-eq:constraints2}.
In each training iteration, we sample $K$ sparse subnets $\bm{w}_{1:K}$ from the backbone network $\bm{w}$.
Each subnet's loss function is weighted by $\pi_k$ and summed together.
This weighted sum is to be minimized and thus is used to compute the gradients of $\bm{w}$, see the optimization graph in \figref{ch3-fig:approach}.

When parallel training multiple subnets, the gradients of the backbone network are accumulated by the (weighted) loss gradients back-propagated through all $K$ sparse subnets, as,
\begin{equation}
    \bm{g}(\bm{w}) = \sum_{k=1}^K \pi_k \frac{\partial\ell(\bm{w}_k)}{\partial\bm{w}_k} \odot \bm{m}_k
\end{equation}

We parallelly train 5 subnets of ResNet20 \cite{bib:CVPR16:He} with sparsity $s_{1:5}=0.8,0.9,0.95,0.98,0.99$ on CIFAR10, and let the 5 loss items weighted equally, \ie $\pi_{1:5}=0.2$. 
We plot the cosine similarity between the loss gradients of 5 subnets (\ie $(\partial\ell(\bm{w}_k)/\partial\bm{w}_k) \odot \bm{m}_k$ with $k=1,...,5$) and that of the lowest sparsity subnet (\ie $(\partial\ell(\bm{w}_1)/\partial\bm{w}_1) \odot \bm{m}_1$) along with the training iterations, see in \figref{ch3-fig:cosine}.
It shows that the loss gradients of different subnets are always positively correlated with each other.
The results also verify that multiple subnets are jointly trained towards the optimal point in the loss landscape.
Because of \equref{ch3-eq:constraints2}, the nonzero weights in higher sparsity subnets (\eg $\bm{w}_5$) are also selected by other subnets, which means these weights are optimized with a larger step size than other weights.  
To balance the step size, the subnet with a higher sparsity (fewer trainable weights) shall be assigned to a smaller weight $\pi_k$ on its loss. 
We propose to weight the loss items by the ratio of trainable weights (\ie $1-s_k$) together with a correction factor $\gamma$.

\begin{equation}
    \alpha_k = (1-s_k)^\gamma
    \label{ch3-eq:gamma}
\end{equation}
\begin{equation}
    \pi_k = \alpha_k/\sum_{k=1}^K\alpha_k
    \label{ch3-eq:alpha}
\end{equation}
The normalized weights $\pi_k$ provide control over the significance of subnets in parallel training.
$\gamma=0$ means weighting loss items equally. 
Experimentally, we find that $\gamma\in[0.5,1]$ often yields a satisfactory performance, see in \secref{ch3-sec:ablation_gamma}.

\subsection{How to Store Subnets}
\label{ch3-sec:store}

\begin{figure}[!t]
	\centering
	\includegraphics[width=0.9\textwidth]{./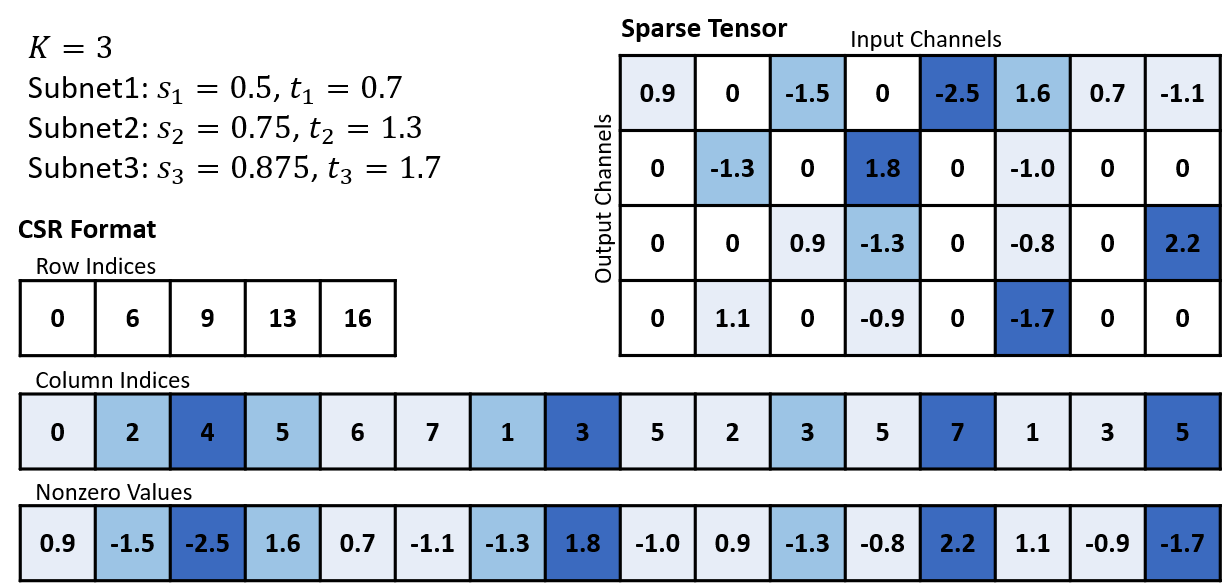}
	\caption[Traditional CSR format of unstructured sparse tensor.]{Traditional CSR format of unstructured sparse tensor. 
	The example weight tensor is from a $1\times1$ \texttt{conv} layer with 8 input channels and 4 output channels. 
	The sparse tensor has a row dimension along each output channel, \ie each \texttt{conv} filter.
	There are 3 sparse subnets with sparsity 0.5, 0.75, and 0.875. 
	Each subnet corresponds to a threshold value of 0.7, 1.3, and 1.7, respectively.
	}
	\label{ch3-fig:csr1}
\end{figure}

To realize efficient storage and computation, current compilation libraries often encode sparse tensors in compressed sparse row (CSR) format (or some similar formats \eg Block-CSR) \cite{bib:XNNPACK,bib:CVPR20:Elsen,bib:IEEETrans20:Li2}.
An example CSR format of sparse tensor for a \texttt{conv} layer is depicted in \figref{ch3-fig:csr1}.
When adopting traditional CSR format to store subnets generated by \dress, we need to store (\textit{i}) the subnet with the lowest sparsity including the row indices, the column indices, and the nonzero values, (\textit{ii}) $K$ threshold values $t_{1:K}$.
However, when selecting the $k$-th subnet for inference, all nonzero weights need to be fetched and compared with $t_k$.
Although we may build specialized indexing for every subnet individually, it in turn results in more memory cost depending on the number of subnets. 

To achieve an efficient inference on different sparse subnets while without extra memory overhead, we adopt a \textit{row-based unstructured} sparsity (a.k.a. \textit{N:M fine-grained structure sparsity} \cite{bib:ICLR21:Zhou,bib:NIPS21:Hubara,bib:NIPS21:Sun}), where different rows leverage the same sparsity level.
We denote $N$ as the row size, also the number of weights in each row.
Especially, for sparsity $s_k$, all rows have exactly $(1-s_k)\cdot N$ nonzero weights.
In comparison to conventional unstructured sparsity, this kind of sparsity can also be accelerated with sparse tensor cores of Nvidia A100 GPUs \cite{bib:arXiv21:Mishra} for both training and inference, and thus becomes prevailing recently. 
\textit{To our best knowledge, this is the first work that builds multiple sub-networks via fine-grained structure of weight sharing.}
The column indices are stored according to the descending order of the importance (also weight magnitudes) in a two-dimensional table.
The nonzero weights are stored in another table with the same order as the column indices.
This \dress CSR format needs to store (\textit{i}) the subnet with the lowest sparsity including the table of the column indices and the table of nonzero weights, (\textit{ii}) $K$ integers $\{(1-s_k)\cdot N\}_{k=1}^K$. 
It has overall a similar memory cost as traditional CSR format.
When adopting the $k$-th subnet, we fetch the first $(1-s_k)\cdot N$ columns from both tables as shown in \figref{ch3-fig:csr2}.
The row indices can be built with $(1-s_k)\cdot N$.
Note that all fetched subnets follow the CSR format and utilize the same compiled network architecture, which allows us to leverage available libraries to achieve a fast inference without re-configuration overhead.

\begin{figure}[!t]
	\centering
	\includegraphics[width=0.99\textwidth]{./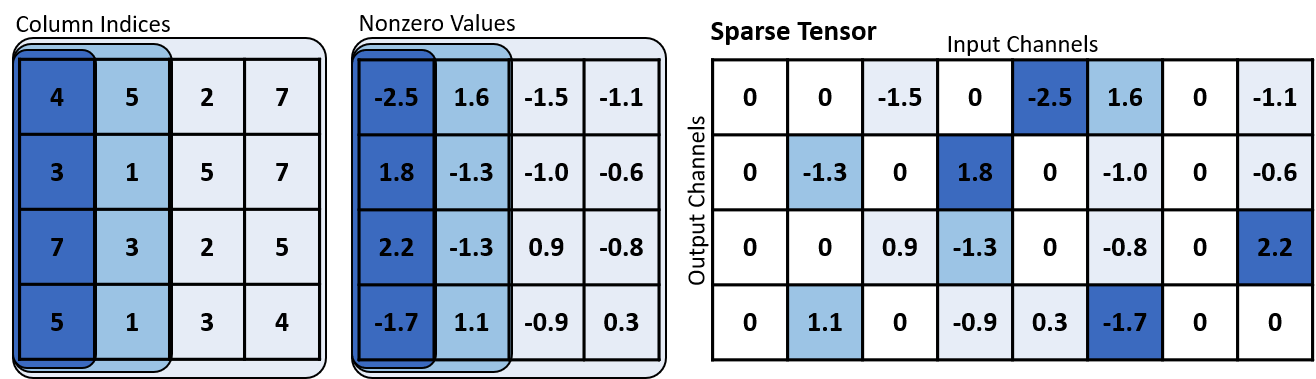}
	\caption[\dress CSR format of row-based unstructured sparse tensor.]{\dress CSR format of row-based unstructured sparse tensor. 
	The example weight tensor has the same size as \figref{ch3-fig:csr1}. 
	Each row has 8 weights in total, also the row size $N=8$.
	There are 3 sparse subnets with sparsity 0.5, 0.75, and 0.875, as \figref{ch3-fig:csr1}. 
	Each subnet has 4, 2, and 1 nonzero weights per row, respectively.}
	\label{ch3-fig:csr2}
\end{figure}

To obtain \dress CSR format, the sampling process needs to be adjusted accordingly.  
Especially, for layer $l$, we first pre-define a row size $N_l$ and reshape the weight tensor into rows. 
For example, each \texttt{conv} filter corresponds to one row in accordance with the compilation libraries \cite{bib:XNNPACK,bib:CVPR20:Elsen}. 
When sampling a subnet in layer $l$, we conduct unstructured sampling in each row individually, while each row has the same sparsity $s_{k,l}$ as in \figref{ch3-fig:csr2}.
Note that when $N_l$ equals the total number of weights, \ie a single row in \dress CSR, it turns back into the original unstructured sampling discussed in \secref{ch3-sec:sample}.
In this case, unstructured sampling is conducted in the entire tensor.
Although the resulted sparse tensor can still be stored in the traditional CSR format as \figref{ch3-fig:csr1}, it can not perform an efficient inference due to extra comparison computation discussed above.

The algorithm for generating the mask with row-based unstructured sampling is presented in \algoref{ch3-alg:rowsampling}.
We focus on sampling in a weight tensor $\bm{w}$ with a predefined row size $N$.
Note that $N$ must be divisible by the total number of weights in $\bm{w}$.
The weight tensor $\bm{w}$ is then reshaped into the form of $\mathbb{R}^{G \times N}$, \ie $N$ weights per row and $G$ rows in total.
Given $K$ sparsity levels $s_{1:K}$, $K$ binary masks with the form of $\{0,1\}^{G \times N}$ are generated.
Binary masks can be reshaped into the original form of the weight tensor accordingly. 

\begin{algorithm}[!t]
    \caption{Row-based unstructured sampling}\label{ch3-alg:rowsampling}
    \KwIn{Weight tensor $\bm{w}\in\mathbb{R}^{G \times N}$, row size $N$, sparsity $\{s_k\}_{k=1}^K$}
    \KwOut{Binary masks $\{\bm{m}_k\}_{k=1}^K$}
    \For {$k \leftarrow 1$ \KwTo $K$} {
        Initiate binary mask $\bm{m}_k=0^{G \times N}$\;
        Get the number of nonzero weights per row $N_k^\text{nz}=N\cdot (1-s_k)$\;
    }
    \For {$g \leftarrow 1$ \KwTo $G$} { 
        Sort the weight magnitudes of row $\bm{w}_{g,:}$ in descending order\;
        \For {$k \leftarrow 1$ \KwTo $K$} {
            Set the mask values of $\bm{m}_{k,g,:}$ as 1 for Top-$N_k^\text{nz}$ indices\;
        }
    }
\end{algorithm}

\subsection{How to Further Boost Subnets}
\label{ch3-sec:boost}

Batch normalization (BN) layers are critical for the stable training of state-of-the-art DNNs. 
Previous synthesis works \cite{bib:ICLR19:Yu,bib:ICCV19:Yu} find that subnets with different width may cause an accumulated error on batch statistics, and propose to switch BN layers for different subnets.
Multiple subnets in \dress share a single architecture, and thus are capable of being optimized in synergy with a shared BN layer.
However, post-training BN layers for each subnet can better calibrate the running statistics, which in turn increases the accuracy.
As BN layers often only require a rather smaller amount of memory and computation in comparison to \texttt{conv}/\texttt{fc} layers, we propose to further fine-tune BN layers for each subnet individually after parallel training, as shown in \algoref{ch3-alg:DRESS}.

\section{Evaluation}
\label{ch3-sec:experiments}

With our design-flow mentioned above, we are now synthesizing the algorithm to map onto resource-constrained edge platforms.
To better understand the effectiveness of our algorithm, we first evaluate our algorithm on widely used vision benchmarks in this section.
Then, we compile and deploy the generated subnets on an edge platform in \secref{ch3-sec:deployment} to see the actual performance of the entire synthesis.

\subsection{Benchmarking Details}
\label{ch3-sec:experiment_benchmark}

We implement our algorithm with Pytorch \cite{bib:NIPSWorkshop17:Paszke}, and evaluate on image classification and object detection/instance segmentation tasks.
As prior works \cite{bib:ICLR18:Huang,bib:ICLR19:Yu,bib:ICCV19:Yu,bib:ICLR20:Renda,bib:NIPS21:Peste,bib:ICLR21:Zhou,bib:NIPS21:Sun}, for image classification, we benchmark VGGNet \cite{bib:ICLR15:Simonyan} and ResNet20 \cite{bib:CVPR16:He} on CIFAR10 \cite{bib:CIFAR}, and benchmark ResNet50 \cite{bib:CVPR16:He} and MobileNetV1/V2 \cite{bib:arXiv17:Howard,bib:CVPR18:Sandler} on ImageNet \cite{bib:ILSVRC15}; for object detection, we benchmark Faster-RCNN with ResNet50-FPN on COCO \cite{bib:COCO}; for instance segmentation, we benchmark Mask-RCNN with ResNet50-FPN on COCO \cite{bib:COCO}.
We use Nesterov SGD optimizer with the cosine schedule for learning rate decay. 
We report the Top-1 test accuracy for the subnets of the epoch when the validation dataset achieves the highest average accuracy over all subnets.
For all pre-processing and random initialization, we apply the tools provided in Pytorch.

In our experiments, the row-based unstructured sampling is conducted in all \texttt{conv}/\texttt{fc} layers, except for the depthwise \texttt{conv} layers in MobileNetV1/V2.
We found that sparse depthwise \texttt{conv} layers lead to substantially lower accuracy. 
As depthwise \texttt{conv} layers only consume a rather small amount of memory and computation \cite{bib:ICML21:Evci,bib:ACMTrans21:Cho}, different subnets share the same dense depthwise \texttt{conv} layers in \dress.  
In addition, we keep BN layers dense as in \cite{bib:ICLR19:Yu,bib:ICCV19:Yu}. 
We set the overall sparsity levels $s_{1:5}=0.95,0.98,0.99,0.995,0.998$ for VGGNet, $s_{1:5}=0.8,0.9,0.95,0.98,0.99$ for ResNet20, $s_{1:4}=0.5,0.8,0.9,0.95$ for ResNet50 and MobileNetV1/V2.
The sparsity levels are averaged over all \texttt{conv}/\texttt{fc} layers. 

\subsubsection{VGGNet/ResNet20 on CIFAR10}
\label{ch3-sec:experiment_cifar}

CIFAR10 \cite{bib:CIFAR} is an image classification dataset, which consists of $32\times32$ color images in 10 object classes. 
We use the original training dataset with 50000 samples for training, and randomly select 2000 samples in the original test dataset (10000 samples in total) for validation, and the rest 8000 samples for testing.
We train on 1 Nvidia V100 GPU with a batch size of 128.

\fakeparagraph{VGGNet}
The used VGGNet is widely adopted in many previous compression works \cite{bib:NIPS15:Courbariaux,bib:ICLR17:Hou,bib:ECCV16:Rastegari}, which is a modified version of the original VGG~\cite{bib:ICLR15:Simonyan}.
The used VGGNet architecture is presented as, 2$\times$128C3 - MP2 - 2$\times$256C3 - MP2 - 2$\times$512C3 - MP2 - 2$\times$1024FC - 10SVM/100SVM.
The initial learning rate is set as 0.1; the momentum is set as 0.9; the weight decay is set as 0.0005; the number of training epochs is set as 100.
Note that we use the same training hyperparameters for all three stages in \algoref{ch3-alg:DRESS}.
This also holds true for the following experiments.

\fakeparagraph{ResNet20}
The network architecture is the same as ResNet-20 in the original paper~\cite{bib:CVPR16:He}.
The initial learning rate is set as 0.1; the momentum is set as 0.9; the weight decay is set as 0.0005; the number of training epochs is set as 100.

\subsubsection{ResNet50/MobileNetV1/MobileNetV2 on ImageNet}
\label{ch3-sec:experiment_imagenet}

ImageNet \cite{bib:ILSVRC15} is a large-scale image classification dataset, which consists of high-resolution color images in 1000 object classes. 
We use the original training dataset with 1.28 million samples for training, and randomly select 10000 samples in the original validation dataset (50000 samples in total) for validation, and the rest 40000 samples for testing.
We train on 4 Nvidia V100 GPUs with a batch size of 1024.

\fakeparagraph{ResNet50}
We use pytorch-style ResNet50, which is slightly different than the original Resnet-50~\cite{bib:CVPR16:He}.
The down-sampling (stride=2) is conducted in $3\times3$ \texttt{conv} layer instead of $1\times1$ \texttt{conv} layer.
The network architecture is the same as ``resnet50'' in~\cite{bib:torchResNet}.
The initial learning rate is set as 0.5; the momentum is set as 0.9; the weight decay is set as 0.0001; the number of training epochs is set as 100.

\fakeparagraph{MobileNetV1}
The network architecture is the same as $1.0\times$ MobileNet-224 in the original paper~\cite{bib:arXiv17:Howard}.
The initial learning rate is set as 0.5; the momentum is set as 0.9; the weight decay is set as 0.00001; the number of training epochs is set as 150.

\fakeparagraph{MobileNetV2}
The network architecture is the same as $1.0\times$ MobileNetV2 in the original paper~\cite{bib:CVPR18:Sandler}.
The initial learning rate is set as 0.1; the momentum is set as 0.9; the weight decay is set as 0.00004; the number of training epochs is set as 300.

\subsubsection{ResNet50-FPN on COCO}
\label{ch3-sec:supplementary_coco}

MS COCO \cite{bib:COCO} is object detection, segmentation, key-point detection, and captioning dataset.
We use COCO 2017 dataset, which consists of high-resolution annotated images in 80 object classes.
It contains a training dataset with 118000 annotated samples, and a validation dataset with 5000 data samples. 
We focus on object detection and instance segmentation.
We report the standard COCO metrics, average precision (AP), which is averaged over Intersection-over-Union (IoU) thresholds $\in0.5:0.05:0.95$.
The bounding box level AP and the mask level AP are adopted in object detection and the instance segmentation, respectively.
We follow the official reference training scripts provided by Pytorch \cite{bib:torchdetection} to set up our experiments. 
We distributed train on 8 Nvidia V100 GPUs with a batch size of 16 (2 per GPU). 
The final AP is reported on the validation dataset after the entire training. 

\fakeparagraph{ResNet50-FPN}
We adopt Faster-RCNN \cite{bib:NIPS15:Ren} in object detection and Mask-RCNN \cite{bib:ICCV17:He} in instance segmentation.
The overall network architecture consists of two parts, the basic network\footnote{To avoid confusion, we use \textit{the basic model} to refer to \textit{the backbone network} mentioned in the Faster-RCNN and Mask-RCNN papers \cite{bib:NIPS15:Ren,bib:ICCV17:He}.
The backbone network only stands for the original dense network in \dress in this chapter.} and the head architecture.
We use ResNet50 pretrained on ImageNet dataset as the basic network.
As suggested by \cite{bib:ICCV17:He}, the feature extractor, feature pyramid network (FPN) \cite{bib:CVPR17:Lin}, is connected to ResNet50 in lateral.
The bounding-box head and the mask head will then use the extracted feature to detect objects and segment instances.
Especially, our network architectures are the same as the ones provided in the Pytorch reference training scripts \cite{bib:torchdetection}. 
As the batch size used in Faster-RCNN training and Mask-RCNN training is relatively small, we freeze the BN layers of ResNet50 as in \cite{bib:ICCV17:He,bib:detectron2018}. 

Following \cite{bib:ICLR19:Yu}, we first pretrain ResNet50 with \algoref{ch3-alg:DRESS} on ImageNet, \ie obtain 4 subnets of ResNet50 with sparsity $0.5,0.8,0.9,0.95$ as in \figref{ch3-fig:benchmark}.
The lateral FPN and the head architecture are added into ResNet50. 
We then train the overall network on COCO dataset with \algoref{ch3-alg:DRESS} while fixing BN layers for each subnet.

\subsection{Ablation Studies}
\label{ch3-sec:ablation}

We first implement a set of ablation experiments to study the effect of different components/parameters in \dress.
The ablation experiments are mainly conducted with ResNet20 on CIFAR10 and MobileNetV1 on ImageNet.

\subsubsection{Row Size $N$}
\label{ch3-sec:ablation_rowsize}

\fakeparagraph{Settings}
In \secref{ch3-sec:store}, we restrict different rows in a CSR sparse tensor to have the same number of nonzero weights, and subnets are sampled in a row-based unstructured manner.
To study the impact of row size $N$, we select three methodical ways to reshape the weight tensor for row-based sampling, 
(\textit{i}) unstructured, where unstructured sampling is conducted in the entire weight tensor, \ie each layer only contains a single row in \dress CSR format as discussed in \secref{ch3-sec:store}; 
(\textit{ii}) filterwise, where unstructured sampling is conducted in each filter for \texttt{conv} layers or in each output-neuron for \texttt{fc} layers;
(\textit{iii}) 256, where each row contains 256 elements in a \texttt{conv} filter, or the entire filter if the filter has less than 256 weights.
$\gamma$ is set as 1 in these experiments.
The row size $N$ used in CSR format is tightly related to the memory cost to store the column indices of nonzero weights, \eg 256 means 8-bit for each column index. 

\begin{figure}[tbp]
	\centering
	\includegraphics[width=0.8\textwidth]{./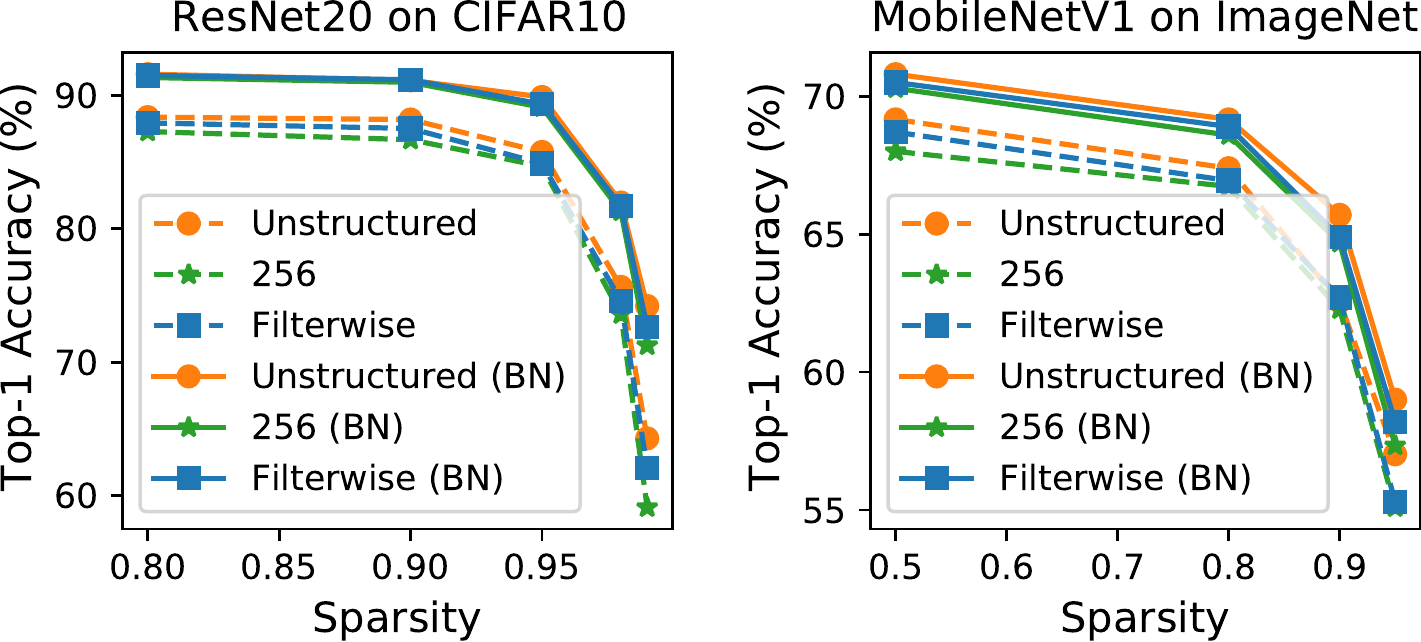}
	\caption[Ablation studies on different row sizes.]{Ablation studies on different row sizes. ``BN'' means further fine-tuning BN layers for each subnet.}
	\label{ch3-fig:rowsize}
\end{figure}

\fakeparagraph{Results}
The results in \figref{ch3-fig:rowsize} show that when choosing a relatively large row size \eg filterwise or 256, our proposed row-based unstructured sampling can yield a similar accuracy as totally unstructured sampling in the entire tensor.
Especially, for both ResNet20 and MobileNetV1, the accuracy difference between ``Unstructured'' and ``Filterwise'' is less than 0.5\% on average. 
In the following experiments, we mainly adopt filterwise unstructured sampling due to its high accuracy and efficient \dress CSR format. 
The dashed curves and the solid curves with the same marker in \figref{ch3-fig:rowsize} can be viewed as the ablation study of the third training stage, \ie further fine-tuning BN layers for each subnet (see \secref{ch3-sec:boost}). 
Particularly, fine-tuning BN layers can calibrate the statistic discrepancy between different subnets and thus consistently improves the performance of subnets.

\begin{figure}[tbp]
	\centering
	\includegraphics[width=0.7\textwidth]{./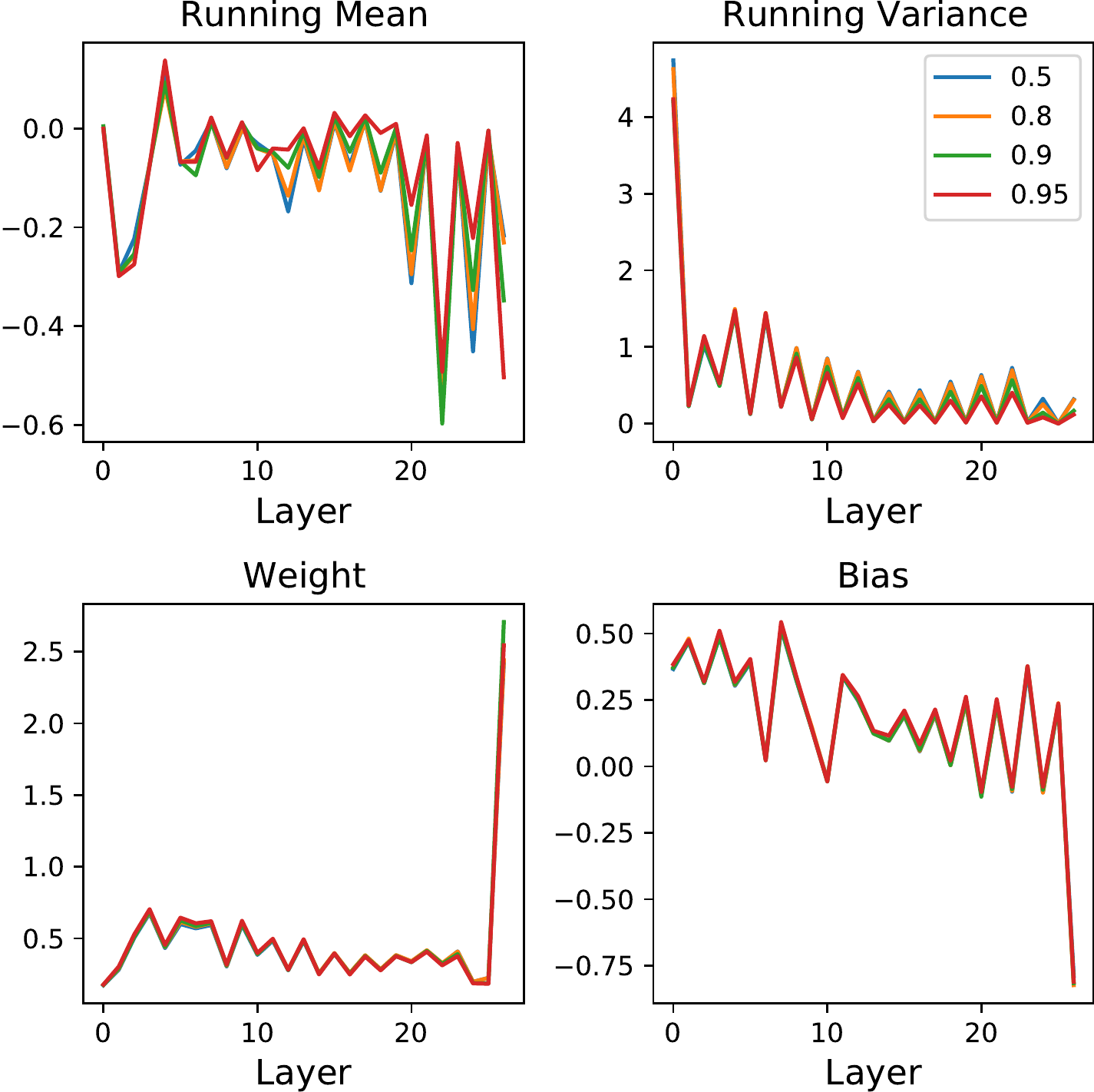}
	\caption[The BN statistics of different subnets across layers.]{The BN statistics of different subnets across layers. The subnets of MobileNetV1 with different sparsity levels are plotted with different colors.}
	\label{ch3-fig:bn}
\end{figure}

The BN statistic information of 4 subnets of MobileNetV1 is shown in \figref{ch3-fig:bn}.
For each layer in subnets, we plot the average value of ``running mean'', ``running variance'', ``weight'', and ``bias'' over all channels after the third training stage.
The results show that the BN statistic information is closed among different subnets, which allows multiple subnets to be optimized in synergy in the second training stage (\dress training) of \algoref{ch3-alg:DRESS}.
On the other hand, the third training stage can calibrate the small discrepancy between different subnets, which in turn improves the accuracy of each subnet.

\subsubsection{With/Without Sampling}
\label{ch3-sec:ablation_sampling}

\fakeparagraph{Settings}
To explore the efficacy of our sampling process, we compare \dress with sampling (\ie \algoref{ch3-alg:DRESS}) and \dress without sampling. 
\dress without sampling has a similar process as traditional unstructured magnitude pruning \cite{bib:ICLR16:Han,bib:ICLR20:Renda}, where $K$ binary masks are built after the dense pre-training and then are fixed, and only the nonzero weights (with mask value equals 1) are sparsely fine-tuned.
In other words, the subnets will not be re-sampled in the \dress training stage of \algoref{ch3-alg:DRESS}.
We set $\gamma$ as 1 and use filterwise unstructured sampling in these experiments.

\fakeparagraph{Results}
As shown in \figref{ch3-fig:sampling}, our (re-)sampling process improves the accuracy of subnets by a large margin than without sampling, especially under a high sparsity, \eg increasing by around 7.4\% on average (up to 16.1\%) on MobileNetV1. 
Re-sampling provides more flexibility to re-select the sparse subnets that are abandoned before the parallel training.

\begin{figure}[!t]
	\centering
	\includegraphics[width=0.8\textwidth]{./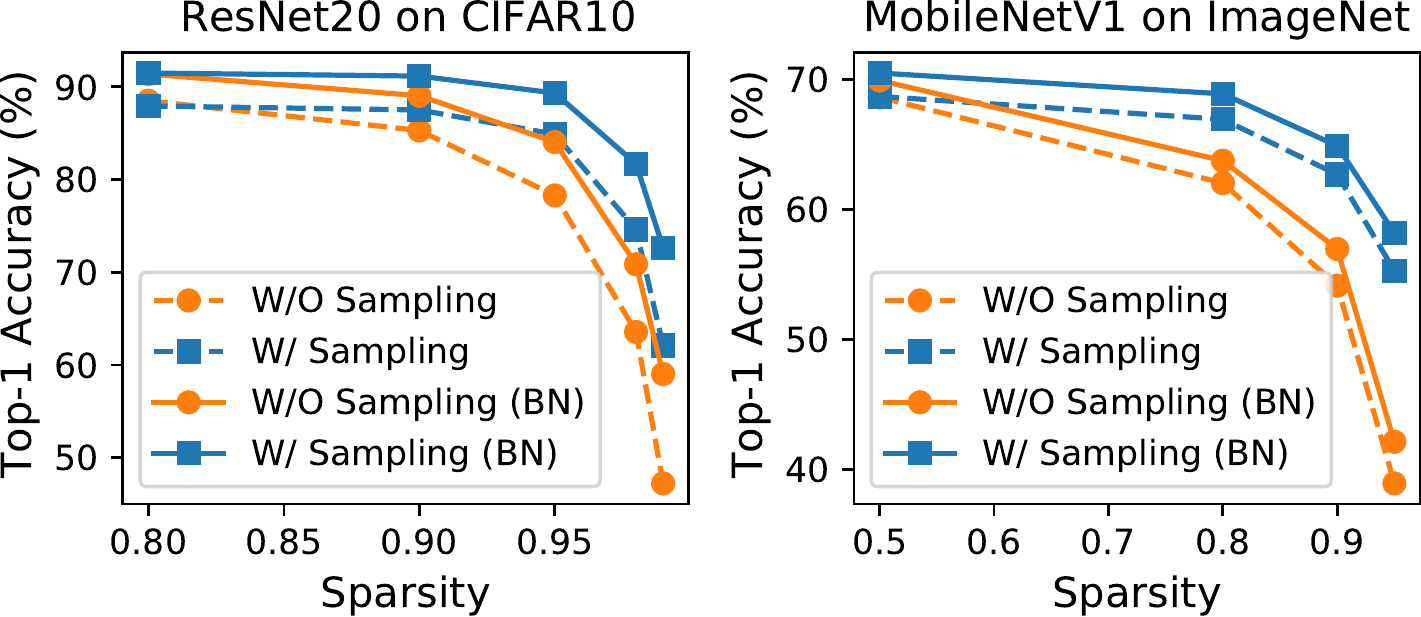}
	\caption[Comparison between \dress with sampling and \dress without sampling.]{Comparison between \dress with sampling and \dress without sampling.}
	\label{ch3-fig:sampling}
\end{figure}

\subsubsection{Iterative vs. Parallel}
\label{ch3-sec:ablation_iterative}

In this part, we compare \dress (parallel training) with iterative training multiple subnets.
We first elaborate the iterative training methods with progressively increased/decreased sparsity mentioned in \secref{ch3-sec:optimize}. 

Recall that there are $K$ sparsity levels, and $0<s_1<s_2<...<s_K<1$.
In iterative training, each subnet is optimized separately, also altogether $K$ iterations. 
In each iteration, we mainly adopt the idea of traditional unstructured pruning \cite{bib:ICLR20:Renda}, which is the current best-performed pruning method aiming at the trade-off between the model accuracy and the number of zero’s weights.
\cite{bib:ICLR20:Renda} conducts iterative pruning with a pruning scheduler $p^{1:R}$. 
The network progressively reaches the desired sparsity $s$ until the $R$-th pruning iteration.
We choose $p^{1:5}=0.5,0.8,0.9,0.95,1$. 
Accordingly, the sparsity is set to $0.5s,0.8s,0.9s,0.95s,s$ in 5 pruning iterations, respectively.
During each pruning iteration, the network is pruned with the corresponding sparsity, and the remaining nonzero weights are sparsely fine-tuned with learning rate rewinding.

\begin{algorithm}[tbp!]
    \caption{Iterative training with increased sparsity}\label{ch3-alg:increased}
    \KwIn{Initial random weights $\bm{w}$, training dataset $\mathcal{D}_\text{tr}$, validation dataset $\mathcal{D}_\text{val}$, sparsity $\{s_k\}_{k=1}^K$, pruning scheduler $\{p^r\}_{r=1}^{R}$} 
    \KwOut{Optimized weights $\bm{w}$, binary masks $\{\bm{m}_k\}_{k=1}^K$}
    \tcc{Dense pre-training}
    Train dense network $\bm{w}$ with traditional optimizer\;
    \tcc{Traditional pruning, also k=1}
    \For {$r \leftarrow 1$ \KwTo $R$} { 
        \tcp{The $r$-th pruning iteration}
        Prune with sparsity $s_1\cdot p^r$ and get mask $\bm{m}_1^r$\;
        Sparsely fine-tune nonzero weights $\bm{w}\odot\bm{m}_1^r$ on $\mathcal{D}_\text{tr}$\;
    }
    Get mask $\bm{m}_1 = \bm{m}_1^R$\;
    \tcc{Iterative (training)}
    \For {$k \leftarrow 2$ \KwTo $K$} { 
        Get the previous subnet $\bm{w}_{k-1} = \bm{w}\odot\bm{m}_{k-1}$\; 
        Sample a subnet from $\bm{w}_{k-1}$ with sparsity $s_k$ and get mask $\bm{m}_k$\;
        \tcp{Note that no training here.}
    }
\end{algorithm}

The pseudocode of training subnets iteratively with increased sparsity is shown in \algoref{ch3-alg:increased}.
With progressively increased sparsity (from $s_1$ to $s_K$), the first optimized subnet of $\bm{w}\odot\bm{m}_1$ already contains all subsequent subnets with higher sparsity due to \equref{ch3-eq:constraints2}.
The first sparse subnet $\bm{w}\odot\bm{m}_1$ is trained by unstructured pruning \cite{bib:ICLR20:Renda} as discussed above.
In the following iteration $k$ ($k\in2,...,K$), the subnet with sparsity $s_k$ is directly sampled from the previous subnet without any retraining regarding the constraint of \equref{ch3-eq:constraints2}.

The pseudocode of training multiple subnets iteratively with decreased sparsity is shown in \algoref{ch3-alg:decreased}.
For progressively decreased sparsity (from $s_K$ to $s_1$), the sampling and training process only happen in the complementary part of the previous subnet due to \equref{ch3-eq:constraints2}. 
Particularly, in iteration $k$ ($k\in K,...,1$), we should (\textit{i}) sample the new subnet from the backbone network with sparsity $s_k$ that contains the subnet of $\bm{w}\odot\bm{m}_{k+1}$; (\textit{ii}) freeze the subnet of $\bm{w}\odot\bm{m}_{k+1}$ and only update the other weights.
We still adopt the iterative pruning when training each subnet, \ie the sparsity of the $k$-th subnet gradually approaches the target sparsity $s_k$.
Note that the dense backbone network is maintained and updated during the training.
Note that the (re-)sampling process is only conduct in each pruning iteration instead of each training iteration.

\begin{algorithm}[tbp!]
    \caption{Iterative training with decreased sparsity}\label{ch3-alg:decreased}
    \KwIn{Initial random weights $\bm{w}$, training dataset $\mathcal{D}_\text{tr}$, validation dataset $\mathcal{D}_\text{val}$, sparsity $\{s_k\}_{k=1}^K$, pruning scheduler $\{p^r\}_{r=1}^{R}$}
    \KwOut{Optimized weights $\bm{w}$, binary masks $\{\bm{m}_k\}_{k=1}^K$}
    \tcc{Dense pre-training}
    Train dense network $\bm{w}$ with traditional optimizer\;
    \tcc{Iterative training}
    Set $s_{K+1}=1$ and $\bm{m}_{K+1}=\bm{0}$\;
    \For {$k \leftarrow K$ \KwTo $1$} { 
        Get the complementary subnet $\bm{w}^\text{cs}=\bm{w}\odot(1-\bm{m}_{k+1})$\;
        \For {$r \leftarrow 1$ \KwTo $R$} { 
            \tcp{The $r$-th pruning iteration}
            Sample a subnet from $\bm{w}^\text{cs}$ with sparsity $(1-(s_{k+1}-s_k))\cdot p^r$ and get mask $\bm{m}_k^{\text{cs},r}$\;
            Merge mask $\bm{m}_k^{r}=\bm{m}_{k+1}+\bm{m}_k^{\text{cs},r}$\;
            Initiate $\bm{w}^0=\bm{w}$\;
            \For {$q \leftarrow 1$ \KwTo $Q$} { 
                \tcp{The $q$-th training iteration}
                Fetch mini-batch from $\mathcal{D}_\text{tr}$\;
                Get sparse subnet $\bm{w}^{r,q-1}_k=\bm{w}^{q-1}\odot\bm{m}_k^{r}$\;
                Back-propagate subnet gradient $\bm{g}(\bm{w}^{r,q-1}_k)=\frac{\partial\ell(\bm{w}^{r,q-1}_k)}{\partial\bm{w}^{r,q-1}_k}$\;
                Compute optimization step $\Delta\bm{w}^{q}$ with $\bm{g}(\bm{w}^{r,q-1}_k)\odot\bm{m}_k^{\text{cs},r}$\;
                Update $\bm{w}^{q} = \bm{w}^{q-1} + \Delta\bm{w}^{q}$\;
            }
            Save $\bm{w}=\bm{w}^Q$
        }
        Save mask $\bm{m}_k=\bm{m}_k^R$\;
    }
\end{algorithm}

\fakeparagraph{Settings}
We implement the two iterative training methods of \algoref{ch3-alg:increased} and \algoref{ch3-alg:decreased}.
The loss of each subnet is optimized separately in iterative training. 
Thus for a fair comparison, we do not re-weight loss in the parallel training of \dress, \ie $\gamma=0$.
Also in all experiments, we conduct unstructured sampling in the entire tensor, and allow BN layers to be fine-tuned individually for each subnet to avoid other side effects.

\fakeparagraph{Results}
The comparison results are plotted in \figref{ch3-fig:parallel} Left. 
Parallel training substantially outperforms iterative training. 
Iterating over increased sparsity does not provide any space to optimize subnets with higher sparsity. 
Therefore, the accuracy drops quickly along iterations.
Although iterating over decreased sparsity may yield a well-performed high sparsity network, the accuracy does not improve significantly afterwards. 
We argue this is due to the fact that iterative training causes the optimizer to end in a hard to escape region around the previous subnet in the loss landscape. 
On the contrary, parallel training allows multiple subnets to be sampled and optimized jointly, which may especially benefit highly sparse networks, see \figref{ch3-fig:parallel} Left. 

\begin{figure}[!t]
	\centering
	\includegraphics[width=0.8\textwidth]{./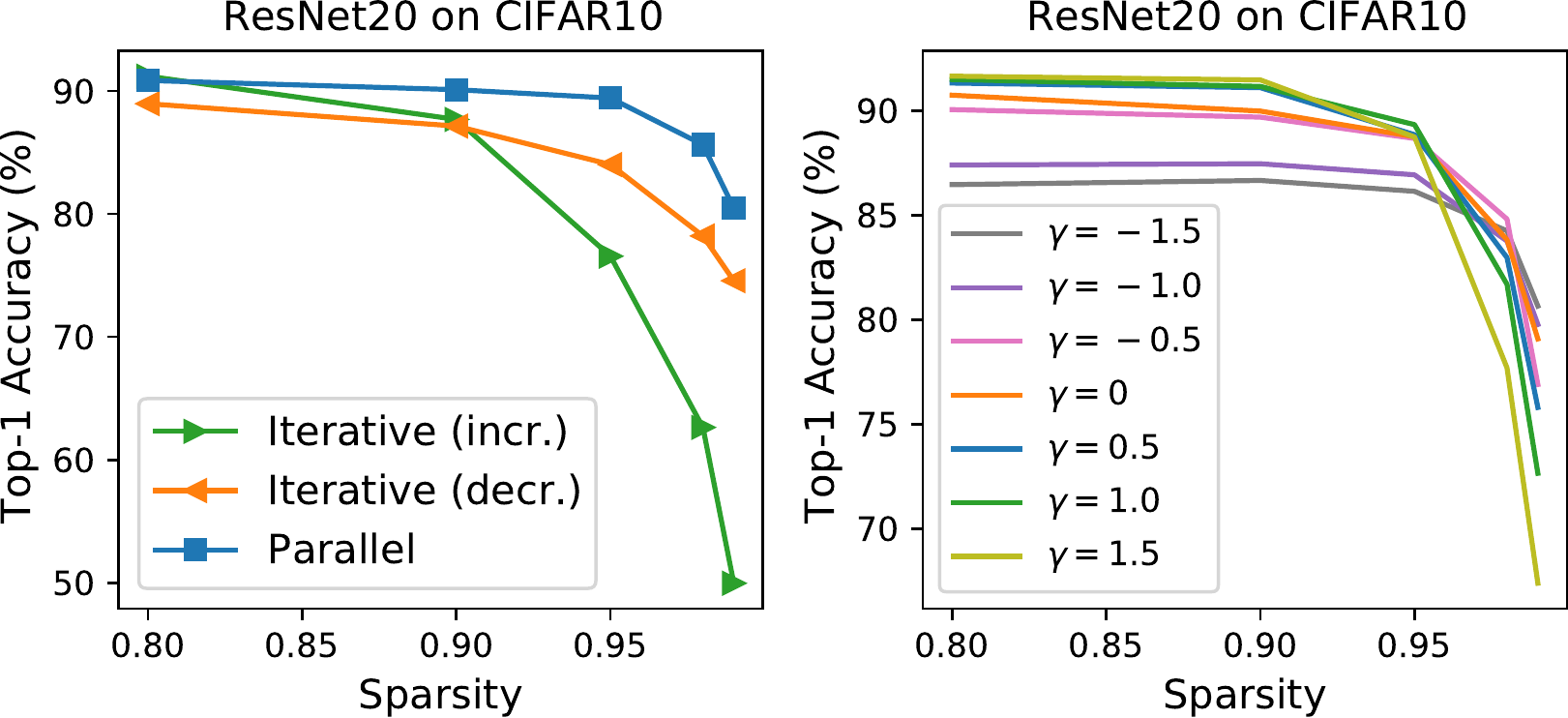}
	\caption[Left: Comparing parallel training with iterative training. Right: Ablation studies on the correction factor $\gamma$.]{Left: Comparing parallel training with iterative training. Right: Ablation studies on the correction factor $\gamma$.}
    \label{ch3-fig:parallel}
\end{figure}

\subsubsection{Correction Factor $\gamma$}
\label{ch3-sec:ablation_gamma}

\fakeparagraph{Settings}
The loss weights $\pi_k$ used in the parallel training may influence the final accuracy of different subnets.
In \secref{ch3-sec:optimize}, we introduce a correction factor $\gamma$ to control $\pi_k$ (see \equref{ch3-eq:gamma} and \equref{ch3-eq:alpha}). 
We thus conduct a set of experiments with different $\gamma$.
$\gamma=0$ means all loss items are weighted equally; $\gamma>0$ means the loss of the lower sparsity subnets is weighted larger, and vice versa. 
For example, for ResNet20 with $s_{1:5}=0.8,0.9,0.95,0.98,0.99$, when $\gamma=0.5$, $\pi_{1:5}\approx0.36,0.26,0.18,0.12,0.08$; $\gamma=-1.0$, $\pi_{1:5}\approx0.03,0.05,0.11,0.27,0.54$.

\fakeparagraph{Results}
The results in \figref{ch3-fig:parallel} Right show that the high sparsity subnets generally yield a higher final accuracy with a smaller $\gamma$. 
This is intuitive since a smaller $\gamma$ assigns a larger weight on the high sparsity subnets.
However, the downside is that the most powerful subnet (with the lowest sparsity) can not reach its top accuracy.
Note that the most powerful subnet is often adopted either under the critical case requiring high accuracy or in the commonly used scenario with standard resource constraints, see in \secref{ch3-sec:introduction}.
Also as discussed in \secref{ch3-sec:optimize}, low sparsity subnets should be weighted more, since they are implicitly optimized with a smaller step size.
Experimentally, we find that $\gamma\in[0.5,1]$ in parallel training allows us to train a group of subnets where the most powerful subnet can reach a similar accuracy as training in separately. 
We set $\gamma=0.5$ in the following experiments.

\begin{figure}[tbp]
	\centering
	\includegraphics[width=0.75\textwidth]{./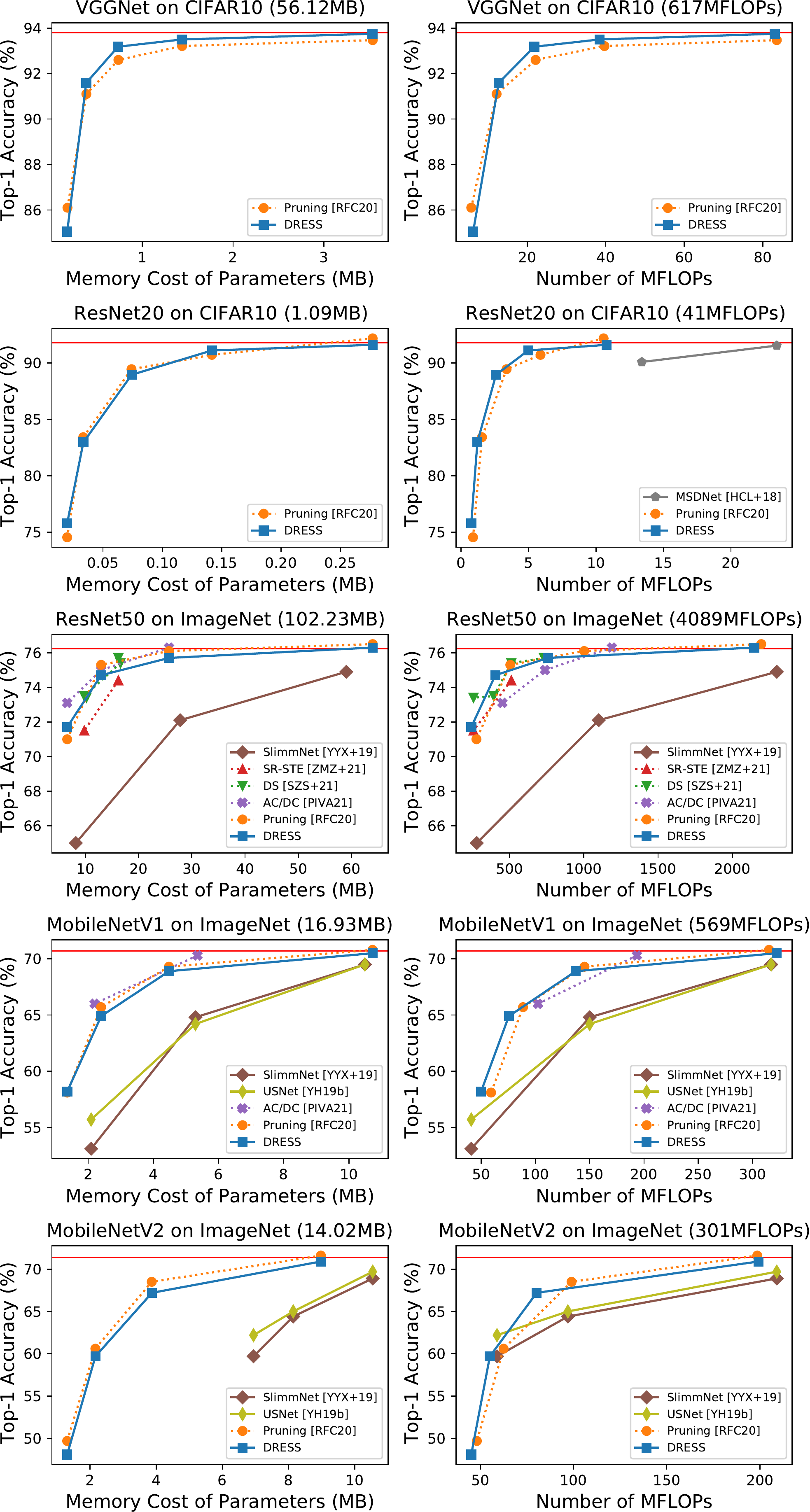}
	\caption[Comparing \dress with other baselines on image classification.]{Comparing \dress with other baselines on image classification. The methods that do not involve weight sharing among different networks are plotted with dotted curves. The memory cost and the number of MFLOPs of the original backbone networks are reported in the parentheses in titles; their accuracy is shown as red horizontal lines. The sparsity levels of \dress and ``Pruning'' are the same as the ones discussed in \secref{ch3-sec:experiments}.}
	\label{ch3-fig:benchmark}
\end{figure}

\subsection{Evaluation on Image Classification}
\label{ch3-sec:experiment_classification}

\fakeparagraph{Settings}
In this section, we benchmark \dress on public image classification datasets including CIFAR10 and ImageNet with different backbone networks discussed earlier in \secref{ch3-sec:experiments}. 
We compare the performance of the subnets generated by \dress with various methods, including (\textit{i}) anytime networks \cite{bib:ICLR18:Huang,bib:ICLR19:Yu,bib:ICCV19:Yu}, where the sub-networks with different width or depth can be cropped from the backbone network; 
(\textit{ii}) unstructured pruning \cite{bib:ICLR20:Renda,bib:NIPS21:Peste}, where \cite{bib:ICLR20:Renda} is re-implemented under our settings for a fair comparison; 
(\textit{iii}) N:M fine-grained structure pruning \cite{bib:ICLR21:Zhou,bib:NIPS21:Sun}.
We choose two metrics for comparison, the memory cost of parameters and the number of MFLOPs ($10^6$ FLOPs).
Both metrics are widely used proxies of resource consumption.
FLOPs dominate in the entire computation burden, thus fewer FLOPs can (but does not necessarily) result in a smaller computation time.   
The memory cost of parameters not only represents the static storage consumption but also relates to the amount of memory fetching when on-device inference with different (sub-)networks \cite{bib:IEEETrans20:Ahmad}.
Note that memory access often consumes more time and more energy than computation \cite{bib:ISSCC14:Horowitz}.
Assume that each parameter uses 32-bit for floating point values.
\dress, (\textit{ii}), and (\textit{iii}) generate sparse tensors, thus their memory cost also includes the indices of nonzero weights.
Following the suggestions of \cite{bib:tensorflow,bib:XNNPACK}, each index of nonzero weights is encoded into 8-bit in \dress and (\textit{ii}), whereas the binary mask is stored for indexing in \cite{bib:ICLR21:Zhou,bib:NIPS21:Sun}.  

\begin{table*}[tbp!]
    \centering
    \caption{The average test accuracy over all sub-networks, the theoretical storage (MB) required by all sub-networks and the average number of theoretical MFLOPs over all sub-networks.} 
    \label{ch3-tab:storage}
    \footnotesize
    \begin{tabular}{ccccccc}
    \toprule
    \multirow{2}{*}{Model}  & \multicolumn{2}{c}{Average Accuracy}    & \multicolumn{2}{c}{Overall Storage (MB)}    & \multicolumn{2}{c}{Average MFLOPs}      \\ \cmidrule(lr){2-3}  \cmidrule(lr){4-5} \cmidrule(lr){6-7} 
                            & \dress          & Pruning             & \dress          & Pruning                 & \dress          & Pruning             \\ \hline
    
    VGGNet                  & \textbf{91.4\%}   & 91.1\%              & \textbf{3.54}     & 6.27                    & \textbf{32}       & 33                  \\
    ResNet20                & \textbf{86.1\%}   & 85.9\%              & \textbf{0.28}     & 0.55                    & \textbf{4}        & \textbf{4}          \\
    ResNet50                & 74.5\%    & \textbf{74.6\%}             & \textbf{63.97}    & 109.25                  & \textbf{887}      & 994                 \\
    MobileNetV1             & 65.6\%    & \textbf{65.9\%}             & \textbf{10.72}    & 18.96                   & \textbf{146}      & 152                 \\
    MobileNetV2             & 61.5\%    & \textbf{62.4\%}             & \textbf{8.98}     & 16.32                   & \textbf{95}       & 101                 \\
    \bottomrule
    \end{tabular}
\end{table*}

\fakeparagraph{Results}
The results are plotted in \figref{ch3-fig:benchmark}.
In comparison to other anytime networks, the subnets generated by \dress require significantly less memory fetching and fewer FLOPs under the same accuracy level.
In addition, the sub-networks of conventional anytime networks \cite{bib:ICLR18:Huang,bib:ICLR19:Yu,bib:ICCV19:Yu} have different network architectures, while current compilation libraries (\eg TensorFlowLite) may not support to adopt a dynamic architecture on-device. 
The extra re-configuration overhead \eg storing various compiled architectures could be necessary for on-device inference.  However, this is avoided in \dress, since different subnets of \dress leverage the same architecture as the backbone network.
Like traditional unstructured pruning, \dress does not explicitly reduce the number of operations, \ie the networks with the same sparsity can require different numbers of FLOPs to perform inference as shown in \figref{ch3-fig:benchmark}. 
% Previous results show unstructured pruning often has a smaller compression ratio on the number of FLOPs than the number of (nonzero) parameters \cite{bib:ICLR20:Renda}.
Thanks to the weight sharing, the static storage is only determined by the largest network for both \dress and anytime networks \cite{bib:ICLR18:Huang,bib:ICLR19:Yu,bib:ICCV19:Yu}.
The methods of (\textit{ii})-(\textit{iii}) do not involve weight sharing, thus they need more memory to store all networks separately. 

We further compare \dress with the unstructured pruning method \cite{bib:ICLR20:Renda}, in terms of the test accuracy, the theoretical storage, and the theoretical FLOPs, see \tabref{ch3-tab:storage}.
\dress reaches a similar average accuracy and computation complexity while only requiring 50\%-60\% of storage as pruning. 

\begin{figure}[tbp]
	\centering
	\includegraphics[width=0.8\textwidth]{./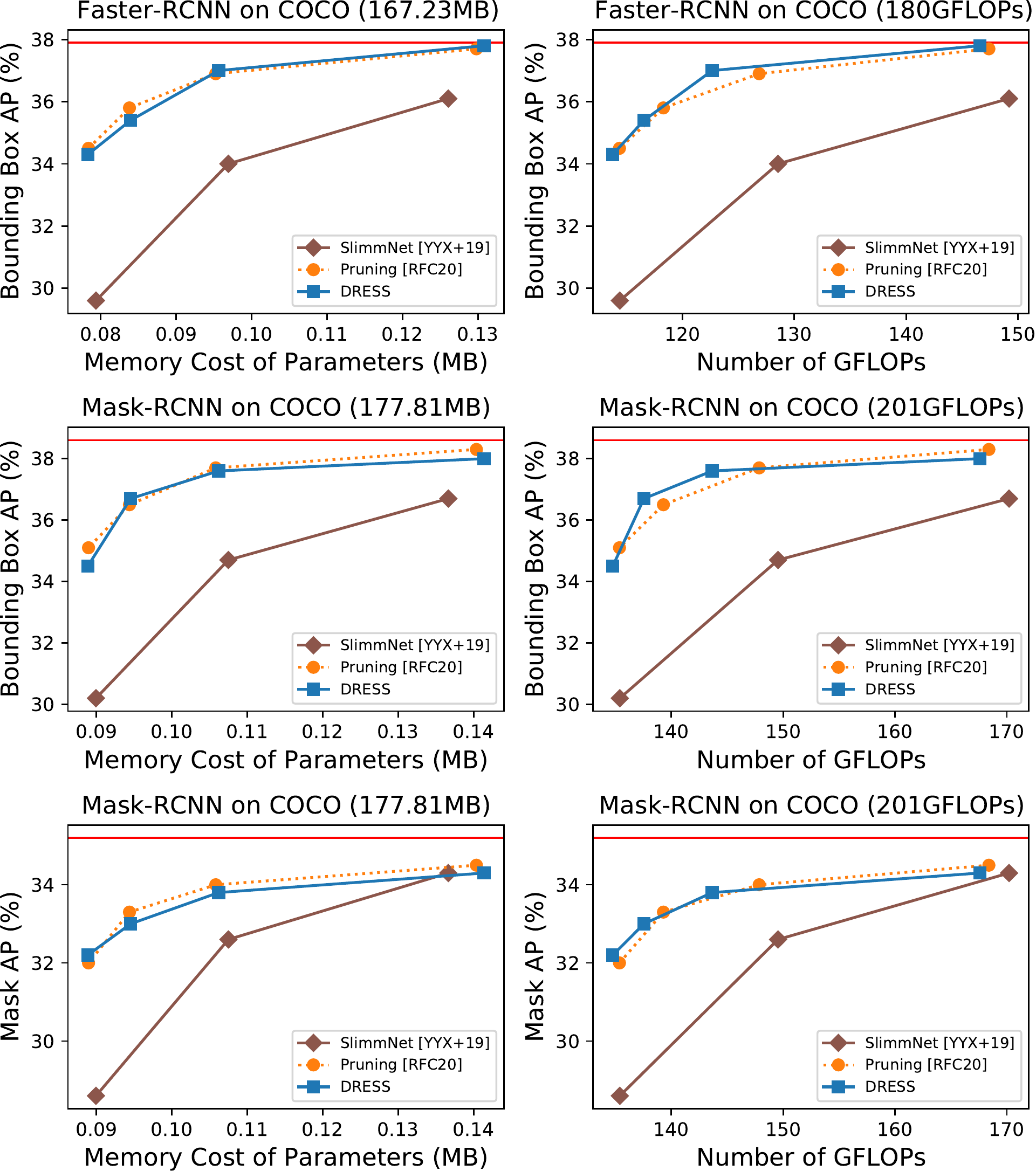}
	\caption[Comparing \dress with other baselines on object detection and instance segmentation.]{Comparing \dress with other baselines on object detection and instance segmentation. The methods that do not involve weight sharing among different networks are plotted with dotted curves. The memory cost and the number of GFLOPs of the original backbone networks are reported in the parentheses in titles; their average precision is shown as red horizontal lines.}
	\label{ch3-fig:benchmark_appendix}
\end{figure}

\subsection{Evaluation on Object Detection/Instance Segmentation}
\label{ch3-sec:experiment_object}

\fakeparagraph{Settings}
To show the versatility of our synthesis technique, we further benchmark \dress on other vision tasks, object detection and instance segmentation.
We compare \dress with other baselines mentioned in \secref{ch3-sec:experiment_classification} on MS COCO 2017 dataset.
We adopt Faster-RCNN with ResNet50-FPN \cite{bib:NIPS15:Ren} in object detection and Mask-RCNN \cite{bib:ICCV17:He} with ResNet50-FPN in instance segmentation.

\fakeparagraph{Results}
Since the number of FLOPs for Faster-RCNN and Mask-RCNN depends on the number of proposals in each image \cite{bib:arXiv20:Carion}, we report the average number of FLOPS for the randomly selected 100 images in COCO 2017 validation dataset. 
We compute the FLOPS with the tool flop count operators from Detectron2 \cite{bib:detectron2019}. 
For Faster-RCNN, we report its bounding box AP; for Mask-RCNN, we report its bounding box AP and its mask AP. 
The results are plotted in \figref{ch3-fig:benchmark_appendix}.
Similar to the results in \figref{ch3-fig:benchmark}, the subnets generated by \dress require a significantly lower memory cost and fewer GFLOPs ($10^9$ FLOPs) than other anytime networks \cite{bib:ICLR19:Yu}.
In addition, in comparison to the unstructured pruning \cite{bib:ICLR20:Renda} that does not involve weight sharing, \dress can also achieve a similar precision level. 

\begin{figure}[t!]
	\centering
	\includegraphics[width=0.78\textwidth]{./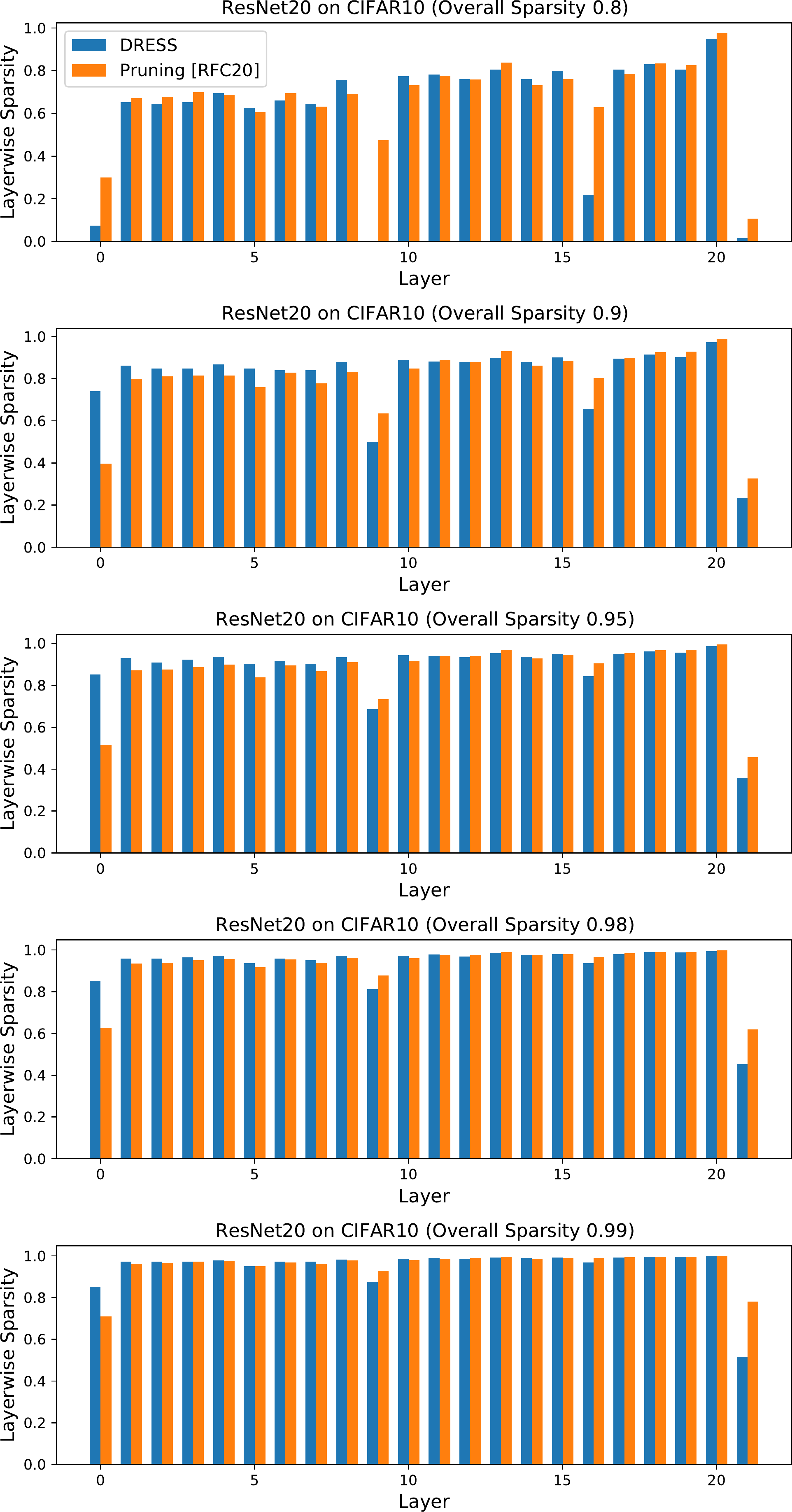}
	\caption[Comparing \dress with traditional pruning on ResNet20 in terms of the layerwise sparsity.]{Comparing \dress with traditional pruning on ResNet20 (CIFAR10) in terms of the layerwise sparsity. The (sub-)networks with different overall sparsity levels ($0.8,0.9,0.95,0.98,0.99$) are plotted in different subplots.}
	\label{ch3-fig:sparsity_resnet20}
\end{figure}

\begin{figure}[t!]
	\centering
	\includegraphics[width=0.8\textwidth]{./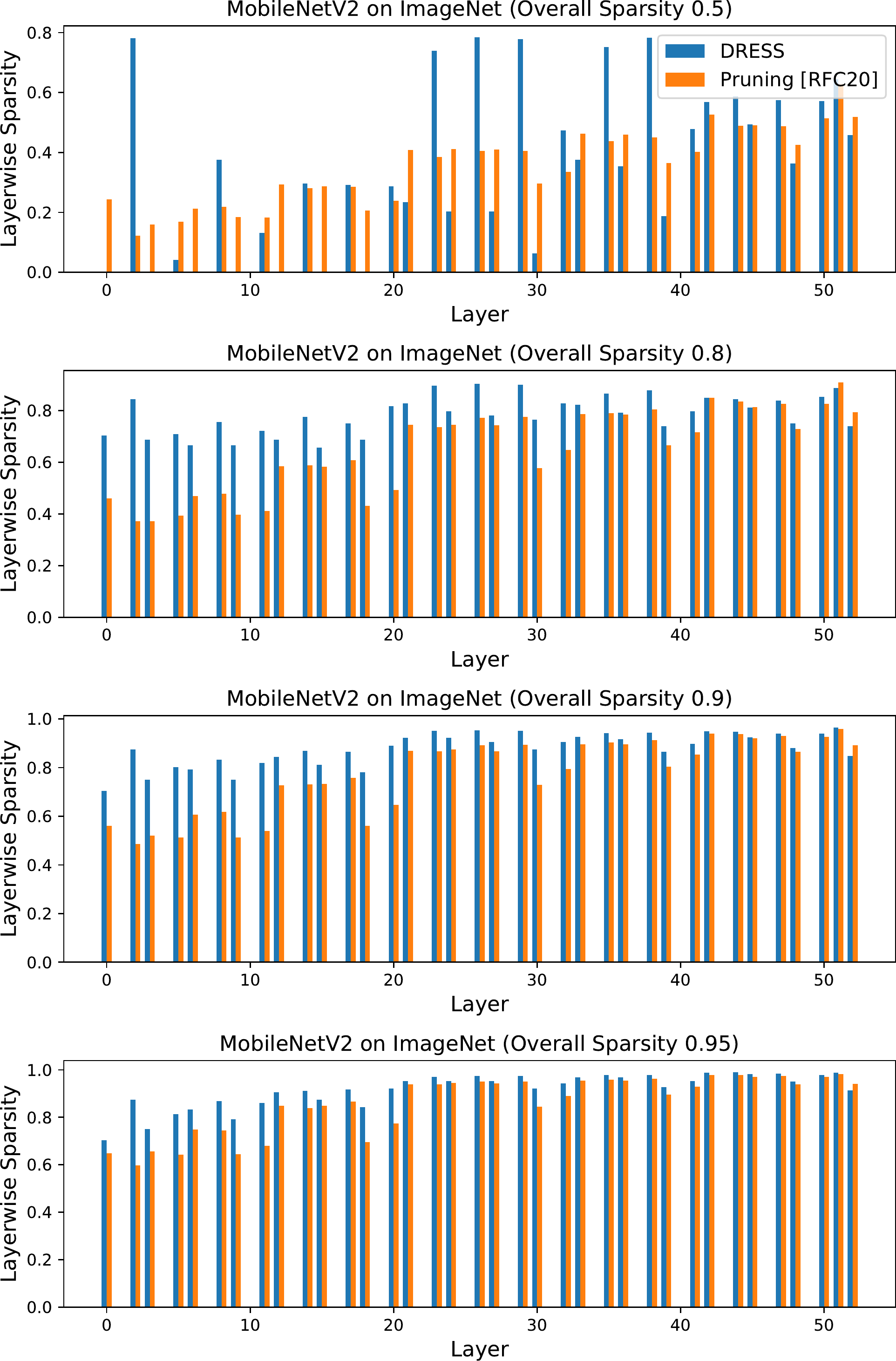}
	\caption[Comparing \dress with traditional pruning on MobileNetV2 in terms of the layerwise sparsity.]{Comparing \dress with traditional pruning on MobileNetV2 (ImageNet) in terms of the layerwise sparsity. The (sub-)networks with different overall sparsity levels $0.5,0.8,0.9,0.95$ are plotted in different subplots.}
	\label{ch3-fig:sparsity_mobilenetv2}
\end{figure}

\subsection{Sparsity across Layers}
\label{ch3-sec:sparsity}

To further explore the different impact from \dress and traditional pruning, we compare their layerwise sparsity.
Recall that the main differences between \dress and traditional pruning are, (\textit{i}) the nonzero weights of the higher sparsity subnets are reused by the lower sparsity subnets in \dress, whereas different sparse networks generated by traditional pruning are independent; (\textit{ii}) \dress maintains an unstructured sparse pattern in a row-based manner (\ie fine-grained structure sparsity \cite{bib:ICLR21:Zhou,bib:NIPS21:Hubara,bib:NIPS21:Sun}), whereas traditional pruning yields an unstructured sparse pattern in the entire tensor.

We plot the layerwise sparsity of the sparse (sub-)networks generated by \dress and traditional pruning \cite{bib:ICLR20:Renda}, for ResNet20 on CIFAR10 in \figref{ch3-fig:sparsity_resnet20} and for MobileNetV2 on ImageNet in \figref{ch3-fig:sparsity_mobilenetv2}.
In general, both methods have a similar layerwise sparsity in each subplot. 
However, there exists a diversity under a low sparsity level, \eg MobileNetV2 with sparsity 0.5.
Jointly training with weight sharing in \dress enforces the low sparsity network to be optimized towards a certain region that has been less explored in the individual training of pruning, as the low sparsity network often has relatively looser constraints.

\section{Deployments}
\label{ch3-sec:deployment}

To measure actual performance and compare it to the benchmark evaluation, we used \dress-generated subnets on a RaspberryPi 4 edge platform (with off-the-shelf Arm Cortex-A72 quad-core CPUs) for on-device inference.
The optimized Pytorch model is compiled by TensorFlow Lite \cite{bib:tensorflow} with XNNPACK \cite{bib:XNNPACK} delegate for deployment. 
We use multi-threading with 4 threads for acceleration. 

The inference latency when adopting different subnets of MobileNetV1 and MobileNetV2 on RaspberryPi 4 is reported in \tabref{ch3-tab:raspberry}.
The original dense models and the sparse models generated from unstructured pruning methods \cite{bib:ICLR20:Renda} are also added in \tabref{ch3-tab:raspberry} for comparison.
The reported latency is averaged over 100 randomly selected samples from ImageNet dataset.
By using the fast kernels for sparse matrix-dense matrix multiplication provided by XNNPACK, \dress can dynamically select its subnets to satisfy the various inference latency constraints. 
Note that the sparse (sub-)networks of \dress and pruning \cite{bib:ICLR20:Renda} have a similar number of theoretical FLOPs (see \figref{ch3-fig:benchmark} and \tabref{ch3-tab:storage}), yet \dress often yields a lower inference time.
This is due to the fact that row-based unstructured sparsity leads to regular computation among different rows, which speeds up inference \cite{bib:ICLR21:Zhou}.

\begin{table}[tbp!]
    \centering
    \caption[The average inference time (ms) on RaspberryPi 4.]{The average inference time (ms) on RaspberryPi 4.} 
    \label{ch3-tab:raspberry}
    \footnotesize
    \begin{tabular}{cccccc}
    \toprule
                                & Sparsity  & \multicolumn{2}{c}{MobileNetV1}           & \multicolumn{2}{c}{MobileNetV2}       \\ \hline
    Model                       &           & \multicolumn{2}{c}{Dense}                 & \multicolumn{2}{c}{Dense}             \\ 
    Time (ms)                   & 0\%       & \multicolumn{2}{c}{83}                    & \multicolumn{2}{c}{52}                \\ \hdashline
    Model                       &           & \dress            & Pruning               & \dress            & Pruning           \\ 
    \multirow{4}{*}{Time (ms)}  & 50\%      & \textbf{77}       & 80                    & \textbf{47}       & 48                \\
                                & 80\%      & \textbf{45}       & 55                    & \textbf{36}       & 41                \\
                                & 90\%      & \textbf{31}       & 35                    & \textbf{29}       & 32                \\
                                & 95\%      & \textbf{25}       & 27                    & \textbf{26}       & \textbf{26}       \\ 
    \bottomrule
    \end{tabular}
\end{table}

Note also that although the inference time decreases when adopting the subnets with a higher sparsity, the realistic speedup of sparse inference is not proportional to the reduction in theoretical FLOPs. 
For example, the theoretical FLOPs decrease by a factor of 6.4 when the sparsity of \dress MobileNetV1 subnets increases from 50\% to 95\%, while the inference is only accelerated by a factor of 3.1.
A similar phenomenon can also be observed in MobileNetV2 and pruned models. 
We suspect that the reason is that sparse computational cores of XNNPACK have a larger fraction of cache miss at a higher sparsity level, see also in \cite{bib:CVPR20:Elsen}.

\section{Summary}
\label{ch3-sec:summary}

This chapter develops a novel synthesis approach \dress that can adapt the sub-networks for on-device inference to maximize the model performance with different resource budgets.
\dress enables efficient adaptation on edge devices under varying resource constraints.
Prior synthesis methods either require deploying multiple individual networks, or sample sub-network architectures along structured dimensions leading to subpar performance. 
However, \dress utilizes nonzero-weight sharing and architecture sharing to reduce the redundancy among multiple unstructured sub-networks, resulting in both storage efficiency and re-configuration efficiency.
The main contributions of \dress are summarized as follows,
\begin{itemize}
    \item 
    \dress can adapt different sub-networks sampled from the backbone network on edge devices. 
    These optimized sub-networks have different sparsity, and thus can infer under various resource constraints, \eg the inference latency, and the battery energy.
    
    \item 
    \dress samples sub-networks in a row-based unstructured sparsity (a.k.a. fine-grained structure sparsity) and introduces a novel compressed sparse row (CSR) format for storing the sub-networks.
    This way, multiple sub-networks can be efficiently fetched and executed for on-device inference, by using the fast kernels of sparse tensor computation provided by recent compilation libraries. 
    To our best knowledge, this is the first work that builds multiple sub-networks via a fine-grained structure of weight sharing.

    \item 
    \dress enables weight sharing and architecture sharing among multiple sub-networks, resulting in (static) storage efficiency and re-configuration efficiency, respectively.
    
    \item
    Experimental results show \dress reaches a similar accuracy while only requiring 50\%-60\% of static storage as unstructured pruned networks, and can result in various distinct inference latency on off-the-shelf edge platforms according to different sparsity levels. 
\end{itemize}

This chapter studied how to adapt the network on edge devices to maximize the inference accuracy under varying resource constraints. 
In the next chapter, we will study how to conduct learning on edge devices given a few data samples of new tasks.
The different sub-networks generated by \dress are used for the same inference task, thus \dress is inapplicable to adapt its network given a new task.

% !TEX root = 00_thesis.tex
\chapter[Learning on Edge Devices]{Learning on Edge Devices}
\label{ch4:learning}

In \chref{ch2:inference} and \chref{ch3:adaptation}, we studied how to compress a pretrained DNN for on-device inference under \textit{fixed} and \textit{varying} resource constraints.
However, when facing \textit{unseen} environments, users, or tasks, it is crucial to adapt\footnote{In this chapter, the adaptation is referred to as (re-)training on new data samples.} the pretrained DNN to deliver consistent performance and customized services. 
Sometimes, data collected by edge devices are private and have a large diversity across users/devices. 
Hence, \textit{on-device learning} is preferred over uploading the data to cloud servers for adaptation.

\fakeparagraph{Main Resource Constraints}
For on-device learning, neither abundant \textit{user data} nor \textit{computing resources} are applicable.
On the one hand, the amount of user data collected on a single edge device is rather small due to the limited labor resources. 
On the other hand, edge devices often have a small amount of available resources from memory and computation. 

\fakeparagraph{Principles}
Existing memory-efficient training approaches are not able to optimize a DNN given only a few training samples, whereas current meta learning methods require a significant amount of dynamic memory to few-shot learn unseen tasks.
Therefore, we introduce a memory-efficient on-device few-shot learning setting, and propose a novel meta learning scheme that can (\textit{i}) fast learn new unseen tasks given a few training samples, resulting in data efficiency, (\textit{ii}) avoid redundant training by distinguishing and learning adaptation-critical weights only, leading to memory efficiency.

The contents of this chapter are established mainly based on the paper ``p-Meta: Towards On-device Deep Model Adaptation'' that is published on ACM Conference on Knowledge Discovery and Data Mining (SIGKDD), 2022 \cite{bib:KDD22:Qu}.

\section{Introduction}
\label{ch4-sec:introduction}

The excellent accuracy of contemporary DNNs is attributed to training with high-performance computers on large-scale datasets \cite{bib:Book16:Goodfellow}.
For example, it takes $29$ hours to complete a $90$-epoch ResNet50 \cite{bib:CVPR16:He} training on ImageNet ($1.28$ million training images) \cite{bib:ILSVRC15} with $8$ Nvidia Tesla P100 GPUs \cite{bib:arXiv17:Goyal}.
However, on-device learning/adaptation of a DNN demands both \textit{data efficiency} and \textit{memory efficiency}.
A personal voice assistant, for example, may learn to adapt to users' accent and dialect within a few sentences, while a home robot should learn to recognize new object categories with few labelled images to navigate in new environments.
Furthermore, such learning is expected to be conducted on low-resource platforms such as smart portable devices, home hubs, and other IoT devices, with only several $KB$ to $MB$ memory.

\begin{figure}[t]
  \centering
  \includegraphics[width=0.8\textwidth]{./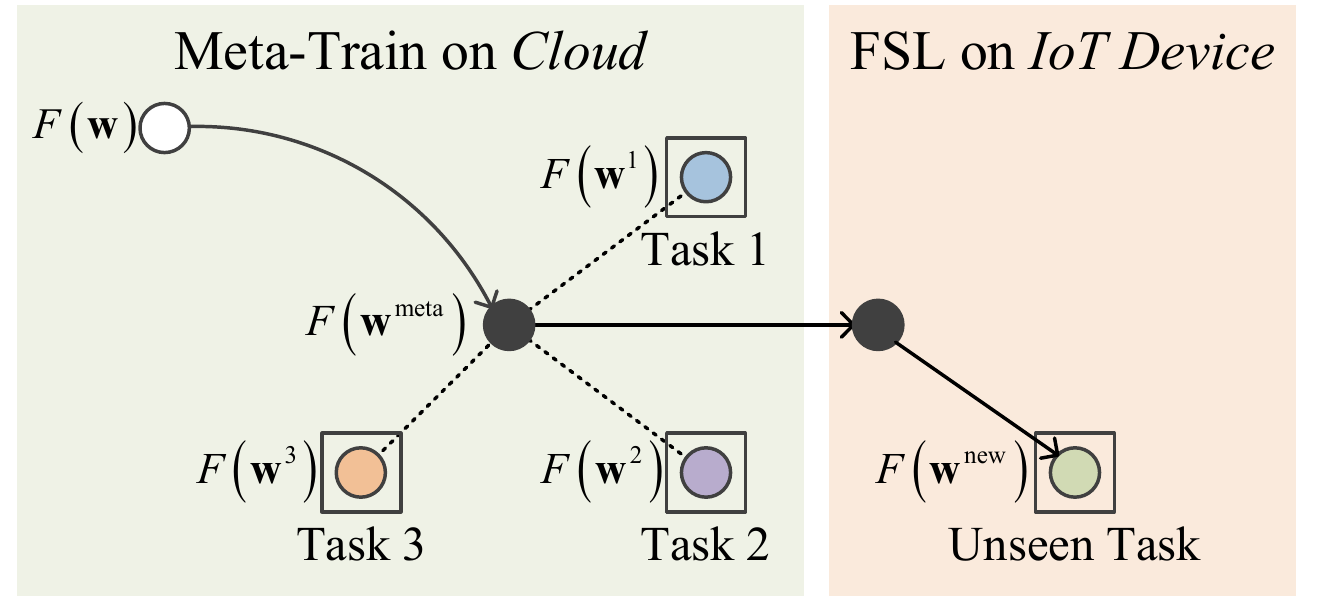} 
  \caption[Meta learning and few-shot learning in the context of on-device learning.]{Meta learning and few-shot learning (FSL) in the context of on-device learning. The backbone $F(\bm{w})$ is meta-trained into $F(\bm{w}^\mathrm{meta})$ on the cloud and is deployed to IoT devices to learn unseen tasks as $F(\bm{w}^\mathrm{new})$ via FSL.} 
  \label{ch4-fig:meta}
\end{figure}

For \textit{data-efficient} DNN training, we resort to \textit{meta learning}, a paradigm that learns to fast generalize to unseen tasks \cite{bib:arXiv20:Hospedales}.
Of our particular interest is \textit{gradient-based} meta learning \cite{bib:ICLR19:Antreas, bib:ICML17:Finn, bib:ICLR20:Raghu, bib:ICLR21:Oh} for its wide applicability in classification, regression and reinforcement learning, as well as the availability of gradient-based training frameworks for low-resource devices, \eg TensorFlow Lite \cite{bib:tfliteOndeviceTraining}.
\figref{ch4-fig:meta} explains major terminologies in the context of on-device learning.
Given a backbone, its weights are \textit{meta-trained} on \textit{many} tasks, to output a \textit{model} that is expected to fast learn new \textit{unseen} tasks.
The process of learning is also known as \textit{few-shot learning}, where the meta-trained model is further retrained by standard stochastic gradient decent (SGD) on \textit{few new} samples only.

However, existing gradient-based meta learning schemes \cite{bib:ICLR19:Antreas, bib:ICML17:Finn, bib:ICLR20:Raghu, bib:ICLR21:Oh} fail to support \textit{memory-efficient} training.
Although \textit{meta training} is conducted in the cloud, \textit{few-shot learning} of the meta-trained model is performed on IoT devices.
Consider to retrain a common backbone ResNet12 in a 5-way (5 new classes) 5-shot (5 samples per class) scenario.  
One round of SGD consumes 370.44MB peak dynamic memory, since the inputs of all layers must be stored to compute the gradients of these layers' weights in the backward path. 
In comparison, inference only needs 3.61MB.
The necessary dynamic memory is a key bottleneck for on-device learning due to cost and power constraints, even though the meta-trained model only needs to be retrained with a few data.

Prior efficient DNN training solutions mainly focus on parallel and distributed training on data centers \cite{bib:arXiv16:Chen, bib:ICLR21:Chen, bib:NIPS17:Greff, bib:NIPS16:Gruslys, bib:NIPS20:Raihan}.
On-device training has been explored for \textit{vanilla supervised training} \cite{bib:ICASSP20:Gooneratne, bib:MLSys21:Mathur, bib:SenSys19:Lee}, where training and testing are performed on the \textit{same} task. 
A pioneer study \cite{bib:NIPS20:Cai} investigated on-device learning to new tasks via memory-efficient \textit{transfer learning}. 
Yet transfer learning is prone to overfitting when only a few samples are available \cite{bib:ICML17:Finn}. 

In this paper, we propose \textbf{\pMeta}, a new meta learning method for data- and memory-efficient DNN training.
The key idea is to enforce \textit{structured partial parameter updates} while ensuring \textit{fast generalization to unseen tasks}. 
The idea is inspired by recent advances in understanding gradient-based meta learning \cite{bib:ICLR21:Oh, bib:ICLR20:Raghu}.
Empirical evidence shows that only the \textit{head} (the last output layer) of a DNN needs to be updated to achieve reasonable few-shot classification accuracy \cite{bib:ICLR20:Raghu} whereas the \textit{body} (the layers closed to the input) needs to be updated for cross-domain few-shot classification \cite{bib:ICLR21:Oh}.
These studies imply that certain weights are more important than others when generalizing to unseen tasks. Hence, we propose to automatically identify these \textit{adaptation-critical weights} to minimize the memory demand in few-shot learning.

Particularly, the critical weights are determined in two structured dimensionalities as, 
(\textit{i}) layer-wise: we meta-train a layer-by-layer learning rate that enables a \textit{static} selection of critical layers for updating; 
(\textit{ii}) channel-wise: we introduce meta attention modules in each layer to select critical channels \textit{dynamically}, \ie depending on samples from new tasks. 
Partial updating of weights means that (structurally) sparse gradients are generated, reducing memory requirements to those for computing nonzero gradients.
In addition, the computation demand for calculating zero gradients can be also saved.
To further reduce the memory, we utilize \textit{gradient accumulation} in few-shot learning and \textit{group normalization} in the backbone.
Although weight importance metrics and SGD with sparse gradients have been explored in vanilla training \cite{bib:NIPS20:Raihan, bib:PIEEE20:Deng, bib:ICASSP20:Gooneratne, bib:ICLR16:Han}, it is unknown \textit{(i)} how to identify adaptation-critical weights and \textit{(ii)} whether meta learning is robust to sparse gradients, where the objective is to fast learn \textit{unseen} tasks.

\section{Related Work}
\label{ch4-sec:related}

\subsection{Meta Learning for Few-Shot Learning}
\label{ch4-sec:related_meta}

Meta learning is a prevailing solution to few-shot learning \cite{bib:arXiv20:Hospedales}, where the meta-trained model can learn an unseen task from a few training samples, \ie data-efficient training.
The majority of meta learning methods can be divided into two categories, (\textit{i}) embedding-based methods \cite{bib:NIPS16:Vinyals, bib:NIPS17:Snell, bib:CVPR18:Sung} that learn an embedding for classification tasks to map the query samples onto the classes of labeled support samples, (\textit{ii}) gradient-based methods \cite{bib:ICLR19:Antreas, bib:ICML17:Finn, bib:ICLR20:Raghu, bib:ICLR21:Oh, bib:NIPS21:Oswald} that learn an initial model (and/or optimizer parameters) such that it can be trained with gradient information calculated on the new few samples.   
Among them, we focus on gradient-based meta learning methods for their applicability in various learning tasks and the availability of gradient-based training frameworks for low-resource devices \cite{bib:tfliteOndeviceTraining}.

Particularly, we aim at meta training a DNN that allows fast learning on memory-constrained devices.
Most meta learning algorithms \cite{bib:ICLR19:Antreas, bib:ICML17:Finn, bib:NIPS21:Oswald} optimize the backbone network for better generalization yet ignore the workload if the meta-trained backbone is deployed on low-resource platforms for few-shot learning. 
Manually fixing certain layers during on-device few-shot learning \cite{bib:ICLR20:Raghu, bib:ICLR21:Oh, bib:AAAI21:Shen} may also reduce memory and computation, but to a much lesser extent as shown in our evaluations.

\subsection{Efficient DNN Training}
\label{ch4-sec:related_efficient}

Existing efficient training schemes are mainly designed for high-throughput GPU training on large-scale datasets.
\cite{bib:ICLR20:Cambier, bib:NIPS18:Wang} conduct 8-bit floating point low precision training which requires specialized hardware for efficient execution. 
A general strategy is to trade memory with computation \cite{bib:arXiv16:Chen, bib:NIPS16:Gruslys}, which is unfit for IoT device with a limited computation capability. 
An alternative is to sparsify the computational graphs in backpropagation \cite{bib:NIPS20:Raihan}.
Yet it relies on massive training iterations on large-scale datasets.
Other techniques include layer-wise local training \cite{bib:NIPS17:Greff} and reversible residual module \cite{bib:NIPS17:Gomez}, but they often incur notable accuracy drops. 

There are a few studies on DNN training on low-resource platforms, such as updating the last several layers only \cite{bib:MLSys21:Mathur}, reducing batch sizes \cite{bib:SenSys19:Lee}, and gradient approximation \cite{bib:ICASSP20:Gooneratne}.
However, they are designed for vanilla supervised training, \ie train and test on the same task. 
One recent study proposes to update the bias parameters only for memory-efficient transfer learning \cite{bib:NIPS20:Cai}, yet transfer learning is prone to overfitting when trained with limited data \cite{bib:ICML17:Finn}.

\section{Preliminaries and Challenges}
\label{ch4-sec:preliminary}

In this section, we first motivate on-device few-shot learning via example applications, then provide the basics on meta learning for few-shot learning and highlight the challenges to enable on-device learning.

\subsection{Example Application Scenarios}
\label{ch4-sec:preliminary_example}

On-device few-shot learning is essential for model adaptation in some intelligent applications, when the new data collected on edge devices tend to relate to personal habits and lifestyle.
For instance, activity recognition with smartphone sensors should adapt to countless walking patterns and sensor orientation \cite{bib:SenSys19:Gong}.
Gaze tracking with smart glasses requires calibration to personal gaze conditions for cognitive context recognition \cite{bib:SenSys20:Lan}.
Human motion prediction with home robots needs fast learning of unseen poses for seamless human-robot interaction \cite{bib:ECCV18:Gui}.
We detail two representative applications below and summarize their resource utilization in \tabref{ch4-tab:examples}.

\fakeparagraph{Home Surveillance Customization}
Household camera systems are pervasively deployed to detect intruders and monitor pets, where suspicious images are uploaded to a smart gateway for further investigation such as object classification.
Due to the \textit{countless object classes} of interest across individuals, the image classification model needs post-deployment customization. 
Fast model adaptation (\eg pre-trained on dog breeds such as Komondor, Poodle and Saluki, and re-trained to recognize Malamute) at the smart gateway delivers more targeted surveillance services without leaking images of family members or private locations.

\fakeparagraph{Robot Locomotion Control}
Robots that walk and run as humans have been a long-standing challenge in robotics \cite{bib:IJRR21:Ibarz}.
Deep reinforcement learning (DRL) advances the development of naturally behaved robots for new applications such as police robotic dogs and unmanned last-mile delivery \cite{bib:SR19:Hwangbo}.
It is important that the robots fast learn their locomotion policies to new goals and environments since there is often a gap between the training and deployment environments.
Naive DRL can take millions of data samples to learn meaningful locomotion gaits \cite{bib:IJRR21:Ibarz}.
Conversely, on-robot few-shot DRL enables rapid control policy acquisition with few new experience.

\subsection{Meta Learning for Few-Shot Learning}
\label{ch4-sec:preliminary_meta}

Meta learning is a prevailing solution to adapt a DNN to unseen tasks with limited training samples, \ie few-shot learning \cite{bib:arXiv20:Hospedales}.
We ground our work on model-agnostic meta learning (MAML) \cite{bib:ICML17:Finn}, a generic meta learning framework which supports classification, regression and reinforcement learning.
\tabref{ch4-tab:notations} lists the major notations.

\begin{table}[!t]
 	\centering
 	\caption[Summary of major notations.]{Summary of major notations.}
 	\label{ch4-tab:notations}
 	\footnotesize
 	\begin{tabular}{ll}
 		\toprule
 		Notation & Description \\
 		\midrule
 		$l$ & Layer index, $l \in 1,2,...,L$ \\
 		$\bm{x}_{l-1}$, $\bm{w}_l$, $\bm{y}_l$  & Input, weight, and intermediate tensors \\
 		$C_l$, $H_l$, $W_l$ & Output channel number, height and width \\
 		$\bm{x}_L = F(\bm{w};\bm{x}_0)$ & A model (backbone) with parameter $\bm{w}$, its input and output \\
 		$\mathsf{T}^{i}$ & Sampled task $i$ from distribution $p(\mathsf{T})$ during meta training \\
 		$\mathcal{D}^i=\{\mathcal{S}^i,\mathcal{Q}^i\}$ & Dataset with support set $\mathcal{S}^i$ and query set $\mathcal{Q}^i$ for task $\mathsf{T}^{i}$ \\
 		$\mathsf{T}^{\mathrm{new}}$ & New unseen task during on-device few-shot learning \\
 		\multirow{2}{*}{$\mathcal{D}^{\mathrm{new}}=\{\mathcal{S}^{\mathrm{new}},\mathcal{Q}^{\mathrm{new}}\}$} & Dataset for unseen task $\mathsf{T}^{\mathrm{new}}$ \\
 		                                                                                                        & $\mathcal{S}^{\mathrm{new}}$ for few-shot learning and $\mathcal{Q}^{\mathrm{new}}$ for evaluation \\
 		$\ell(\bm{w};\mathcal{D})$ & loss function over model $F(\bm{w})$ and dataset $\mathcal{D}$ \\
 		$\bm{w}^\mathrm{meta}$, $\bm{w}^\mathrm{new}$ & parameters after meta training and few-shot learning \\
 		$\bm{w}^{i,k}$ & model parameters $\bm{w}^{i}$ at step $k$ on task $i$ in inner loop \\
 		$\bm{\alpha}$, $\beta$ & inner and outer step size \\
 		$\bm{g}(\cdot)$ & the loss gradients w.r.t. the given tensor \\
 		$\sigma(\cdot)$,  $\sigma'(\cdot)$ & Non-linear function and its derivative\\
 		$m(\cdot)$ & memory consumption of the given tensor in words \\
 		\bottomrule
 	\end{tabular}
 \end{table}
 
Given the dataset $\mathcal{D}=\{\mathcal{S}, \mathcal{Q}\}$ of an unseen few-shot task, where $\mathcal{S}$ (support set) and $\mathcal{Q}$ (query set) are for training and testing, MAML trains a model $F(\bm{w})$ with weights $\bm{w}$ such that it yields high accuracy on $\mathcal{Q}$ even when $\mathcal{S}$ only contains a few samples.
This is enabled by simulating the few-shot learning experiences over abundant few-shot tasks sampled from a task distribution $p(\mathsf{T})$.
Specifically, it meta-trains a backbone $F$ over few-shot tasks $\mathsf{T}^{i}\sim p(\mathsf{T})$, where each $\mathsf{T}^{i}$ has dataset $\mathcal{D}^i=\{\mathcal{S}^i,\mathcal{Q}^i\}$, and then generates $F(\bm{w}^\mathrm{meta})$, an initialization for the unseen few-shot task $\mathsf{T}^{\mathrm{new}}$ with dataset $\mathcal{D}^{\mathrm{new}}=\{\mathcal{S}^{\mathrm{new}},\mathcal{Q}^{\mathrm{new}}\}$.
Training from $F(\bm{w}^\mathrm{meta})$ over $\mathcal{S}^{\mathrm{new}}$ is expected to achieve a high test accuracy on $\mathcal{Q}^{\mathrm{new}}$.

MAML achieves fast learning via two-tier optimization. 
In the \textit{inner loop}, a task $\mathsf{T}^i$ and its dataset $\mathcal{D}^i$ are sampled. 
The weights $\bm{w}$ are updated to $\bm{w}^{i}$ on support dataset $\mathcal{S}^{i}$ via $K$ gradient descent steps, where $K$ is usually small, compared to vanilla training:
\begin{equation}\label{ch4-eq:maml_inner}
    \bm{w}^{i,k} = \bm{w}^{i,k-1}-\alpha\nabla_{\bm{w}}~\ell\left(\bm{w}^{i,k-1};\mathcal{S}^i\right)\quad\mathrm{for}~k=1,...,K
\end{equation}
where $\bm{w}^{i,k}$ are the weights at step $k$ in the inner loop, and $\alpha$ is the inner step size.
Note that $\bm{w}^{i,0}=\bm{w}$ and $\bm{w}^{i}=\bm{w}^{i,K}$.
$\ell(\bm{w};\mathcal{D})$ is the loss function on dataset $\mathcal{D}$. 
In the \textit{outer loop}, the weights are optimized to minimize the sum of loss at $\bm{w}^{i}$ on query dataset $\mathcal{Q}^i$ across tasks. 
The gradients to update weights in the outer loop are calculated w.r.t. the starting point $\bm{w}$ of the inner loop.
\begin{equation}\label{ch4-eq:maml_outer}
    \bm{w} \leftarrow \bm{w}-\beta \nabla_{\bm{w}} \sum_i \ell\left(\bm{w}^{i};\mathcal{Q}^i\right)
\end{equation}
where $\beta$ is the outer step size. 
 
\begin{table}[t]
    \centering
 	\caption[Memory and total computation of inference and training in example few-shot learning.]{Memory and total computation (GFLOPs $=10^9$FLOPs) of inference and training in example few-shot learning. For image classification (``4Conv on MiniImageNet'' and ``ResNet12 on MiniImageNet''), we use batch size $=25$, \ie 5-way 5-shot. For robot locomotion (``MLP'' on MuJoCo), we use rollouts $=20$, horizon $=200$; each sample corresponds to a rollouted episode, and the case for an observation is reported in brackets. The calculation is based on \secref{ch4-sec:analysis}.}
 	\label{ch4-tab:examples}
 	\footnotesize
 	\begin{tabular}{lccc}
 		\toprule
 		\multirow{2}{*}{Benchmark}      & 4Conv                 & ResNet12                  & MLP \\ 
 		                                & MiniImageNet          & MiniImageNet              & MuJoCo  \\ \midrule
 		Model Static Storage (MB)       & $0.13$                & $32.0$                    & $0.05$ \\
 		Sample Static Storage (MB)      & $0.53$                & $0.53$                    & $0.016 (0.00008)$ \\ \hdashline
 		Inference Peak Memory (MB)      & $0.90$                & $3.61$                    & $0.08 (0.0004)$ \\
 		Training Peak Memory (MB)       & $48.33$               & $370.44$                  & $3.72$ \\ \hdashline
 		Inference GFLOPs                & $0.72$                & $62.08$                   & $0.05$ \\
 		Training GFLOPs                 & $1.96$                & $185.42$                  & $0.15$ \\
 		\bottomrule
 	\end{tabular}
\end{table}
 
The meta-trained weights $\bm{w}^\mathrm{meta}$ are then used as initialization for few-shot learning into $\bm{w}^\mathrm{new}$ by $K$ gradient descent steps over $\mathcal{S}^{\mathrm{new}}$.
Finally we assess the accuracy of $F(\bm{w}^\mathrm{new})$ on $\mathcal{Q}^{\mathrm{new}}$.

\subsection{Memory Bottleneck of On-Device Learning}
\label{ch4-sec:preliminary_memory}

As mentioned above, the meta-trained model $F(\bm{w}^\mathrm{meta})$ can learn unseen tasks via $K$ gradient descent steps.
Each step is the same as the inner loop of meta-training \equref{ch4-eq:maml_inner}, but on dataset $\mathcal{S}^{\mathrm{new}}$.
\begin{equation}\label{ch4-eq:fsl}
    \bm{w}^{\mathrm{new},k} = \bm{w}^{\mathrm{new},k-1}-\alpha\nabla_{\bm{w}^{\mathrm{new}}}~\ell\left(\bm{w}^{\mathrm{new},k-1};\mathcal{S}^{\mathrm{new}}\right)
\end{equation}
where $\bm{w}^{\mathrm{new},0}=\bm{w}^\mathrm{meta}$.
For brevity, we omit the superscripts of model adaption in \equref{ch4-eq:fsl} and use $\bm{g}(\cdot)$ as the loss gradients w.r.t. the given tensor.
Hence, without ambiguity, we simplify the notations of \equref{ch4-eq:fsl} as follows:
\begin{equation}\label{ch4-eq:fsl_s}
    \bm{w} \leftarrow \bm{w}-\alpha \bm{g}(\bm{w})
\end{equation}

Let us now understand where the main memory cost for iterating \equref{ch4-eq:fsl_s} comes from. 
For the sake of clarity, we focus on a feed forward DNNs that consist of $L$ convolutional (\texttt{conv}) layers or fully-connected (\texttt{fc}) layers.
A typical layer (see \figref{ch4-fig:layer}) consists of two operations: (\textit{i}) a linear operation with trainable parameters, \eg convolution or affine; (\textit{ii}) a parameter-free non-linear operation, where we consider max-pooling or ReLU-styled (ReLU, LeakyReLU) activation functions in this paper.
Note that the non-linear operation unit may not exist in some layers; some layers may also have more than one non-linear units (\eg both max-pooling and ReLU activation function), and all corresponding intermediate tensors should be stored.

\begin{figure}[t]
    \centering
    \includegraphics[width=0.65\textwidth]{./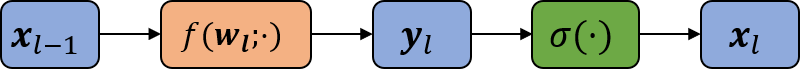} 
    \caption[A typical layer in DNNs.]{A typical layer $l$ in DNNs. $\bm{x}_{l-1}$ is the input tensor; $\bm{x}_l$ is the output tensor, also the input tensor of layer $l+1$; $\bm{y}_l$ is the intermediate tensor; $\bm{w}_l$ is the weight tensor. }
    \label{ch4-fig:layer}
\end{figure}

Take a network consisting of \texttt{conv} layers only as an example. 
The memory requirements for storing the activations $\bm{x}_l\in\mathbb{R}^{C_l\times H_l \times W_l}$ as well as the convolution weights $\bm{w}_l \in \mathbb{R}^{C_l\times C_{l-1}\times S_l \times S_l} $ of layer $l$ in words can be determined as
\[
    m(\bm{x}_l)= C_l H_l W_l \; , \; \; m(\bm{w}_l) = C_l C_{l-1} S_l^2
\]
where $C_{l-1}$, $C_l$, $H_l$, and $W_l$ stand for input channel number, output channel number, height and width of layer $l$, respectively; $S_l$ stands for the kernel size.
The detailed memory and computation demand analysis as provided in \secref{ch4-sec:analysis} reveals that the by far largest memory requirement is neither attributed to determining the activations $\bm{x}_l$ in the forward path nor to determining the gradients of the activations $\bm{g}(\bm{x}_l)$ in the backward path. 
Instead, the memory bottleneck lies in the computation of the weight gradients $\bm{g}(\bm{w}_l)$, which requires the availability of the activations $\bm{x}_{l-1}$ from the forward path. 
Following \equref{ch4-eq:memorySimple} in \secref{ch4-sec:analysis}, the necessary memory in words is
\begin{equation}
    \label{ch4-eq:MemoryContrib}
    \sum_{1 \leq l \leq L} m(\bm{x}_{l-1})
\end{equation}

\tabref{ch4-tab:examples} summarizes the peak memory and the total computation of the commonly used few-shot learning backbone models \cite{bib:ICML17:Finn, bib:NIPS18:Oreshkin}. 
The requirements are based on the detailed analysis in \secref{ch4-sec:analysis}.  
We can draw two intermediate conclusions. 
\begin{itemize}
    \item 
    The total computation of training is approximately $2.7\times$ to $3\times$ larger compared to inference.
    Yet the peak memory of training is far larger, $47 \times$ to $103 \times$ over inference. 
    \item
    To enable training on memory-constrained IoT devices, we need to find some way of getting rid of the major dynamic memory contribution in \equref{ch4-eq:MemoryContrib}.
\end{itemize}

\section{p-Meta}
\label{ch4-sec:method}

This section presents \pMeta, a new meta learning scheme that enables memory-efficient few-shot learning on unseen tasks.
\pMeta is a novel meta training algorithm that not only learns the weights of the initialized backbone but also learns to identify adaptation-critical weights for memory-efficient few-shot learning. 

\subsection{\pMeta Overview}
\label{ch4-sec:overview}
We first provide an overview of \pMeta and introduce its main concepts, namely selecting critical gradients, using a hierarchical approach to determine adaption-critical layers and channels, and using a mixture of static and dynamic selection mechanisms. 

We impose \textit{structured sparsity} on the \textit{gradients} $\bm{g}(\bm{w}_l)$ such that the corresponding tensor dimensions of $\bm{x}_l$ do not need to be saved. 
There are other options to reduce the dominant memory demand in \equref{ch4-eq:MemoryContrib}.
They are inapplicable for the reasons below.
\begin{itemize}
    \item 
    One may trade-off computation and memory by recomputing activations $\bm{x}_{l-1}$ when needed for determining $\bm{w}_l$, see for example \cite{bib:arXiv16:Chen, bib:NIPS16:Gruslys}. 
    Due to the limited processing abilities of IoT devices, we exclude this option. 
    \item
    It is also possible to prune activations $\bm{x}_{l-1}$.
    Yet based on our experimental results in \tabref{ch4-tab:sparse}, imposing sparsity on $\bm{x}_{l-1}$ hugely degrades few-shot learning accuracy as this causes error accumulation along the propagation, see also \cite{bib:NIPS20:Raihan}.
    \item
    Note that unstructured sparsity, as proposed in \cite{bib:CIKM21:Gao, bib:NIPS21:Oswald}, does not in general lead to memory savings, since there is a very small probability that all weight gradients for which an element of $\bm{x}_{l-1}$ is necessary have been pruned. 
    Furthermore, their weight selection is fixed after meta training, whereas \pMeta allows dynamic weight selection when few-shot learning on different tasks. Such runtime weight selection is essential for few-shot model training.
\end{itemize}

\noindent
We impose sparsity on the gradients in a hierarchical manner.

\begin{itemize}
    \item 
    \underline{Selecting Adaption-Critical Layers:} 
    We first impose layer-by-layer sparsity on $\bm{g}(\bm{w}_l)$.
    It is motivated by previous results showing that manual freezing of certain layers does no harm to few-shot learning accuracy \cite{bib:ICLR20:Raghu,bib:ICLR21:Oh}. 
    Layer-wise sparsity reduces the number of layers whose weights need to be updated. 
    We determine the adaptation-critical layers from the meta-trained \textit{layer-wise sparse learning rates}. 
    \item
    \underline{Selecting Adaption-Critical Channels:}
    In addition to imposing layer-wise sparsity of weight gradients, 
    We further reduce the memory demand by imposing sparsity on $\bm{g}(\bm{w}_l)$ within each layer. Noting that calculating $\bm{g}(\bm{w}_l)$ needs both the input channels $\bm{x}_{l-1}$ and the output channels $\bm{g}(\bm{y}_{l})$, we enforce sparsity on both of them.
    Input channel sparsity decreases memory and computation overhead, whereas output channel sparsity improves few-shot learning accuracy and reduces computation.
    We design a novel \textit{meta attention mechanism} to \textit{dynamically} determine adaptation-critical channels. 
    They take as inputs $\bm{x}_{l-1}$ and $\bm{g}(\bm{y}_{l})$ and determine adaptation-critical channels during few-shot learning, based on the given data samples from new unseen tasks. 
    Dynamic channel-wise learning rates as determined by meta attention yield a significantly higher accuracy than a static channel-wise learning rate (see \secref{ch4-sec:ablation}).
\end{itemize}

\fakeparagraph{Memory Reduction} 
The reduced memory demand due to our hierarchical approach can be formulated in high level as,
\begin{equation}
    \sum_{1 \leq l \leq L} \hat{\alpha}_l \mu^{\mathrm{fw}}_l m(\bm{x}_{l-1})
\end{equation}
where $\hat{\alpha}_l \in \{ 0, 1\}$ is the mask from the static selection of adaptation-critical layers and $0 \leq \mu^{\mathrm{fw}}_l \leq 1$ denotes the relative amount of dynamically chosen adaptation-critical input channels. 
A more detailed analysis of memory demands can be found in \secref{ch4-sec:analysis}.

Next, we explain how \pMeta selects adaptation-critical layers (\secref{ch4-sec:LR}) and channels within layers (\secref{ch4-sec:attention}) as well as the deployment optimizations (\secref{ch4-sec:others}) for memory-efficient training.

\subsection{Selecting Adaption-Critical Layers by Learning Sparse Inner Step Sizes}
\label{ch4-sec:LR}
This subsection introduces how \pMeta meta-learns adaptation-critical layers to reduce the number of updated layers during few-shot learning.
Particularly, instead of manual configuration as in \cite{bib:ICLR21:Oh, bib:ICLR20:Raghu}, we propose to automate the layer selection process.
During meta training, we identify adaptation-critical layers by learning layer-wise sparse inner step sizes (\secref{ch4-sec:LR_ml}).
Only these critical layers with nonzero step sizes will be updated during on-device learning to new tasks (\secref{ch4-sec:LR_fsl}).

\subsubsection{Learning Sparse Inner Step Sizes in Meta Training}
\label{ch4-sec:LR_ml}

Prior work \cite{bib:ICLR19:Antreas} suggests that instead of a global fixed inner step size $\alpha$, learning the inner step sizes $\bm{\alpha}$ for each layer and each gradient descent step improves the generalization of meta learning, where $\bm{\alpha} = \alpha_{1:L}^{1:K} \succeq \bm{0}$.
We utilize such learned inner step sizes to infer layer importance for adaptation.
We learn the inner step sizes $\bm{\alpha}$ in the outer loop of meta-training while fixing them in the inner loop.

\fakeparagraph{Learning Layer-wise Inner Step Sizes}
We change the inner loop of \equref{ch4-eq:maml_inner} to incorporate the per-layer inner step sizes:
\begin{equation}\label{ch4-eq:inner_alpha}
    \bm{w}_l^{i,k} = \bm{w}_l^{i,k-1}-\alpha^k_l\nabla_{\bm{w}_l}~\ell\left(\bm{w}^{i,k-1}_{1:L};\mathcal{S}^i\right)
\end{equation}
where $\bm{w}_l^{i,k}$ is the weights of layer $l$ at step $k$ optimized on task $i$ (dataset $\mathcal{S}^{i}$).
In the outer loop, weights $\bm{w}$ are still optimized as
\begin{equation}\label{ch4-eq:outer_alpha}
    \bm{w} \leftarrow \bm{w}-\beta \nabla_{\bm{w}} \sum_i \ell\left(\bm{w}^{i};\mathcal{Q}^i\right)
\end{equation}
where $\bm{w}^{i}=\bm{w}^{i,K}=\bm{w}^{i,K}_{1:L}$, which is a function of $\bm{\alpha}$.
The inner step sizes $\bm{\alpha}$ are then optimized as
\begin{equation}\label{ch4-eq:alpha}
    \bm{\alpha} \leftarrow \bm{\alpha}-\beta \nabla_{\bm{\alpha}} \sum_i \ell\left(\bm{w}^{i};\mathcal{Q}^i\right)
\end{equation}

\fakeparagraph{Imposing Sparsity on Inner Step Sizes}
To facilitate layer selection, we enforce sparsity in $\bm{\alpha}$, \ie encouraging a subset of layers to be selected for updating.
Specifically, we add a Lasso regularization term in the loss function of \equref{ch4-eq:alpha} when optimizing $\bm{\alpha}$.
Hence, the final optimization of $\bm{\alpha}$ in the outer loop is formulated as
\begin{equation}\label{ch4-eq:regularization}
    \bm{\alpha} \leftarrow \bm{\alpha}-\beta \nabla_{\bm{\alpha}} (\sum_i \ell\left(\bm{w}^{i};\mathcal{Q}^i\right)+\lambda \sum_{l,k} m(\bm{x}_{l-1})\cdot|\alpha_l^k|) \
\end{equation}
where $\lambda$ is a positive scalar to control the ratio between two terms in the loss function.
We empirically set $\lambda=0.001$.
$|\alpha_l^k|$ is re-weighted by $m(\bm{x}_{l-1})$, which denotes the necessary memory in \equref{ch4-eq:MemoryContrib} if only updating the weights in layer $l$. 

\subsubsection{Exploiting Sparse Inner Step Sizes for on-device learning}
\label{ch4-sec:LR_fsl}
We now explain how to apply the learned $\bm{\alpha}$ to save memory during on-device learning.
After deploying the meta-trained model to IoT devices for few-shot learning, at updating step $k$, for layers with $\alpha_l^k=0$, the activations (\ie their inputs) $\bm{x}_{l-1}$ need not be stored, see \equref{ch4-eq:memoryAll} and \equref{ch4-eq:memorySimple} in \secref{ch4-sec:analysis}. 
In addition, we do not need to calculate the corresponding weight gradients $\bm{g}(\bm{w}_l)$, which saves computation, see \equref{ch4-eq:MAC} in \secref{ch4-sec:analysis}.

\subsection{Selecting Adaption-Critical Channels within Layers via Sparse Meta Attention}
\label{ch4-sec:attention}

\begin{figure}[t]
    \centering
    \includegraphics[width=0.99\textwidth]{./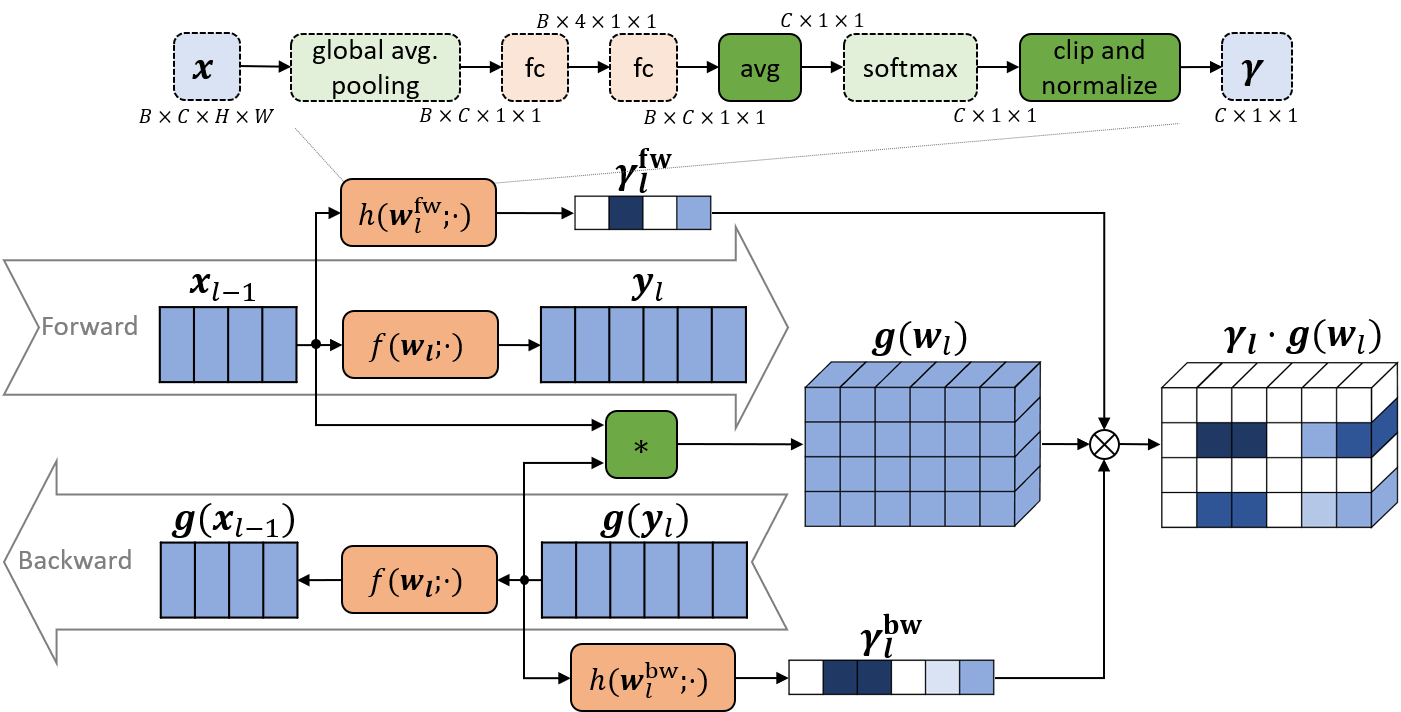} 
    \caption[Meta attention during meta-training.]{Meta attention of layer $l$ during meta-training. The blue blocks correspond to tensors; the orange blocks correspond to computation units with parameters, and the green ones without. Each column of a tensor corresponds to one channel. The input tensor $\bm{x}_{l-1}$ has 4 channels; the output tensor $\bm{y}_{l}$ has 6 channels. The other dimensions (\eg height, width and batch) are omitted here. The green block with $*$ stands for the operations involved to compute $\bm{g}(\bm{w}_l)$. In order to compute the gradients of the parameters in meta attention, \ie $\bm{w}_l^{\mathrm{fw}}$ and $\bm{w}_l^{\mathrm{bw}}$, the full dense gradients $\bm{g}(\bm{w}_l)$ are computed during meta-training, and then are masked by $\bm{\gamma}_l$. An example meta attention module for a \texttt{conv} layer is shown in the upper part. $B$ denotes the batch size. The newly added blocks related to the inference attention in \cite{bib:CVPR20:Chen} are marked with solid lines.}
    \label{ch4-fig:metaattention}
\end{figure}

This subsection explains how \pMeta learns a novel meta attention mechanism in each layer to dynamically select adaptation-critical channels for further memory saving in few-shot learning.
Despite the widespread adoption of channel-wise attention for inference \cite{bib:CVPR18:Hu, bib:CVPR20:Chen}, we make the first attempt to use attention for memory-efficient training (few-shot learning in our case).
For each layer, its meta attention outputs a dynamic channel-wise sparse attention score based on the samples from new tasks.
The sparse attention score is used to re-weight (also sparsify) the weight gradients.
Therefore, by calculating only the nonzero gradients of critical weights within a layer, we can save both memory and computation.
We first present our meta attention mechanism during meta training (\secref{ch4-sec:attention_ml}) and then show its usage for on-device model training (\secref{ch4-sec:attention_fsl}).

\subsubsection{Learning Sparse Meta Attention in Meta Training}
\label{ch4-sec:attention_ml}

Since mainstream backbones in meta learning use small kernel sizes (1 or 3), we design the meta attention mechanism channel-wise.
\figref{ch4-fig:metaattention} illustrates the attention design during meta-training.

\fakeparagraph{Learning Meta Attention}
The attention mechanism is as follows.
\begin{itemize}
    \item 
    We assign an attention score to the weight gradients of layer $l$ in the inner loop of meta training.
    The attention scores are expected to indicate which weights/channels are important and thus should be updated in layer $l$.
    \item
    The attention score is obtained from two attention modules: one taking $\bm{x}_{l-1}$ as input in the forward pass, and the other taking $\bm{g}(\bm{y}_l)$ as input during the backward pass.
    We use $\bm{x}_{l-1}$ and $\bm{g}(\bm{y}_l)$ to calculate the attention scores because they are used to compute the weight gradients $\bm{g}(\bm{w}_{l})$.
\end{itemize}
Concretely, we define the forward and backward attention scores for a \texttt{conv} layer as,
\begin{equation}\label{ch4-eq:atten_fw}
    \bm{\gamma}^{\mathrm{fw}}_{l} = h(\bm{w}^{\mathrm{fw}}_l;\bm{x}_{l-1})\in\mathbb{R}^{C_{l-1}\times 1 \times 1}
\end{equation}
\begin{equation}\label{ch4-eq:atten_bw}
    \bm{\gamma}^{\mathrm{bw}}_{l} = h(\bm{w}^{\mathrm{bw}}_l;\bm{g}(\bm{y}_l))\in\mathbb{R}^{C_{l}\times 1 \times 1}
\end{equation}
where $h(\cdot;\cdot)$ stands for the meta attention module, and $\bm{w}^{\mathrm{fw}}_l$ and $\bm{w}^{\mathrm{bw}}_l$ are the parameters of the meta attention modules.
The overall (sparse) attention scores $\bm{\gamma}_l\in\mathbb{R}^{C_{l}\times C_{l-1} \times 1 \times 1}$ and is computed as, 
\begin{equation}\label{ch4-eq:atten_fwbw}
    \gamma_{l,ba11} = \gamma^{\mathrm{fw}}_{l,a11} \cdot \gamma^{\mathrm{bw}}_{l,b11} 
\end{equation}
In the inner loop, for layer $l$, step $k$ and task $i$, $\bm{\gamma}_l$ is (broadcasting) multiplied with the dense weight gradients to get the sparse ones,
\begin{equation}\label{ch4-eq:inner_incr}
    \bm{\gamma}_l^{i,k}\odot\nabla_{\bm{w}_l}~\ell\left(\bm{w}^{i,k-1}_{1:L};\mathcal{S}^i\right)
\end{equation}
The weights are then updated by,
\begin{equation}\label{ch4-eq:inner_atten}
    \bm{w}_l^{i,k} = \bm{w}_l^{i,k-1}-\alpha^k_l(\bm{\gamma}_l^{i,k}\odot\nabla_{\bm{w}_l}~\ell\left(\bm{w}^{i,k-1}_{1:L};\mathcal{S}^i\right))
\end{equation}
Let all attention parameters be $\bm{w}^{\mathrm{atten}} = \{\bm{w}^{\mathrm{fw}}_l,\bm{w}^{\mathrm{bw}}_l\}_{l=1}^{L}$. 
The attention parameters $\bm{w}^{\mathrm{atten}}$ are optimized in the outer loop as,
\begin{equation}\label{ch4-eq:outer_atten}
    \bm{w}^{\mathrm{atten}} \leftarrow \bm{w}^{\mathrm{atten}}-\beta \nabla_{\bm{w}^{\mathrm{atten}}} \sum_i \ell\left(\bm{w}^{i};\mathcal{Q}^i\right)
\end{equation}
Note that we use a dense forward path and a dense backward path in both meta-training and on-device learning, as shown in \figref{ch4-fig:metaattention}.
That is, the attention scores $\bm{\gamma}^{\mathrm{fw}}_{l}$ and $\bm{\gamma}^{\mathrm{bw}}_{l}$ are only calculated locally and will not affect $\bm{y}_l$ during forward and $\bm{g}(\bm{x}_{l-1})$ during backward.
Based on our experimental results in \tabref{ch4-tab:sparse}, using either sparse $\bm{x}_{l-1}$ during forward or sparse $\bm{g}(\bm{y}_l)$ during backward will cause a dramatic performance degradation.  

\begin{algorithm}[t]
    \caption{Clip and normalization}\label{ch4-alg:clip}
    \KwIn{softmax output (normalized) $\bm{\pi}\in\mathbb{R}^{C}$, clip ratio $\rho$} 
    \KwOut{sparse $\bm{\gamma}$}
    Sort $\bm{\pi}$ in ascending order and get sorted indices $d_{1:C}$\; 
    Find the smallest $c$ such that $\sum_{i=1}^c\pi_{d_i}\ge\rho$\;
    Set $\pi_{d_1:d_c}$ as 0 \tcp*{if $\rho=0$, do nothing}
    Normalize $\bm{\gamma} = \bm{\pi}/\sum\bm{\pi}$\;
    Re-scale $\bm{\gamma} = \bm{\gamma}\cdot C$ \tcp*{keeping step sizes' magnitude}
\end{algorithm}

\begin{algorithm}[t]
    \caption{p-Meta}\label{ch4-alg:pMeta}
    \KwIn{meta-training task distribution $p(\mathsf{T})$, backbone $F$ with initial weights $\bm{w}$, meta attention parameters $\bm{w}^{\mathrm{atten}}$, inner step sizes $\bm{\alpha}$, outer step sizes $\beta$} 
    \KwOut{meta-trained weights $\bm{w}$, meta-trained meta attention parameters $\bm{w}^{\mathrm{atten}}$, meta-trained sparse inner step sizes $\bm{\alpha}$}
    \While {not done} {
        Sample a batch of $I$ tasks $\mathsf{T}^i\sim p(\mathsf{T})$\;
        \For {$i \leftarrow 1$ \KwTo $I$} {
            Update $\bm{w}^i$ in $K$ gradient descent steps with \eqref{ch4-eq:inner_atten}\;
        }
        Update $\bm{w}$ with \eqref{ch4-eq:outer_alpha}\;
        Update inner step sizes $\bm{\alpha}$ with \eqref{ch4-eq:regularization}\; 
        Update attention parameters $\bm{w}^{\mathrm{atten}}$ with \equref{ch4-eq:outer_atten}\;
    }
\end{algorithm}

\fakeparagraph{Meta Attention Module Design}
\figref{ch4-fig:metaattention} (upper part) shows an example meta attention module.
We adapt the inference attention modules used in \cite{bib:CVPR18:Hu, bib:CVPR20:Chen}, yet with the following modifications.
\begin{itemize}
    \item 
    Unlike inference attention that applies to a single sample, training may calculate the averaged loss gradients based on a batch of samples.
    Since $\bm{g}(\bm{w}_l)$ does not have a batch dimension, the input to softmax function is first averaged over the batch data, see in \figref{ch4-fig:metaattention}.
    \item
    We enforce sparsity on the meta attention scores such that they can be utilized to save memory and computation in few-shot learning.
    The original attention in \cite{bib:CVPR18:Hu, bib:CVPR20:Chen} outputs normalized scales in $[0,1]$ from softmax.
    We clip the output with a clip ratio $\rho\in[0,1]$ to create zeros in $\bm{\gamma}$.
    This way, our meta attention modules yield batch-averaged sparse attention scores $\bm{\gamma}^{\mathrm{fw}}_{l}$ and $\bm{\gamma}^{\mathrm{bw}}_{l}$. 
    \algoref{ch4-alg:clip} shows this clipping and re-normalization process.
    Note that \algoref{ch4-alg:clip} is not differentiable.
    Hence we use the straight-through-estimator for its backward propagation in meta training. 
\end{itemize}

\begin{figure}[t]
    \centering
    \includegraphics[width=0.99\textwidth]{./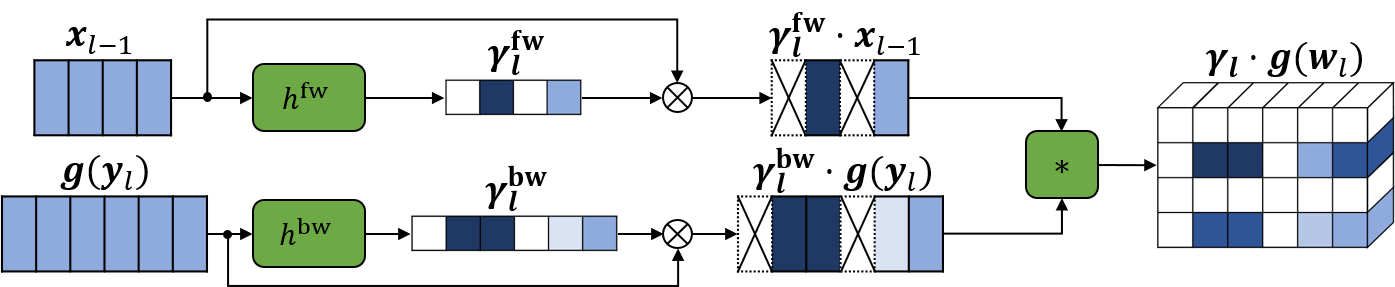} 
    \caption[Meta attention during on-device few-shot learning.]{Meta attention of layer $l$ during on-device few-shot learning. Note that ``Forward'' part and ``Backward'' part are the same as \figref{ch4-fig:metaattention}, which are omitted for simplicity. Meta attention modules are not optimized during few-shot learning, thus are expressed as parameter-free functions $h^{\mathrm{fw}}$ and $h^{\mathrm{bw}}$. The input $\bm{x}_{l-1}$ stored during forward path is a sparse re-weighted tensor. }
    \label{ch4-fig:metaattention_fewshot}
\end{figure}

\subsubsection{Exploiting Meta Attention for on-device learning}
\label{ch4-sec:attention_fsl}
We now explain how to apply the meta attention to save memory during on-device few-shot learning.
Note that the parameters in the meta attention modules are fixed during few-shot learning.
Assume that at step $k$, layer $l$ has a nonzero step size $\alpha_l^k$.
In the forward pass, we only store a sparse tensor $\bm{\gamma}_l^{\mathrm{fw}}\cdot\bm{x}_{l-1}$, \ie its channels are stored only if they correspond to nonzero entries in $\bm{\gamma}_l^{\mathrm{fw}}$.
This reduces memory consumption as shown in \equref{ch4-eq:memorySimple} in \secref{ch4-sec:analysis}.
Similarly, in the backward pass, we get a channel-wise sparse tensor $\bm{\gamma}_l^{\mathrm{bw}}\cdot\bm{g}(\bm{y}_{l})$.
Since both sparse tensors are used to calculate the corresponding nonzero gradients in $\bm{g}(\bm{w}_l)$, the computation cost is also reduced, see \equref{ch4-eq:MAC} in \secref{ch4-sec:analysis}.
We plot the meta attention during on-device learning in \figref{ch4-fig:metaattention_fewshot}.

\subsection{Summary of \pMeta}
\label{ch4-sec:overall_alg}

\algoref{ch4-alg:pMeta} shows the overall process of \pMeta during meta-training. 
The final meta-trained weights $\bm{w}$ from \algoref{ch4-alg:pMeta} are assigned to $\bm{w}^{\mathrm{meta}}$, see \secref{ch4-sec:preliminary_meta}.
The meta-trained backbone model $F(\bm{w}^{\mathrm{meta}})$, the sparse inner step sizes $\bm{\alpha}$, and the meta attention modules will be then deployed on edge devices and used to conduct a memory-efficient few-shot learning.

\subsection{Deployment Optimization}
\label{ch4-sec:others}
To further reduce the memory during few-shot learning, we propose gradient accumulation during backpropagation and replace batch normalization in the backbone with group normalization.

\subsubsection{Gradient Accumulation}
\label{ch4-sec:gradacc}
In standard few-shot learning, all the new samples (\eg $25$ for $5$-way $5$-shot) are fed into the model as one batch.
To reduce the peak memory due to large batch sizes, we conduct few-shot learning with gradient accumulation (GA).

GA is a technique that (\textit{i}) breaks a large batch into smaller partial batches; (\textit{ii}) sequentially forward/backward propagates each partial batches through the model; (\textit{iii}) accumulates the loss gradients of each partial batch and get the final averaged gradients of the full batch. 
Note that GA does not increase computation, which is desired for low-resource platforms with constrained memory and limited parallelism. 
Accordingly, our meta attention module should be also modified. 
Particularly, the input to the softmax is averaged over all samples in the batch (see \figref{ch4-fig:metaattention}), \ie $\bm{\gamma}^{\mathrm{fw}}_{l}$ and $\bm{\gamma}^{\mathrm{bw}}_{l}$ are the batch-averaged scores.
We evaluate the impact of different sample batch sizes in GA in \secref{ch4-sec:experiment_batchsize}.

\subsubsection{Group Normalization}
\label{ch4-sec:norm}
Mainstream backbones in meta learning typically adopt batch normalization layers.
Batch normalization layers compute the statistical information in each batch, which is dependent on the sample batch size.
When using GA with different sample batch sizes, the inaccurate batch statistics can degrade the training performance (see \secref{ch4-sec:experiment_pool}).
As a remedy, we use group normalization \cite{bib:ECCV18:Wu}, which does not rely on batch statistics (\ie independent of the sample batch size).
We also apply meta attention on group normalization layers when updating their weights.
The only difference w.r.t. \texttt{conv} and \texttt{fc} layers is that the stored input tensor (also the one used for the meta attention) is not $\bm{x}_{l-1}$, but its normalized version.

\section{Theoretical Analysis on Memory and Computation}
\label{ch4-sec:analysis}

In this section, we derive the memory requirement and computation workload for inference and training. 
We further analyze the reduced consumption of memory and computation due to \pMeta.

Recall that we focus on a feed forward DNN that consists of $L$ convolutional (\texttt{conv}) layers or fully-connected (\texttt{fc}) layers. 
Note that our analysis focuses on 2D \texttt{conv} layers but can apply to other \texttt{conv} layer types as well. We assume the ReLU activation function for all layers, denoted as $\sigma(\cdot)$. For simplicity, we omit the bias, normalization layers, pooling or strides. 
We use the notation $m(\bm{x})$ to denote the memory demand in words to store tensor $\bm{x}$. The wordlength is denoted as $\mathit{T}$. 

For representing indexed summations we use the Einstein notation. If index variables appear in a term on the right hand side of an equation and are not otherwise defined (free indices), it implies summation of that term over the range of the free indices. If indices of involved tensor elements are out of range, the values of these elements are assumed to be 0.

\subsection{Single Layer}

We start with a single layer and accumulate the memory and computation for networks with several layers afterwards. Assume the input tensor of a layer is $\bm{x}$, the weight tensor is $\bm{w}$, the result after the linear transformation is $\bm{y}$, and the layer output after the non-linear operator is $\bm{z}$ which is also the input to the next layer. 

For convolutional layers, we have $\bm{x} \in \mathbb{R}^{C_I \times H_I \times W_I}$ and elements $x_{cij}$, where $C_I$, $H_I$, and $W_I$ denote the number of input channels, height and width, respectively. In a similar way, we have $\bm{z} \in \mathbb{R}^{C_O \times H_O \times W_O}$ with elements $x_{fij}$ where $C_O$, $H_O$, and $W_O$ denote the number of output channels, height and width, respectively. Moreover, $\bm{w} \in \mathbb{R}^{C_O \times C_I \times S \times S}$ with elements $w_{fcmn}$. Therefore,
\[
    m(\bm{x})= C_I H_I W_I \; , \; \; m(\bm{y})= m(\bm{z}) = C_O H_O W_O \; , \; \; m(\bm{w}) = C_O C_I S^2
\]
For fully connected layers we have $\bm{x} \in \mathbb{R}^{C_I}$, $\bm{y}, \bm{z} \in \mathbb{R}^{C_O}$, and $\bm{w} \in \mathbb{R}^{C_O \times C_I}$ with memory demand
\[
    m(\bm{x})= C_I  \; , \; \; m(\bm{y})= m(\bm{z}) = C_O  \; , \; \; m(\bm{w}) = C_O C_I 
\]

\subsubsection{Fully Connected Layer}
For inference we derive the relations $y_{f} = w_{fc} x_c$ and $z_{f} = \sigma(y_{f})$ for all admissible indices $f \in [1, C_O]$. The necessary dynamic memory has a size of about $m(\bm{x}) + m(\bm{y})$ words and we need about $m(\bm{w})$ FLOPs. 

For training, we suppose that $\frac{\partial \ell}{\partial z_i}$ is already provided from the next layer. We find $\frac{\partial \ell}{\partial y_i} = \sigma'(y_i) \cdot \frac{\partial \ell}{\partial z_i}$ with $\sigma'(y_i) = \begin{cases} 1 & \mbox{if } y_i > 0 \\ 0 & \mbox{if } y_i < 0  \end{cases}$ which leads to $\frac{\partial \ell}{\partial x_i} = w_{ji} \cdot \frac{\partial \ell}{\partial y_j}$. The necessary dynamic memory is about $m(\bm{x}) + m(\bm{y}) \cdot (1 + \frac{1}{\mathit{T}})$ words, where the last term comes from storing $\sigma'(y_i)$ single bits from the forward path. We need about $m(\bm{w})$ FLOPs.

According to the approach described in the paper we are only interested in the partial derivatives $\frac{\partial \ell}{\partial w_{fc}}$ if $\alpha > 0$ for this layer, and if scales $\gamma^{\mathrm{bw}}_{f} > 0$ and $\gamma^{\mathrm{fw}}_{c} > 0$ for indices $f$, $c$. To simplify the notation, let us define the critical ratios 
\begin{gather}
    \mu^{\mathrm{fw}} = \frac{\text{number of nonzero elements of } \gamma^{\mathrm{fw}}_{c}}{C_I} \\
    \mu^{\mathrm{bw}} = \frac{\text{number of nonzero elements of } \gamma^{\mathrm{bw}}_{f}}{C_O}
\end{gather}
which are 1 if all channels are determined to be critical for weight adaptation, and 0 if none of them. 

We find $\gamma^{\mathrm{bw}}_{f} \frac{\partial \ell}{\partial w_{fc}}  \gamma^{\mathrm{fw}}_{c}  =  (\gamma^{\mathrm{bw}}_{f} \frac{\partial \ell}{\partial y_{f}}) \cdot  (\gamma^{\mathrm{fw}}_{c} x_c)$. Therefore, we need $\mu^{\mathrm{fw}} \mu^{\mathrm{bw}} m(\bm{w}) + \mu^{\mathrm{fw}} m(\bm{x})$ words dynamic memory if $\alpha > 0$ where the latter term considers the information needed from the forward path. We require about $\mu^{\mathrm{fw}} \mu^{\mathrm{bw}} m(\bm{w})$ FLOPs if $\alpha > 0$. 

\subsubsection{Convolutional Layer}
The memory analysis for a convolutional layer is very similar, just replacing matrix multiplication by convolution. For inference we find $y_{fij} = w_{fcmn} x_{c, i+m-1, j+n-1}$ and $z_{fij} = \sigma(y_{fij})$ for all admissible indices $f$, $i$, $j$. The necessary dynamic memory has a size of about $\max \{ m(\bm{x}), m(\bm{y}) \}$ words when using memory sharing between input and output tensors. We need about $H_O W_O \cdot m(\bm{w})$ FLOPs. 

For training, we again suppose that $\frac{\partial \ell}{\partial z_{fij}}$ is provided from the next layer. We find $\frac{\partial \ell}{\partial y_{fij}} = \sigma'(y_{fij}) \cdot \frac{\partial \ell}{\partial z_{fij}}$ and get $\frac{\partial \ell}{\partial x_{cij}} = w_{fcmn}  \cdot \frac{\partial \ell}{\partial y_{f, i + m - 1, j + n - 1}}$. The necessary memory is about $\max \{ m(\bm{x}), m(\bm{y}) \} +  \frac{ m(\bm{y})}{\mathit{T}}$ words, where the last term comes from storing $\sigma'(y_{fij})$ single bits from the forward path. We need about $H_I W_I \cdot m(\bm{w})$ multiply and accumulate operations.

For determining the weight gradients we find $\frac{\partial \ell}{\partial w_{fcmn}}  =  \frac{\partial \ell}{\partial y_{fij}} \cdot x_{c, i + m -1, j + n - 1}$. When considering the scales for filtering, we yield $\gamma^{\mathrm{bw}}_{f} \frac{\partial \ell}{\partial w_{fcmn}}  \gamma^{\mathrm{fw}}_{c}  =  (\gamma^{\mathrm{bw}}_{f} \frac{\partial \ell}{\partial y_{fij}}) \cdot  (\gamma^{\mathrm{fw}}_{c} x_{c, i + m -1, j + n - 1})$. As a result, we need $\mu^{\mathrm{fw}} \mu^{\mathrm{bw}} m(\bm{w}) +  \mu^{\mathrm{fw}} m(\bm{x})$ words of dynamic memory if $\alpha > 0$ where the latter term considers the information needed from the forward path. We require about $\mu^{\mathrm{fw}} \mu^{\mathrm{bw}} H_O W_O m(\bm{w})$ FLOPs if $\alpha > 0$. 

Finally, let us determine the required memory and computation to determine the scales $\gamma^{\mathrm{fw}}_{c}$ and $\gamma^{\mathrm{bw}}_{f}$. According to \figref{ch4-fig:metaattention}, we find as an upper bound for the memory $B \cdot ( C_I + C_O)$ and $( C_I H_I W_I + 2 C_I^2 + C_O H_O W_O + 2 C_O^2 )$ FLOPs.

\subsection{All Layers}

The above relations are valid for a single layer. The following relations hold for the overall network. In order to simplify the notation, we consider a network that consists of convolution layers only. Extensions to mixed layers can simply be done.

We suppose $L$ layers with sizes $C_l$, $H_l$, $W_l$ and $S_l$ for the number of output channels, output width, output height and kernel size, respectively. We assume that the step-sizes $\alpha_l$ for some iteration of the training are given. The memory requirement in words is
\[
    m(\bm{x}_l)= C_l H_l W_l \; , \; \; m(\bm{w}_l) = C_l C_{l-1} S_l^2
\]
and the word-length is again denoted as $\mathit{T}$.
We define as $\hat{\alpha}_l = \begin{cases} 1 & \mbox{if } \alpha_l > 0 \\ 0 & \mbox{if } \alpha_l = 0  \end{cases}$ the mask that determines whether the weight adaptation for this layer is necessary or not.

Let us first look at the forward path. The necessary dynamic memory is about $\max_{0 \leq l \leq L} \{ m(\bm{x}_l) \}$ words. The number of FLOPs is $\sum_{1 \leq l \leq L} H_l W_l m(\bm{w}_l)$. 

The backward path needs only to be evaluated until we reach the first layer where we require the computation of the gradients. We define $l_\mathit{min} = \min \{ l \, | \, \hat{\alpha}_{l} = 1 \}$. For the calculation of the partial derivatives of the activations we need dynamic memory of $\max_{l_{\mathit{min}} \leq l \leq L} \{ m(\bm{x}_l) \} + \frac{1}{\mathit{T}} \sum_{l_\mathit{min} \leq l \leq L} m(\bm{x}_l)$ words where the last term is due to storing the derivatives of the ReLU operations. We need about $\sum_{l_\mathit{min} + 1 \leq l \leq L} H_{l-1} W_{l-1} m(\bm{w}_l) $ FLOPs.

The second contribution of the backward path is for computing the weight gradients. The memory and computation demand of the scales will be neglected as they are much smaller than other contributions. We can determine the necessary dynamic memory as $\max_{1 \leq l \leq L} \{ \hat{\alpha}_l \mu^{\mathrm{fw}}_l \mu^{\mathrm{bw}}_l m(\bm{w}_l)\} + \sum_{1 \leq l \leq L} \hat{\alpha}_l \mu^{\mathrm{fw}}_l m(\bm{x}_{l-1})$, and we need $\sum_{1 \leq l \leq L}  \hat{\alpha}_l \mu^{\mathrm{fw}}_l \mu^{\mathrm{bw}}_l H_l W_l m(\bm{w}_l)$ FLOPs. 

Considering all necessary dynamic memory with memory reuse for a gradient-based training step, we get an estimation of memory in words
\begin{multline}
    \max_{0 \leq l \leq L} \{ m(\bm{x}_l) \}  + \sum_{1 \leq l \leq L} \hat{\alpha}_l m(\bm{w}_l) + \sum_{1 \leq l \leq L} \hat{\alpha}_l \mu^{\mathrm{fw}}_l m(\bm{x}_{l-1}) + \frac{1}{\mathit{T}} \sum_{l_\mathit{min} \leq l \leq L} m(\bm{x}_l)
    \label{ch4-eq:memoryAll}
\end{multline}
if we accumulate the weight gradients before doing an SGD step and re-use some memory during back-propagation. More elaborate memory re-use can be used to slightly sharpen the bounds without a major improvement.
For conventional training, each parameter is in 32-bit floating point format, \ie one word corresponds to 32-bit. As discussed in \secref{ch4-sec:preliminary_memory}, we only consider max-pooling and ReLU-styled activation as the $\sigma$ function. The wordlength $T$ in \equref{ch4-eq:memoryAll} is set as 16 for max-pooling , and 32 for ReLU-styled activation.
One can see that under the typical assumptions for network parameters, the above memory requirement in words is dominated by 
\begin{equation}
    \sum_{1 \leq l \leq L} \hat{\alpha}_l \mu^{\mathrm{fw}}_l m(\bm{x}_{l-1})
    \label{ch4-eq:memorySimple}
\end{equation}
The necessary storage between the forward and backward path is reduced proportionally to $\mu^{\mathrm{fw}}_l$ with factor $m(\bm{x}_{l-1})$. 

Finally, the amount of FLOPs can be estimated as 
\begin{equation}
    \sum_{1 \leq l \leq L} H_l W_l m(\bm{w}_l) ( 1 + \hat{\alpha}_l \mu^{\mathrm{fw}}_l \mu^{\mathrm{bw}}_l) + \sum_{l_\mathit{min} \leq l \leq L} H_{l-1} W_{l-1} m(\bm{w}_l)
    \label{ch4-eq:MAC}
\end{equation}
while neglecting lower order terms. Here it is important to note that all terms are of similar order. The approach used in the paper does not determine a trade-off between computation and memory, but reduces the amount of FLOPs. This reduction is less than the reduction in required dynamic memory.

\section{Experiments}
\label{ch4-sec:experiment}

This section presents the evaluations of \pMeta on standard few-shot image classification and reinforcement learning benchmarks.

\subsection{General Experimental Settings}
\label{ch4-sec:experiment_settings}

\fakeparagraph{Compared Methods}
We test the meta learning algorithms below. 
\begin{itemize}
    \item MAML \cite{bib:ICML17:Finn}: the original model-agnostic meta learning.
    \item ANIL \cite{bib:ICLR20:Raghu}: update the last layer only in few-shot learning.
    \item BOIL \cite{bib:ICLR21:Oh}: update the body except the last layer.
    \item MAML++ \cite{bib:ICLR19:Antreas}: learn a per-step per-layer step sizes $\bm{\alpha}$.
    \item \pMeta (\ref{ch4-sec:LR}): can be regarded as a sparse version of MAML++, since it learns a sparse $\bm{\alpha}$ with our methods in \secref{ch4-sec:LR}. 
    \item \pMeta (\ref{ch4-sec:LR}+\ref{ch4-sec:attention}): the full version of our methods which include the meta attention modules in \secref{ch4-sec:attention}. 
\end{itemize}
For fair comparison, all the algorithms are re-implemented with the deployment optimization in \secref{ch4-sec:others}. 

\fakeparagraph{Implementation}
The experiments are conducted with tools provided by TorchMeta \cite{bib:torchmeta, bib:torchmetarl}.
Particularly, the backbone is meta-trained with full sample batch size (\eg 25 for 5-way 5-shot) on meta training dataset.
After each meta training epoch, the model is tested (\ie few-shot learned) on meta validation dataset.
The model with the highest validation performance is used to report the final few-shot learning results on meta test dataset.
We follow the same process as TorchMeta \cite{bib:torchmeta, bib:torchmetarl} to build the dataset. 
During few-shot learning, we adopt a sample batch size of 1 to verify the model performance under the most strict memory constraints.

In \pMeta, meta attention is applied to all \texttt{conv}, \texttt{fc}, and group normalization layers, except the last output layer, because (\textit{i}) we find modifying the last layer's gradients may decrease accuracy; (\textit{ii}) the final output is often rather small in size, resulting in little memory saving even if imposing sparsity on the last layer.  
Without further notations, we set $\rho=0.3$ in forward attention, and $\rho=0$ in backward attention across all layers, as the sparsity of $\bm{\gamma}_l^{\mathrm{bw}}$ almost has no effect on the memory saving.

\fakeparagraph{Metrics}
We compare the peak memory and FLOPs of different algorithms.
Note that the reported peak memory and FLOPs for \pMeta also include the consumption from meta attention, although they are rather small related to the backward propagation.

\subsection{Benchmarking Details}
\label{ch4-sec:experiment_benchmark}

\subsubsection{4Conv/ResNet12 on MiniImageNet/TieredImageNet/CUB}

MiniImageNet \cite{bib:NIPS16:Vinyals} is an image classification dataset from ImageNet dataset \cite{bib:ILSVRC15}, which consists of $84\times84$ color images in 100 classes.
Following the splitting in \cite{bib:NIPS16:Vinyals}, 64 classes are used for meta-training, 16 classes are used for meta-validation, and the rest 20 classes are used as unseen tasks for meta-testing (\ie few-shot learning). 
We train on 1 Nvidia V100 GPU.
We experiment in both 5-way 1-shot and 5-way 5-shot settings.
The task batch size is set to 4 in general, except for ResNet12 under 5-way 5-shot settings where we use 2.

TieredImageNet \cite{bib:ICLR18:Ren} is an image classification dataset from ImageNet dataset \cite{bib:ILSVRC15}, which consists of $84\times84$ color images in 34 categories (608 classes).
Following the splitting in \cite{bib:ICLR18:Ren}, 20 categories (351 classes) are used for meta-training, 6 categories (97 classes) are used for meta-validation, and the rest 8 categories (160 classes) are used as unseen tasks for meta-testing (\ie few-shot learning). 

CUB \cite{bib:CUB} is an image classification dataset, which consists of $84\times84$ color images of bird species in 200 classes.
Following the splitting in \cite{bib:torchmeta}, 100 classes are used for meta-training, 50 classes are used for meta-validation, and the rest 50 classes are used as unseen tasks for meta-testing (\ie few-shot learning). 
 
\fakeparagraph{4Conv} 
The ``4Conv'' \cite{bib:ICML17:Finn} backbone has 4 \texttt{conv} blocks.
Each \texttt{conv} block includes a \texttt{conv} layer with 32 channels, a group normalization layer (as discussed in \secref{ch4-sec:norm}), a ReLU activation, and a max-pooling with stride 2.

\fakeparagraph{ResNet12}
The ``ResNet12'' \cite{bib:NIPS18:Oreshkin} backbone has 4 residual blocks with $\{64,128,256,512\}$ channels in each block respectively.
Each residual block consists of 3 \texttt{conv} blocks followed by max-pooling with stride 2. 
Each \texttt{conv} layer is followed by a group normalization layer and a LeakyReLU activation with slope 0.1.
Refer to \cite{bib:NIPS18:Oreshkin} for more detailed structure.

\subsubsection{MLP on MuJoCo}

MuJoCo is an advanced simulator for multi-body dynamics with contact. 
For all experiments, we mainly adopt the experimental setup in \cite{bib:ICML17:Finn,bib:torchmetarl}.
We run the MuJoCo environment as well as the policy model training on 8 CPUs.

\fakeparagraph{MLP} 
We use a neural network as the policy model. The neural network is a MLP with two hidden \texttt{fc} layers of size 100 and the ReLU activation.

\subsection{Experiments on Image Classification}
\label{ch4-sec:image}

\begin{table}[t!]
    \centering
    \caption[5-Way 1-shot few-shot image classification results on 4Conv and ResNet12.]{5-Way 1-shot few-shot image classification results on 4Conv and ResNet12. All methods are meta-trained on MiniImageNet, and are few-shot learned on the reported datasets: MiniImageNet, TieredImageNet, and CUB (denoted by Mini, Tiered, and CUB in the table). The total computation (\# GFLOPs) and the peak memory (MB) during few-shot learning are reported based on the theoretical analysis in \secref{ch4-sec:analysis}. }
    \label{ch4-tab:image_1shot}
    \footnotesize
    \begin{tabular}{llccccc}
        \toprule                                               
        \multicolumn{2}{l}{\textbf{5-Way 1-Shot}}                                           & \multicolumn{3}{c}{Accuracy}                          & GFLOPs    & Memory        \\
                                                                                            \cmidrule(lr){3-5}                                      \cmidrule(lr){6-6} \cmidrule(lr){7-7}
        \multicolumn{2}{l}{Benchmarks}                                                      & Mini   & Tiered & CUB                                 & Mini      & Mini          \\ \midrule
        \multirow{6}{*}{4Conv}      & MAML \cite{bib:ICML17:Finn}                           & 46.2\% & 51.4\% & 39.7\%                              & 0.39      & 2.06          \\
                                    & ANIL \cite{bib:ICLR20:Raghu}                          & 46.4\% & 51.5\% & 39.2\%                              & 0.14      & 0.92          \\
                                    & BOIL \cite{bib:ICLR21:Oh}                             & 44.7\% & 51.3\% & 42.3\%                              & 0.39      & 2.05          \\
                                    & MAML++ \cite{bib:ICLR19:Antreas}                      & 48.2\% & 53.2\% & \textbf{43.2\%}                     & 0.39      & 2.06          \\
                                    & \pMeta (\ref{ch4-sec:LR})                             & 47.1\% & 52.3\% & 41.8\%                              & 0.16      & 1.00          \\
                                    & \pMeta (\ref{ch4-sec:LR}+\ref{ch4-sec:attention})     & \textbf{48.8\%} & \textbf{53.9\%} & 42.6\%            & 0.15      & 0.99          \\ \midrule
        \multirow{6}{*}{ResNet12}   & MAML \cite{bib:ICML17:Finn}                           & 51.7\% & 57.4\% & 41.3\%                              & 37.08     & 54.69         \\
                                    & ANIL \cite{bib:ICLR20:Raghu}                          & 50.3\% & 56.7\% & 40.6\%                              & 12.42     & 3.62          \\
                                    & BOIL \cite{bib:ICLR21:Oh}                             & 42.7\% & 47.7\% & 44.2\%                              & 37.08     & 54.69         \\
                                    & MAML++ \cite{bib:ICLR19:Antreas}                      & 53.1\% & 58.6\% & 45.1\%                              & 37.08     & 54.69         \\
                                    & \pMeta (\ref{ch4-sec:LR})                             & 51.8\% & 58.3\% & 40.6\%                              & 25.84     & 17.66         \\
                                    & \pMeta (\ref{ch4-sec:LR}+\ref{ch4-sec:attention})     & \textbf{53.6\%} & \textbf{59.4\%} & \textbf{45.4\%}   & 24.02     & 16.01         \\
    \bottomrule
    \end{tabular}
\end{table}

\begin{table}[t!]
    \centering
    \caption[5-Way 5-shot few-shot image classification results on 4Conv and ResNet12.]{5-Way 5-shot few-shot image classification results on 4Conv and ResNet12. All methods are meta-trained on MiniImageNet, and are few-shot learned on the reported datasets: MiniImageNet, TieredImageNet, and CUB (denoted by Mini, Tiered, and CUB in the table). The total computation (\# GFLOPs) and the peak memory (MB) during few-shot learning are reported based on the theoretical analysis in \secref{ch4-sec:analysis}. }
    \label{ch4-tab:image_5shot}
    \footnotesize
    \begin{tabular}{llccccc}
        \toprule
        \multicolumn{2}{l}{\textbf{5-Way 5-Shot}}                                           & \multicolumn{3}{c}{Accuracy}                          & GFLOPs    & Memory        \\
                                                                                            \cmidrule(lr){3-5}                                      \cmidrule(lr){6-6} \cmidrule(lr){7-7}
        \multicolumn{2}{l}{Benchmarks}                                                      & Mini   & Tiered & CUB                                 & Mini      & Mini          \\ \midrule
        \multirow{6}{*}{4Conv}      & MAML \cite{bib:ICML17:Finn}                           & 61.4\% & 66.5\% & 55.6\%                              & 1.96      & 2.06          \\
                                    & ANIL \cite{bib:ICLR20:Raghu}                          & 60.6\% & 64.5\% & 54.2\%                              & 0.72      & 0.92          \\
                                    & BOIL \cite{bib:ICLR21:Oh}                             & 60.5\% & 65.3\% & 58.3\%                              & 1.96      & 2.05          \\
                                    & MAML++ \cite{bib:ICLR19:Antreas}                      & 63.7\% & \textbf{68.5\%} & 59.1\%                     & 1.96      & 2.06          \\
                                    & \pMeta (\ref{ch4-sec:LR})                             & 62.9\% & 68.3\% & 59.3\%                              & 1.34      & 1.09          \\
                                    & \pMeta (\ref{ch4-sec:LR}+\ref{ch4-sec:attention})     & \textbf{65.0\%} & \textbf{68.5\%} & \textbf{60.2\%}   & 1.11      & 1.04          \\
        \midrule
        \multirow{6}{*}{ResNet12}   & MAML \cite{bib:ICML17:Finn}                           & 64.7\% & 69.6\% & 53.8\%                              & 185.42    & 54.69         \\
                                    & ANIL \cite{bib:ICLR20:Raghu}                          & 62.3\% & 68.7\% & 54.0\%                              & 62.08     & 3.62          \\
                                    & BOIL \cite{bib:ICLR21:Oh}                             & 53.6\% & 59.8\% & 53.7\%                              & 185.42    & 54.69         \\
                                    & MAML++ \cite{bib:ICLR19:Antreas}                      & 68.6\% & \textbf{73.4\%} & 63.9\%                     & 185.42    & 54.69         \\
                                    & \pMeta (\ref{ch4-sec:LR})                             & 68.8\% & 72.6\% & 65.9\%                              & 124.15    & 18.95         \\
                                    & \pMeta (\ref{ch4-sec:LR}+\ref{ch4-sec:attention})     & \textbf{69.7\%} & 73.3\% & \textbf{66.6\%}            & 116.79    & 17.17         \\
        \bottomrule
    \end{tabular}
\end{table}

\fakeparagraph{Settings}
We test on standard few-shot image classification tasks (both in-domain and cross-domain). 
We adopt two common backbones ``4Conv'' \cite{bib:ICML17:Finn} and ``ResNet12'' \cite{bib:NIPS18:Oreshkin}.
The batch normalization layers are replaced with group normalization layers, as discussed in \secref{ch4-sec:norm}.
We train the model on MiniImageNet \cite{bib:NIPS16:Vinyals} (both meta training and meta validation dataset) with 100 meta epochs.
In each meta epoch, 1000 random tasks are drawn from the task distribution. 

The model is updated with 5 gradient steps (\ie $K=5$) in both inner loop of meta-training and few-shot learning.  
We use Adam optimizer with cosine learning rate scheduling as \cite{bib:ICLR19:Antreas} for all outer loop updating.
The (initial) inner step size $\bm{\alpha}$ is set to 0.01.
The meta-trained model is then tested on three datasets MiniImageNet \cite{bib:NIPS16:Vinyals}, TieredImageNet \cite{bib:ICLR18:Ren}, and CUB \cite{bib:CUB} to verify both \textit{in-domain} and \textit{cross-domain} performance.

\fakeparagraph{Results}
\tabref{ch4-tab:image_1shot} and \tabref{ch4-tab:image_5shot} show the accuracy of few-shot learned models for 5-way 1-shot and 5-way 5-shot scenarios respectively.
The reported accuracy is averaged over $5000$ new unseen tasks randomly drawn from the meta test dataset
We also report the average number of GFLOPs and the average peak memory per task according to \secref{ch4-sec:analysis}.
Clearly, \pMeta almost always yields the highest accuracy in all settings. 
Note that the comparison between ``\pMeta (\ref{ch4-sec:LR})'' and ``MAML++'' can be considered as the ablation studies on learning sparse layer-wise inner step sizes proposed in \secref{ch4-sec:LR}. 
Thanks to the imposed sparsity on $\bm{\alpha}$, ``\pMeta (\ref{ch4-sec:LR})'' significantly reduces the peak memory ($2.5\times$ saving on average and up to $3.1\times$) and the computation burden ($1.7\times$ saving on average and up to $2.4\times$) over ``MAML++''. 
Note that the imposed sparsity also cause a moderate accuracy drop.
However, with the meta attention, ``\pMeta (\ref{ch4-sec:LR}+\ref{ch4-sec:attention})'' not only notably improves the accuracy but also further reduces the peak memory ($2.7\times$ saving on average and up to $3.4\times$) and computation ($1.9\times$ saving on average and up to $2.6\times$) over ``MAML++''.  
Note that ``ANIL'' only updates the last layer, and therefore consumes less memory but also yields a substantially lower accuracy.

\subsection{Experiments on Reinforcement Learning}
\label{ch4-sec:reinforcement}

\begin{table}[t!]
    \centering
    \caption[Few-shot reinforcement learning results on 2D navigation and robot locomotion tasks.]{Few-shot reinforcement learning results on 2D navigation and robot locomotion tasks (larger return means better). A MLP with two hidden layers of size 100 is used as the policy model. The total computation (\# GFLOPs) and the peak memory (MB) during few-shot learning are reported based on the theoretical analysis in \secref{ch4-sec:analysis}.}
    \label{ch4-tab:reinforce}
    \footnotesize
    \begin{tabular}{lcccccc}
        \toprule
        \textbf{20 Rollouts}                                    & \multicolumn{3}{c}{Half-Cheetah Velocity}    & \multicolumn{3}{c}{2D Navigation}             \\
                                                                \cmidrule(lr){2-4}                              \cmidrule(lr){5-7}
        Benchmarks                                              & Return            & GFLOPs    & Memory        & Return            & GFLOPs    & Memory        \\ \midrule
        MAML \cite{bib:ICML17:Finn}                             & -82.2             & 0.15      & 0.24          & -13.3             & 0.12      & 0.21          \\
        ANIL \cite{bib:ICLR20:Raghu}                            & -78.8             & 0.06      & 0.09          & -13.8             & 0.04      & 0.08          \\
        BOIL \cite{bib:ICLR21:Oh}                               & -76.4             & 0.15      & 0.23          & -12.4             & 0.12      & 0.21          \\
        MAML++ \cite{bib:ICLR19:Antreas}                        & -69.6             & 0.15      & 0.24          & -17.6             & 0.12      & 0.21          \\
        p-Meta (\ref{ch4-sec:LR})                               & -65.5             & 0.11      & 0.12          & \textbf{-11.2}    & 0.09      & 0.09          \\
        p-Meta (\ref{ch4-sec:LR}+\ref{ch4-sec:attention})       & \textbf{-64.0}    & 0.11      & 0.11          & -11.8             & 0.09      & 0.09          \\
        \bottomrule
    \end{tabular}
\end{table}

\fakeparagraph{Settings}
To show the versatility of \pMeta, we experiment with two few-shot reinforcement learning problems: 2D navigation and Half-Cheetah robot locomotion simulated with MuJoCo library \cite{bib:IROS12:Todorov}.
We adopt vanilla policy gradient \cite{bib:ML1992:Williams} for the inner loop and trust-region policy optimization \cite{bib:ICML15:Schulman} for the outer loop.
During the inner loop as well as few-shot learning, the agents rollout 20 episodes with a horizon size of 200 and are updated for one gradient step.
The policy model is trained for 500 meta epochs, and the model with the best average return during training is used for evaluation.
The task batch size is set to 20 for 2D navigation, and 40 for robot locomotion.
The (initial) inner step size $\bm{\alpha}$ is set to 0.1.
Each episode is considered as a data sample, and thus the gradients are accumulated 20 times for a gradient step.

\fakeparagraph{Results}
\tabref{ch4-tab:reinforce} lists the average return averaged over 400 new unseen tasks randomly drawn from simulated environments.
We also report the average number of GFLOPs and the average peak memory per task according to \secref{ch4-sec:analysis}.
Note that the reported computation and peak memory do not include the estimations of the advantage \cite{bib:ICML16:Duan}, as they are relatively small and could be done during the rollout. 
\pMeta consumes a rather small amount of memory and computation, while often obtains the highest return in comparison to others. 
Therefore, \pMeta can fast adapt its policy to reach the new goal in the environment with less on-device resource demand. 

\subsection{Ablation Studies on Meta Attention}
\label{ch4-sec:ablation}
We study the effectiveness of our meta attention via the following two ablation studies.
The experiments are conducted on ``4Conv'' in both 5-way 1-shot and 5-way 5-shot as \secref{ch4-sec:image}.

\fakeparagraph{Sparsity in Meta Attention}
\tabref{ch4-tab:ablation} shows the few-shot classification accuracy with different sparsity settings in the meta attention. 

We first do not impose sparsity on $\bm{\gamma}_l^{\mathrm{fw}}$ and $\bm{\gamma}_l^{\mathrm{bw}}$ (\ie set both $\rho$'s as 0), and adopt forward attention and backward attention separately. 
In comparison to no meta attention at all, enabling either forward or backward attention improves accuracy. 
With both attention enabled, the model achieves the best performance.

We then test the effects when imposing sparsity on $\bm{\gamma}_l^{\mathrm{fw}}$ or $\bm{\gamma}_l^{\mathrm{bw}}$ (\ie set $\rho>0$). 
We use the same $\rho$ for all layers.
We observe a sparse $\bm{\gamma}_l^{\mathrm{bw}}$ often cause a larger accuracy drop than a sparse $\bm{\gamma}_l^{\mathrm{fw}}$.
Since a sparse $\bm{\gamma}_l^{\mathrm{bw}}$ does not bring substantial memory or computation saving (see \secref{ch4-sec:analysis}), we use $\rho=0$ for backward attention and $\rho=0.3$ for forward attention.
Note that $\rho=1$ means that the resulted $\bm{\gamma}_l$ are all zeros and the layers are not updated, which can be realized by imposing sparsity on layerwise learning rate in \secref{ch4-sec:LR}. 

Attention scores $\bm{\gamma}_l$ introduce a dynamic channel-wise learning rate according to the new data samples. 
We further compare meta attention with a static channel-wise learning rate, where the channel-wise learning rate $\bm{\alpha}^{\mathrm{Ch}}$ is meta-trained as the layer-wise inner step sizes in \secref{ch4-sec:LR} while without imposing sparsity.
By comparing ``$\bm{\alpha}^{\mathrm{Ch}}$'' with ``0, 0'' in \tabref{ch4-tab:ablation}, we conclude that the dynamic channel-wise learning rate yields a significantly higher accuracy.

\begin{table}[t]
    \centering
    \caption[Ablation results of meta attention on 4Conv.]{Ablation results of meta attention on 4Conv.}
    \label{ch4-tab:ablation}
    \small
    \begin{tabular}{cccccccc}
        \toprule
        \multicolumn{2}{c}{$\rho$}                          & \multicolumn{3}{c}{5-way 1-shot}                      & \multicolumn{3}{c}{5-way 5-shot}              \\
        \cmidrule(lr){1-2}                                  \cmidrule(lr){3-5}                                      \cmidrule(lr){6-8}
        fw      & bw                                        & Mini   & Tiered & CUB                                 & Mini   & Tiered & CUB                         \\
        \midrule
        x       & x                                         & 47.1\% & 52.3\% & 41.8\%                              & 62.9\% & 68.3\% & 59.3\%                      \\
        0       & x                                         & 48.1\% & 53.2\% & 41.7\%                              & 64.1\% & 68.4\% & 59.0\%                      \\
        x       & 0                                         & 47.8\% & 53.1\% & 40.9\%                              & 63.9\% & 68.5\% & 60.0\%                      \\
        0       & 0                                         & \textbf{49.0\%} & \textbf{54.2\%} & \textbf{43.1\%}   & 64.5\% & \textbf{69.2\%} & \textbf{60.2\%}    \\
        0       & 0.3                                       & 48.5\% & 53.4\% & 42.2\%                              & 64.7\% & 68.2\% & 59.3\%                      \\
        0.3     & 0                                         & 48.8\% & 53.9\% & 42.6\%                              & \textbf{65.0\%} & 68.5\% & \textbf{60.2\%}    \\
        0.3     & 0.3                                       & 48.7\% & 53.7\% & 42.3\%                              & 64.5\% & 68.3\% & 59.5\%                      \\
        0.5     & 0.5                                       & 48.2\% & 53.4\% & 42.7\%                              & 64.8\% & 68.1\% & 59.1\%                      \\
        \multicolumn{2}{c}{$\bm{\alpha}^{\mathrm{Ch}}$}     & 47.8\% & 52.8\% & 41.0\%                              & 63.6\% & 68.1\% & 58.1\%                      \\
    \bottomrule
    \end{tabular}
    \begin{tablenotes}
        \footnotesize
        \item
        x: no forward/backward meta attention, \ie $\bm{\gamma}_l^{\mathrm{fw}}=1$ or $\bm{\gamma}_l^{\mathrm{bw}}=1$.
        \item
        $\bm{\alpha}^{\mathrm{Ch}}$: introduce an input- and output-channel inner step sizes $\bm{\alpha}^{\mathrm{Ch}}$ per layer. We use $\bm{\alpha}\cdot\bm{\alpha}^{\mathrm{Ch}}$ as inner step sizes. $\bm{\alpha}^{\mathrm{Ch}}$ is meta-trained as $\bm{\alpha}$ while without sparsity.
    \end{tablenotes}
\end{table}

\fakeparagraph{Layer-wise Updating Ratios}
To study the resulted updating ratios across layers, \ie the layer-wise sparsity of weight gradients, we randomly select 100 new tasks and plot the layer-wise updating ratios, see \figref{ch4-fig:ratio}. 
The ``4Conv'' backbone has 9 layers ($L=9$), \ie 8 alternates of \texttt{conv} and group normalization layers, and an \texttt{fc} output layer.
As mentioned in \secref{ch4-sec:experiment_settings}, we do not apply meta attention to the output layer, \ie $\bm{\gamma}_9=1$.
The used backbone is updated with 5 gradient steps ($K=5$). 
We use $\rho=0.3$ for forward attention, and $\rho=0$ for backward. 
Note that \algoref{ch4-alg:clip} adaptively determines the sparsity of $\bm{\gamma}_l$, which also means different samples may result in different updating ratios even with the same $\rho$ (see \figref{ch4-fig:ratio}). 
The size of $\bm{x}_{l-1}$ often decreases along the layers in current DNNs.
As expected, the latter layers are preferred to be updated more, since they need a smaller amount of memory for updating.
Interestingly, even if with a small $\rho$($=0.3$), the ratio of updated weights is rather small, \eg smaller than $0.2$ in step 3 of 5-way 5-shot.
It implies that the outputs of softmax have a large discrepancy, \ie only a few channels are adaptation-critical for each sample, which in turn verifies the effectiveness of our meta attention mechanism. 

We also randomly pair data samples and compute the cosine similarity between their attention scores $\bm{\gamma}_l$.  
We plot the cosine similarity of step 1 in \figref{ch4-fig:similarity}. 
The results show that there may exist a considerable variation on the adaptation-critical weights selected by different samples, which is consistent with our observation in \tabref{ch4-tab:ablation}, \ie dynamic learning rate outperforms the static one.

\begin{figure}[t]
    \centering
    \includegraphics[width=0.99\textwidth]{./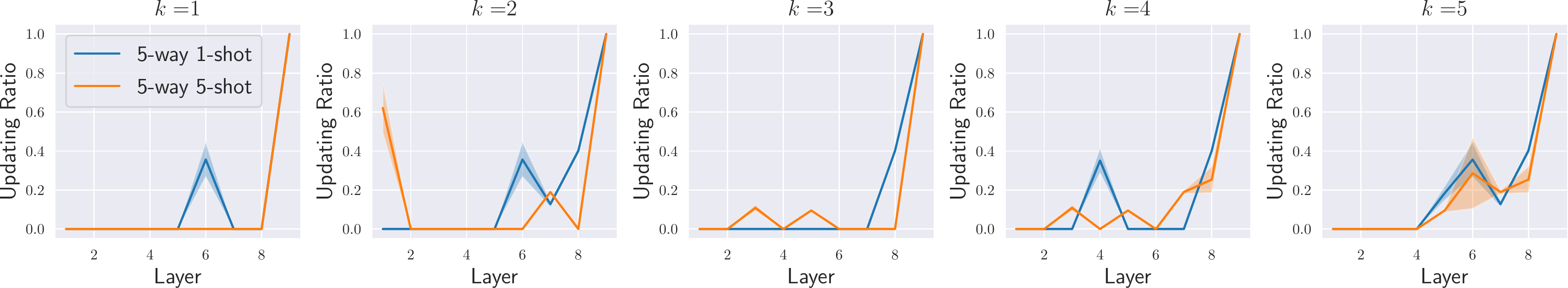} 
    \caption[Layer-wise updating ratios in each updating step.]{Layer-wise updating ratios (mean $\pm$ standard deviation) in each updating step. Note that the ratio of updated weights is determined by both static layer-wise inner step sizes $\alpha_{1:L}^{1:K}$ and the dynamic meta attention scores $\bm{\gamma}_{1:L}$. The layer with an updating ratio of 0 means its $\alpha=0$.}
    \label{ch4-fig:ratio}
\end{figure}

\begin{figure}[t]
    \centering
    \includegraphics[width=0.4\textwidth]{./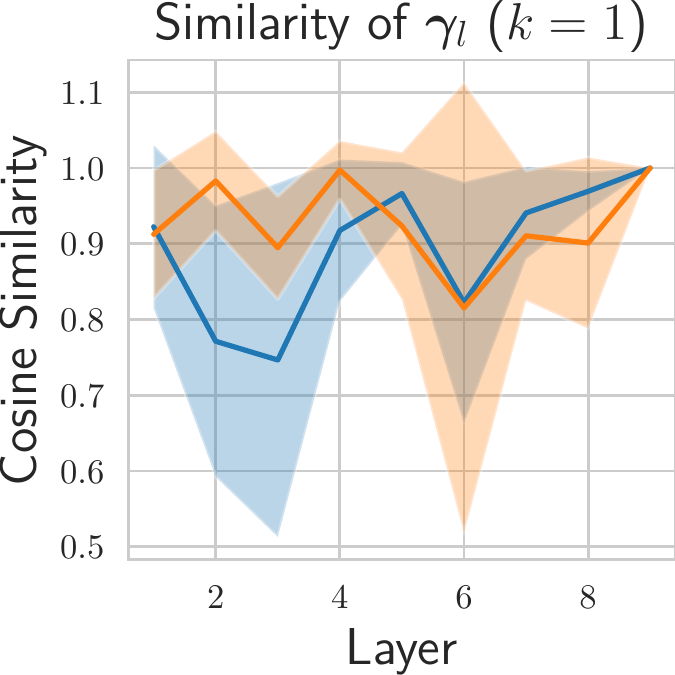} 
    \caption[Cosine similarity of $\bm{\gamma}_{1:L}$ between random pair of data samples.]{Cosine similarity (mean $\pm$ standard deviation) of $\bm{\gamma}_{1:L}$ between random pair of data samples. The results are reported in step 1, because all samples are fed into the same initial model in step 1. }
    \label{ch4-fig:similarity}
\end{figure}

\begin{table}[t]
    \centering
    \small
    \caption[Ablation results of sparse $\bm{x}_{l-1}$ and sparse $\bm{g}(\bm{y}_l)$.]{Ablation results of sparse $\bm{x}_{l-1}$ and sparse $\bm{g}(\bm{y}_l)$.}
    \label{ch4-tab:sparse}
    \begin{tabular}{cccccccc}
        \toprule
        \multicolumn{2}{c}{$\rho=0.3$}                  & \multicolumn{3}{c}{5-way 1-shot}              & \multicolumn{3}{c}{5-way 5-shot}              \\
        \cmidrule(lr){1-2}                              \cmidrule(lr){3-5}                              \cmidrule(lr){6-8}
        fw                  & bw                        & Mini   & Tiered & CUB                         & Mini   & Tiered & CUB                         \\
        \midrule
        x                   & x                         & 47.1\% & 52.3\% & 41.8\%                      & 62.9\% & 68.3\% & 59.3\%                      \\
        $\bm{g}(\bm{w}_l)$  & x                         & 48.2\% & 53.6\% & 41.2\%                      & 63.6\% & 69.0\% & 59.0\%                      \\
        $\bm{x}_{l-1}$      & x                         & 37.4\% & 37.9\% & 35.4\%                      & 47.9\% & 49.3\% & 42.5\%                      \\
        x                   & $\bm{g}(\bm{w}_l)$        & 48.0\% & 53.0\% & 42.6\%                      & 64.0\% & 67.8\% & 59.9\%                      \\
        x                   & $\bm{g}(\bm{y}_{l})$      & 22.8\% & 21.1\% & 20.6\%                      & 20.7\% & 21.0\% & 20.4\%                      \\
    \bottomrule
    \end{tabular}
    \begin{tablenotes}
        \footnotesize
        \item
        x: no forward/backward (sparse) meta attention, \ie $\bm{\gamma}_l^{\mathrm{fw}}=1$ or $\bm{\gamma}_l^{\mathrm{bw}}=1$.
    \end{tablenotes}
\end{table}

\fakeparagraph{Sparse \textit{x} and Sparse \textit{g}(\textit{y})}
Our meta attention modules take $\bm{x}_{l-1}$ and $\bm{g}(\bm{y}_l)$ as inputs, and output attention scores which are used to create sparse $\bm{g}(\bm{w}_l)$.
However, applying the resulted sparse attention scores on $\bm{x}_{l-1}$ and $\bm{g}(\bm{y}_l)$ can also bring memory and computation benefits, as discussed in \secref{ch4-sec:overview}.
We conduct the ablations when multiplying attention scores $\bm{\gamma}_l^{\mathrm{fw}}$ and $\bm{\gamma}_l^{\mathrm{bw}}$ on $\bm{g}(\bm{w}_l)$ (also the one used in the main text), or on $\bm{x}_{l-1}$ and $\bm{g}(\bm{y}_{l})$ respectively.
The results in \tabref{ch4-tab:sparse} show that a channel-wise sparse $\bm{x}_{l-1}$ hugely degrades the performance, in comparison to only imposing sparsity on $\bm{g}(\bm{w}_l)$ while using a dense $\bm{x}_{l-1}$ in the forward pass.  
In addition, directly adopting a sparse $\bm{g}(\bm{y}_{l})$ in backpropagation may even cause non-convergence in few-shot learning. 
We think this is due to the fact that the error accumulates along the propagation when imposing sparsity on $\bm{x}_{l-1}$ or $\bm{g}(\bm{y}_{l})$.

\begin{table}[t]
    \centering
    \small
    \caption[Comparison between different pooling and normalization layers.]{Comparison between different pooling and normalization layers.}
    \label{ch4-tab:pool}
    \begin{tabular}{ccccc}
        \toprule
        \multicolumn{2}{c}{4Conv}                   & \multicolumn{3}{c}{5-way 1-shot}      \\
        \cmidrule(lr){1-2}                          \cmidrule(lr){3-5}                              
        Pooling             & Normalization         & Mini   & Tiered & CUB                 \\
        \midrule
        Average-pooling     & Batch normalization   & 25.3\% & 27.2\% & 26.1\%              \\
        Average-pooling     & Group normalization   & 45.8\% & 50.3\% & 40.2\%              \\
        Max-pooling         & Batch normalization   & 27.6\% & 28.9\% & 26.5\%              \\
        Max-pooling         & Group normalization   & 46.2\% & 51.4\% & 39.9\%              \\
    \bottomrule
    \end{tabular}
\end{table}

\subsection{Ablation Studies on Pooling \& Normalization Layers}
\label{ch4-sec:experiment_pool}

In this section, we test the backbone network with different types of pooling and normalization. 
Without further notations in the following experiments, we meta-train our ``4Conv'' backbone on MiniImageNet with full batch sizes, and conduct few-shot learning with gradient accumulation with a batch size of 1, as in \secref{ch4-sec:experiment_settings}.
Here, we report the results with the original ``MAML'' method \cite{bib:ICML17:Finn} in \tabref{ch4-tab:pool}. 
Clearly, the discrepancy of batch statistics between meta-training phase and few-shot learning phase causes a large accuracy loss in batch normalization layers.
Batch normalization works only if few-shot learning uses full batch sizes, \ie without gradient accumulation, which however does not fit in our memory-constrained scenarios (see \secref{ch4-sec:gradacc}).
In addition, max-pooling performs better than average-pooling.
We thus use group normalization and max-pooling in our backbone model, see \secref{ch4-sec:experiment_settings}.

\begin{table}[t]
    \centering
    \small
    \caption[Ablation results of sample batch sizes.]{Ablation results of sample batch sizes.}
    \label{ch4-tab:batchsize}
    \begin{tabular}{lcccccc}
        \toprule
                                    & \multicolumn{3}{c}{5-way 1-shot}              & \multicolumn{3}{c}{5-way 5-shot}              \\
                                    \cmidrule(lr){2-4}                              \cmidrule(lr){5-7}
        Batch Size                  & 1      & 2      & 5                           & 1      & 5      & 25                          \\
        \midrule
        Mini                        & 48.8\% & 48.7\% & 48.3\%                      & 65.0\% & 65.1\% & 64.7\%                      \\
        Tiered                      & 53.9\% & 53.6\% & 54.3\%                      & 68.5\% & 68.9\% & 68.1\%                      \\
        CUB                         & 42.6\% & 42.1\% & 42.4\%                      & 60.2\% & 59.5\% & 60.6\%                      \\
    \bottomrule
    \end{tabular}
\end{table}

\subsection{Ablation Studies on Sample Batch Size}
\label{ch4-sec:experiment_batchsize}

In this section, we show the effects brought from different sample batch sizes.
As the setting mentioned in \secref{ch4-sec:experiment_settings}, the full batch sizes is adopted in meta-training phase with our \pMeta.
During the few-shot learning phase, gradient accumulation is applied to fit different on-device memory constraints. 
We report the accuracy when adopting different sample batch sizes in gradient accumulation. 
Although group normalization eliminates the variance of batch statistics, adopting different batch sizes may still result in diverse performance due to the batch-averaged scores in meta attention.
The results in \tabref{ch4-tab:batchsize} show that different batch sizes yield a similar accuracy level, which indicates that our meta attention module is relatively robust to batch sizes.

\section{Summary}
\label{ch4-sec:summary}

In this chapter, we propose a new meta learning method \pMeta for memory-efficient few-shot learning on unseen tasks.
\pMeta enables efficient learning on edge devices.
On-device learning of a DNN requires both data efficiency and memory efficiency. 
However, on the one hand, existing low memory training methods fail to learn a DNN given only a few training samples; on the other hand, current few-shot learning methods require a significant amount of dynamic memory.
\pMeta addresses these challenges by (\textit{i}) meta-training an initial backbone that can fast adapt to unseen tasks with only a few samples, (\textit{ii}) meta-training a selection mechanism that can identify structurewise adaptation-critical weights to reduce the training memory.
The main contributions of \dress are summarized as follows,
\begin{itemize}
    \item 
    \pMeta enables data- and memory-efficient DNN (re-)training given new unseen tasks.
    \pMeta utilizes gradient-based model adaptation, and thus is applicable to various tasks, \eg classification, regression, and reinforcement learning.
    \item 
    \pMeta adopts structured partial parameter updates for low-memory training, which is realized by automatically identifying adaptation-critical weights both layer-wise and channel-wise. 
    This hierarchical approach combines static selection of layers and dynamic selection of channels whose weights are critical for few-shot learning on the given new task, and avoids the redundant updating of non-critical weights.
    This way, the necessary memory consumption required for optimizing adaptation-critical weights decreases.
    To the best of our knowledge, \pMeta is the first meta learning method designed for on-device few-shot learning.
    % We envision \pMeta as an early step towards adaptive and autonomous edge intelligence applications.
    \item
    Evaluations on few-shot image classification and reinforcement learning show that \pMeta not only improves the accuracy but also reduces the peak dynamic memory by a factor of 2.5 on average over the state-of-the-art few-shot learning methods. 
    \pMeta can also simultaneously reduce the computation by a factor of 1.7 on average.

\end{itemize}

This chapter studied how to conduct learning on edge devices with limited dynamic memory and limited training data. 
Note that the methods proposed in the previous chapters solely target the application scenarios on a single edge platform. 
Edge-server-system is another common scenario of edge intelligence, where multiple resource-constrained edge nodes are remotely connected with a resource-sufficient central server.   
In the next chapter, we will study how to deploy DNNs on edge-server-system to achieve an efficient inference and an efficient updating.

% !TEX root = 00_thesis.tex

\newcommand\imgtabcolsep{0.2em} 
\newcommand{\tabimga}[1]{\raisebox{-.5\height}{\includegraphics[width=0.29\textwidth,height=0.21\textwidth]{{#1}.pdf}}}
\newcommand{\tabimgc}[1]{\raisebox{-.5\height}{\includegraphics[width=0.29\textwidth]{{#1}.pdf}}}
\newcommand{\tabimgb}[1]{{\includegraphics[width=0.30\textwidth]{{#1}.pdf}}}
\newcommand{\tabimgd}[1]{{\includegraphics[width=0.30\textwidth,height=0.22\textwidth]{{#1}.pdf}}}

\chapter[Edge-Server System]{Edge-Server System}
\label{ch5:edgeserver}

In \chref{ch2:inference}, \chref{ch3:adaptation} and \chref{ch4:learning}, we studied how to conduct inference, adaptation, and learning on a single edge device, respectively.
Edge-server system is another commonly used infrastructure for edge intelligent applications. 
In edge-server system, several resource-constrained edge devices are connected to a remote server with sufficient resources, and some public information is allowed to be communicated between edge devices and the server. 
In this chapter, we design a new pipeline to enable efficient inference and efficient updating for edge-server system.

\fakeparagraph{Main Resource Constraints}
The main resource constraints on edge-server system comprise two aspects, (\textit{i}) the limited resources on edge devices \eg from memory, computing power, and energy, as discussed in \chref{ch2:inference} and \chref{ch3:adaptation}, (\textit{ii}) the limited communication resources \eg from bandwidth. 

\fakeparagraph{Principles}
On-device inference is preferred over cloud inference, since it can achieve a fast and stable inference with less energy consumption. 
Due to a possible lack of relevant training data at the initial deployment, pretrained DNNs may either fail to perform satisfactorily or be significantly improved after the initial deployment. 
On such an edge-server system, the remote server retrains the DNNs with newly collected data from edge devices or from other sources and sends the updates to the edge device is preferred over on-device re-training (or federated learning), because of the limited memory and computing power on edge devices.
To reduce the communication cost for sending the updated models, we propose a deep partial updating paradigm, where the server only selects and sends a small subset of critical weights that have a large contribution to the loss reduction during the retraining.

The contents of this chapter are established mainly based on the paper ``Deep Partial Updating: Towards Communication Efficient Updating for On-device Inference'' that is published on European Conference on Computer Vision (ECCV), 2022 \cite{bib:ECCV22:Qu}.

\section{Introduction}
\label{ch5-sec:introduction}

Compared to traditional cloud inference, on-device inference is subject to severe limitations in terms of storage, energy, computing power and communication. 
On the other hand, it has many advantages, \eg it enables fast and stable inference even with low communication bandwidth or interrupted communication, and can save energy by avoiding the transfer of data to the cloud, which often costs significant amounts of energy than sensing and computation \cite{bib:Book19:Warden,bib:arXiv18:Guo,bib:arXiv19:Lee}.
To deploy deep neural networks (DNNs) on resource-constrained edge devices, extensive research has been done to compress a well pre-trained model via pruning \cite{bib:ICLR16:Han,bib:ICLR19:Frankle,bib:ICLR20:Renda} and quantization \cite{bib:NIPS15:Courbariaux,bib:ECCV16:Rastegari}.
During on-device inference, compressed DNNs may achieve a good balance between model performance and resource demand.

However, due to a possible lack of relevant training data at the time of initial deployment or due to an unknown sensing environment, pre-trained DNNs may either fail to perform satisfactorily or be significantly improved after the initial deployment. 
In other words, re-training the models by using newly collected data (from \emph{edge devices} or \emph{other sources}) is typically required to achieve the desired performance during the lifetime of devices. 

Because of the resource-constrained nature of edge devices in terms of memory and computing power, on-device re-training (or federated learning) is typically restricted to tiny batch size, small inference (sub-)networks or limited optimization steps, all resulting in a performance degradation. 
Instead, retraining often occurs on a remote server with sufficient resources.
One possible strategy to allow for a continuous improvement of the model performance is a two-stage iterative process: (\textit{i}) at each round, edge devices collect new data samples and send them to the server, and (\textit{ii}) the server retrains the model using all collected data, and sends updates to each edge device \cite{bib:CS06:Brown}. 
The first stage may be even not necessary if new data are collected in other ways and made directly available to the server. 

\begin{figure}[tb!]
    \centering
	\includegraphics[width=0.99\textwidth,height=0.42\textwidth]{./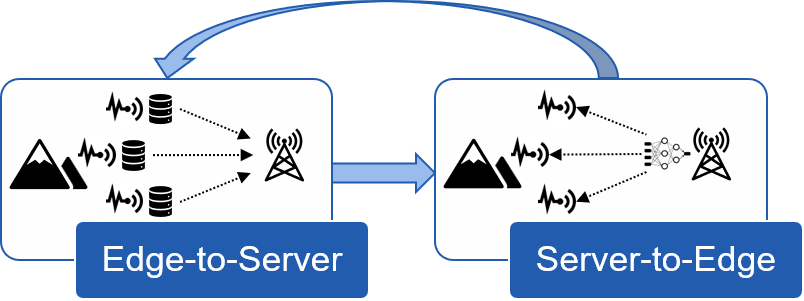}
	\caption[The iterative process of edge-to-server communication and server-to-edge communication.]{The iterative process for updating the deployed inference model on edge devices via a wireless communication. Edge-to-server communication: edge devices collect new data samples and send them to the server. Server-to-edge communication: the server retrains the model and then sends the updates to each edge device. The edge-to-server communication may not be necessary if new training data is collected from other sources and made directly available to the server.}
	\label{ch5-fig:communication}
\end{figure}

\fakeparagraph{Example Scenarios}
Example application scenarios of relevance include vision robotic sensing in an unknown environment (\eg Mars) \cite{bib:RAL17:Meng}, local translators of low-resource languages on mobile phones \cite{bib:ICMLWorkshop19:Bhandare,bib:arXiv20:Wang}, and sensor networks mounted in alpine areas \cite{bib:IPSN19:Meyer}, automatic wildlife monitoring \cite{bib:MEE18:Stowell}.
We detail two specific scenarios.
\textit{Hazard alarming on mountains: }
Researchers in \cite{bib:IPSN19:Meyer} mounted tens of sensor nodes at different scarps in high alpine areas with cameras, geophones and high-precision GPS. The purpose is to achieve fast, stable, and energy-efficient hazard monitoring for early warning to protect people and infrastructure. 
To this end, a DNN is deployed on each node to on-device detect rockfalls and debris flows. 
The nodes regularly collect and send data to the server for labeling and retraining, and the server sends the updated model back through a low-power wireless network. Retraining during deployment is essential for a highly reliable hazard warning.  
\textit{Endangered species monitoring: }
To detect endangered species, researchers often deploy some audio or image sensor nodes in virgin rainforests \cite{bib:MEE18:Stowell}.
Edge nodes are supposed to classify the potential signal from endangered species and send these relevant data to the server.
Due to the limited prior information from environments and species, retraining the initially classifier with received data or data from other sources (\eg other areas) is necessary.

\fakeparagraph{Challenges}
An essential challenge herein is that the transmissions in the server-to-edge stage are highly constrained by the limited communication resource (\eg bandwidth, energy \cite{bib:Sensors16:Augustin}) in comparison to the edge-to-server stage, if necessary at all. 
Typically, state-of-the-art DNNs often require tens or even hundreds of mega-Bytes (MB) to store parameters, whereas a single batch of data samples (a number of samples that lead to reasonable updates in batch training) needs a relatively smaller amount of data. 
For example, for CIFAR10 dataset \cite{bib:CIFAR}, the weights of a popular VGGNet require $56.09$MB storage, while one batch of 128 samples only uses around $0.40$MB \cite{bib:ICLR15:Simonyan,bib:ECCV16:Rastegari}. 
As an alternative approach, the server sends a full update of the inference model once or rarely. But in this case, every node will suffer from a low performance until such an update occurs.
Besides, edge devices could decide on and send only critical samples by using active learning schemes \cite{bib:ICLR20:Ash}.
The server may also receive data from other sources, \eg through data augmentation based on the data collected in previous rounds or new data collection campaigns.
These considerations indicate that the updated weights that are sent to edge devices by the server become a major bottleneck.

Facing the above challenges, we ask the following question: \textit{Is it possible to update only a small subset of weights while reaching a similar performance as updating all weights?}
Doing such a \textit{partial updating} can significantly reduce the server-to-edge communication overhead. 
Furthermore, fewer parameter updates also lead to less memory access on edge devices, which in turn results in smaller energy consumption than full updating \cite{bib:ISSCC14:Horowitz}. 

\fakeparagraph{Why Partial Updating Works} 
Since the model deployed on edge devices is trained with the data collected beforehand, some learned knowledge can be reused. 
In other words, we only need to distinguish and update the weights which are critical to the newly collected data.

\fakeparagraph{How to Select Weights}
Our key concept for partial updating is based on the hypothesis, that \textit{a weight shall be updated only if it has a large contribution to the loss reduction} during the retraining given newly collected data samples.
Specially, we define a binary mask $\bm{m}$ to describe which weights are subject to update and which weights are fixed (also reused). 
For any $\bm{m}$, we establish the analytical upper bound on the difference between the loss value under partial updating and that under full updating.
We determine an optimized mask $\bm{m}$ by combining two different view points: (\textit{i}) measuring each weight's ``global contribution'' to the upper bound through computing the Euclidean distance, and (\textit{ii}) measuring each weight's ``local contribution'' to the upper bound using gradient-related information. 
The weights to be updated according to $\bm{m}$ will be further sparsely fine-tuned while the remaining weights are rewound to their initial values.

\section{Related Work}
\label{ch5-sec:related}

\subsection{Partial Updating}
Although partial updating has been adopted in some prior works, it is conducted in a fairly coarse-grained manner, \eg layer-wise or neuron-wise, and targets at completely different objectives. 
Especially, under continual learning settings, \cite{bib:ICLR18:Yoon,bib:arXiv20:Jung} propose to freeze all weights related to the neurons which are more critical in performing prior tasks than new ones, to preserve existing knowledge. 
Under adversarial attack settings, \cite{bib:CCS15:Shokri} updates the weights in the first several layers only, which yield a dominating impact on the extracted features, for better attack efficacy. 
Under architecture generalization settings, \cite{bib:ICLR20:Chatterji} studies the generalization performance through the resulting loss degradation when rewinding the weights of each individual layer to their initial values.
Under meta learning settings, \cite{bib:ICLR20:Raghu,bib:AAAI21:Shen} reuse learned representations by only updating a subset of layers for efficiently learning new tasks.
Unfortunately, such techniques do not focus on reducing the number of updated weights, and thus cannot be applied in our problem settings. 

\subsection{Federated Learning}
Communication-efficient federated learning \cite{bib:ICLR18:Lin,bib:arXiv19:Kairouz,bib:arXiv20:Li} studies how to compress multiple gradients calculated on different sets of non-\textit{i.i.d.} local data, such that the aggregation of these compressed gradients could result in a similar convergence performance as centralized training on all data.
Such compressed updates are fundamentally different from our setting, where (\textit{i}) updates are not transmitted in each optimization step; (\textit{ii}) training data are incrementally collected; (\textit{iii}) centralized training is conducted.
Our typical scenarios focus on outdoor areas, which generally do not involve data privacy issues, since these collected data are not personal data. 
In comparison to federated learning, our pipeline has the following advantages: (\textit{i}) we do not conduct resource-intensive gradient backward propagation on edge devices; (\textit{ii}) the collected data are not continuously accumulated and stored on memory-constrained edge nodes; (\textit{iii}) we also avoid the difficult but necessary labeling process on each edge node in supervised learning tasks; (\textit{iv}) if few events occur on some nodes, the centralized training may avoid degraded updates in local training, \eg batch normalization.

\subsection{Compression}
The communication cost could also be reduced through some compression techniques, \eg quantizing/encoding the updated weights and the transmission signal. 
But note that these techniques are orthogonal to our approach and could be applied in addition. 
Following the compression pipeline in \cite{bib:ICLR16:Han}, the resulted sparse updating from our methods could be further quantized and Huffman-encoded.

\subsection{Unstructured Pruning}
Deep partial updating is inspired by recent unstructured pruning methods, \eg \cite{bib:ICLR16:Han,bib:ICLR19:Frankle,bib:NIPS19:Zhou,bib:ICLR20:Renda,bib:ICML21:Evci,bib:NIPS21:Peste}.
Traditional pruning methods aim at reducing the number of operations and storage consumption by setting some weights to zero. Sending a pruned DNN with only non-zero weights may also reduce the communication cost, but to a much lesser extent as shown in the experimental results, see \secref{ch5-sec:experiment_samplesratio}. 
Since our objective namely reducing the server-to-edge communication cost when updating the deployed DNN is fundamentally different from pruning, we can leverage some learned knowledge by retaining weights (partial updating) instead of zero-outing weights (pruning). 

\subsection{Domain Adaptation}
Domain adaptation targets reducing domain shift to transfer knowledge into new learning tasks \cite{bib:arXiv19:Zhuang}.
This chapter mainly considers the scenario where the inference task is not explicitly changed along the rounds, \ie the overall data distribution maintains the same along the data collection rounds. 
Thus, selecting critical weights (features) by measuring their impact on domain distribution discrepancy is invalid herein. 
Applying deep partial updating on streaming tasks where the data distribution varies along the rounds would be also worth studying, and we leave it for future works.

\section{Notations and Settings}
\label{ch5-sec:notation}

In this section, we define the notations used throughout this chapter, and provide a formalized problem setting. 
We consider a set of remote edge devices that implement on-device inference. 
They are connected to a host server that is able to perform DNN training and retraining. 
We consider the necessary amount of information that needs to be communicated to each edge device to update its inference model. 

Assume there are $R$ rounds of model updates. 
The model deployed in the $r$-th round is represented with its weight vector $\bm{w}^r$. 
The training data used to update the model for the $r$-th round is represented as $\mathcal{D}^r = \delta\mathcal{D}^{r}\cup\mathcal{D}^{r-1}$. 
Also, newly collected data samples $\delta\mathcal{D}^r$ are made available to the server in round $r-1$. 

To reduce the amount of information that needs to be sent to edge devices, only partial weights of $\bm{w}^{r-1}$ shall be updated when determining $\bm{w}^{r}$. 
The overall optimization problem for weight-wise partial updating in round $r-1$ is thus,
\begin{eqnarray}
    \min_{\delta\bm{w}^r}   & & \ell\left(\bm{w}^{r-1}+\delta\bm{w}^{r};\mathcal{D}^r\right) \label{ch5-eq:objective_r} \\
    \text{s.t.}             & & \|\delta\bm{w}^{r}\|_0 \leq k \cdot I \label{ch5-eq:constraints_r}
\end{eqnarray}
where $\ell$ denotes the loss function, $\|.\|_0$ denotes the L0-norm, $k$ denotes the updating ratio that is determined by the communication constraints in practical scenarios, and $\delta\bm{w}^{r}$ denotes the increment of $\bm{w}^{r-1}$. 
Note that both $\bm{w}^{r-1}$ and $\delta\bm{w}^{r}$ are drawn from $\mathbb{R}^I$, where $I$ is the total number of weights. 

In this case, only a fraction of $k \cdot I$ weights and the corresponding index information need to be communicated to each edge device for updating the model, namely the partial updates $\delta\bm{w}^{r}$. 
It is worth noting that the index information is relatively small in size compared to the partially updated weights (see \secref{ch5-sec:experiment}).
On each edge device, the weight vector is updated as $\bm{w}^{r} = \bm{w}^{r-1}+\delta\bm{w}^{r}$.
To simplify the notation, we will only consider a single update, \ie from weight vector $\bm{w}$ (corresponding to $\bm{w}^{r-1}$) to weight vector $\widetilde{\bm{w}}$ (corresponding to $\bm{w}^{r}$) with $\widetilde{\bm{w}} = \bm{w}+\widetilde{\delta\bm{w}}$.

\section{Deep Partial Updating}
\label{ch5-sec:method}

We develop a two-step approach for resolving the partial updating optimization problem in \equref{ch5-eq:objective_r}-\equref{ch5-eq:constraints_r}. 
The final experimental implementation in \secref{ch5-sec:experiment} contains some minor adaptations that do not change the main principles as explained next.
The overall approach is depicted in \figref{ch5-fig:approach}.

\begin{figure}[tb!]
    \centering
	\includegraphics[width=0.7\textwidth]{./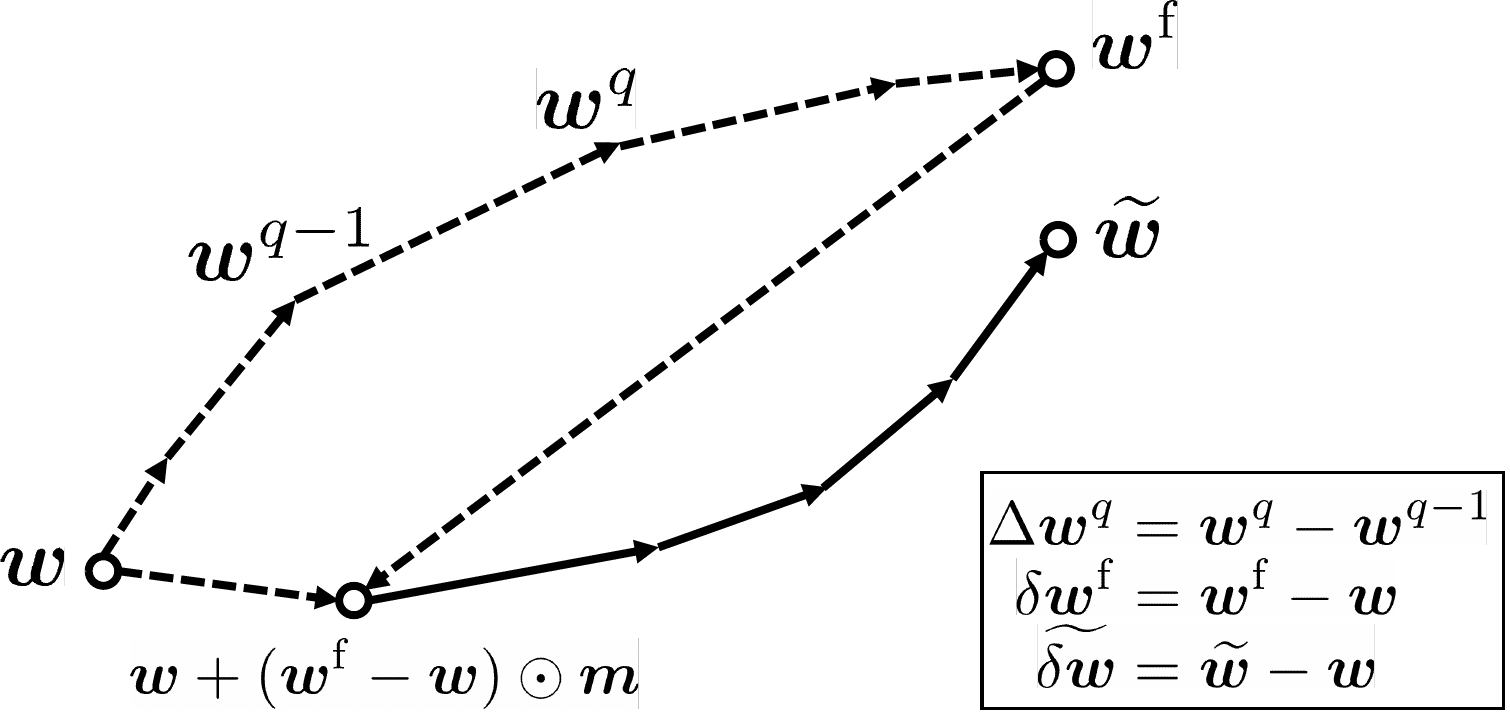}
	\caption[The overall approach of \dpu.]{The figure depicts the overall approach that consists of two steps. The first step is depicted with dotted arrows and starts from the deployed model $\bm{w}$. In $Q$ optimization steps, all weights are trained to the optimum $\bm{w}^\mathrm{f}$. Based on the collected information, a binary mask $\bm{m}$ is determined that characterizes the set of weights that are rewound to the ones of $\bm{w}$. Therefore, the second step (solid arrows) starts from $\bm{w} + \delta\bm{w}^\mathrm{f} \odot \bm{m}$. According to the mask, this initial solution is sparsely fine-tuned to the final weights $\widetilde{\bm{w}}$, \ie $\widetilde{\delta\bm{w}}$ has only non-zero values where the mask value is 1}
	\label{ch5-fig:approach}
\end{figure}

\begin{itemize}
    \item
    \underline{The First Step:} \textbf{Full Updating and Rewinding.}
    The first step not only determines the subset of weights that are allowed to change their values, but also computes the initial values for the second step. 
    In particular, we first optimize the loss function in \equref{ch5-eq:objective_r} by updating all weights from the initialization $\bm{w}$ with a standard optimizer, \eg SGD or its variants. 
    We thus obtain the minimized loss $\ell\left(\bm{w}^\mathrm{f}\right)$ with $\bm{w}^\mathrm{f} = \bm{w} + \delta\bm{w}^\mathrm{f}$, where the superscript $\mathrm{f}$ denotes ``full updating''. 
    To consider the constraint of \equref{ch5-eq:constraints_r}, the information gathered during this optimization is used to determine the subset of weights that will be changed, also that are communicated to the edge devices. 
    In the explanation of the method in \secref{ch5-sec:metric}, we use the mask $\bm{m}$ with $\bm{m} \in\{0,1\}^I$ to describe which weights are subject to change and which ones are not. 
    The weights with $m_i=1$ are trainable, whereas the weights with $m_i=0$ will be rewound from the values in $\bm{w}^\mathrm{f}$ to their initial values in $\bm{w}$, \ie unchanged. 
    Obviously, we find $\|\bm{m}\|_0 = \sum_{i} m_i = k \cdot I$. 
    
    \item 
    \underline{The Second Step:} \textbf{Sparse Fine-Tuning.} 
    In the second step we start a sparse fine-tuning from a model with $k \cdot I$ weights from the optimized model $\bm{w}^\mathrm{f}$ and $(1-k) \cdot I$ weights from the previous, still deployed model $\bm{w}$. 
    In other words, the initial weights for the second step are $\bm{w} + \delta\bm{w}^{\mathrm{f}} \odot \bm{m}$, where $\odot$ denotes an element-wise multiplication. 
    To determine the final solution $\widetilde{\bm{w}} = \bm{w}+\widetilde{\delta\bm{w}}$, we conduct a sparse fine-tuning (still with a standard optimizer), \ie we keep all weights with $m_i = 0$ constant during the optimization. 
    Therefore, $\widetilde{\delta\bm{w}}$ is zero wherever $m_i=0$, and only weights where $m_i = 1$ are updated.

\end{itemize}

\subsection{Metrics for Rewinding}
\label{ch5-sec:metric}

We will now describe a new metric that determines the weights that should be kept constant, \ie with $m_i=0$. 
Like most learning methods, we focus on minimizing a loss function.
The two-step approach relies on the following assumption: the better the loss $\ell(\bm{w} + \delta\bm{w}^\mathrm{f} \odot \bm{m})$ of the initial solution for the second step, the better the final performance.
Therefore, the first step should select a mask $\bm{m}$ such that the loss difference $\ell(\bm{w} + \delta\bm{w}^\mathrm{f} \odot \bm{m}) - \ell(\bm{w}^\mathrm{f})$ is as small as possible. 

To determine an optimized mask $\bm{m}$, we propose to upper-bound the above loss difference in two view points, and measure each weight's contribution to the bounds. 
The ``global contribution'' uses the norm information of incremental weights $\delta\bm{w}^\mathrm{f}=\bm{w}^\mathrm{f}-\bm{w}$.
The ``local contribution'' takes into account the gradient-based information that is gathered during the optimization in the first step, \ie in the path from $\bm{w}$ to $\bm{w}^\mathrm{f}$. 
Both contributions will be combined to determine the mask $\bm{m}$.

The two view points are based on the concept of smooth differentiable functions. 
A function $f(x)$ with $f: \mathbb{R}^d \rightarrow \mathbb{R}$ is called $L$-smooth if it has a Lipschitz continuous gradient $g(x)$: $\|g(x) - g(y)\|_2 \leq L \|x - y\|_2$ for all $x, y$. 
Note that Lipschitz continuity of gradients is essential to ensuring convergence of many gradient-based algorithms. 
Under such a condition, one can derive the following bounds, see also \cite{bib:Book98:Nesterov}:
\begin{equation} \label{ch5-eq:lipschitz}
    |f(y) - f(x) - g(x)^\mathrm{T} \cdot (y - x) | \leq L/2 \cdot \|y - x\|_2^2 \quad \forall x, y
\end{equation}

\fakeparagraph{Global Contribution}
One would argue that a large absolute value in $\delta\bm{w}^\mathrm{f} = \bm{w}^\mathrm{f} - \bm{w}$ indicates that this weight has moved far from its initial value in $\bm{w}$, and thus should not be rewound.
This motivates us to adopt the widely used unstructured magnitude pruning to determine the mask $\bm{m}$.
Magnitude pruning prunes the weights with the lowest magnitudes, which often achieves a good trade-off between the model accuracy and the number of zero's weights \cite{bib:ICLR20:Renda}. 

Using $a - b \leq |a - b|$, \equref{ch5-eq:lipschitz} can be reformulated as $f(y) - f(x) - g(x)^T (y-x) \leq | f(y) - f(x) - g(x)^T (y-x) | \leq L/2 \cdot \| y-x \|^2_2$.
Thus, we can bound the relevant loss difference $\ell(\bm{w} + \delta\bm{w}^\mathrm{f} \odot \bm{m}) - \ell(\bm{w}^\mathrm{f}) \geq 0$ as
\begin{equation} \label{ch5-eq:globalbound}
    \ell(\bm{w} + \delta\bm{w}^\mathrm{f} \odot \bm{m}) - \ell(\bm{w}^\mathrm{f}) \leq \bm{g}(\bm{w}^\mathrm{f})^\mathrm{T} \cdot \left( \delta\bm{w}^\mathrm{f} \odot (\bm{m} - \bm{1}) \right) + L/2 \cdot \| \delta\bm{w}^\mathrm{f} \odot (\bm{m} - \bm{1})\|_2^2
\end{equation}
where $\bm{g}(\bm{w}^\mathrm{f})$ denotes the loss gradient at $\bm{w}^\mathrm{f}$, and $\bm{1}$ is a vector whose elements are all 1. 
As the loss is optimized at $\bm{w}^\mathrm{f}$, \ie $\bm{g}(\bm{w}^\mathrm{f})\approx \bm{0}$, we can assume that the gradient term is much smaller than the norm of the weight differences in \equref{ch5-eq:globalbound}. 
Therefore, we have
\begin{equation} \label{ch5-eq:globalsum}
    \ell(\bm{w} + \delta\bm{w}^\mathrm{f} \odot \bm{m}) - \ell(\bm{w}^\mathrm{f}) \lesssim L/2 \cdot \| \delta\bm{w}^\mathrm{f} \odot (\bm{1} - \bm{m})\|_2^2
\end{equation}
The right hand side is clearly minimized if $m_i = 1$ for the largest absolute values of $\delta\bm{w}^\mathrm{f}$. 
As $\bm{1}^\mathrm{T} \cdot \left( \bm{c}^\mathrm{global} \odot (\bm{1} - \bm{m}) \right) = \| \delta\bm{w}^\mathrm{f} \odot (\bm{1} - \bm{m})\|_2^2$, this information is captured in the contribution vector 
\begin{equation} \label{ch5-eq:globalc}
    \bm{c}^\mathrm{global} = \delta\bm{w}^\mathrm{f} \odot \delta\bm{w}^\mathrm{f}
\end{equation}
The $k \cdot I$ weights with the largest values in $\bm{c}^\mathrm{global}$ are assigned to mask values $1$ and are further fine-tuned in the second step, whereas all others are rewound to their initial values in $\bm{w}$. 
\algoref{ch5-alg:gcpu} shows this first approach.

\begin{algorithm}[tbp!]
    \caption{Global Contribution Partial Updating (Prune Incremental Weights)}\label{ch5-alg:gcpu}
    \KwIn{Weights $\bm{w}$, updating ratio $k$, learning rate $\{\alpha^q\}_{q=1}^{Q}$} 
    \KwOut{Weights $\widetilde{\bm{w}}$}
    \tcc{The first step: full updating and rewinding}
    Initiate $\bm{w}^0=\bm{w}$\;
    \For {$q \leftarrow 1$ \KwTo $Q$} {
        Compute the loss gradient $\bm{g}(\bm{w}^{q-1})=\partial\ell(\bm{w}^{q-1})/\partial\bm{w}^{q-1}$\;
        Compute the optimization step with learning rate $\alpha^q$ as $\Delta\bm{w}^{q}$\;
        Update $\bm{w}^{q} = \bm{w}^{q-1} + \Delta\bm{w}^{q}$\;
    }
    Set $\bm{w}^{\mathrm{f}}=\bm{w}^{Q}$ and get $\delta\bm{w}^{\mathrm{f}} = \bm{w}^{\mathrm{f}}-\bm{w}$\;
    Compute $\bm{c}^{\mathrm{global}}=\delta\bm{w}^{\mathrm{f}}\odot\delta\bm{w}^{\mathrm{f}}$ and sort in descending order\;
    Create binary masks $\bm{m}$ with $1$ for Top-$(k \cdot I)$ indices, $0$ for others\;
    \tcc{The second step: sparse fine-tuning}
    Initiate $\widetilde{\delta\bm{w}}=\delta\bm{w}^{\mathrm{f}}\odot\bm{m}$ and $\widetilde{\bm{w}} = \bm{w}+\widetilde{\delta\bm{w}}$\;
    \For {$q \leftarrow 1$ \KwTo $Q$} {
        Compute the optimization step with learning rate $\alpha^q$ as $\Delta\widetilde{\bm{w}}^{q}$\; 
        Update $\widetilde{\delta\bm{w}}=\widetilde{\delta\bm{w}}+\Delta\widetilde{\bm{w}}^{q}\odot\bm{m}$ and $\widetilde{\bm{w}}=\bm{w}+\widetilde{\delta\bm{w}}$\;
    }
\end{algorithm}

\fakeparagraph{Local Contribution}
As experiments show, one can do better when leveraging in addition some gradient-based information gathered during the first step, \ie optimizing the initial weights $\bm{w}$ in $Q$ traditional optimization steps, $\bm{w} = \bm{w}^0 \rightarrow \cdots \; \rightarrow \bm{w}^{q-1} \rightarrow \bm{w}^{q} \rightarrow \cdots \; \rightarrow \bm{w}^{Q} = \bm{w}^\mathrm{f}$.

Using $ -a + b \leq | a - b|$, \equref{ch5-eq:lipschitz} can be reformulated as $f(x) - f(y) + g(x)^T (y-x) \leq | f(y) - f(x) - g(x)^T (y-x) | \leq L/2 \cdot \| y-x \|^2_2$. 
Thus, each optimization step is bounded as
\begin{equation} \label{ch5-eq:localbound}
    \ell(\bm{w}^{q-1}) - \ell(\bm{w}^q) \leq -\bm{g}(\bm{w}^{q-1})^\mathrm{T} \cdot \Delta\bm{w}^q + L/2 \cdot \| \Delta\bm{w}^q \|_2^2
\end{equation}
where $\Delta\bm{w}^q = \bm{w}^q - \bm{w}^{q-1}$. 
For a conventional gradient descent optimizer with a small learning rate we can use the approximation $|\bm{g}(\bm{w}^{q-1})^\mathrm{T} \cdot \Delta\bm{w}^q| \gg \|\Delta\bm{w}^q\|_2^2$ and obtain $ \ell(\bm{w}^{q-1}) - \ell(\bm{w}^q) \lesssim -\bm{g}(\bm{w}^{q-1})^\mathrm{T} \cdot \Delta\bm{w}^q$. 
Summing up over all optimization iterations yields approximately
\begin{equation} \label{ch5-eq:localsum}
    \ell(\bm{w}^\mathrm{f} - \delta\bm{w}^\mathrm{f}) - \ell(\bm{w}^\mathrm{f}) \lesssim -\sum_{q = 1}^Q \bm{g}(\bm{w}^{q-1})^\mathrm{T} \cdot \Delta\bm{w}^q
\end{equation}
Note that we have $\bm{w} = \bm{w}^\mathrm{f} - \delta\bm{w}^\mathrm{f}$ and $\delta\bm{w}^\mathrm{f} = \sum_{q = 1}^Q \Delta\bm{w}^q$. 
Therefore, with a small updating ratio $k$, \ie $\bm{m} \sim \bm{0}$, we can reformulate \equref{ch5-eq:localsum} as
$ 
    \ell\left( \bm{w} + \delta\bm{w}^\mathrm{f} \odot \bm{m} \right) - \ell(\bm{w}^\mathrm{f}) \lesssim \mathrm{U}(\bm{m}) 
$
with the upper bound 
$
    \mathrm{U}(\bm{m}) = -\sum_{q = 1}^Q \bm{g}(\bm{w}^{q-1})^\mathrm{T} \cdot (\Delta\bm{w}^q \odot (\bm{1} - \bm{m}))
$
where we suppose that the gradients are approximately constant for $\bm{m} \sim \bm{0}$ (\ie $\bm{m}$ has zero entries almost everywhere).
Therefore, an approximate incremental contribution of each weight dimension to the upper bound on the loss difference $\ell\left( \bm{w} + \delta\bm{w}^\mathrm{f} \odot \bm{m} \right) - \ell(\bm{w}^\mathrm{f})$ 
can be determined by the negative gradient vector at $\bm{m}=\bm{0}$, denoted as 
\begin{equation}
    \bm{c}^{\mathrm{local}} = - \frac{\partial \mathrm{U}(\bm{m})}{\partial \bm{m}} = - \sum_{q=1}^{Q} \bm{g}(\bm{w}^{q-1}) \odot \Delta\bm{w}^{q}
\end{equation}
which models the accumulated contribution to the overall loss reduction.
Note that the partial derivatives are computed by assuming that $\bm{m}$ is continuous in a small area around $\bm{0}$. 

\fakeparagraph{Combining Global and Local Contribution}
So far, we independently calculate the global and local contributions. 
To avoid the scale impact, we first normalize each contribution by its significance in its own set (either global or local contribution set). 
We investigated the impacts and the different combinations of both normalized contributions, see results in \secref{ch5-sec:experiment_impact}.
Interestingly, the most straightforward combination (\ie the sum of both normalized metrics) often yields a satisfied and stable performance.
Intuitively, local contribution can better identify critical weights w.r.t. the loss during training, while global contribution may be more robust for a highly non-convex loss landscape. 
Both metrics may be necessary when selecting weights to rewind.
Therefore, the combined contribution is computed as
\begin{equation}
    \bm{c} = \frac{1}{\bm{1}^\mathrm{T} \cdot \bm{c}^\mathrm{global}} \bm{c}^\mathrm{global} + \frac{1}{\bm{1}^\mathrm{T} \cdot \bm{c}^\mathrm{local}} \bm{c}^\mathrm{local}
    \label{ch5-eq:combined}
\end{equation}
and $m_i = 1$ for the $k \cdot I$ largest values of $\bm{c}$ and $m_i = 0$ otherwise. 
The pseudocode of Deep Partial Updating (\dpu), \ie rewinding according to the combined contribution to the loss reduction, is shown in \algoref{ch5-alg:dpu}.

We further analyze the complexity of \algoref{ch5-alg:dpu}. 
Recall that the dimensionality of the weights vector is denoted as $I$.
In $Q$ optimization iterations during the first step, \algoref{ch5-alg:dpu} introduces an extra time complexity of $O(QI)$, and an extra space complexity of $O(I)$ related to the original optimizer.
The rest of the first step takes a time complexity of $O(I\cdot\mathrm{log}(I))$ and a space complexity of $O(I)$, (\eg using heap sort or quick sort).
In $Q$ optimization iterations during the second step, \algoref{ch5-alg:dpu} introduces an extra time complexity of $O(QI)$, and an extra space complexity of $O(I)$ related to the original optimizer.
Thus, a total extra time complexity is $O(2QI+I\cdot\mathrm{log}(I))$ and a total extra space complexity is $O(I)$.

\subsection{(Re-)Initialization of Weights}
\label{ch5-sec:initialization}
In this section, we discuss the initialization of our method.
$\mathcal{D}^1$ denotes the initial dataset used to train the model $\bm{w}^1$ from a randomly initialized model $\bm{w}^0$. 
$\mathcal{D}^1$ corresponds to the available dataset before deployment, or collected in the $0$-th round if there are no data available before deployment. 
$\{\delta\mathcal{D}^r\}_{r=2}^R$ denotes newly collected samples in each subsequent round. 

\begin{algorithm}[tbp!]
    \caption{Deep Partial Updating}\label{ch5-alg:dpu}
    \KwIn{Weights $\bm{w}$, updating ratio $k$, learning rate $\{\alpha^q\}_{q=1}^{Q}$} 
    \KwOut{Weights $\widetilde{\bm{w}}$}
    \tcc{The first step: full updating and rewinding}
    Initiate $\bm{w}^0=\bm{w}$ and $\bm{c}^{\mathrm{local}}=\bm{0}$\;
    \For {$q \leftarrow 1$ \KwTo $Q$} {
        Compute the loss gradient $\bm{g}(\bm{w}^{q-1})=\partial\ell(\bm{w}^{q-1})/\partial\bm{w}^{q-1}$\;
        Compute the optimization step with learning rate $\alpha^q$ as $\Delta\bm{w}^{q}$\;
        Update $\bm{w}^{q} = \bm{w}^{q-1} + \Delta\bm{w}^{q}$\;
        Update $\bm{c}^{\mathrm{local}}=\bm{c}^{\mathrm{local}}-\bm{g}(\bm{w}^{q-1})\odot\Delta\bm{w}^{q}$\;
    }
    Set $\bm{w}^{\mathrm{f}}=\bm{w}^{Q}$ and get $\delta\bm{w}^{\mathrm{f}} = \bm{w}^{\mathrm{f}}-\bm{w}$\;
    Compute $\bm{c}^{\mathrm{global}}=\delta\bm{w}^{\mathrm{f}}\odot\delta\bm{w}^{\mathrm{f}}$\;
    Compute $\bm{c}$ as \equref{ch5-eq:combined} and sort in descending order\;
    Create binary masks $\bm{m}$ with $1$ for Top-$(k \cdot I)$ indices, $0$ for others\;
    \tcc{The second step: sparse fine-tuning}
    Initiate $\widetilde{\delta\bm{w}}=\delta\bm{w}^{\mathrm{f}}\odot\bm{m}$ and $\widetilde{\bm{w}} = \bm{w}+\widetilde{\delta\bm{w}}$\;
    \For {$q \leftarrow 1$ \KwTo $Q$} {
        Compute the optimization step with learning rate $\alpha^q$ as $\Delta\widetilde{\bm{w}}^{q}$\; 
        Update $\widetilde{\delta\bm{w}}=\widetilde{\delta\bm{w}}+\Delta\widetilde{\bm{w}}^{q}\odot\bm{m}$ and $\widetilde{\bm{w}}=\bm{w}+\widetilde{\delta\bm{w}}$\;
    }
\end{algorithm}

Experimental results show (see \secref{ch5-sec:experiment_fullupdating}) that training from a randomly initialized model can yield a higher accuracy \textit{after a large number of rounds}, compared to always training from the last round $\bm{w}^{r-1}$.
As a possible explanation, the optimizer could end in a hard to escape region of the search space if always trained from the last round for a long sequence of rounds.
Thus, we propose to re-initialize the weights after a certain number of rounds. 
In such a case, \algoref{ch5-alg:dpu} does not start from the weights $\bm{w}^{r-1}$ but from the randomly initialized weights.
The randomly re-initialized model (weights) can be efficiently sent to the edge devices via a single random seed. 
The device can determine the weights by means of a random generator. 
This process realizes a random shift in the search space, which is a communication-efficient way in comparison to other alternatives, such as learning to increase the loss or using the (averaged) weights in the previous rounds, as these fully changed weights still need to be sent to each node.
Each time the model is randomly re-initialized, the new partially updated model might suffer from an accuracy drop in a few rounds. 
However, we can simply avoid such an accuracy drop by not updating the model if the validation accuracy does not increase compared to the last round, see in \secref{ch5-sec:experiment_reinit}.
Note that the learned knowledge thrown away by re-initialization can be re-learned afterwards, since all collected samples are continuously stored and accumulated in the server. 
This also makes our setting different from continual learning, that aims at avoiding catastrophic forgetting without accessing old data.

To determine after how many rounds the model should be re-initialized, we conduct extensive experiments on different partial updating settings, see more discussions and results in \secref{ch5-sec:experiment_reinit}.
In conclusion, the model is randomly re-initialized as long as the number of total newly collected data samples exceeds the number of samples when the model was re-initialized last time.
For example, assume that at round $r$ the model is randomly (re-)initialized and partially updated from this random model on dataset $\mathcal{D}^r$. 
Then, the model will be re-initialized again at round $r+n$, if $|\mathcal{D}^{r+n}|>2\cdot|\mathcal{D}^r|$, where $|.|$ denotes the number of samples in the dataset.

\section{Evaluation}
\label{ch5-sec:experiment}

In this section, we experimentally show that through updating a small subset of weights, \dpu can reach a similar accuracy as full updating while requiring a significantly lower communication cost. 
We implement \dpu with Pytorch \cite{bib:NIPSWorkshop17:Paszke}, and evaluate on public vision datasets, including MNIST \cite{bib:MNIST}, CIFAR10 \cite{bib:CIFAR}, CIFAR100 \cite{bib:CIFAR}, ImageNet \cite{bib:ILSVRC15}, using multilayer perceptron (MLP), VGGNet \cite{bib:NIPS15:Courbariaux,bib:ECCV16:Rastegari}, ResNet56 \cite{bib:CVPR16:He}, MobileNetV1 \cite{bib:arXiv17:Howard}, respectively. 
Particularly, we partition the experiments into multi-round updating and single-round updating.

\fakeparagraph{Multi-Round Updating} 
We consider there are limited (or even zero) samples before the initial deployment, and data samples are continuously collected and sent from edge devices over a long period (the event rate is often low in real cases \cite{bib:IPSN19:Meyer}). 
The server retrains the model and sends the updates to each device in multiple rounds.
Regarding the highly-constrained communication resources, we choose low resolution image datasets (MNIST \cite{bib:MNIST} and CIFAR10/100 \cite{bib:CIFAR}) to evaluate multi-round updating. 
We conduct one-shot rewinding in multi-round \dpu, \ie rewinding is executed only once to achieve the desired updating ratio at each round as in \algoref{ch5-alg:dpu}, which avoids hand-tuning hyperparameters (\eg updating ratio schedule) frequently over a large number of rounds.

\fakeparagraph{Single-Round Updating} 
The deployed model is updated once via server-to-edge communication when new data from other sources become available on the server after some time, \eg releasing a new version of mobile applications based on newly retrieved internet data.
Although \dpu is elaborated and designed under multi-round updating settings, it can be applied directly in single-round updating. 
Since transmission from edge devices may be even not necessary, we evaluate single-round \dpu on the large scale ImageNet dataset. 
Iterative rewinding is adopted here due to its better performance. 
Particularly, we alternatively perform rewinding 20\% of the remaining trainable weights according to \equref{ch5-eq:combined} and sparse fine-tuning until reaching the desired updating ratio. 

\fakeparagraph{General Settings for All Experiments} 
We randomly select 30\% of the original test dataset (original validation dataset for ImageNet) as the validation dataset, and the remainder as the test dataset.
Let $\{|\mathcal{D}^1|,|\delta\mathcal{D}^r|\}$ represent the available data samples along rounds, where $|\delta\mathcal{D}^r|$ is supposed to be constant along rounds.
Both $\mathcal{D}^1$ and $\delta\mathcal{D}^r$ are randomly sampled (without replacement) from the original training dataset to simulate the data collection.
In each round, the test accuracy is reported, when the validation dataset achieves the highest Top-1 accuracy during retraining.
When the validation accuracy does not increase compared to the previous round, the models are not updated to reduce the communication overhead.
This strategy is also applied to other baselines to enable a fair comparison.  
We use the average cross-entropy as the loss function, and use Adam variant of SGD for MLP and VGGNet, Nesterov SGD for ResNet56 and MobileNetV1.
More implementation details are provided in \secref{ch5-sec:experiment_benchmark}.

\fakeparagraph{Indexing}
\dpu generates a sparse tensor.
In addition to the updated weights, the indices of these weights also need to be sent to each edge device.
A simple implementation is to send the mask $\bm{m}$, \ie a binary vector of $I$ elements.
Let $S_w$ denote the bitwidth of each single weight, and $S_x$ denote the bitwidth of each index.
Directly sending $\bm{m}$ yields an overall communication cost of $I\cdot k \cdot S_w+I \cdot S_x$ with $S_x=1$. 
To save the communication cost on indexing, we further encode $\bm{m}$. Suppose that $\bm{m}$ is a random binary vector with a probability of $k$ to contain 1. 
The optimal encoding scheme according to Shannon yields $S_x(k)=k \cdot \mathrm{log}(1/k) + (1-k) \cdot \mathrm{log}(1/(1-k))$. 
Coding schemes such as Huffman block coding can come close to this bound. 
We use $S_w\cdot k\cdot I + S_x(k)\cdot I$ to report the size of data transmitted from server to each node at each round, contributed by the partially updated weights plus the encoded indices of these weights.

\subsection{Benchmarking Details}
\label{ch5-sec:experiment_benchmark}

\subsubsection{MLP on MNIST}
\label{ch5-sec:experiment_mlp}
The MNIST dataset \cite{bib:MNIST} consists of $28\times28$ gray scale images in 10 digit classes. 
It contains a training dataset with 60000 data samples, and a test dataset with 10000 data samples. 
We use the original training dataset for training; and randomly select 3000 samples in the original test dataset for validation, and the rest 7000 samples for testing. 
We use a mini-batch with size of 128 training on 1 GeForce RTX 3090 GPU.
We use Adam variant of SGD as the optimizer, and use all default parameters provided by Pytorch. 
The number of training epochs is chosen as 60 at each round. 
The initial learning rate is $0.05$, and it decays with a factor of 0.1 every $20$ epochs.
The used MLP contains two hidden layers, and each hidden layer contains 512 hidden units.
The input is a 784-dim tensor consisting of all pixel values in each image.
All weights in MLP need around $2.67$MB.
Each data sample needs $0.784$KB.
The size of MLP equals around 3400 data samples. 
The used MLP architecture is presented as, 2$\times$512FC - 10SVM.

\subsubsection{VGGNet on CIFAR10}
\label{ch5-sec:experiment_vgg}
The CIFAR10 dataset \cite{bib:CIFAR} consists of $32\times32$ color images in 10 object classes. 
It contains a training dataset with 50000 data samples, and a test dataset with 10000 data samples. 
We use the original training dataset for training; and randomly select 3000 samples in the original test dataset for validation, and the rest 7000 samples for testing.
We use a mini-batch with size of 128 training on 1 GeForce RTX 3090 GPU.
We use Adam variant of SGD as the optimizer, and use all default parameters provided by Pytorch. 
The number of training epochs is chosen as 60 at each round. 
The initial learning rate is $0.05$, and it decays with a factor of 0.2 every $20$ epochs.
The used VGGNet is widely adopted in many previous compression works \cite{bib:NIPS15:Courbariaux,bib:ECCV16:Rastegari}, which is a modified version of the original VGG \cite{bib:ICLR15:Simonyan}.
All weights in VGGNet need around $56.09$MB.
Each data sample needs $3.072$KB.
The size of VGGNet equals around 18200 data samples.
The used VGGNet architecture is presented as, 2$\times$128C3 - MP2 - 2$\times$256C3 - MP2 - 2$\times$512C3 - MP2 - 2$\times$1024FC - 10SVM.

\subsubsection{ResNet56 on CIFAR100}
\label{ch5-sec:experiment_resnet56}
Similar as CIFAR10, the CIFAR100 dataset \cite{bib:CIFAR} consists of $32\times32$ color images in 100 object classes. 
It contains a training dataset with 50000 data samples, and a test dataset with 10000 data samples. 
We use the original training dataset for training; and randomly select 3000 samples in the original test dataset for validation, and the rest 7000 samples for testing.
We use a mini-batch with size of 128 training on 1 GeForce RTX 3090 GPU.
We use Nesterov SGD with weight decay 0.0001 as the optimizer, and use all default parameters provided by Pytorch. 
The number of training epochs is chosen as 100 at each round. 
The initial learning rate is $0.1$, and it decays with the cosine annealing schedule.
The ResNet56 used in our experiments is proposed in \cite{bib:CVPR16:He}.
All weights in ResNet56 need around $3.44$MB.
Each data sample needs $3.072$KB.
The size of ResNet56 equals around 1100 data samples.

\subsubsection{MobileNetV1 on ImageNet}
\label{ch5-sec:experiment_mobilenetv1}
The ImageNet dataset \cite{bib:ILSVRC15} consists of high-resolution color images in 1000 object classes. 
It contains a training dataset with $1.28$ million data samples, and a validation dataset with 50000 data samples. 
Following the commonly used pre-processing \cite{bib:torchResNet}, each sample (single image) is randomly resized and cropped into a $224\times224$ color image.
We use the original training dataset for training; and randomly select 15000 samples in the original validation dataset for validation, and the rest 35000 samples for testing.
We use a mini-batch with size of 1024 training on 4 GeForce RTX 3090 GPUs. 
We use Nesterov SGD with weight decay 0.0001 as the optimizer, and use all default parameters provided by Pytorch. 
The number of training epochs is chosen as 150 at each round. 
The initial learning rate is $0.5$, and it decays with the cosine annealing schedule.
The MobileNetV1 used in our experiments is proposed in \cite{bib:arXiv17:Howard}.
All weights in MobileNetV1 need around $16.93$MB.
Each data sample needs $150.528$KB.
The size of MobileNetV1 equals around 340 data samples.

\subsection{Ablation Studies on Full Updating}
\label{ch5-sec:experiment_fullupdating}

\fakeparagraph{Settings}
In this section, we compare full updating with different initialization at each round to confirm the best-performed full updating baseline. 
The compared full updating methods include, (\textit{i}) the model is trained from different random initialization at each round; (\textit{ii}) the model is trained from a same random initialization at each round, \ie with the same random seed; (\textit{iii}) the model is trained from the weights $\bm{w}^{r-1}$ of the last round at each round.
The experiments are conducted on VGGNet using CIFAR10 dataset with different amounts of training samples $\{|\mathcal{D}^1|,|\delta\mathcal{D}^r|\}$.
Each experiment runs for three times using random data samples.

\fakeparagraph{Results}
We report the mean and the standard deviation of test accuracy (over three runs) under different initialization in \figref{ch5-fig:fullupdating}.
The results show that training from a same random initialization yields a similar accuracy level while sometimes also a lower variance, as training from different random initialization at each round.
In comparison to training from scratch (\ie random initialization), training from $\bm{w}^{r-1}$ may yield a higher accuracy in the first few rounds; yet training from scratch can always outperform after a large number of rounds.
Thus, in this chapter, we adopt training from a same random initialization at each round, \ie (\textit{ii}), as the baseline of full updating.

\begin{figure}[tbp!]
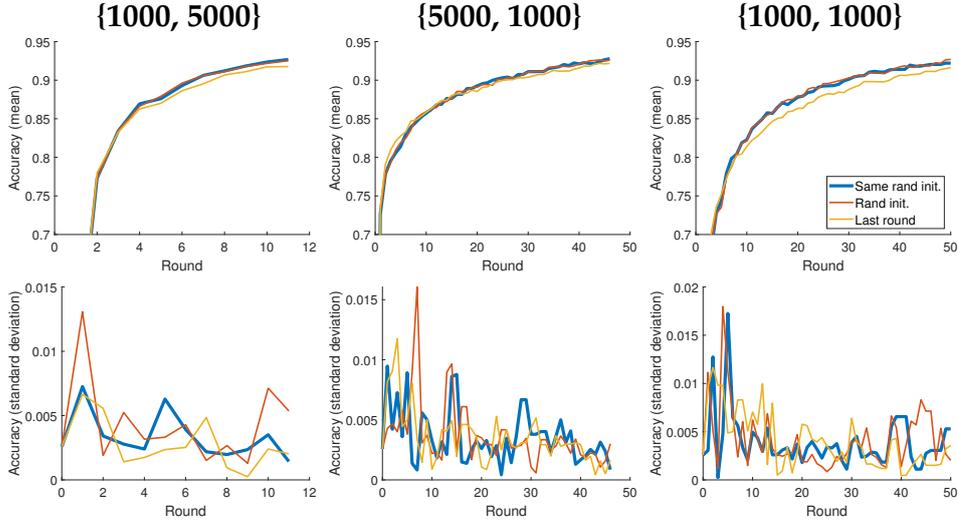

    \setlength\tabcolsep{\imgtabcolsep}
    \centering
    \begin{tabular}{ccc}
        \textbf{~~~~\{1000,~5000\}}         & \textbf{~~~~\{5000,~1000\}}           & \textbf{~~~~\{1000,~1000\}}           \\
        \tabimgb{./figs/ch5/1000-5000-f}    & \tabimgb{./figs/ch5/5000-1000-f}      & \tabimgb{./figs/ch5/1000-1000-f}      \\
        \tabimgb{./figs/ch5/1000-5000-fv}   & \tabimgb{./figs/ch5/5000-1000-fv}     & \tabimgb{./figs/ch5/1000-1000-fv}     \\
    \end{tabular}
    \caption[Comparing full updating methods with different initialization at each round.]{Comparing full updating methods with different initialization at each round.}
    \label{ch5-fig:fullupdating}
\end{figure}

\subsection{Number of Rounds for Re-Initialization}
\label{ch5-sec:experiment_reinit}

\fakeparagraph{Settings}
In these experiments, we re-initialize the model every $n$ rounds under different partial updating settings to determine a heuristic rule to set the number of rounds for re-initialization.
We conduct experiments on VGGNet using CIFAR10 dataset, with different amounts of training samples $\{|\mathcal{D}^1|,|\delta\mathcal{D}^r|\}$ and different updating ratios $k$.
Every $n$ rounds, the model is (re-)initialized again from a same random model (as mentioned in \secref{ch5-sec:experiment_fullupdating}), then partially updated in the next $n$ rounds with \algoref{ch5-alg:dpu}.
We choose $n=1,5,10,20$.
Specially, $n=1$ means that the model is partially updated from the same random model every round, \ie without reusing the learned knowledge at all.
Each experiment runs three times using random data samples.

\fakeparagraph{Results}
We plot the mean test accuracy along rounds in \figref{ch5-fig:partial_reinit}.
By comparing $n=1$ with other settings, we can conclude that within a certain number of rounds, the current deployed model $\bm{w}^{r-1}$ (\ie the model from the last round) is a better starting point for \algoref{ch5-alg:dpu} than a randomly initialized model. 
In other word, partially updating from the last round may yield a higher accuracy than partially updating from a random model with the same training effort.
This is straightforward, since such a model is already pretrained on a subset of the currently available data samples, and the previous learned knowledge could help the new training process. 
Since all newly collected samples are continuously stored in the server, complete information about the past data samples is available. 
This also makes our setting different from continual learning setting, which aims at avoiding catastrophic forgetting without accessing (at least not all) old data.

Each time the model is re-initialized, the new partially updated model might suffer from an accuracy drop in a few rounds.
Although this accuracy drop may be relieved if we carefully tune the partial updating training scheme every time, this is not feasible regarding a large number of updating rounds.
However, we can simply avoid such an accuracy drop by not updating the model if the validation accuracy does not increase compared to the last round (as discussed in \secref{ch5-sec:experiment}).
Note that in this situation, the partially updated weights (as well as the random seed for re-initialization) still need to be sent to the edge devices, since this is an on-going training process. 
After implementing the above strategy, we plot the mean accuracy in \figref{ch5-fig:partial_reinit_savecomm}. 
In addition, we also add the related results of full updating in \figref{ch5-fig:partial_reinit_savecomm}, where the model is fully updated and is re-initialized every $n$ rounds from a same random model.  
Note that full updating with re-initialization every round ($n=1$) is exactly the same as ``same rand init.'' in \figref{ch5-fig:fullupdating} in \secref{ch5-sec:experiment_fullupdating}.
From \figref{ch5-fig:partial_reinit_savecomm}, we can conclude that the model needs to be re-initialized more frequently in the first several rounds than in the following rounds to achieve a higher accuracy level.
The model also needs to be re-initialized more frequently with a large partial updating ratio $k$.
Particularly, the ratio between the number of current data samples and the number of following collected data samples has a larger impact than the updating ratio.

Thus, we propose to re-initialize the model as long as the number of total newly collected data samples exceeds the number of samples when the model is re-initialized last time.
For example, assume that at round $r$ the model is randomly (re-)initialized and partially updated from the random model on dataset $\mathcal{D}^r$. 
Then, the model will be re-initialized at round $r+n$, if $|\mathcal{D}^{r+n}|>2\cdot|\mathcal{D}^r|$. 

\begin{figure}[tbp!]
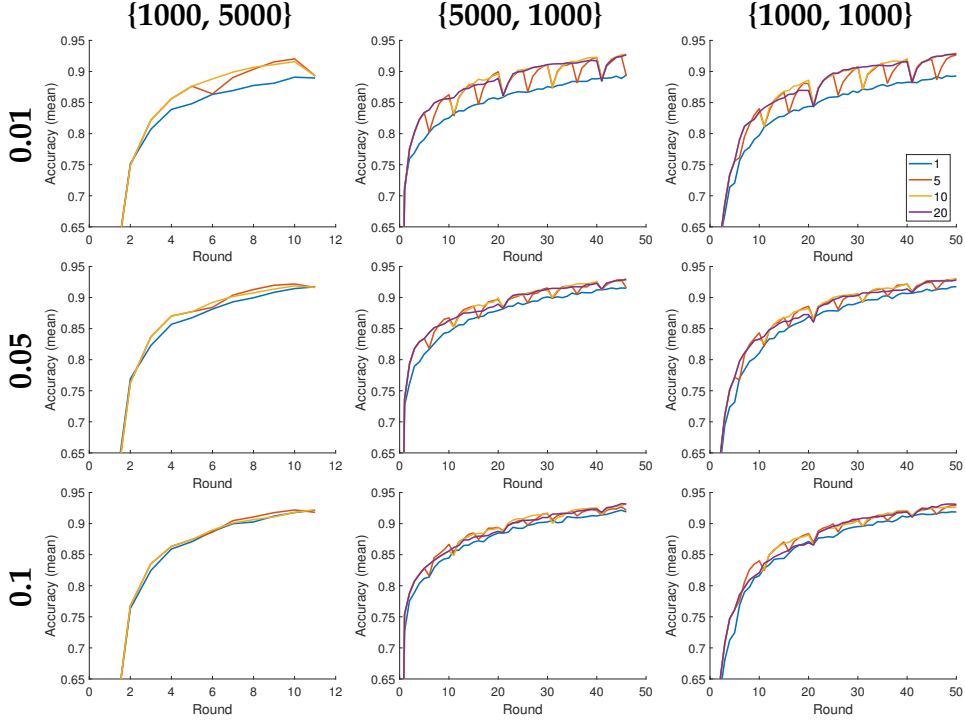

    \setlength\tabcolsep{\imgtabcolsep}
    \centering
    \begin{tabular}{m{0.3cm}ccc}
                                          & \textbf{~~~~\{1000,~5000\}}        & \textbf{~~~~\{5000,~1000\}}        & \textbf{~~~~\{1000,~1000\}}            \\ 
        \rotatebox{90}{\textbf{0.01}}     & \tabimgc{./figs/ch5/1000-5000-1-r} & \tabimgc{./figs/ch5/5000-1000-1-r} & \tabimgc{./figs/ch5/1000-1000-1-r}     \\ 
        \rotatebox{90}{\textbf{0.05}}     & \tabimgc{./figs/ch5/1000-5000-2-r} & \tabimgc{./figs/ch5/5000-1000-2-r} & \tabimgc{./figs/ch5/1000-1000-2-r}     \\
        \rotatebox{90}{\textbf{0.1}}      & \tabimgc{./figs/ch5/1000-5000-3-r} & \tabimgc{./figs/ch5/5000-1000-3-r} & \tabimgc{./figs/ch5/1000-1000-3-r}     \\ 
    \end{tabular}
    \caption[Comparison w.r.t. the mean accuracy when \dpu is re-initialized every $n$ rounds.]{Comparison w.r.t. the mean accuracy when \dpu is re-initialized every $n$ rounds ($n=1,5,10,20$) under different $\{|\mathcal{D}^1|,|\delta\mathcal{D}^r|\}$ and updating ratio ($k=0.01,0.05,0.1$) settings.}
    \label{ch5-fig:partial_reinit}
\end{figure}

\begin{figure}[tbp!]
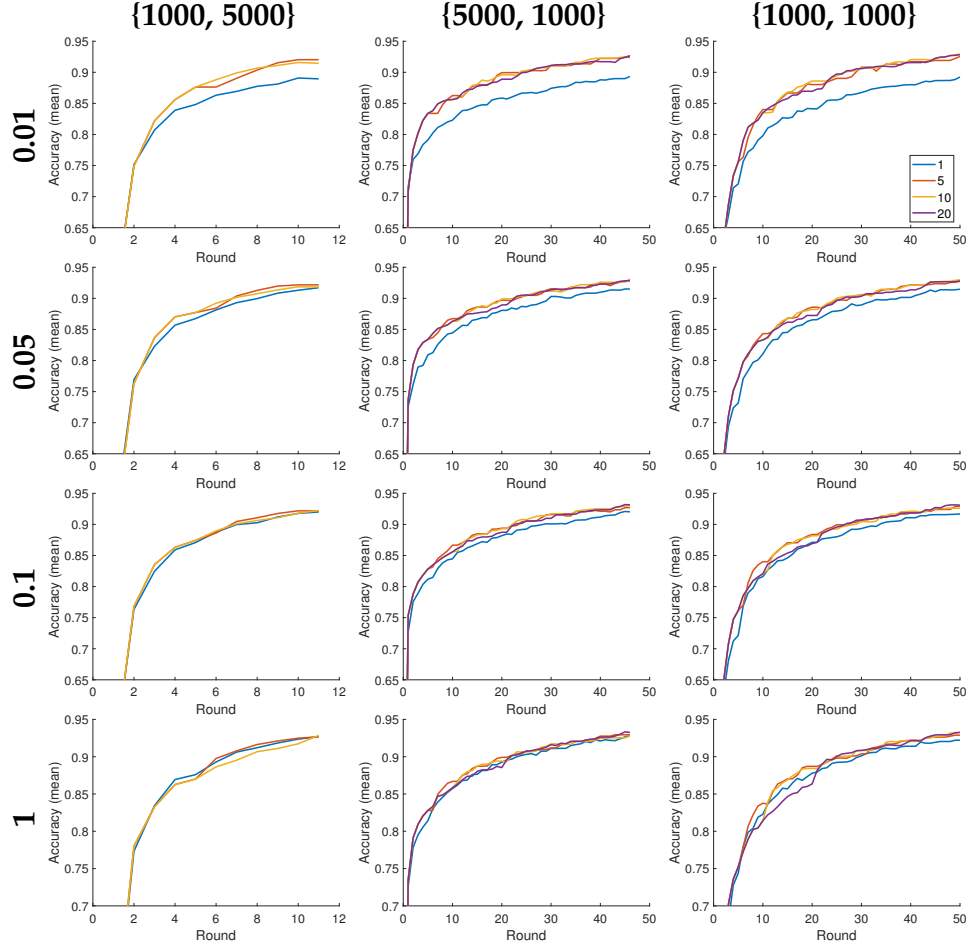

    \setlength\tabcolsep{\imgtabcolsep}
    \centering
    \begin{tabular}{m{0.3cm}ccc}
                                          & \textbf{~~~~\{1000,~5000\}}         & \textbf{~~~~\{5000,~1000\}}         & \textbf{~~~~\{1000,~1000\}}            \\ 
        \rotatebox{90}{\textbf{0.01}}     & \tabimgc{./figs/ch5/1000-5000-1-rp} & \tabimgc{./figs/ch5/5000-1000-1-rp} & \tabimgc{./figs/ch5/1000-1000-1-rp}    \\ 
        \rotatebox{90}{\textbf{0.05}}     & \tabimgc{./figs/ch5/1000-5000-2-rp} & \tabimgc{./figs/ch5/5000-1000-2-rp} & \tabimgc{./figs/ch5/1000-1000-2-rp}    \\
        \rotatebox{90}{\textbf{0.1}}      & \tabimgc{./figs/ch5/1000-5000-3-rp} & \tabimgc{./figs/ch5/5000-1000-3-rp} & \tabimgc{./figs/ch5/1000-1000-3-rp}    \\
        \rotatebox{90}{\textbf{1}}        & \tabimgc{./figs/ch5/1000-5000-fr}   & \tabimgc{./figs/ch5/5000-1000-fr}   & \tabimgc{./figs/ch5/1000-1000-fr}      \\
    \end{tabular}
    \caption[Comparison w.r.t. the mean accuracy when \dpu is re-initialized every $n$ rounds.]{Comparison w.r.t. the mean accuracy when \dpu is re-initialized every $n$ rounds ($n=1,5,10,20$) under different $\{|\mathcal{D}^1|,|\delta\mathcal{D}^r|\}$ and updating ratio ($k=0.01,0.05,0.1$ and full updating $k=1$) settings.}
    \label{ch5-fig:partial_reinit_savecomm}
\end{figure}

\subsection{Impacts from Global/Local Contributions}
\label{ch5-sec:experiment_impact}

\subsubsection{Ablation Studies of Rewinding Metrics}
\label{ch5-sec:experiment_rewind}

\fakeparagraph{Settings} 
We conduct a set of ablation experiments regarding different rewinding metrics discussed in \secref{ch5-sec:metric}. 
We compare the influence of the local and global contributions as well as their combination, in terms of the training loss after the rewinding and the final test accuracy. 
We conduct single-round updating on VGGNet.
The initial model are fully trained on a randomly selected dataset of $10^3$ samples. 
After adding $10^3$ new randomly selected samples, we conduct the first step of our approach (see \algoref{ch5-alg:dpu}) with all three rewinding metrics, \ie the global contribution, the local contribution, and the combined contribution.
Accordingly, the second step (sparse fine-tuning) is executed.
The experiment is executed over five runs with different random seeds.

\begin{table}[tbp!]
    \centering
    \caption[Comparing training loss after rewinding and the final test accuracy under different metrics.]{Comparing training loss after rewinding and the final test accuracy under different metrics.} 
    \label{ch5-tab:lossincr}
    \footnotesize
    \begin{tabular}{cccc}
        \toprule
        \multirow{2}{*}{$k$}  & \multicolumn{3}{c}{Training loss at $\bm{w}+\delta\bm{w}^\mathrm{f}\odot\bm{m}$ ~~~~(Test accuracy at $\widetilde{\bm{w}}$)}           \\ \cline{2-4} 
                              & \multicolumn{1}{c}{Global}                   & \multicolumn{1}{c}{Local}                & \multicolumn{1}{c}{Combined}                 \\ \hline
        0.01                  & $3.04\pm0.07$ $(55.0\pm0.1\%)$               & $\bm{2.59}\pm0.08$ $(55.6\pm0.1\%)$      & $2.66\pm0.09$ $(\bm{56.5}\pm0.0\%)$          \\
        0.05                  & $2.51\pm0.06$ $(57.3\pm0.2\%)$               & $1.80\pm0.10$ $(57.8\pm0.1\%)$           & $\bm{1.67}\pm0.06$ $(\bm{58.2}\pm0.1\%)$     \\
        0.1                   & $2.03\pm0.05$ $(58.3\pm0.0\%)$               & $1.34\pm0.08$ $(59.0\pm0.1\%)$           & $\bm{0.99}\pm0.03$ $(\bm{59.0}\pm0.1\%)$     \\
        0.2                   & $1.20\pm0.05$ $(59.0\pm0.1\%)$               & $0.74\pm0.03$ $(59.6\pm0.2\%)$           & $\bm{0.42}\pm0.01$ $(\bm{60.1}\pm0.2\%)$     \\ 
        \bottomrule
    \end{tabular}
\end{table}

\fakeparagraph{Results}
The training loss after rewinding (\ie $\ell(\bm{w}+\delta\bm{w}^\mathrm{f}\odot\bm{m})$) and the final test accuracy after sparse fine-tuning (\ie at $\widetilde{\bm{w}}$) are reported in \tabref{ch5-tab:lossincr}.
We use the form of mean $\pm$ standard deviation.
As seen in the table, the combined contribution always yields a lower or similar training loss after rewinding compared to the other two metrics.
The smaller deviation also indicates that adopting the combined contribution yields more robust results.
This demonstrates the effectiveness of our proposed metric, \ie the combined contribution to the analytical upper bound on loss reduction.
Rewinding with the combined contribution also acquires a higher final accuracy, which in turn verifies the hypothesis we made for partial updating, a weight shall be updated only if it has a large contribution to the loss reduction.

\subsubsection{Balancing between Global and Local Contributions}
\label{ch5-sec:experiment_balancing}

\fakeparagraph{Settings} 
In \equref{ch5-eq:combined}, the combined contribution is calculated by adding both normalized contributions together.
However, both normalized contributions may have different importance when determining the critical weights.
In order to investigate which one plays a more essential role in the combined contribution, we introduce another hyper-parameter $\lambda$ to tune the proportion of both normalized contributions as
\begin{equation}
    \bm{c}_\lambda = \lambda \cdot \frac{1}{\bm{1}^\mathrm{T} \cdot \bm{c}^\mathrm{global}} \bm{c}^\mathrm{global} + (1-\lambda) \cdot \frac{1}{\bm{1}^\mathrm{T} \cdot \bm{c}^\mathrm{local}} \bm{c}^\mathrm{local}
\end{equation}
Note that the combined contribution $\bm{c}$ used in the previous experiments is the same as $\bm{c}_\lambda$ when $\lambda=0.5$, since only the order matters when determining the critical weights.
We implement partial updating methods with the rewinding metric $\bm{c}_\lambda$ under different values of $\lambda$. 
We compare these methods under updating ratios $k=0.01,0.05,0.1$ and different $\{|\mathcal{D}^1|,|\delta\mathcal{D}^r|\}$ settings on VGGNet using CIFAR10 dataset, and with the re-initialization scheme described in \secref{ch5-sec:initialization}.
Each experiment runs three times using random data samples.

\begin{figure}[t!]
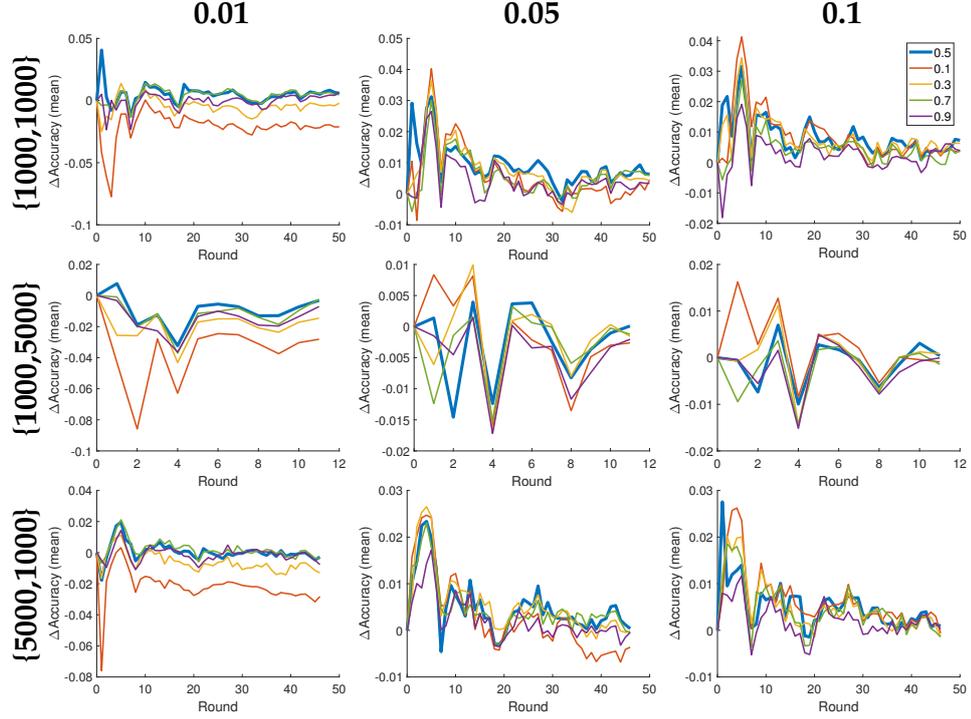

    \setlength\tabcolsep{\imgtabcolsep}
    \centering
    \begin{tabular}{m{0.3cm}ccc}
                                                  & \textbf{~~~~~~0.01}                       & \textbf{~~~~~~0.05}                       & \textbf{~~~~~~0.1}                        \\ %\midrule
        \rotatebox{90}{\textbf{\{1000,1000\}}}    & \tabimgc{./figs/ch5/1000-1000-1-ratiod}   & \tabimgc{./figs/ch5/1000-1000-2-ratiod}   & \tabimgc{./figs/ch5/1000-1000-3-ratiod}   \\ 
        \rotatebox{90}{\textbf{\{1000,5000\}}}    & \tabimgc{./figs/ch5/1000-5000-1-ratiod}   & \tabimgc{./figs/ch5/1000-5000-2-ratiod}   & \tabimgc{./figs/ch5/1000-5000-3-ratiod}   \\ 
        \rotatebox{90}{\textbf{\{5000,1000\}}}    & \tabimgc{./figs/ch5/5000-1000-1-ratiod}   & \tabimgc{./figs/ch5/5000-1000-2-ratiod}   & \tabimgc{./figs/ch5/5000-1000-3-ratiod}   \\
    \end{tabular}
    \caption[Comparison w.r.t. the mean accuracy difference (full updating as the reference) under different $\lambda$.]{Comparison w.r.t. the mean accuracy difference (full updating as the reference) under $\lambda=0.5,0.1,0.3,0.7,0.9$. The chosen settings are updating ratios $k=0.01,0.05,0.1$, $\{|\mathcal{D}^1|,|\delta\mathcal{D}^r|\}=\{1000,1000\}, \{1000,5000\}, \{5000,1000\}$.}
    \label{ch5-fig:combinedratio}
\end{figure}

\fakeparagraph{Results}
To clearly illustrate the impact of $\lambda$, we compare the difference between the accuracy under partial updating methods with various $\lambda$ and that under full updating. The mean accuracy difference (over three runs) are plotted in \figref{ch5-fig:combinedratio}.
As seen in \figref{ch5-fig:combinedratio}, $\lambda=0.5$ always obtains the best performance in general, especially when the updating ratio is small.
Thus, in the following experiments, we fix this hyper-parameter $\lambda$ as 0.5. 
In other words, the combined contribution is chosen as 
\begin{equation}
    \bm{c}_\lambda(\lambda=0.5) = 0.5 \cdot \frac{1}{\bm{1}^\mathrm{T} \cdot \bm{c}^\mathrm{global}} \bm{c}^\mathrm{global} + 0.5 \cdot \frac{1}{\bm{1}^\mathrm{T} \cdot \bm{c}^\mathrm{local}} \bm{c}^\mathrm{local}
\end{equation}
which has exactly the same functionality as \equref{ch5-eq:combined}. 
Note that it may be possible to manually find another hyper-parameter $\lambda$ that achieves better performance in certain cases. 
However, setting $\lambda$ as 0.5 already yields a satisfactory performance, and can avoid meticulous and computationally expensive hyper-parameter tuning in a large number of updating rounds.

\subsubsection{Number of Updated Weights across Layers under Different Rewinding Metrics}
\label{ch5-sec:experiment_numweights}

\fakeparagraph{Settings} 
To further study the impact of adopting different rewinding metrics, we show the distribution of updated weights across layers in this section.
We implement partial updating methods with three rewinding metrics (\ie the global contribution, the local contribution, and the combined contribution, see in \secref{ch5-sec:metric}) on VGGNet using CIFAR10 dataset. 
We compare these methods with different updating ratios $k$ under $\{|\mathcal{D}^1|,|\delta\mathcal{D}^r|\}=\{1000,1000\}$.
To study the distribution of updated weights along all rounds, we let the model partially updated in every round even if the validation accuracy may not increase compared to the previous round.
All methods start from the same randomly initialized model, and are re-initialized with this random model according to the proposed scheme in \secref{ch5-sec:initialization}.

\begin{figure}[!tbp]
    \centering
	\includegraphics[width=0.95\textwidth]{./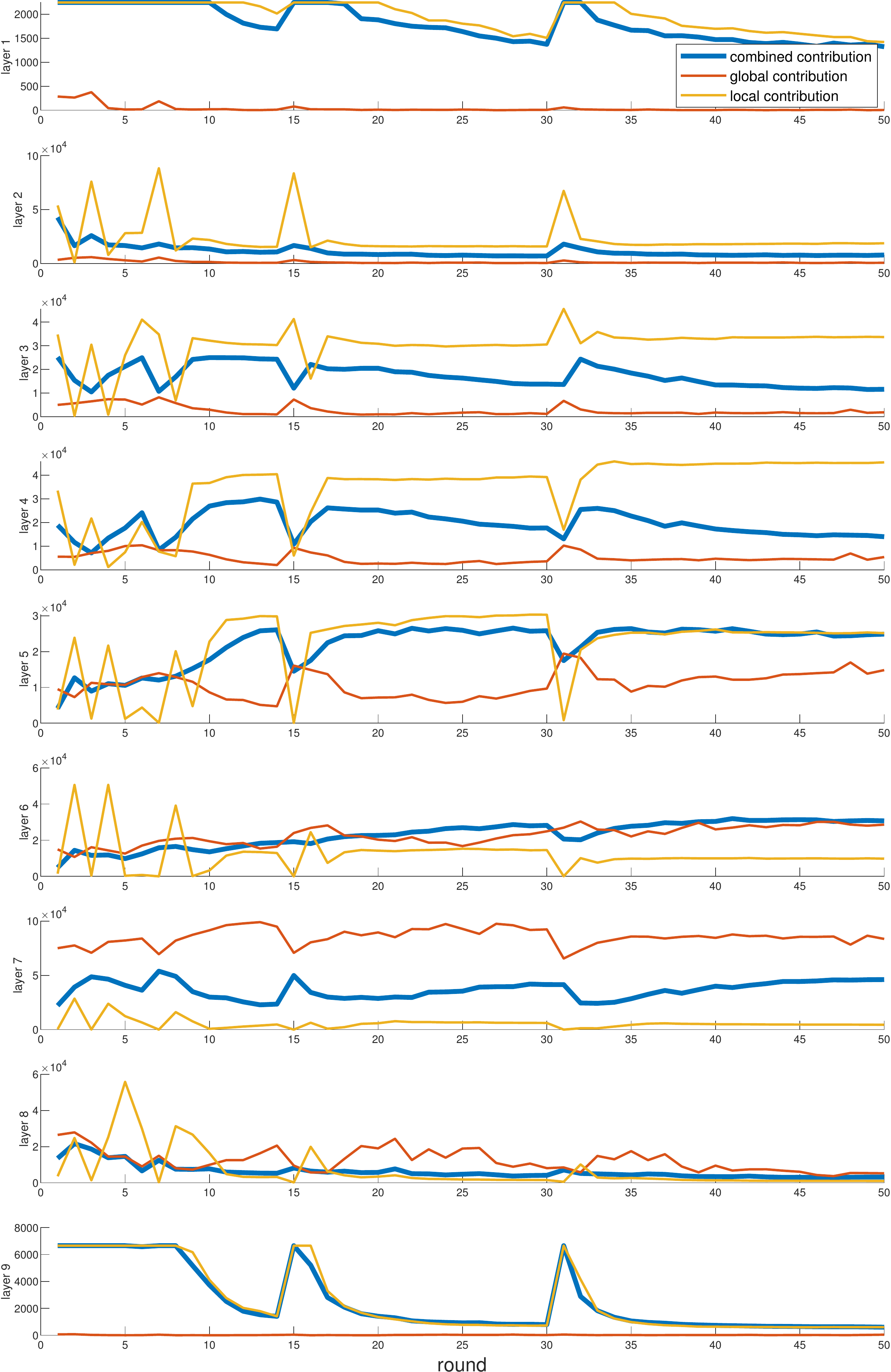}
	\caption[Number of updated weights across all layers when adopting different rewinding metrics with $k=0.01$.]{Number of updated weights across all layers (VGGNet) when adopting different rewinding metrics (updating ratio $k=0.01$).}
    \label{ch5-fig:updatednum1}
\end{figure}

\begin{figure}[!tbp]
    \centering
	\includegraphics[width=0.95\textwidth]{./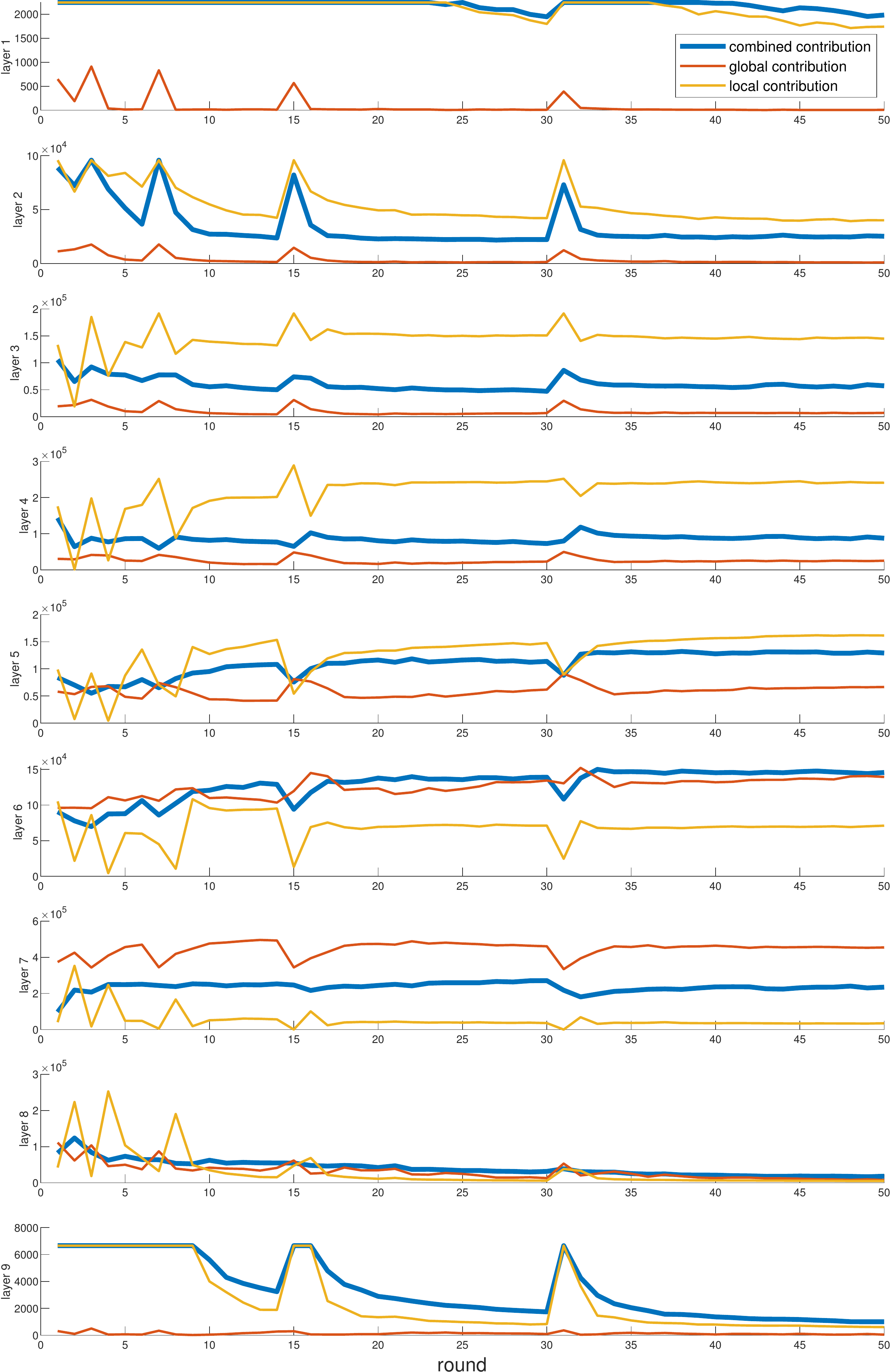}
	\caption[Number of updated weights across all layers when adopting different rewinding metrics with $k=0.05$.]{Number of updated weights across all layers (VGGNet) when adopting different rewinding metrics (updating ratio $k=0.05$).}
    \label{ch5-fig:updatednum2}
\end{figure}

\begin{figure}[!tbp]
    \centering
	\includegraphics[width=0.95\textwidth]{./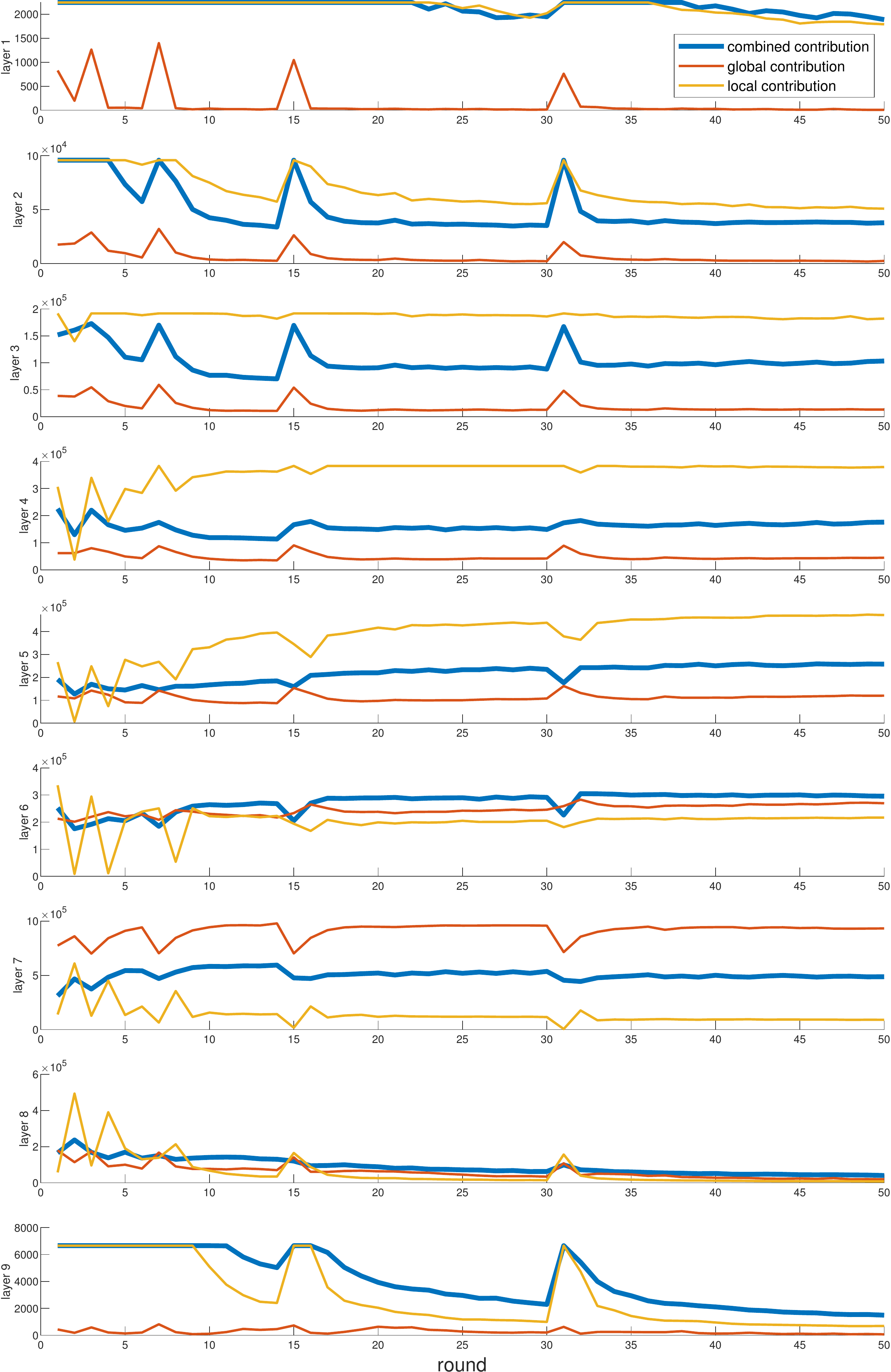}
	\caption[Number of updated weights across all layers when adopting different rewinding metrics with $k=0.1$.]{Number of updated weights across all layers (VGGNet) when adopting different rewinding metrics (updating ratio $k=0.1$).}
    \label{ch5-fig:updatednum3}
\end{figure}

\fakeparagraph{Results}
We plot the number of updated weights across all layers along rounds, under updating ratio $k=0.01,0.05,0.1$ in \figref{ch5-fig:updatednum1}, \figref{ch5-fig:updatednum2}, and \figref{ch5-fig:updatednum3}, respectively. 
We also plot the corresponding test accuracy along rounds in \figref{ch5-fig:updatedacc}.
Generally, the metric of local contribution updates more weights in the first several layers and the last layer while with a large variance along rounds.
On the contrary, global contribution selects more weights in the last several layers (until the penultimate layer) to update.
Combined contribution (the sum of normalized local/global contribution) achieves a more robust and balanced distribution of updated weights across layers than other contributions. 
It also results in the highest accuracy level especially under a small updating ratio.
Intuitively, local contribution can better identify critical weights w.r.t. the loss during training, while global contribution may be more robust for a highly non-convex loss landscape. 
Both metrics may be necessary when selecting weights to rewind.
Note that the proposed combined contribution is not the simple averaging of both local and global contribution.
For example, in ``layer 6'' of \figref{ch5-fig:updatednum3}, the number of updated weights by combined contribution already exceeds the other two metrics.

\begin{figure}[tbp]
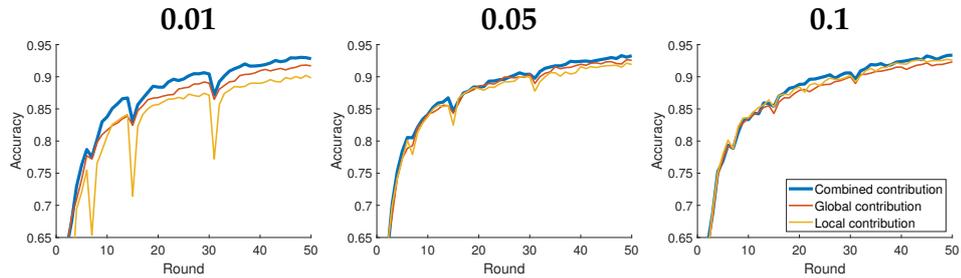

    \setlength\tabcolsep{\imgtabcolsep}
    \centering
    \begin{tabular}{ccc}
        \textbf{~~~~~~0.01}                             & \textbf{~~~~~~0.05}                               & \textbf{~~~~~~0.1}                                \\
        \tabimgb{./figs/ch5/1000-1000-1-updated-acc}    & \tabimgb{./figs/ch5/1000-1000-2-updated-acc}      & \tabimgb{./figs/ch5/1000-1000-3-updated-acc}      \\
    \end{tabular}
    \caption[The test accuracy of partial updating methods with different rewinding metrics.]{The test accuracy of partial updating methods with different rewinding metrics (updating ratio $k=0.01,0.05,0.1$).}
    \label{ch5-fig:updatedacc}
\end{figure}

\subsection{Benchmarking Multi-Round Updating}
\label{ch5-sec:experiment_multiround}

\fakeparagraph{Settings} 
To the best of our knowledge, this is the first work on studying weight-wise partial updating a model using newly collected data in iterative rounds. 
Therefore, we developed three baselines for comparison, including (\textit{i}) full updating (FU), where at each round the model is fully updated from a random initialization (\ie training from scratch, which yields a better performance see \secref{ch5-sec:initialization} and \secref{ch5-sec:experiment_fullupdating}); (\textit{ii}) random partial updating (RPU), where the model is trained from $\bm{w}^{r-1}$, while we randomly fix each layer's weights with a ratio of $(1-k)$ and sparsely fine-tune the rest; (\textit{iii}) global contribution partial updating (GCPU), where the model is trained with \algoref{ch5-alg:gcpu} without re-initialization described in \secref{ch5-sec:initialization}; (\textit{iv}) a state-of-the-art unstructured pruning method \cite{bib:ICLR20:Renda}, where the model is first trained from a random initialization at each round, then conducts one-shot magnitude pruning, and finally is sparsely fine-tuned with learning rate rewinding. 
The ratio of nonzero weights in pruning is set to the same as the updating ratio $k$ to ensure the same communication cost.
The experiments are conducted on different benchmarks as mentioned earlier.

\begin{figure}[t]
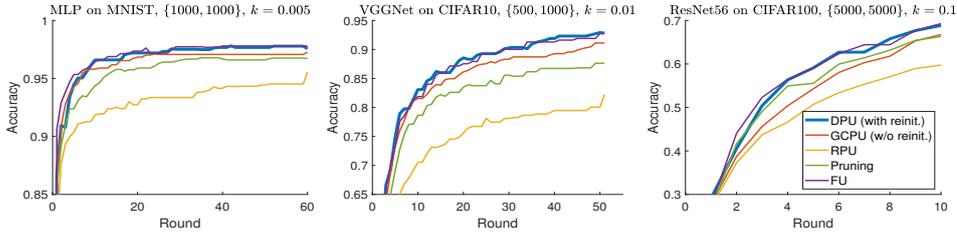

    \setlength\tabcolsep{\imgtabcolsep}
    \centering
        \begin{tabular}{ccc}
            \tabimgd{./figs/ch5/mlp-pruning}    & \tabimgd{./figs/ch5/vgg-pruning}      & \tabimgd{./figs/ch5/resnet56-pruning}      \\ 
        \end{tabular}
    \caption[\dpu is compared with other baselines on different benchmarks in terms of the test accuracy during multi-round updating.]{\dpu is compared with other baselines on different benchmarks in terms of the test accuracy during multi-round updating.}
    \label{ch5-fig:multiround}
\end{figure}

\begin{table}[t]
    \centering
    \caption[The average accuracy difference (full updating as the reference) over all rounds and the ratio of communication cost over all rounds related to full updating.]{The average accuracy difference over all rounds and the ratio of communication cost over all rounds related to full updating.} 
    \label{ch5-tab:multiround}
    \footnotesize
    \begin{tabular}{ccccccc}
        \toprule
        \multirow{2}{* }{Method}            & \multicolumn{3}{c}{Average accuracy difference}           & \multicolumn{3}{c}{Ratio of communication cost}       \\ \cmidrule(lr){2-4}  \cmidrule(lr){5-7} 
                                            & MLP               & VGGNet            & ResNet56          & MLP          & VGGNet         & ResNet56              \\ \hline
        \dpu                                & $\bm{-0.17\%}$    & $\bm{+0.33\%}$    & $\bm{-0.42\%}$    & $0.0071$     & $0.0183$       & $0.1147$              \\
        GCPU                                & $-0.72\%$         & $-1.51\%$         & $-3.87\%$         & $0.0058$     & $0.0198$       & $0.1274$              \\
        RPU                                 & $-4.04\%$         & $-11.35\%$        & $-7.78\%$         & $0.0096$     & $0.0167$       & $0.1274$              \\
        Pruning \cite{bib:ICLR20:Renda}     & $-1.45\%$         & $-4.35\%$         & $-2.35\%$         & $0.0106$     & $0.0141$       & $0.1274$              \\
        \bottomrule
    \end{tabular}
\end{table}

\fakeparagraph{Results}
We report the test accuracy of the model $\bm{w}^r$ along rounds in \figref{ch5-fig:multiround}.
All methods start from the same $\bm{w}^0$, an entirely randomly initialized model.
As seen in this figure, \dpu clearly yields the highest accuracy in comparison to the other partial updating schemes.
For example, \dpu can yield a final Top-1 accuracy of $92.85\%$ on VGGNet, even exceeds the accuracy ($92.73\%$) of full updating, while GCPU, RPU, and Pruning only acquire $91.11\%$, $82.21\%$, and $87.62\%$ respectively.
In addition, we compare three partial updating schemes in terms of the accuracy difference related to full updating averaged over all rounds, and the ratio of the communication cost related to full updating over all rounds in \tabref{ch5-tab:multiround}. 
As seen in the table, \dpu reaches a similar accuracy as full updating, while incurring significantly fewer transmitted data sent from the server to each edge node.
Specially, \dpu saves around $99.3\%$, $98.2\%$ and $88.5\%$ of transmitted data on MLP, VGGNet, and ResNet56, respectively ($95.3\%$ in average). 
The communication cost ratios shown in \tabref{ch5-tab:multiround} differ a little even for the same updating ratio. 
This is because if the validation accuracy does not increase, the model will not be updated to reduce the communication cost, as discussed earlier.
The horizontal straight line segments in \figref{ch5-fig:multiround} represent those non-updated rounds under each method.

\subsubsection{Experiments on Total Communication Cost Reduction}
\label{ch5-sec:experiment_totalcost}

\begin{figure}[tbp!]
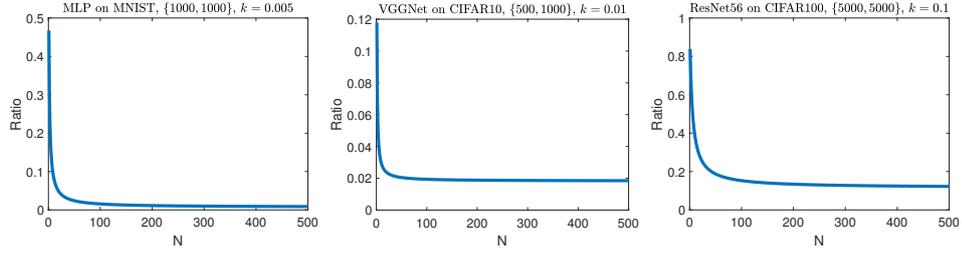

    \setlength\tabcolsep{\imgtabcolsep}
    \centering
    \begin{tabular}{ccc}
        \tabimgb{./figs/ch5/mlp-N}    & \tabimgb{./figs/ch5/vgg-N}      & \tabimgb{./figs/ch5/resnet56-N}      \\ 
    \end{tabular}
    \caption[The ratio, between the total communication cost under \dpu and that under full updating, varies with the number of nodes $N$.]{The ratio, between the total communication cost (over all rounds) under \dpu and that under full updating, varies with the number of nodes $N$.}
    \label{ch5-fig:communicationratio}
\end{figure}

\fakeparagraph{Settings}
In this section, we show the benefit due to \dpu in terms of \textit{the total communication cost reduction}, as \dpu has no impact on the edge-to-server communication which may involve sending newly collected data samples on nodes.
The total communication cost includes both edge-to-server communication and server-to-edge communication.
This experimental setup assumes that all data samples in $\delta\mathcal{D}^r$ are collected by $N$ edge nodes during all rounds and sent to the server on a per-round basis.
In other words, the first stage (see in \secref{ch5-sec:introduction}) is anyway necessary for sending new training data to the server.
For clarity, let $S_d$ denote the data size of each training sample. 
During round $r$, we define the per-node total communication cost under \dpu as $S_d\cdot|\delta\mathcal{D}^r|/N + (S_w \cdot k \cdot I + S_x(k) \cdot I)$.
Similarly, the per-node total communication cost under full updating is defined as $S_d\cdot|\delta\mathcal{D}^r|/N + S_w \cdot I$.

In order to simplify the demonstration, we consider the scenario where $N$ nodes send a certain amount of data samples to the server in $R-1$ rounds, namely $\sum_{r=2}^R |\delta \mathcal{D}^r|$ (see \secref{ch5-sec:initialization}).
Thus, the average data size transmitted from each node to the server in all rounds is $\sum_{r=2}^R S_d\cdot|\delta \mathcal{D}^r|/N$. 
A larger $N$ implies a fewer amount of transmitted data from each node to the server.

\fakeparagraph{Results}
We report the ratio of the total communication cost over all rounds required by \dpu related to full updating, when \dpu achieves a similar accuracy level as full updating (corresponding to three evaluations in \figref{ch5-fig:multiround}).
The ratio clearly depends on $\sum_{r=2}^R S_d\cdot|\delta \mathcal{D}^r|/N$, \ie the number of nodes $N$.
The relation between the ratio and $N$ is plotted in \figref{ch5-fig:communicationratio}. 

We observe that \dpu can still achieve a significant reduction on the total communication cost, \eg reducing up to $88.2\%$ even for a single node. 
Single node corresponds to the largest data size during edge-to-serve transmission per node, \ie the worst case.  
Moreover, \dpu tends to be more beneficial when the size of data transmitted by each node to the server becomes smaller. 
This is intuitive because in this case the server-to-edge communication (thus the reduction due to \dpu) dominants in the entire communication.

\subsection{Different Number of Data Samples and Updating Ratios}
\label{ch5-sec:experiment_samplesratio}

\fakeparagraph{Settings}
In this section, we show that \dpu outperforms other baselines under varying number of training samples and updating ratios in multi-round updating.
We also conduct ablations concerning the re-initialization of weights discussed in \secref{ch5-sec:initialization}.
We implement \dpu with and without re-initialization, GCPU with and without re-initialization, RPU, and Pruning \cite{bib:ICLR20:Renda} (see more details in \secref{ch5-sec:experiment_multiround}) on VGGNet using CIFAR10 dataset. 
We compare these methods with different amounts of samples $\{|\mathcal{D}^1|,|\delta\mathcal{D}^r|\}$ and different updating ratios $k$.
Without further notations, each experiment runs three times using random data samples.

\begin{figure}[t]
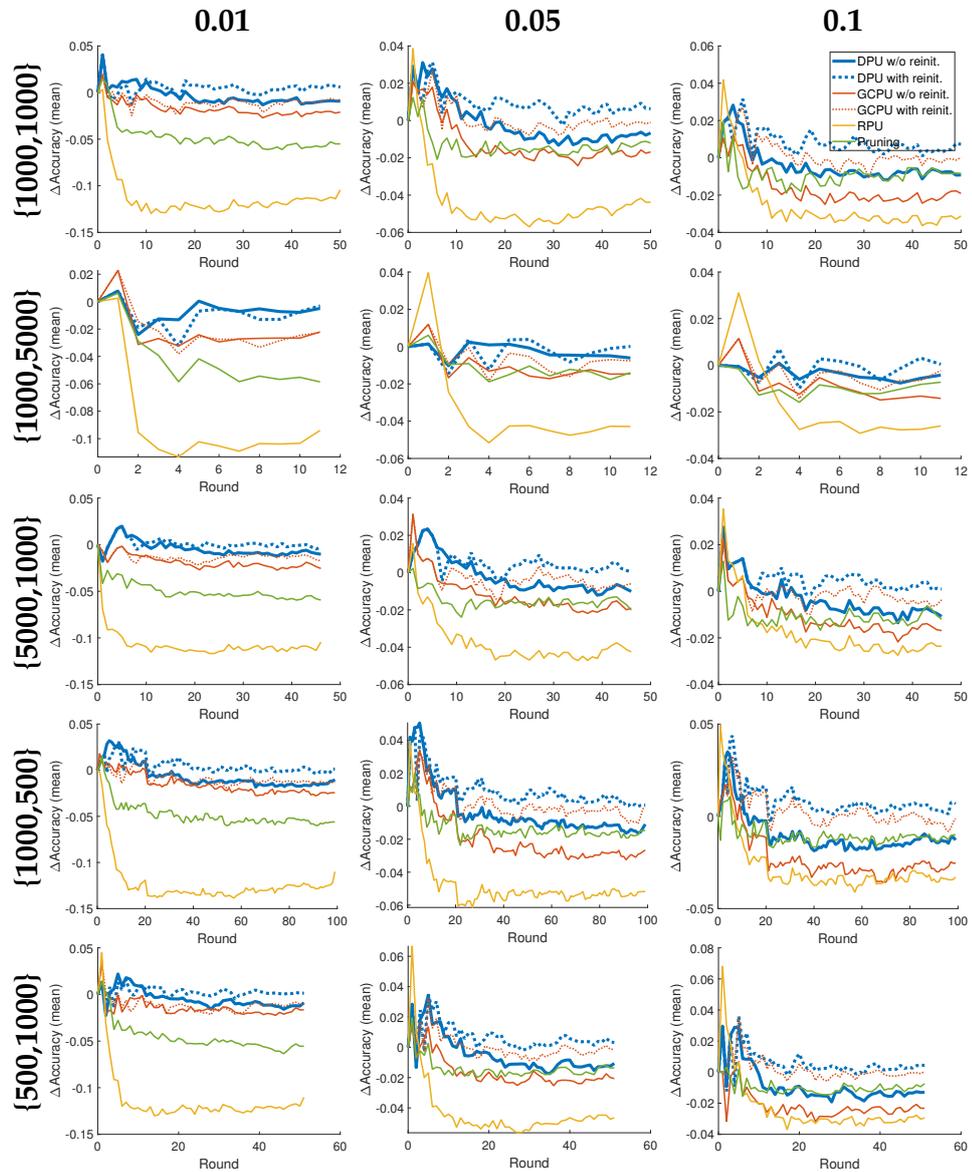

    \setlength\tabcolsep{\imgtabcolsep}
    \centering
    \begin{tabular}{m{0.3cm}ccc}
                                                    & \textbf{~~~~~~0.01}                       & \textbf{~~~~~~0.05}                           & \textbf{~~~~~~0.1}                            \\ %\midrule
        \rotatebox{90}{\textbf{\{1000,1000\}}}      & \tabimgc{./figs/ch5/1000-1000-1-pruningd} & \tabimgc{./figs/ch5/1000-1000-2-pruningd}     & \tabimgc{./figs/ch5/1000-1000-3-pruningd}     \\ 
        \rotatebox{90}{\textbf{\{1000,5000\}}}      & \tabimgc{./figs/ch5/1000-5000-1-pruningd} & \tabimgc{./figs/ch5/1000-5000-2-pruningd}     & \tabimgc{./figs/ch5/1000-5000-3-pruningd}     \\ 
        \rotatebox{90}{\textbf{\{5000,1000\}}}      & \tabimgc{./figs/ch5/5000-1000-1-pruningd} & \tabimgc{./figs/ch5/5000-1000-2-pruningd}     & \tabimgc{./figs/ch5/5000-1000-3-pruningd}     \\
        \rotatebox{90}{\textbf{\{1000,500\}}}       & \tabimgc{./figs/ch5/1000-500-1-pruningd}  & \tabimgc{./figs/ch5/1000-500-2-pruningd}      & \tabimgc{./figs/ch5/1000-500-3-pruningd}      \\
        \rotatebox{90}{\textbf{\{500,1000\}}}       & \tabimgc{./figs/ch5/500-1000-1-pruningd}  & \tabimgc{./figs/ch5/500-1000-2-pruningd}      & \tabimgc{./figs/ch5/500-1000-3-pruningd}      \\
    \end{tabular}
    \caption[Comparison w.r.t. the mean accuracy difference (full updating as the reference) under different settings.]{Comparison w.r.t. the mean accuracy difference (full updating as the reference) under different $\{|\mathcal{D}^1|,|\delta\mathcal{D}^r|\}$ (representing the available data samples along rounds) and updating ratio ($k=0.01,0.05,0.1$) settings.}
    \label{ch5-fig:number_ratio_d}
\end{figure}

\begin{figure}[t]
    \setlength\tabcolsep{\imgtabcolsep}
    \centering
    \begin{tabular}{m{0.3cm}ccc}
                                                    & \textbf{~~~~~~0.01}                       & \textbf{~~~~~~0.05}                           & \textbf{~~~~~~0.1}                            \\ %\midrule
        \rotatebox{90}{\textbf{\{1000,1000\}}}      & \tabimgc{./figs/ch5/1000-1000-1-pruning}  & \tabimgc{./figs/ch5/1000-1000-2-pruning}      & \tabimgc{./figs/ch5/1000-1000-3-pruning}      \\ 
        \rotatebox{90}{\textbf{\{1000,5000\}}}      & \tabimgc{./figs/ch5/1000-5000-1-pruning}  & \tabimgc{./figs/ch5/1000-5000-2-pruning}      & \tabimgc{./figs/ch5/1000-5000-3-pruning}      \\ 
        \rotatebox{90}{\textbf{\{5000,1000\}}}      & \tabimgc{./figs/ch5/5000-1000-1-pruning}  & \tabimgc{./figs/ch5/5000-1000-2-pruning}      & \tabimgc{./figs/ch5/5000-1000-3-pruning}      \\
        \rotatebox{90}{\textbf{\{1000,500\}}}       & \tabimgc{./figs/ch5/1000-500-1-pruning}   & \tabimgc{./figs/ch5/1000-500-2-pruning}       & \tabimgc{./figs/ch5/1000-500-3-pruning}       \\
        \rotatebox{90}{\textbf{\{500,1000\}}}       & \tabimgc{./figs/ch5/500-1000-1-pruning}   & \tabimgc{./figs/ch5/500-1000-2-pruning}       & \tabimgc{./figs/ch5/500-1000-3-pruning}       \\
    \end{tabular}
    \caption[Comparison w.r.t. the mean accuracy under different settings.]{Comparison w.r.t. the mean accuracy under different $\{|\mathcal{D}^1|,|\delta\mathcal{D}^r|\}$ (representing the available data samples along rounds) and updating ratio ($k=0.01,0.05,0.1$) settings.}
    \label{ch5-fig:number_ratio}
\end{figure}

\begin{figure}[t]
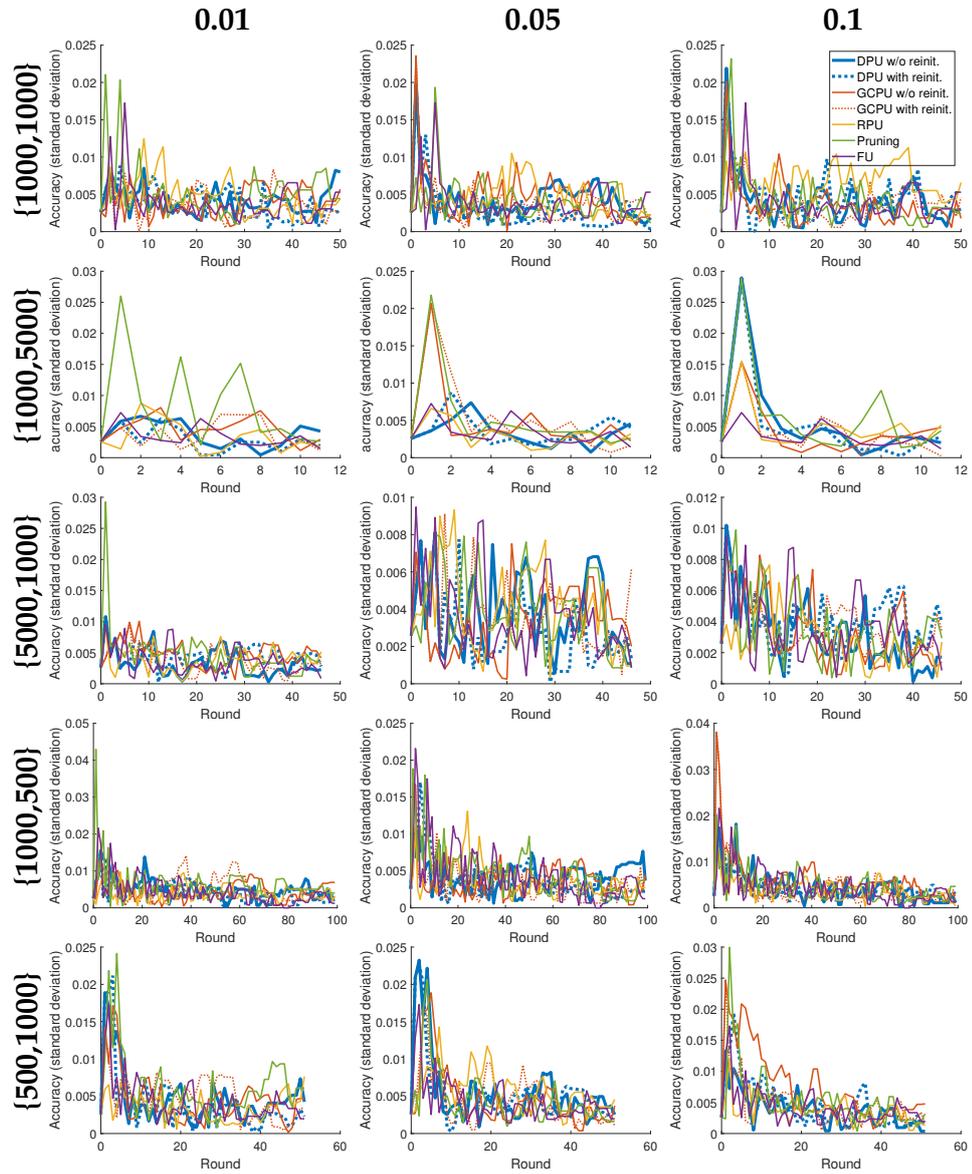

    \setlength\tabcolsep{\imgtabcolsep}
    \centering
    \begin{tabular}{m{0.3cm}ccc}
                                                    & \textbf{~~~~~~0.01}                        & \textbf{~~~~~~0.05}                          & \textbf{~~~~~~0.1}                            \\ %\midrule
        \rotatebox{90}{\textbf{\{1000,1000\}}}      & \tabimgc{./figs/ch5/1000-1000-1-pruningv}  & \tabimgc{./figs/ch5/1000-1000-2-pruningv}    & \tabimgc{./figs/ch5/1000-1000-3-pruningv}     \\ 
        \rotatebox{90}{\textbf{\{1000,5000\}}}      & \tabimgc{./figs/ch5/1000-5000-1-pruningv}  & \tabimgc{./figs/ch5/1000-5000-2-pruningv}    & \tabimgc{./figs/ch5/1000-5000-3-pruningv}     \\ 
        \rotatebox{90}{\textbf{\{5000,1000\}}}      & \tabimgc{./figs/ch5/5000-1000-1-pruningv}  & \tabimgc{./figs/ch5/5000-1000-2-pruningv}    & \tabimgc{./figs/ch5/5000-1000-3-pruningv}     \\
        \rotatebox{90}{\textbf{\{1000,500\}}}       & \tabimgc{./figs/ch5/1000-500-1-pruningv}   & \tabimgc{./figs/ch5/1000-500-2-pruningv}     & \tabimgc{./figs/ch5/1000-500-3-pruningv}      \\
        \rotatebox{90}{\textbf{\{500,1000\}}}       & \tabimgc{./figs/ch5/500-1000-1-pruningv}   & \tabimgc{./figs/ch5/500-1000-2-pruningv}     & \tabimgc{./figs/ch5/500-1000-3-pruningv}      \\
    \end{tabular}
    \caption[Comparison w.r.t. the standard deviation of accuracy under different settings.]{Comparison w.r.t. the standard deviation of accuracy under different $\{|\mathcal{D}^1|,|\delta\mathcal{D}^r|\}$ (representing the available data samples along rounds) and updating ratio ($k=0.01,0.05,0.1$) settings.}
    \label{ch5-fig:number_ratio_v}
\end{figure}

\fakeparagraph{Results}
We compare the difference between the accuracy under each method and that under full updating. 
The mean accuracy difference over three runs is plotted in \figref{ch5-fig:number_ratio_d}.
In addition, we also plot the mean and standard deviation of the absolute accuracy of these methods in \figref{ch5-fig:number_ratio} and \figref{ch5-fig:number_ratio_v}, respectively.
As seen in \figref{ch5-fig:number_ratio_d}, \dpu (with re-initialization) always achieves the highest accuracy. 
\dpu also significantly outperforms the pruning method, especially under a small updating ratio. 
Note that we preferred a smaller updating ratio in our context because it explores the limits of the approach and it indicates that we can improve the deployed model more frequently with the same accumulated server-to-edge communication cost.
The dashed curves and the solid curves with the same color can be viewed as the ablation study of our re-initialization scheme. 
Particularly given a large number of rounds, it is critical to re-initialize the start point $\bm{w}^{r-1}$ after several rounds (as discussed in \secref{ch5-sec:initialization}). 

In the first few rounds, partial updating methods almost always yield a higher test accuracy than full updating, \ie the curves are above zero. 
This is due to the fact that the amount of available samples is rather small, and partial updating may avoid some co-adaptation in full updating.
The partial updating methods perform almost randomly in the first round compared to each other, because the limited data are not sufficient to distinguish critical weights from the random initialization $\bm{w}^0$.
This also motivates us to (partially) update the deployed model when new samples are available.

\fakeparagraph{Pruning Weights vs. Pruning Incremental Weights}
One of our chosen baselines, global contribution partial updating (GCPU, \algoref{ch5-alg:gcpu}), could be viewed as a counterpart of the pruning method \cite{bib:ICLR20:Renda}, \ie pruning the incremental weights with the least magnitudes.
Specially, the elements with the smallest absolute values in $\delta\bm{w}^{\mathrm{f}}$ are set to zero (also rewinding), while the remaining weights are further sparsely fine-tuned with the same learning rate schedule as training $\bm{w}^{\mathrm{f}}$.
In comparison to traditional pruning on weights \cite{bib:ICLR20:Renda}, pruning on incremental weights has a different start point. 
Traditional pruning on weights first trains randomly initialized weights (a zero-initialized model cannot be trained due to the symmetry), and then prunes the weights with the smallest magnitudes.
However, the increment of weights $\delta\bm{w}^\mathrm{f}$ is initialized with zero in \algoref{ch5-alg:gcpu}, since the first step starts from $\bm{w}$.
By comparing GCPU (with or without re-initialization) with ``Pruning'', we conclude that retaining previous weights yields better performance than zero-outing the weights.

\subsection{Benchmarking Single-Round Updating}
\label{ch5-sec:experiment_singleround}

\fakeparagraph{Settings}
To show the versatility of our methods, we test single-round updating for MobileNetV1 \cite{bib:arXiv17:Howard} on ImageNet \cite{bib:ILSVRC15} with iterative rewinding. 
Single-round \dpu is conducted on different initially deployed models, including a floating-point (FP32) dense model and two compressed models, \ie a 50\%-sparse model and an INT8 quantized model.
The sparse model is trained with a state-of-the-art dynamic pruning method \cite{bib:NIPS21:Peste}; the quantized model is trained with straight-through-estimator with a output-channel-wise floating point scaling factors similar as \cite{bib:ECCV16:Rastegari}. 
To maintain the same on-device inference cost, partial updating is only applied on nonzero values of sparse models; for quantized models, the updated weights are still in INT8 format. 
Note that we do not impose sparsity or quantization on batch normalization and bias.

\fakeparagraph{Results}
We compare \dpu with the vanilla-updates, \ie the models are trained from a random initialization with the corresponding methods on all available samples. 
The test accuracy and the ratio of (server-to-edge) communication cost related to full updating on FP32 dense model are reported in \tabref{ch5-tab:singleround}. 
Results show that \dpu often yields a higher accuracy than vanilla updating while requiring substantially lower communication cost. 

\begin{table}[tb]
    \centering
    \caption[The test accuracy of single-round updating on different initially deployed models.]{The test accuracy of single-round updating on different initial models (MobileNetV1 on ImageNet). The updating ratio $k=0.2$. The ratio of communication cost related to full updating is reported in brackets} 
    \label{ch5-tab:singleround}
    \small
    \begin{tabular}{cccccc}
        \toprule
        \#Samples       & \multicolumn{3}{c}{$\{8\times10^5,4.8\times10^5\}$}   \\ 
                        \cmidrule(lr){2-4}                                     
                        & Initial   & Vanilla-update    & \dpu                  \\ \hline      
        FP32 Dense      & 68.5\%    & 70.7\% (1)        &  71.1\% (0.22)        \\
        50\%-Sparse     & 68.1\%    & 70.5\% (0.53)     &  70.8\% (0.22)        \\ 
        INT8            & 68.4\%    & 70.6\% (0.25)     &  70.6\% (0.07)        \\
        \bottomrule
    \end{tabular}
\end{table}

\section{Summary}
\label{ch5-sec:summary}

In this chapter, we propose a novel pipeline \dpu for edge-server system.
\dpu enables deep learning on edge-server system that has limited on-device resources and limited communication resources. 
Particularly, when newly collected data samples from edge devices or from other sources are available at the server, the server smartly selects only a subset of critical weights to update at the server-to-edge communication round.
This partial updating scheme reduces the redundant updating by reusing the pretrained weights, \ie the learned knowledge on prior data, which achieves a similar performance as full updating yet with a significantly lower communication cost.
The main contributions of \dpu are summarized as follows,
\begin{itemize}
    \item We formalize the deep partial updating paradigm, \ie how to iteratively perform weight-wise partial updating of the inference models on remote edge devices, if newly collected training samples are available at the server. This substantially reduces the computation and communication demand on edge devices. 
    \item We propose a novel approach that determines the optimized subset of weights that shall be selected for partial updating, through measuring each weight's contribution to the analytical upper bound on the loss reduction. This simple yet effective metric can be applied to any models that are trained with gradient-based optimizers.
    \item Experimental results on public vision datasets show that, under the similar accuracy level along the rounds, our approach can reduce the size of the transmitted data by $95.3\%$ on average (up to $99.3\%$), namely can update the model averagely $21$ times more frequent than full updating.
\end{itemize}

% !TEX root = 00_thesis.tex
\chapter{Conclusion}
\label{ch6:conclusion}

State-of-the-art DNNs achieve excellent prediction accuracy in many perception tasks, \eg computer vision, natural language processing, reinforcement learning, etc. 
However, a large amount of resources is essential in both the inference phase and the training phase to ensure the high performance of DNNs. 
Due to the intensive resource demands, DNNs are often deployed on a cloud server with plenty of high-performed computers and shared storage infrastructures. 

On the other hand, there is a growing interest to deploy DNNs on edge devices to enable new edge intelligent applications, \eg AR/VR, mobile assistants, IoT, autonomous driving, etc.
In comparison to a cloud server, edge devices have a rather small amount of resources from memory, computation, and energy, and often also a limited scalability. 
Conventional DNNs need to be compressed in order to fit the resource constraints on edge devices. 
As DNNs are prone to be over-parameterized, this thesis focuses on reducing the redundancy of DNNs to achieve a better trade-off between resource consumption and model accuracy.

In this thesis, we studied how to enable deep learning on edge devices in four different scenarios. 
Especially, we studied (\textit{i}) efficient inference on edge devices given fixed resource constraints in \chref{ch2:inference}, (\textit{ii}) efficient adaptation on edge devices under varying resource constraints in \chref{ch3:adaptation}, (\textit{iii}) efficient learning on edge devices with a few training samples of unseen tasks in \chref{ch4:learning}, and (\textit{iv}) efficient inference and updating on edge-server systems with a constrained communication bus in \chref{ch5:edgeserver}.
Note that different scenarios may have different main resource constraints that hinder us from deploying DNNs on edge devices.
According to the main resource constraints in these scenarios, we developed different methodologies to remove the redundant components, such that the compressed DNNs require a lower resource demand while reaching a similar accuracy level as the original ones.

In the following sections of this chapter, we will first summarize our main contributions in each scenario, then discuss the potential directions for future work.

\section{Contributions}
\label{ch6-sec:contribution}

This section summarizes the main contributions of our work in each scenario. 

\subsection{Inference on Edge Devices (\chref{ch2:inference})}
\label{ch6-sec:inference}

In \chref{ch2:inference}, we enabled an efficient inference of DNNs on edge devices. 
In comparison to cloud inference, inference on edge devices does not need to upload the input data to the cloud server, which can achieve a more stable, fast, and energy-efficient inference, especially with a constrained communication bus. 
Regarding the main resource constraints from storing a large number of weights and computation during inference, we proposed \alq, an adaptive loss-aware trained quantizer for multi-bit networks. 
\alq reduces the redundancy on the quantization bitwidth. 

Unlike prior multi-bit quantization that often assigns an empirical uniform bitwidth, \alq learns an adaptive bitwidth assignment across different groups of weights according to their loss criticality. 
\alq also proposes to optimize the multi-bit quantized weights by directly minimizing the loss function rather than the reconstruction error to the full precision weights.  
The multi-bit quantized network uses cheaper operations from \texttt{xnor} and \texttt{popcount} to replace the expensive FLOPs, achieving computation efficiency;
the learned adaptive bitwidth yields a smaller average bitwidth by only allocating a high bitwidth to the loss-critical weights, achieving storage efficiency;
the direct optimization objective (\ie the loss) allows us to acquire a quantized network with higher prediction accuracy.   
In addition, \alq also enables extremely low-bit networks with an average bitwidth below 1-bit by entirely pruned groups (\ie 0-bit weights in some groups).

\subsection{Adaptation on Edge Devices (\chref{ch3:adaptation})}
\label{ch6-sec:adaptation}

The methods proposed in \chref{ch2:inference} are able to compress DNNs for efficient inference if the amount of available resources on edge devices is fixed and known beforehand. 
However, the resource constraints on the target edge devices may dynamically change during runtime, \eg the allowed execution time, the allocatable RAM, and the battery energy.
To maximize the model accuracy during on-device inference, in \chref{ch3:adaptation}, we enabled a DNN with dynamic capacity, such that the DNN can be adapted and executed under varying resource constraints. 
Particularly, we developed a new synthesis approach \dress that can sample and execute sub-networks with different resource demands from a backbone network for on-device inference. 
\dress reduces the redundancy among multiple sub-networks by weight sharing and architecture sharing.

\dress samples sub-networks in a row-based unstructured manner (a.k.a. fine-grained structure sparsity) from the backbone network, and introduces a novel compressed sparse row (CSR) format to utilize sparse tensor computation provided by recent compilation libraries. 
In \dress, the nonzero weights of the higher sparsity sub-networks are reused by the lower sparsity sub-networks, achieving memory efficiency; 
all sparse sub-networks leverage the same architecture as the backbone network, achieving re-configuration efficiency.
The sub-networks have different sparsity, and thus can be fetched and executed under various resource constraints.

\subsection{Learning on Edge Devices (\chref{ch4:learning})}
\label{ch6-sec:learning}

In \chref{ch2:inference} and \chref{ch3:adaptation}, we compressed DNNs to realize an efficient on-device inference under \textit{fixed} and \textit{varying} resource constraints, respectively.
However, when facing unseen environments or users on edge devices, it is crucial to retrain the DNN with newly collected data samples to deliver consistent performance and customized services. 
On the one hand, data samples collected by edge devices are often private and limited; on the other hand, training a DNN often consumes several orders of magnitude more peak memory than inference.
Hence, in \chref{ch4:learning}, we proposed a new meta learning method \pMeta to enable memory-efficient few-shot learning on unseen tasks.
\pMeta reduces the updating redundancy by fixing some weights during few-shot learning, which saves the memory consumption that is necessary for the updated weights.

\pMeta enables both data- and memory-efficient on-device learning given unseen tasks, which is realized by automatically identifying adaptation-critical weights during few-shot learning via a meta-trained selection mechanism. 
\pMeta adopts a hierarchical approach that combines a static selection on adaptation-critical layers and a dynamic selection on adaptation-critical channels.
To the best of our knowledge, \pMeta is the first meta learning method designed for on-device few-shot learning.
Evaluations on few-shot image classification and reinforcement learning show that \pMeta not only improves the accuracy but also reduces the peak dynamic memory by a factor of 2.5 on average over the state-of-the-art few-shot learning methods. 

\subsection{Edge-Server-System (\chref{ch5:edgeserver})}
\label{ch6-sec:edgeserver}

In \chref{ch2:inference}, \chref{ch3:adaptation} and \chref{ch4:learning}, we enabled deep learning on a single edge platform in three different scenarios. 
In \chref{ch5:edgeserver}, we designed a new pipeline \dpu to enable efficient inference and efficient updating for edge-server system.
In edge-server system, a set of resource-constrained edge devices are connected to a remote server with sufficient resources, and some information is allowed to be communicated between edge devices and the server. 
Due to the limited relevant training data beforehand, pretrained DNNs may be significantly improved after the initial deployment. 
On such an edge-server system, on-device inference is preferred over cloud inference, since it can achieve a fast and stable inference with less energy consumption. 
Yet retraining on the cloud server is preferred over on-device retraining (or federated learning) due to the limited memory and computing power on edge devices. 
Therefore, we proposed a two-stage iterative process to update the deployed inference models, (\textit{i}) at each round, edge devices collect new data samples and send them to the server, and (\textit{ii}) the server retrains the network using collected data, and then sends the updates to each edge device. 
In comparison to the edge-to-server stage, the transmissions in the server-to-edge stage are highly constrained by the limited communication resource (\eg bandwidth, energy).
Our \dpu reduces the server-to-edge communication cost by distinguishing the redundant updating given newly collected samples.

Particularly, \dpu studied how to iteratively perform weight-wise partial updating of inference models on remote edge devices, if newly collected training samples are available at the server.
In each round, \dpu smartly selects and updates a small subset of critical weights that have a large contribution to the loss reduction during the retraining.
Experimental results show that \dpu can reach a similar accuracy level as full updating yet with a significantly lower communication cost. 

\section{Potential Future Directions}
\label{ch6-sec:future}

In this section, we discuss some potential directions for the future work. 
These potential future directions are either some extensions or complementaries of the works presented in the main chapters, or some other edge intelligence scenarios that have not been studied yet due to the time limitation. 

\subsection{Hardware Accelerators of \alq} 
\label{ch6-sec:future_alq}

\alq exhibits a high compression ratio on the benchmark evaluations in \chref{ch2:inference} without introducing sparse tensor computation. 
To deploy the multi-bit networks generated by \alq, the target hardware must support bitwise \texttt{xnor} and \texttt{popcount} operations for efficient execution. 
However, the current Arm Cortex CPUs \cite{bib:arm} do not include the computation units of \texttt{popcount}. 
Although some software libraries may provide functions for \texttt{popcount}, they are less efficient in pipelined computation.  
Designing some hardware accelerators \eg with FPGA that can support bitwise \texttt{xnor}, \texttt{popcount} and accumulation operations is a promising direction to enable efficient inference with multi-bit networks.

\subsection{Quantized \dress} 
\label{ch6-sec:future_dress}

Current \dress samples sub-networks from a floating-point backbone network. 
Applying \dress on a quantized backbone network (\eg 8-bit integer network) is also worth studying.
Especially, the sampled quantized sub-networks can be further accelerated by the fast kernels of sparse quantized computation. 
For example, CMSIS-NN \cite{bib:CMSIS-NN} can achieve a $4\times$ acceleration on 8-bit integer quantized networks compared to 32-bit floating-point networks on a 32-bit Arm Cortex-M CPUs.
In addition, it would be also interesting to explore the possibility of applying \dress on multi-bit quantized networks, \ie the combination of \alq and \dress.
    
\subsection{Latency-Aware \dress}
\label{ch6-sec:future_latency}

Note also that current \dress requires predefined sparsity levels. 
However, a higher sparsity level, \ie a smaller number of nonzero weights, does not always result in a shorter inference latency \cite{bib:ICLR20:Renda}. 
In the future, we encourage the following researchers to build a direct relation between sparsity and inference latency (or energy consumption). 
This can be realized by (\textit{i}) measuring the inference latency with some hardware simulators, (\textit{ii}) leveraging some real-time models to bound the computation time theoretically.
The latency-aware \dress that does not rely on proxies may fill the gap between the realistic speedup and the theoretical reduction of FLOPs mentioned in \secref{ch3-sec:deployment}.  
    
\subsection{Low-Precision Few-Shot Learning}
\label{ch6-sec:future_fsl}

In \chref{ch4:learning}, we introduced \pMeta, a hierarchical structured partial updating on meta-trained models when only a few training samples of new unseen tasks are given.
Although \pMeta can dramatically reduce the peak dynamic memory as well as the computation burden during few-shot learning, it still needs full-precision calculation during the backward propagation. 
As noted in prior works \cite{bib:NIPS20:Raihan,bib:ICLR20:Cambier,bib:NIPS18:Wang}, adopting a low-precision backward propagation can bring a similar performance as its full-precision versions in the vanilla training. 
A straightforward future direction is to apply low-precision training on few-shot learning scenarios, where weights, activations, and gradients are all presented in low-precision formats, \eg 8-bit integer. 
The step size of 8-bit integer training could be the number of bit shifting, which may be also meta-trained in a per layer per step manner. 
Conducting 8-bit integer few-shot learning on edge devices can not only further reduce the peak memory consumption, but also speedup the training process in comparison to 32-bit floating-point training. 
    
\begin{table}[t]
    \centering
 	\caption[Static memory of the model and the training samples in example self-supervised learning.]{Static memory of the model and the training samples in example self-supervised learning.}
 	\label{ch6-tab:examples}
 	\footnotesize
 	\begin{tabular}{lccc}
 		\toprule
 		\multirow{2}{*}{Benchmark}          & WRN-28-2 \cite{bib:BMVC16:Zagoruyko}      & WRN-28-2 \cite{bib:BMVC16:Zagoruyko}      \\ 
 		                                    & CIFAR10                                   & SVHN                                      \\ \midrule
 		Static Storage of Model (MB)        & $6.02$                                    & $6.02$                                    \\
 		Static Storage of Samples (MB)      & $184.32$                                  & $305.02$                                  \\
 		\bottomrule
 	\end{tabular}
\end{table}

\subsection{Streaming Self-Supervised Learning} 
\label{ch6-sec:future_ssl}

In \chref{ch4:learning}, we studied efficient few-shot learning on edge devices, where only a few training samples are given. 
In some other cases of on-device learning, although the labeled samples are limited due to the limited labor resources, it might be easy to collect a large number of unlabeled samples. 
Learning a DNN with a small number of labeled samples and a large number of unlabeled samples is known as self-supervised learning (or semi-supervised learning). 
Current self-supervised learning methods \cite{bib:NIPS19:Berthelot,bib:NIPS20:Sohn} often need to maintain all unlabeled samples. 
Even if on small-scale datasets, the static memory for storing samples is much larger than that for storing the self-supervised model. 
We summarize the static memory consumption for training samples and the self-supervised DNNs in two sample applications in \tabref{ch6-tab:examples}. 

We consider that the unlabeled samples are collected in a round-based streaming manner, and during the collection we can query the user for labeling. 
In this scenario, the main resource constraints are (\textit{i}) the limited number of querying labels, (\textit{ii}) the memory consumption for storing data samples, particularly unlabeled samples. 
We focus on reducing the redundancy of data samples. 
We will only select a coreset of unconfident samples to label and a coreset of representative samples to store \cite{bib:NIPS21:Killamsetty}.

\noindent
The problem in round $r$ is defined as follows. 

\fakeparagraph{Inputs}
We have the current optimized model, and the stored datasets from the last round, which contain labeled set $\mathcal{D}_\mathrm{S}^{r-1}$ and unlabeled set $\mathcal{D}_\mathrm{U}^{r-1}$.
We also receive some new unlabeled samples in $\delta \mathcal{D}^r$. 

\fakeparagraph{Outputs}
We are expected to output the updated model according to newly collected data. 
We also need to update the datasets. 
Because of the limited memory and limited querying number, we update the datasets based on two selected coresets $\mathcal{C}_\mathrm{S}$ and $\mathcal{C}_\mathrm{U}$. 
Both coresets are selected from all available unlabeled samples, \ie $\mathcal{C}_\mathrm{S},\mathcal{C}_\mathrm{U} \subset \mathcal{D}_\mathrm{U}^{r-1}\cup\delta \mathcal{D}^r$. 

\fakeparagraph{Methods}
In order to select two coresets, we use a confidence score $\bm{\alpha}$ to weight each unlabeled samples, where $\bm{\alpha}\in\mathbb{R}_{+}^{|\mathcal{D}_\mathrm{U}^{r-1}|+|\delta \mathcal{D}^r|}$ 
A larger $\alpha$ means the sample can better match the learned likelihood, whereas a smaller alpha means the model has less confidence on that sample. 
Similar to \cite{bib:NIPS20:Sohn,bib:NIPS21:Killamsetty}, we also conduct a two-level minmax optimization. 
Particularly, in the inner loop, the binarized $\bm{\alpha}$ is used to weight the unsupervised loss, and the model will be then trained with semi-supervised loss.
In the outer loop, the confidence score $\bm{\alpha}$ is optimized on the current labeled dataset $\mathcal{D}_\mathrm{S}^{r-1}$ with the optimized model. 
Both loops are conducted alternatively in several iterations.
Then, the current unlabeled samples in $\mathcal{D}_\mathrm{U}^{r-1}\cup\delta \mathcal{D}^r$ are selected to build two coresets $\mathcal{C}_\mathrm{S}$ and $\mathcal{C}_\mathrm{U}$ according to the optimized score $\bm{\alpha}$.
Note that both coresets $\mathcal{C}_\mathrm{S}$ and $\mathcal{C}_\mathrm{U}$ have a constrained cardinality due to the limited querying number and the limited memory, respectively.
The samples in $\mathcal{C}_\mathrm{S}$ will be further queried for labeling.
The labeled dataset is then updated as $\mathcal{D}_\mathrm{S}^{r}=\mathcal{D}_\mathrm{S}^{r-1}\cup\mathcal{C}_\mathrm{S}$, and the unlabeled dataset is updated as $\mathcal{D}_\mathrm{U}^{r}=\mathcal{C}_\mathrm{U}$.

\clearpage
\noindent
\large{This concludes my thesis.}

\cleardoublepage
\def\bibname{Bibliography}
\bibliographystyle{alphaabbr}
\phantomsection\addcontentsline{toc}{chapter}{Bibliography}

{\small\bibliography{biblio}}
\cleardoublepage
\backmatter
% !TEX root = 00_thesis.tex

\newcommand{\aut}[1]{\vspace*{0,5em}\\\noindent\newblock{#1}}
\newcommand{\tit}[1]{\newblock{\textbf{#1}}}
\newcommand{\con}[1]{\newblock{\emph{#1}}}
\newcommand{\jou}[1]{\newblock{\emph{#1}}}
\newcommand{\loc}[1]{\newblock{#1}}
\newcommand{\rem}[1]{\newblock{#1}}

\chapter{List of Publications}
\label{ch8:publication}

\small

The following list includes publications that form the basis of this thesis. The corresponding chapters are indicated in parentheses.
\\
\aut{Zhongnan Qu, Zimu Zhou, Yun Cheng, Lothar Thiele.}
\tit{Adaptive Loss-aware Quantization for Multi-bit Networks.}
\con{In Proceedings of IEEE/CVF Conference on Computer Vision and Pattern Recognition (CVPR),}
\loc{IEEE, 2020, Acceptance ratio: 22.1\%.}
\rem{(\chref{ch2:inference}) \cite{bib:CVPR20:Qu}}
\\
\aut{Zhongnan Qu, Syed Shakib Sarwar, Xin Dong, Yuecheng Li, Huseyin Sumbul, Barbara De Salvo.}
\tit{DRESS: Dynamic REal-time Sparse Subnets.}
\con{In Efficient Deep Learning for Computer Vision (ECV),}
\loc{CVPRWorkshop, 2022, Acceptance ratio: 29.9\%.}
\rem{(\chref{ch3:adaptation}) \cite{bib:CVPRWorkshop22:Qu}}
\\
\aut{Zhongnan Qu, Zimu Zhou, Yongxin Tong, Lothar Thiele.}
\tit{p-Meta: Towards On-device Deep Model Adaptation.}
\con{In Proceedings of ACM Conference on Knowledge Discovery and Data Mining (SIGKDD),}
\loc{ACM, 2022, Acceptance ratio: 15.0\%.}
\rem{(\chref{ch4:learning}) \cite{bib:KDD22:Qu}}
\\
\aut{Zhongnan Qu, Cong Liu, Lothar Thiele.}
\tit{Deep Partial Updating: Towards Communication Efficient Updating for On-device Inference.}
\con{In Proceedings of European Conference on Computer Vision (ECCV),}
\loc{Springer, 2022, Acceptance ratio: 28.4\%.}
\rem{(\chref{ch5:edgeserver}) \cite{bib:ECCV22:Qu}}
\\
\newpage
\noindent
The following list includes publications that were written during the PhD studies, yet are not part of this thesis.
\\
\aut{Fan Lu, Guang Chen, Yinlong Liu, Zhongnan Qu, Alois Knoll.}
\tit{RSKDD-Net: Random Sample-based Keypoint Detector and Descriptor.}
\con{In Proceedings of Annual Conference on Neural Information Processing Systems (NeurIPS),}
\loc{2020, Acceptance ratio: 20.1\%.}
\rem{\cite{bib:NIPS20:Lu}}
\\
\aut{Xin Dong, Barbara De Salvo, Meng Li, Chiao Liu, Zhongnan Qu, H.T. Kung, Ziyun Li.}
\tit{SplitNets: Designing Neural Architectures for Efficient Distributed Computing on Head-Mounted Systems.}
\con{In Proceedings of IEEE/CVF Conference on Computer Vision and Pattern Recognition (CVPR),}
\loc{IEEE, 2022, Acceptance ratio: 25.3\%.}
\rem{\cite{bib:CVPR22:Dong}}
\\

\cleardoublepage
\chapter{Curriculum Vit\ae}

\renewcommand{\arraystretch}{1.5}
\newcolumntype{P}[1]{>{\raggedright\arraybackslash}p{#1}}

\begin{footnotesize}

\subsection*{Personal Data}
\begin{tabular}{@{}P{2.5cm}P{9.5cm}}
Name & Zhongnan Qu \\
Date of Birth &  May 5, 1992 \\
Citizenship & China \\
\end{tabular}

\vspace{-0.5em}

\subsection*{Education}
\begin{tabular}{@{}P{2.5cm}P{9.5cm}}
2018--2022 &
ETH Zurich, \textit{Zurich Switzerland}\linebreak
Ph.D. in Computer Engineering\linebreak 
Advised by Prof. Lothar Thiele \\

2014--2018 &
TU Munich, \textit{Munich Germany}\linebreak
M.Sc. in Electrical and Computer Engineering\linebreak
Advised by Prof. Daniel Cremers and Prof. Dongheui Lee\\

2014--2017 &
TU Munich, \textit{Munich Germany}\linebreak
M.Sc. in Mechanical Engineering\linebreak
Advised by Prof. Alois Knoll\\

2013--2014 &
Munich University of Applied Sciences, \textit{Munich Germany}\linebreak
B.Eng. in Mechatronics Engineering\\

2010--2014 &
Tongji University, \textit{Shanghai China}\linebreak
B.Eng. in Mechatronics Engineering
\end{tabular}

\vspace{-0.5em}

\subsection*{Professional Experience}

\begin{tabular}{@{}P{2.5cm}P{9.5cm}}
2018--2022 &
ETH Zurich, \textit{Zurich Switzerland}\linebreak
Research and teaching assistant\\

2021 &
Meta (Facebook) Reality Labs, \textit{Seattle US (Remotely)}\linebreak
Research Intern\\

2017 &
BMW Group, \textit{Munich Germany}\linebreak
Intern\\

2014 &
Canon Group Company $\bullet$ Oc\'e Printing Systems GmbH, \textit{Munich Germany}\linebreak
Intern and Thesis Student\\

2013 &
State Grid, \textit{Henan China}\linebreak
Intern
\end{tabular}

\vspace{-0.5em}

\subsection*{Research Interests}
I focus on efficient deep learning in computer vision, natural language processing, and robotics. The vision is to deploy deep learning on edge devices for new emerging intelligent applications that face challenges of resource constraints, privacy issues, and data scarcity.

\end{footnotesize}

\end{document}